%% file: iclr2026_conference.tex
\documentclass{article} % For LaTeX2e
\usepackage{iclr2026_conference,times}

\input{math_commands.tex}

\usepackage{hyperref}
\usepackage{url}

\usepackage{amsmath,amsfonts,amssymb,amsthm}
\usepackage{algorithmic}
\usepackage{algorithm}
\usepackage{array}
\usepackage{textcomp}
\usepackage{url}
\usepackage{verbatim}
\usepackage{graphicx}
\usepackage{cite}

\usepackage{subfigure}
\usepackage{booktabs}
\usepackage{bbding}
\usepackage{makecell}
\usepackage{etoolbox}
\makeatletter
\patchcmd{\@makecaption}
  {\scshape}
  {}
  {}
  {}
\makeatother
\usepackage{color}
\usepackage{bm}
\usepackage{wrapfig}
\usepackage{enumitem}

\usepackage{booktabs}
\usepackage{multirow}

\usepackage{etoc}           % 专门做局部目录的宏包

\newtheorem{definition}{\textbf{Definition}}
\newtheorem{theorem}{\textbf{Theorem}}

\newtheorem{lemma}{\textbf{Lemma}}
\newtheorem{remark}{\textbf{Remark}}

\newtheorem{corollary}{\textbf{Corollary}}
\def\A{\bm{\mathcal{A}}}
\def\B{\bm{\mathcal{B}}}
\def\C{\bm{\mathcal{C}}}
\def\D{\bm{\mathcal{D}}}
\def\E{\bm{\mathcal{E}}}

\def\I{\bm{\mathcal{I}}}

\def\K{\bm{\mathcal{K}}}
\def\L{\bm{\mathcal{L}}}
\def\M{\bm{\mathfrak{M}}}

\def\Q{\bm{\mathcal{Q}}}
\def\R{\bm{\mathcal{R}}}
\def\S{\bm{\mathcal{S}}}
\def\T{\bm{\mathcal{T}}}
\def\U{\bm{\mathcal{U}}}
\def\V{\bm{\mathcal{V}}}
\def\X{\bm{\mathcal{X}}}
\def\Y{\bm{\mathcal{Y}}}
\def\Z{\bm{\mathcal{Z}}}

\def\s{\bm{s}}
\def\P{\bm{\mathcal{P}}}
\def\W{\bm{\mathcal{W}}}

\def\y{\bm{y}}
\def\x{\bm{x}}

\title{The power of small initialization in noisy low-tubal-rank tensor recovery}

\author{
  \textbf{Zhiyu Liu$^{1,2}$, Haobo Geng$^3$, Xudong Wang$^{1,2}$, Yandong Tang$^1$, Zhi Han$^1$, Yao Wang$^{4,}$}\thanks{Corresponding author: \textit{yao.s.wang@gmail.com}} \\
  \vspace{0.5em} \\
  $^1$ State Key Laboratory of Robotics and Intelligent Systems, Shenyang Institute of Automation,\\ \ \ \ Chinese Academy of Sciences, China \\
  $^2$ University of Chinese Academy of Sciences, China \\
  $^3$ School of Electrical and Electronic Engineering, Nanyang Technological University, Singapore \\
  $^4$ The Center for Intelligent Decision-making and Machine Learning, \\ \ \ \ School of Management, Xi’an Jiaotong University, China
}

\iclrfinalcopy % Uncomment for camera-ready version, but NOT for submission.
\begin{document}

\maketitle
\vspace{-3mm}
\begin{abstract}
We study the problem of recovering a low-tubal-rank tensor $\X_\star\in \mathbb{R}^{n \times n \times k}$ from noisy linear measurements under the t-product framework. A widely adopted strategy involves factorizing the optimization variable as $\U * \U^\top$, where $\U \in \mathbb{R}^{n \times R \times k}$, followed by applying factorized gradient descent (FGD) to solve the resulting optimization problem. Since the tubal-rank $r$ of the underlying tensor $\X_\star$ is typically unknown, this method often assumes $r < R \le n$, a regime known as over-parameterization. However, when the measurements are corrupted by some dense noise (e.g., Gaussian noise), FGD with the commonly used spectral initialization yields a recovery error that grows linearly with the over-estimated tubal-rank $R$. To address this issue, we show that using a small initialization enables FGD to achieve a nearly minimax optimal recovery error, even when the tubal-rank $R$ is significantly overestimated. Using a four-stage analytic framework, we analyze this phenomenon and establish the sharpest known error bound to date, which is independent of the overestimated tubal-rank $R$. Furthermore, we provide a theoretical guarantee showing that an easy-to-use early stopping strategy can achieve the best known result in practice. All these theoretical findings are validated through a series of simulations and real-data experiments.
\end{abstract}

\vspace{-3mm}
\section{Introduction}
In recent years, the growing complexity and dimensionality of real-world data have highlighted the limitations of traditional vector and matrix models. As a natural generalization, tensors provide a more expressive framework to capture multi-dimensional correlations inherent in data arising from applications such as hyperspectral imaging \citep{han2025irregular}, dynamic video sequences \citep{han2024online}, and sensor arrays \citep{rajesh2021data,11030298}. A common trait shared across these applications is the underlying low-rank structure of the data when represented in tensor form. Leveraging this property, a wide range of inverse problems can be effectively reformulated as low-rank tensor recovery tasks. Notable examples include image inpainting \citep{zhang2016exact,gilman2022grassmannian,yang20223}, compressive imaging and video representation \citep{wang2017compressive,baraniuk2017compressive,Wang2018SparseRF}, background modeling from incomplete observations \citep{cao2016total,li2022tensor,Peng2022ExactDO}, and even advanced medical imaging techniques such as computed tomography \citep{liu2024nonlocal}. The goal of low-rank tensor recovery is to recover the target tensor $\X_\star$ from a few noisy measurements: $y_i = \langle \A_i, \X_\star\rangle + s_i,\ i=1,2...,m,$ where $s_i$ denotes the unknown noise. This model can be concisely represented as $\bm{y}=\M(\X_\star)+\bm{s}$, where $\M(\X_{\star})=[\langle \A_1, \X_{\star}\rangle,\ \langle \A_2, \X_{\star}\rangle ,\ ...,\ \langle \A_m, \X_{\star}\rangle]$. Since $\X_\star$ is low-rank, the problem can be solved via rank minimization: $    \min_{\X}\ \mathtt{rank}(\X), \operatorname{s.t.}\ ||\bm{y}-\M(\X)||_2\le \epsilon_s,$ where $\mathtt{rank}(\cdot)$ denotes the tensor rank function and $\epsilon_s$ denotes the noise level.  

 There are various tensor decomposition methods, such as {CANDECOMP}/PARAFAC decomposition (CP) \citep{carroll1970analysis,harshman1970foundations}, Tucker decomposition \citep{tucker1966some}, Tensor Singular Value Decomposition (t-SVD)\citep{kilmer2011factorization}, Tensor Train \citep{oseledets2011tensor}, and Tensor Ring \citep{zhao2016tensor}, each leading to different definitions of tensor rank. In this work, we adopt the t-SVD along with its associated tubal-rank \citep{kilmer2013third}. We adopt t-SVD due to its use of circular convolution along the third dimension via the t-product, enabling it to  capture frequency-domain structures effectively \citep{9938394}. This capability makes it particularly powerful for handling multi-dimensional data such as images and videos \citep{10742301,10750255,Wu_Sun_Fan_2025,Liu_Hou_Peng_Wang_Meng_Wang_2023,10026252}. Furthermore, t-SVD guarantees an optimal low-rank approximation, in a manner directly analogous to the Eckart–Young theorem for matrices \citep{eckart1936approximation}. Under the t-SVD framework, since problem (2) is NP-hard, a common approach is to relax the tubal-rank constraint to the tensor nuclear norm. This reformulates the original problem as a {tubal tensor nuclear norm} minimization. While this relaxation is theoretically sound, solving it typically requires repeated t-SVD computations, which become increasingly expensive as the tensor dimensions grow.

To address this issue, a more recent and popular approach is to adopt the tensor Burer–Monteiro (BM) factorization, a higher-order extension of the matrix Burer–Monteiro method \citep{burer2003nonlinear}. This technique represents the large tensor as the t-product of two smaller factor tensors, thereby transforming the original problem into an optimization over the two factors, often minimizing an objective of the form\footnote{As in prior work, we assume that $\X_\star$ is a symmetry and positive semi-definite tensor. 
for detailed explanation, please refer to Definition \ref{def:psd}. }
\begin{equation}
\vspace{-1mm}
\min_{\U\in\mathbb{R}^{n\times R \times k}} f(\U)=\frac{1}{4m}\left\| \bm{y}-\M(\U*\U^\top)\right\|^2,\ \M(\cdot):\mathbb{R}^{n\times n\times k}\to \mathbb{R}^m,
\label{equ:3}
\vspace{-1mm}
\end{equation}

\begin{wrapfigure}{r}{5.6cm}
\centering
\includegraphics[width=\linewidth]{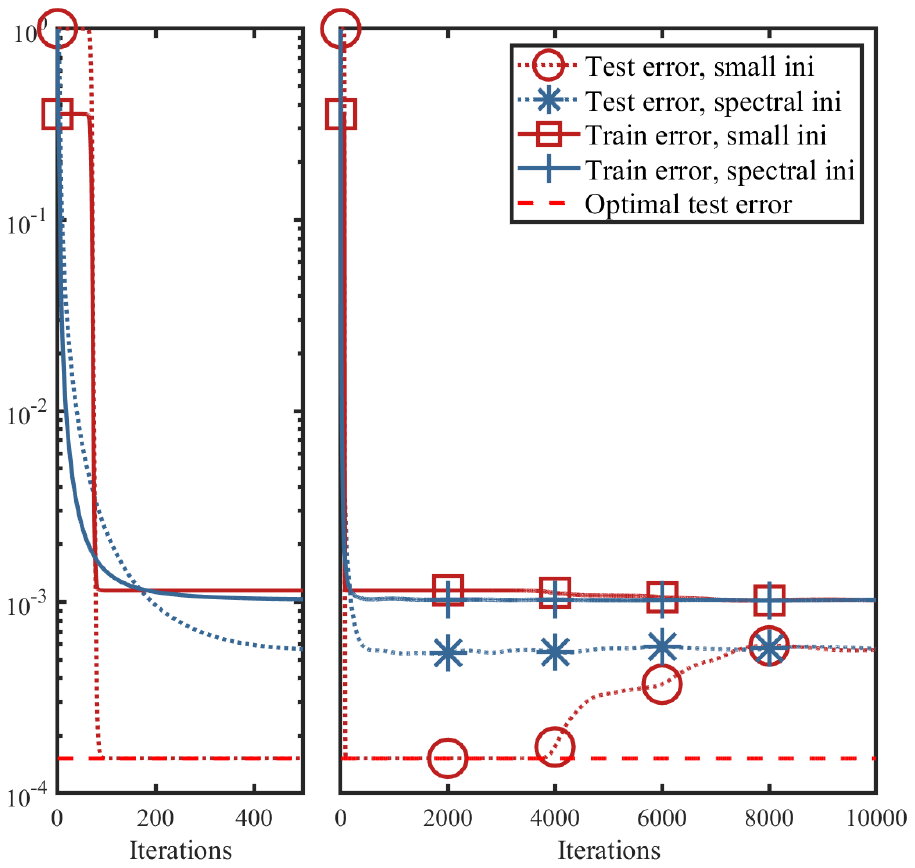} % Reduce the figure size so that it is slightly narrower than the column. Don't use precise values for figure width.This setup will avoid overfull boxes.
\caption{{Comparison of training and testing errors for Problem (\ref{equ:3}) using FGD with spectral vs. small initialization. The ground-truth tensor has tubal-rank $r=2$, overestimated rank $R=4$, size $n=20$, $k=3$, $m=5kr(2n-r)$ measurements, and noise $\sigma=10^{-3}$. Spectral initialization follows \citet{liu2024low}, while small initialization uses a near-zero starting point. Training error is $\frac{1}{4m}||\bm{y}-\M(\U*\U^\top)||^2$, and testing error is $||\U*\U^\top - \X_\star||_F^2/||\X_\star||_F^2$. “Baseline” denotes recovery under exact rank $R=r$. Insets show early (first 500 iterations) vs. full error curves.}}
\label{fig1}
\end{wrapfigure}

where $*$ denotes the tensor-tensor product. Factorized Gradient Descent and its variants can then be applied, significantly reducing computational costs \citep{liu2024low,karnik2024implicit}. However, such methods typically require prior knowledge of the tubal-rank $r$ of the target tensor, which is often unavailable in practice. As a result, it is common to assume an estimated rank $R > r$, a setting often referred to as the over-parameterized or over-rank case. However, in the case of noisy low-tubal-rank tensor recovery, over-parameterization can lead to larger recovery errors. \citet{liu2024low} showed that the recovery error in the over-parameterized setting grows linearly with the estimated tubal-rank $R$. When the tubal-rank is significantly overestimated, the error can become substantial. Furthermore, FGD suffers from a severe slowdown in convergence when the tubal-rank is overestimated. This leads to an important question: \textit{\textbf{In noisy low-tubal-rank tensor recovery, is it possible to obtain an error bound that depends only on the true tubal-rank $r$ ?}}

By investigating this question further, we find that \textit{\textbf{with small initialization, factorized gradient descent converges linearly to a nearly minimax optimal error only relying on $r$, even when the tubal-rank is significantly overestimated.}}
As shown in Figure 1, under over-parameterization, FGD with spectral initialization yields suboptimal recovery error, while FGD with small initialization achieves the same error as in the exact tubal-rank setting. However, as the algorithm continues to iterate, the error gradually increases and eventually matches that of spectral initialization. We provide a theoretical analysis of this phenomenon and derive the best-known error bound to date. Furthermore, based on early stopping and validation \citep{Prechelt1998,10.1111/j.2517-6161.1974.tb00994.x,ding2025validation}, we show that this error is achievable and provide corresponding theoretical guarantees.

We summarize the main contributions of this paper as follows:\\
\textbf{Tightest error upper bound} We discover that with small initialization, FGD can achieve an error which only depends on the exact tubal-rank in noisy, over-parameterized low-tubal-rank tensor recovery. We establish global convergence and the tightest error bound for FGD that depends only on the true tubal-rank. This significantly improves upon previous results \citep{liu2024low}. To the best of our knowledge, this is the first error bound that is independent of the overestimated tensor rank.\\
\textbf{Minimax lower bound and near-optimality.} We derive an information-theoretic minimax lower bound for noisy tubal-rank tensor recovery, showing that any estimator has mean square error at least $\Omega(\frac{nrk\sigma^2}{m})$. Comparing this lower bound with our upper bound demonstrates that our method is nearly optimal; the remaining gaps are only due to constant factors and dependencies on the condition number $\kappa$.\\
\textbf{Attainable recovery error} A validation-based early stopping method is applied to FGD to achieve the error bound without any prior information about the target tensor. We theoretically show that when the number of validation samples exceeds $\tilde{\mathcal{O}}(r^2\kappa^8)$, the validation error matches the upper bound up to constants. On both synthetic and real datasets, we demonstrate that in the over-parameterized  setting, FGD (small initialization and validation-based early stopping) attains errors comparable to those achieved with the exact-rank setting, and significantly outperforms spectral and large random initializations.
% \textbf{Improved empirical performance} We conduct a series of simulation and real-data experiments demonstrating that, in the over tubal-rank setting, FGD with small initialization can achieve error levels comparable to those under the exact tubal-rank setting. Moreover, the error obtained using validation-based early stopping method closely matches that of the exact tubal-rank case and is significantly lower than that of spectral initialization and large random initialization.
\subsection{Related works}
\begin{wraptable}{r}{0.6\linewidth}
\vspace{-7mm}
\caption{Comparison of several low-tubal-rank tensor recovery methods based on t-SVD. The noise vector $\s$ is assumed to consist of Gaussian random variables with zero mean and variance $\sigma^2$.}
\centering

\begin{tabular}{cccc}
\hline
methods & rate & \makecell{guarantee} & \makecell{ error }\\ \hline
 \citep{zhang2020rip}  & \XSolidBrush               &          \checkmark       &        \XSolidBrush                \\ \hline

\citep{liu2024low} & \makecell{sub-\\linear} & local & $\widetilde{\mathcal{O}}\left(\frac{nkR\sigma^2}{m}\right)$ \\ \hline
\citep{karnik2024implicit} & linear & global & \XSolidBrush \\  \hline
Ours                &    linear                &     global             &    $\widetilde{\mathcal{O}}\left(\frac{nkr\sigma^2}{m}\right)$        \\  \hline
\end{tabular}
\label{table:1}
\vspace{-3mm}
\end{wraptable}

\textbf{Non-convex low-tubal-rank tensor recovery under t-SVD framework}
Nonconvex low-tubal-rank tensor recovery methods under the t-SVD framework can be broadly categorized into two classes. The first class aims to improve recovery accuracy by replacing the {tubal tensor nuclear norm} with nonconvex surrogates. The second class focuses on improving computational efficiency by decomposing a large tensor into smaller factor tensors. We first discuss the methods in the first category. These approaches are derived from the {tubal tensor nuclear norm} and include variants such as the t-Schatten-$p$ norm \citep{kong2018t}, weighted t-TNN \citep{mu2020weighted}, , partial sum of t-TNN \citep{jiang2020multi} and others \citep{qin2025accelerating}. Other methods employ nonconvex functions such as Geman or Laplace penalties in place of the {tubal tensor nuclear norm} \citep{cai2019tensor,xu2019laplace}. It is worth noting that Wang et al.\citep{wang2021generalized} proposed a generalized nonconvex framework that encompasses a wide range of non-convex penalty functions. However, these methods still rely on repeated t-SVD computations, which are computationally expensive, and often lack theoretical guarantees.
The second category includes factorization-based methods that decompose a large tensor into two or three smaller factor tensors, followed by optimization techniques such as alternating minimization \citep{zhou2017tensor,liu2019low,he2023robust,Wu_Sun_Fan_2025}, nonconvex tensor norms minimization \citep{du2021unifying,jiang2023robust}, factorized gradient descent \citep{liu2024low,karnik2024implicit}, scaled gradient descent \citep{feng2025learnable,wu2025guaranteed,liu2025efficientlowtubalranktensorestimation}. Beyond these two main categories, there are also approaches based on randomized low-rank approximation \citep{qin2024nonconvex} and alternating projections \citep{qiu2022fast} for solving tensor recovery problems.\\
\textbf{Over-parameterization in low rank tensor recovery} 
Factorization-based methods typically require knowledge of the tensor rank. However, the true rank is often difficult to obtain in practice. As a result, it is common to assume an estimated rank larger than the true one, a setting known as over-parameterization. In matrix sensing, it has been shown that gradient descent can still achieve the optimal solution under over-parameterization \citep{zhu2018global,stoger2021small,10843299,doi:10.1137/22M1519833,zhuo2024computational,ding2025validation}. In contrast, studies on over-parameterized settings in tensor recovery are relatively limited. Although many methods have been proposed to estimate tensor rank, these methods are computationally expensive and lack clear theoretical guarantees \citep{zhou2019bayesian,shi2021robust,zheng2023novel,zhu2025fast}. Recently, \citet{liu2024low} investigated low-tubal-rank tensor recovery under over tubal-rank and established local convergence guarantees and recovery error bounds for FGD, where the error depends on the overestimated tubal-rank. \citet{karnik2024implicit} further proved global convergence of FGD with small initialization under over tubal-rank, in the noiseless setting. In addition, for Tucker decomposition, \citet{10.1214/24-AOS2396} studied the over-parameterized setting in tensor-on-tensor regression. However, in the presence of noise, its recovery error still depends on the overestimated tensor rank. We compare our method with several closely related works, and the results are summarized in Table \ref{table:1}.

\vspace{-3mm}
\section{Preliminaries}
\vspace{-2mm}
\label{sec:preliminaries}
The symbols $y,\bm{y},\bm{Y},\Y$ are denoted as scalars, vectors, matrices, and tensors, respectively. Let $\Y \in \mathbb{R}^{m \times n \times k}$ be a third-order tensor. We refer to its entry at position $(i, j, l)$ as $\Y(i,j,l)$, and denote the $l$-th frontal slice by $\bm{Y}^{(l)} := \Y(:,:,l)$, following MATLAB-style indexing. The inner product between two tensors $\Y$ and $\Z$ is given by $\langle \Y, \Z \rangle = \sum_{l=1}^{k} \langle \bm{Y}^{(l)}, \bm{Z}^{(l)} \rangle,$
where each $\bm{Y}^{(l)}$ and $\bm{Z}^{(l)}$ are corresponding frontal slices.

For any tensor $\Y \in \mathbb{R}^{m \times n \times k}$, its Discrete Fourier Transform along the third mode yields $\overline{\Y} \in \mathbb{C}^{m \times n \times k}$. In MATLAB syntax, we have $\overline{\Y}=\mathtt{fft}(\Y,[\ ],3)$, and $\Y=\mathtt{ifft}(\overline{\Y},[\ ],3)$. We denote $\overline{\bm{Y}}\in\mathbb{C}^{mk\times nk}$ as a block diagonal matrix of $\overline{\Y}$, i.e., $\overline{\bm{Y}}=\mathtt{bdiag}(\overline{\Y})=\mathtt{diag}(\overline{\bm{Y}}^{(1)};\overline{\bm{Y}}^{(2)};...;\overline{\bm{Y}}^{(k)}).$

The tensor-tensor product (t-product) of two tensors $\Z\in\mathbb{R}^{m \times q \times k}$ and $\Y\in\mathbb{R}^{q \times n \times k}$ is $\Z*\Y\in\mathbb{R}^{m\times n \times k}$, whose tubes are given
$(\Z*\Y)(i,i^\prime)=\sum_{p=1}^q\Z(i,p,:)*\Y(p,i^\prime,:),$ where $*$ denotes the circular convolution operation, i.e., $(\x*\y)_i=\sum_{j=1}^kx_jy_{i-j(\operatorname{mod} k)}.$

For any tensor $\Y \in \mathbb{C}^{m \times n \times k}$,  its conjugate transpose $\Y^\top \in \mathbb{C}^{n \times m \times k}$ is computed by taking the conjugate transpose of each frontal slice and reversing the order of slices 2 through $k$. The identity tensor, represented by $\I\in\mathbb{R}^{n\times n\times k}$, is defined such that its first frontal slice corresponds to the $n\times n$ identity matrix, while all subsequent frontal slices are comprised entirely of zeros. This can be expressed mathematically as:
$\bm{I}^{(1)} = \bm{I}_{n\times n},\quad \bm{I}^{(l)} = 0,l=2,3,\ldots,k.$ A tensor $\Q\in\mathbb R^{n\times n\times k}$ is considered orthogonal if it satisfies the following condition: $\Q^\top * \Q = \Q * \Q^\top=\I.$

\begin{theorem}
[t-SVD \citep{kilmer2011factorization}]
Let $\Y\in\mathbb{R}^{m\times n \times k}$, then it can be factored as $\Y=\V_{\Y} * \S_{\Y} * \W_{\Y}^\top$ where $\V_{\Y}\in\mathbb{R}^{m \times m \times k}$, $\W_{\Y}\in\mathbb{R}^{n\times n \times k}$ are orthogonal tensors, and $\S_{\Y}\in\mathbb{R}^{m\times n \times k}$ is a f-diagonal tensor, i.e., all the frontal slices of $\S_{\Y}$ are diagonal matrix.
\vspace{-2mm}
\end{theorem}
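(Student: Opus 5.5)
The plan is to work in the Fourier domain, where the t-product becomes ordinary block-diagonal matrix multiplication. I will use the standard fact that $\Z*\Y$ corresponds to $\overline{\Z*\Y}^{(l)}=\overline{\bm{Z}}^{(l)}\overline{\bm{Y}}^{(l)}$ for each $l=1,\dots,k$. This holds because the DFT along the third mode diagonalizes circular convolution: the block-circulant matrix $\mathtt{bcirc}(\Y)$ satisfies $(\bm{F}_k\otimes \bm{I}_m)\,\mathtt{bcirc}(\Y)\,(\bm{F}_k^{*}\otimes \bm{I}_n)=\overline{\bm{Y}}$, and $\mathtt{bcirc}(\Z*\Y)=\mathtt{bcirc}(\Z)\mathtt{bcirc}(\Y)$. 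Similarly, the transpose $\Y^\top$ defined above corresponds slicewise to the conjugate transpose, $\overline{\Y^\top}^{(l)}=(\overline{\bm{Y}}^{(l)})^{*}$. The identity tensor $\I$ has all Fourier slices equal to $\bm{I}_n$. Consequently, a tensor is orthogonal exactly when every Fourier slice is unitary, and it is f-diagonal exactly when every Fourier slice is diagonal, since the DFT acts entrywise along tubes.

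Next, for each $l$ I take a matrix SVD $\overline{\bm{Y}}^{(l)}=\bm{V}_l\bm{S}_l\bm{W}_l^{*}$, with $\bm{V}_l,\bm{W}_l$ unitary and $\bm{S}_l$ real, nonnegative and diagonal. I then define $\overline{\V}$, $\overline{\S}$, $\overline{\W}$ slicewise from these factors and apply $\mathtt{ifft}$ along mode 3 to obtain $\V_\Y,\S_\Y,\W_\Y$. By the first step, $\V_\Y*\S_\Y*\W_\Y^\top$ has Fourier slices $\bm{V}_l\bm{S}_l\bm{W}_l^{*}=\overline{\bm{Y}}^{(l)}$, so it equals $\Y$. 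Orthogonality of $\V_\Y$ and $\W_\Y$ follows from the slices being unitary, and $\S_\Y$ is f-diagonal.

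The main obstacle is ensuring that the factors are \emph{real}. Because $\Y$ is real, its Fourier slices satisfy the conjugate symmetry $\overline{\bm{Y}}^{(l)}=\mathrm{conj}(\overline{\bm{Y}}^{(k+2-l)})$ for $l=2,\dots,k$. For $l=1$, and for $l=k/2+1$ when $k$ is even, the slice $\overline{\bm{Y}}^{(l)}$ is real, so I choose a real SVD there. For the remaining indices I pair each $l$ with $k+2-l$: I choose an SVD for $2\le l\le\lceil k/2\rceil$ and set $\bm{V}_{k+2-l}=\mathrm{conj}(\bm{V}_l)$, $\bm{S}_{k+2-l}=\bm{S}_l$, $\bm{W}_{k+2-l}=\mathrm{conj}(\bm{W}_l)$. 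This gives a valid SVD of $\mathrm{conj}(\overline{\bm{Y}}^{(l)})$. The resulting Fourier-domain factor tensors are then conjugate-symmetric along mode 3, so their inverse DFTs are real. This completes the construction, with $\V_\Y\in\mathbb{R}^{m\times m\times k}$, $\W_\Y\in\mathbb{R}^{n\times n\times k}$ and $\S_\Y\in\mathbb{R}^{m\times n\times k}$ as stated.
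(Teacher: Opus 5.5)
Your proposal is correct and complete. It is essentially the standard Kilmer--Martin argument that the paper relies on; the paper itself gives no proof and only cites it. That argument block-diagonalizes via the DFT along mode 3, takes slicewise SVDs, and pairs slices $l$ and $k+2-l$ by conjugation (the same symmetry used in the paper's Algorithm~\ref{algorithm:2}) so that the factors are real, with the self-conjugate slices $l=1$ and $l=k/2+1$ handled by real SVDs exactly as you do.
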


For $\Y\in\mathbb{R}^{m\times n \times k}$, its tubal-rank  as rank$_t(\Y)$ is defined as the nonzero {diagonal} tubes of $\S_{\Y}$, where $\S_{\Y}$ is the f-diagonal tensor from the t-SVD of $\Y$. That is $\mathtt{rank}_t(\Y):= \#\{i:\S_{\Y}(i,i,:)\neq 0\}.$ And its average rank is defined as $\mathtt{rank}_a(\Y)=\frac{1}{k}\sum_i^k \mathtt{rank}(\overline{\bm{Y}}^{(i)})$.
The condition number of a tensor $\Y\in\mathbb{R}^{m \times n \times k} $ is defined as $\kappa(\Y)=\frac{\sigma_1(\overline{\bm{Y}})}{\sigma_{\min}(\overline{\bm{Y}})},$
    where {$\overline{\bm{Y}}$ is the block diagonal matrix of tensor $\overline{\Y}$ and }$\sigma_1(\overline{\bm{Y}})\ge \cdots \ge \sigma_{\min}(\overline{\bm{Y}})>0 $ denotes the singular values of $\overline{\bm{Y}}$. For $\Y\in\mathbb{R}^{m\times n \times k}$, its spectral norm is denoted as $    \|\Y\|:=\|\mathtt{bcirc}(\Y)\|=\|\overline{\bm{Y}}\|$; its frobenius norm is defined as $\|\Y\|_F := \sqrt{\sum_{i,j,l} \Y(i,j,l)^2};$ its {tubal tensor nuclear norm is defined as $||\Y||_*:=||\overline{\bm{Y}}||_* $\citep{karnik2024implicit}}.

\vspace{-3mm}
\section{Main results}
\subsection{Factorized gradient descent and t-RIP}
Firstly, we present the detailed update rule of the factorized gradient descent method for solving problem (\ref{equ:3}): $\U_0 \sim \mathcal{N}(0,\frac{\alpha^2}{R}),\ \U_{t+1} = \U_t -\eta\cdot \frac{1}{m}\M^*\left( \M(\U_t * \U_t^\top - \X*\X^\top) -\s \right) * \U_t,$
where {$\M^*(\textbf{e})=\sum_{i=1}^m e_i \A_i$} and $\X_\star=\X*\X^\top,\ \X\in\mathbb{R}^{n\times r\times k}$.
A common assumption for analyzing the convergence of factorized gradient descent is the t-RIP, which is defined as follows:
\begin{definition}[t-RIP \citep{zhang2021tensor}] 
A linear map $\M:\mathbb{R}^{n\times n \times k}\to \mathbb{R}^m$ is said to satisfy $(r,\delta)$ tensor Restricted Isometry Property (t-RIP ) for $\delta\in [0,1]$ if for any tensor $\Y\in\mathbb{R}^{n\times n \times k}$ with tubal-rank $\le r$, the following inequalities hold:
$(1-\delta)||\Y||_F^2 \le ||\M(\Y)||^2/m\le (1+\delta)||\Y||_F^2.$
\end{definition}
 The t-RIP condition has been shown to hold with high probability \citep{zhang2021tensor} if $m \gtrsim rnk/\delta^2$, provided that each measurement tensor $\A_i$ in the operator $\M$ has entries drawn independently from a sub-Gaussian distribution with zero mean and variance 1. Note that this condition has been extensively used in previous studies \citep{zhang2020rip,liu2024low,karnik2024implicit}, making it a natural and reasonable assumption in our setting.
 
 We decompose the FGD update as 
 \begin{equation}
     \U_{t+1} = \U_t - \eta (\U_t * \U_t^\top - \X_\star) * \U_t + \eta \underbrace{\left(\mathfrak{I}-\frac{\M^*\M}{m}\right)(\U_t*\U_t^\top-\X_\star)}_{(a)}*\U_t + \eta\cdot  \underbrace{\frac{1}{m}\M^*(\s)}_{(b):=\E}*\U_t
    ,  
 \notag
 \end{equation}
where $\mathfrak{I}:\mathbb{R}^{n\times n \times k}\to \mathbb{R}^{n\times n \times k}$ denotes the identity map. Then the t-RIP condition and tensor concentration bounds are applied to control terms (a) and (b) separately.

\subsection{Theoretical guarantees} 
 
We first establish theoretical guarantees for solving noisy low-tubal-rank tensor recovery via FGD with small initialization.
\begin{theorem}
Assume the following assumptions hold: (1) the linear map $\M$ satisfies $(2r+1,\delta)$ t-RIP with $\delta \le \frac{c}{\sqrt{kr}\kappa^4}$; (2) the step size $\eta \le c\kappa^{-4}||\X||^{2}$; (3) the error term $\E:=\frac{1}{m}\M^*(\s)$ satisfies $||\E||\le c\kappa^{-2}\sigma_{\min}^2(\X)$; (4) each entry of the initial point $\U_0$ is $i.i.d$ $\mathcal{N}(0,\frac{\alpha^2}{R})$; (5) $\X_\star\in\mathbb{R}^{n\times n\times k}$ is a full tubal-rank $ r$ tensor. With all these assumptions, the following statements hold with probability at least $1-ke^{-\tilde{c}R}-\max\{k(\tilde{C}\epsilon)^{R-r+1},k\epsilon^2\}$,\\
1. When $R=r$, and the initialization scale satisfies 
$
 \alpha \lesssim \frac{ \sqrt{r}\sigma_{\min}(\X)}{k^{23/14}{(R\land n)}\kappa^2} \left( \dfrac{2 \kappa^2 \sqrt{rn}}{\tilde{c}_3 } \right)^{-10 \kappa^2}
,$ {then we have
\vspace{-2mm}
$$
{|| \U_{\hat{t}}*\U_{\hat{t}}^\top -\X_\star||_F} \lesssim \sqrt{r}\kappa^2  {||\E ||},\ \operatorname{where}\ \hat{t}\gtrsim \frac{1}{\eta\sigma_{\min}^2(\X)} \ln \left(  \dfrac{ \kappa^2 r^{3/2} \sqrt{n} }{ \sqrt{k} \alpha \sigma_{\min}(\X) }\right).
$$}\\ 
2. When $r<R< 3r$, and initialization scale $\alpha$ satisfies 
$
\alpha \lesssim \min\left\{  \frac{\sigma_{\min}(\X)}{ {(R\land n)}\kappa^2},\frac{ \kappa^{\frac{35}{21}}||\E||^{\frac{16}{21}}}{({(R\land n)}-r)^{\frac{4}{7}}||\X||^{\frac{11}{21}}} \right\} \frac{r}{k^{23/14}}\left( \dfrac{2 \kappa^2 \sqrt{rn}}{\tilde{c}_3 } \right)^{-10 \kappa^2},
$
then we have 
$${|| \U_{\hat{t}}*\U_{\hat{t}}^\top -\X_\star||_F} \lesssim \sqrt{r}\kappa^2  {||\E ||},\ \operatorname{where}\ \hat{t}\asymp \frac{1}{\eta \sigma_{\min}^2(\X)}\ln\left( \frac{n^{\frac{1}{2}} r^{\frac{5}{2}} \kappa^2 ||\X||^2}{k^2 ({(R\land n)}-r) \alpha^2 } \right).$$\\
3. When $R\ge 3r$, and the initialization scale satisfies 
$
\alpha \lesssim \min\left\{ \frac{\sigma_{\min}(\X) }{{(R\land n)}\kappa^2}, \frac{\kappa^{\frac{35}{21}}||\E||^{\frac{16}{21}} }{({(R\land n)}-r)^{\frac{4}{7}}||\X||^{\frac{11}{21}} } \right\}\frac{1}{k^{23/14}} \left( \dfrac{2 \kappa^2 \sqrt{n}}{\tilde{c}_3 \sqrt{{(R\land n)}} } \right)^{-10 \kappa^2},
$
then we have 
$$
{|| \U_{\hat{t}}*\U_{\hat{t}}^\top -\X_\star||_F} \lesssim \sqrt{r}\kappa^2  {||\E ||},\ \operatorname{where}\ \hat{t}\asymp \frac{1}{\eta \sigma_{\min}(\X)^2} \ln \left( \dfrac{ \sqrt{n} \kappa^2||\X||^2 }{ k^2({(R\land n)}-r){(R\land n)}\alpha^2 } \right). 
$$ Here, $c,\tilde{c}, \tilde{c_3}, \epsilon, \tilde{C}$ are fixed numerical constants, {and we define $R\wedge n:=\min\{R,n\}$, $\kappa:=\kappa(\X).$}
\label{theorem:main}
\end{theorem}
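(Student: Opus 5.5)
We follow the small-initialization template of \citet{stoger2021small} and \citet{karnik2024implicit}, carried out slice-wise in the Fourier domain. After the DFT along the third mode, the t-product becomes block-diagonal multiplication, $\|\cdot\|$ becomes the largest spectral norm over the $k$ slices, and $\|\cdot\|_F^2$ becomes $\frac1k$ times the sum of slice Frobenius norms. Let $\V_{\X}\in\mathbb{R}^{n\times r\times k}$ be the tubal column space of $\X$. We decompose
\[
\U_t=\U_t*\W_t*\W_t^\top+\U_t*\W_{t,\perp}*\W_{t,\perp}^\top ,
\]
where $\W_t$ spans the tubal row space of $\V_{\X}^\top*\U_t$. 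We call the first part the signal term and the second the overparameterized error term. We then track three quantities along the iterates:
\[
\sigma_{\min}(\V_{\X}^\top*\U_t*\W_t),\qquad \|\U_t*\W_{t,\perp}\|,\qquad \|\V_{\X,\perp}^\top*\V_{\U_t*\W_t}\| .
\]
Throughout, term (a) is controlled by the t-RIP. Since $\delta\lesssim 1/(\sqrt{kr}\kappa^4)$, we use the rank-$(2r+1)$ spectral-to-Frobenius transfer $\|(\mathfrak I-\M^*\M/m)(\Z)\|\le \delta\sqrt{k}\|\Z\|_*$-type bounds. Term (b) is controlled by assumption (3) on $\|\E\|$.

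The argument has four stages. \textbf{Stage 1 (spectral/alignment phase).} Because $\alpha$ is tiny, $\U_t\approx(\I+\eta(\X_\star+\E))^{*t}*\U_0$ up to a perturbation that stays small. Here we compare with the linear power-method iterate and bound the nonlinear remainder by $O(\alpha^3)$ growth. Random matrix bounds on each Fourier slice of $\U_0$ give $\sigma_r(\V_{\X}^\top*\U_0)\gtrsim \alpha(\sqrt{R}-\sqrt{r-1})/\sqrt{R}$ with probability $1-k(\tilde C\epsilon)^{R-r+1}-ke^{-\tilde cR}$. This is where the three cases split: for $R=r$ the small-ball estimate only gives $\epsilon/\sqrt r$, for $r<R<3r$ it gives $(R-r)/\sqrt R$, and for $R\ge 3r$ it gives order $1$. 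This stage explains the different $\alpha$ conditions, the factors of $r$ and $R\wedge n$, and the logarithms in $\hat t$. After $t_1\asymp \frac{1}{\eta\sigma^2_{\min}}\ln(\cdot)$ steps the signal dominates, and the misalignment $\|\V_{\X,\perp}^\top*\V_{\U_t*\W_t}\|$ is $\lesssim\kappa^{-2}$ small. The factor $(2\kappa^2\sqrt{rn}/\tilde c_3)^{-10\kappa^2}$ is exactly the price of requiring the noise directions to be dominated for this many steps.

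\textbf{Stage 2 (signal growth).} While $\sigma_{\min}(\V_{\X}^\top*\U_t*\W_t)<\sigma_{\min}(\X)/\sqrt{10}$, we prove the lower bound
\[
\sigma_{\min}\ \text{grows by a factor}\ \bigl(1+\tfrac{\eta}{4}\sigma^2_{\min}(\X)\bigr)\ \text{per step}.
\]
Meanwhile $\|\U_t*\W_{t,\perp}\|$ grows at most by $(1+\eta(\|\E\|+\delta\text{-terms}))$, and the misalignment stays small by an induction. \textbf{Stage 3 (local linear convergence).} Once the signal part is well conditioned, $\|\V_{\X}^\top*(\X_\star-\U_t*\U_t^\top)\|$ contracts like $(1-c\eta\sigma^2_{\min})$ up to a floor of $\kappa^2\|\E\|$ plus terms of size $\|\U_t*\W_{t,\perp}\|^2$. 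We then convert the bound to Frobenius norm: the rank of the signal part is at most $2r$, which costs a factor $\sqrt{r}$. This yields $\sqrt r\kappa^2\|\E\|+\|\U_t*\W_{t,\perp}\|^2\sqrt{R\wedge n}$-type residuals.

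\textbf{Stage 4 (error stays small until $\hat t$).} We show the overparameterized component, started at size $\approx\alpha$, grows at most like $\alpha(1+\eta c\|\E\|)^t$. At $\hat t$ it therefore contributes at most $\sqrt{(R\wedge n)-r}\,\alpha^2(\cdots)^{\hat t}$. The second entries of the $\min$ in the $\alpha$ conditions (the exponents $16/21$, $4/7$, $11/21$) are chosen exactly so that this contribution is $\lesssim\sqrt r\kappa^2\|\E\|$. Once that is verified, the final bound follows from the triangle inequality.

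The main obstacle is Stage 1 together with the induction linking Stages 2--4. Concretely, we must show that the noise $\E$, which is full rank, does not feed $\U_t*\W_{t,\perp}$ at a rate comparable to the signal, while controlling all $k$ Fourier slices uniformly. The difficulty is that $\|\E\|$ rather than $\delta$ sets the growth rate of the spurious directions. We would first try to bound that growth rate by $1+\eta(\|\E\|+\delta\sqrt{k}\|\X\|^2)$, which is at most $1+\eta\sigma^2_{\min}/(8\kappa^2)$. This yields a polynomial-in-$\alpha$ separation that the stated $\alpha$ conditions can absorb.
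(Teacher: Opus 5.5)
Your overall architecture matches the paper's. Stages 1--3 correspond to Lemmas~\ref{lemma:01}--\ref{lemma:03}: the power-method comparison from small initialization, signal amplification with the complement kept small, contraction of $\|\V_{\X}^\top*(\X*\X^\top-\U_t*\U_t^\top)\|$ to an $O(\kappa^2\|\E\|)$ floor, and the $\sqrt r$ Frobenius conversion through the $\V_{\X}$-projection.

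The gap is in the step you single out as the main obstacle: the growth rate of $\|\U_t*\W_{t,\perp}\|$. The one-step bound that is actually available (\citet{karnik2024implicit}, Lemma E.3, used in the proofs of Lemmas~\ref{lemma:02} and~\ref{lemma:03}) is
\[
\|\overline{\U_{t+1}*\W_{t+1,\perp}}^{(j)}\|\le\Bigl(1-\tfrac{\eta}{2}\|\overline{\U_t*\W_{t,\perp}}^{(j)}\|^2+9\eta\,\|\overline{\V_{\X^\bot}^\top*\V_{\U_t*\W_t}}^{(j)}\|\,\|\X\|^2+2\eta\|\Delta_t\|\Bigr)\|\overline{\U_t*\W_{t,\perp}}^{(j)}\|,
\]
where $\Delta_t=(\M^*\M-\mathfrak{I})(\X*\X^\top-\U_t*\U_t^\top)+\E$. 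The middle term is there because $\W_{t+1,\perp}\neq\W_{t,\perp}$. The signal column space is only approximately aligned with $\V_{\X}$, so re-splitting $\U_{t+1}$ leaks signal mass into the complement. Misalignment is controlled only at level $c_2\kappa^{-2}$, so this term is $9c_2\eta\sigma^2_{\min}(\X)$. That is a fixed small fraction of the signal rate, independent of $\|\E\|$ and $\delta$, and not of order $\eta\sigma^2_{\min}(\X)/\kappa^2$. Hence the complement grows like $(1+80c_2\eta\sigma_{\min}^2(\X))^t$ even when $\E=0$. Your rates $1+\eta(\|\E\|+\delta\sqrt k\|\X\|^2)$ in Stage 2 and $\alpha(1+\eta c\|\E\|)^t$ in Stage 4 are therefore not justified.

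This is repairable inside your framework, since $80c_2$ is small compared with the signal rate $\tfrac18$ and the contraction rate $\tfrac1{400}$. But it changes exactly the bookkeeping that produces the statement.

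\textbf{Phase II.} This phase lasts at most $\frac{16}{\eta\sigma^2_{\min}(\X)}\ln(\sigma_{\min}(\X)/\gamma)$ steps. Over it the complement inflates from $2\gamma$ to $3\gamma^{7/8}\sigma_{\min}(\X)^{1/8}$, where $\gamma=\alpha\beta/4$ and $\beta$ may be as large as $(2\kappa^2\sqrt{rn}/\tilde c_3)^{10\kappa^2}$. So the complement is not of size $\alpha$ at $t_1$. This upper bound on $\beta$ is where the $(\cdot)^{-10\kappa^2}$ factor in the $\alpha$-conditions comes from.

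\textbf{Phase III for $R>r$.} Here you cannot run indefinitely. The paper closes the induction $\|\V_{\X}^\top*(\X*\X^\top-\U_t*\U_t^\top)\|\le 10(1-\frac{\eta}{400}\sigma_{\min}^2(\X))^{t-t_1}\|\X\|^2+O(\kappa^2\|\E\|)$ only while $k((R\wedge n)-r)\|\U_t*\W_{t,\perp}\|^2\le\frac14(1-\frac{\eta}{400}\sigma_{\min}^2(\X))^{t-t_1}\|\X\|^2$. That holds up to $\hat t=t_1+\left\lceil\frac{300}{\eta\sigma^2_{\min}(\X)}\ln\left(\frac{\kappa^{1/4}\|\X\|^{7/4}}{16k((R\wedge n)-r)\gamma^{7/4}}\right)\right\rceil$. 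At that time the remaining transient is $\sqrt r\,\kappa^{-3/16}k^{3/4}((R\wedge n)-r)^{3/4}\gamma^{21/16}\|\X\|^{11/16}$.

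Requiring this transient to be $\lesssim\sqrt r\kappa^2\|\E\|$ is what produces the exponents $16/21$, $4/7$, $11/21$, via the $7/4$ inflation and the ratio $300/400$. So these exponents are set by the in-subspace transient at the stopping time, balanced against the complement's growth. Estimating the complement alone as $\sqrt{(R\wedge n)-r}\,\alpha^2$ would give different exponents. It also would not show that the transient has reached the noise floor at the stated $\hat t$.

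\textbf{Case $R=r$.} There is no complement, so no stopping window is needed and the bound holds for every $t\ge t_1$. This is why case~1 states $\hat t\gtrsim$ rather than $\hat t\asymp$.
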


\begin{remark}(\textbf{Recovery error})
 Our final recovery error is $\sqrt{r}\kappa^2||\E||$, which depends only on the spectral norm of the noise term $\E$, the condition number $\kappa$ of $\X$, and the true tubal-rank $r$. We make no specific assumptions on the distribution of the noise, requiring only that $||\E|| \le c\kappa^{-2}\sigma_{\min}^2(\X)$. This makes our result potentially applicable to a wide range of noise distributions. When the noise is Gaussian noise, our bound reduces to that of \citep{liu2024low}. However, a key difference is that our error bound depends only on the true tubal-rank $r$, whereas the bound in \citet{liu2024low} depends on the overestimated tubal-rank $R$.
\end{remark}

Then we present a theorem that characterizes the minimax error in the Gaussian noise case. Theorem \ref{theorem:minimax error bound} establishes the fundamental statistical limit for low-tubal-rank tensor recovery. Specifically, for any estimation procedure, the mean squared error cannot uniformly fall below order $\Theta(nrk\sigma^2/m)$ over tensors of tubal-rank at most $r$. Furthermore, there exist parameter choices under which the error attains this order with constant probability.
\begin{theorem}[\textbf{Minimax error}]
Suppose that the linear map $\M(\cdot)$ satisfies the ($r,\delta$) t-RIP, $\X_\star\in\mathbb{R}^{n\times n\times k}$ is a full tubal-rank $ r$ tensor, and that  $s\sim \mathcal{N}(0,\sigma^2 \bm{I})$, then any estimator $\X_{est}$ obeys
\begin{equation}
\begin{aligned}
&\underset{\X_\star}{\sup}\ \mathbb{E} ||\X_{est}-\X_\star||_F^2\ge \frac{1}{1+\delta} \frac{nrk\sigma^2}{m},\ \ \underset{\X_\star}{\sup}\ \mathbb{P}\left( ||\X_{est}-\X_\star||_F^2\ge \frac{nrk\sigma^2}{2m(1+\delta)} \right) \ge 1-e^{-\frac{nrk}{16} }.
\end{aligned}
\notag
\end{equation}
\label{theorem:minimax error bound}
\end{theorem}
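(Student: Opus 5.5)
We reduce the problem to a Gaussian linear model and apply the classical minimax bound for linear regression. The reduction goes through the set of tubal-rank-$r$ tensors sharing a fixed left tensor-column subspace.

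\textbf{Step 1: a linear subspace of tubal-rank $\le r$ tensors.} Fix an orthonormal tensor $\V\in\mathbb{R}^{n\times r\times k}$, i.e.\ $\V^\top*\V=\I_r$. Consider
\[
\mathcal{T}=\{\V*\W^\top:\ \W\in\mathbb{R}^{n\times r\times k}\}.
\]
This is a linear space of dimension $d=nrk$, and every element has tubal-rank at most $r$. The estimator is allowed to be arbitrary, so we may enlarge the supremum to all $\X_\star\in\mathcal{T}$. Strictly, the statement asks for full tubal-rank $r$ (and PSD) tensors. Full-rank elements are dense in $\mathcal{T}$, so the supremum over them equals the supremum over $\mathcal{T}$ by a limiting argument, or by taking the prior below conditioned on a set of probability one. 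Choose an orthonormal basis $\{\B_j\}_{j=1}^d$ of $\mathcal{T}$ with respect to $\langle\cdot,\cdot\rangle$, and write $\X_\star=\sum_j\theta_j\B_j$. Then $\bm y=\bm{A}\bm\theta+\s$ with $\bm A=[\M(\B_1),\dots,\M(\B_d)]\in\mathbb{R}^{m\times d}$, and $\|\X_{est}-\X_\star\|_F\ge\|\hat{\bm\theta}-\bm\theta\|_2$, where $\hat{\bm\theta}$ is the coordinate vector of the projection of $\X_{est}$ onto $\mathcal{T}$.

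\textbf{Step 2: use t-RIP to bound the Fisher information.} Every element of $\mathcal{T}$ has tubal-rank $\le r$. The $(r,\delta)$ t-RIP therefore gives $\|\bm A\bm\theta\|^2\le m(1+\delta)\|\bm\theta\|^2$, that is, $\bm A^\top\bm A\preceq m(1+\delta)\bm I_d$.

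\textbf{Step 3: expectation bound.} Put a Gaussian prior $\bm\theta\sim\mathcal{N}(0,\tau^2\bm I_d)$. The Bayes risk is $\operatorname{tr}\big((\tau^{-2}\bm I+\sigma^{-2}\bm A^\top\bm A)^{-1}\big)$. By Step 2 this is at least $d\big(\tau^{-2}+m(1+\delta)/\sigma^2\big)^{-1}$. Letting $\tau\to\infty$ and using that the minimax risk dominates the Bayes risk yields
\[
\sup\mathbb{E}\|\X_{est}-\X_\star\|_F^2\ \ge\ \frac{nrk\sigma^2}{m(1+\delta)}.
\]
Equivalently, one can apply the van Trees inequality.

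\textbf{Step 4: probability bound.} Under the same prior with large $\tau$, the posterior of $\bm\theta$ given $\bm y$ is Gaussian with covariance $\bm\Sigma\succeq\frac{\sigma^2}{m(1+\delta)}\bm I_d(1-o(1))$. For any estimator, $\bm\theta-\hat{\bm\theta}(\bm y)$ conditional on $\bm y$ is a Gaussian with covariance $\bm\Sigma$ and some mean. By Anderson's lemma, $\|\bm\theta-\hat{\bm\theta}\|^2$ stochastically dominates $\|\bm\Sigma^{1/2}\bm g\|^2$ with $\bm g\sim\mathcal{N}(0,\bm I_d)$, which in turn dominates $\frac{\sigma^2}{m(1+\delta)}(1-o(1))\,\chi^2_d$. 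The Laurent--Massart lower tail gives $\mathbb{P}(\chi^2_d\le d/2)\le e^{-d/16}$, since $d-2\sqrt{dx}\le d/2$ at $x=d/16$. Averaging over the prior shows that the Bayes probability, and hence the supremum over $\X_\star$, is at least $1-e^{-nrk/16}$. The $(1-o(1))$ factor is absorbed by choosing $\tau$ large enough, using the slack between $1/2$ and $1/2-\epsilon$ if necessary.

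\textbf{Main obstacle.} The main subtlety is the reduction in Step 1: the supremum runs over full-rank (and, in the paper's setting, PSD) tensors, while the argument uses a linear subspace. The rank issue is handled by density, as above. If the PSD restriction must be respected, I would instead use a local Gaussian prior centred at a fixed well-conditioned $\X_0*\X_0^\top$. Perturbations $\X_0*\Delta^\top+\Delta*\X_0^\top$ span a space of dimension about $nrk$, and for small prior variance they remain PSD with tubal rank at most $2r$ or $r$. The lower bound then loses only constants, and a careful choice of tangent directions recovers the stated constant.
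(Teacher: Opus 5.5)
Your proposal is correct and follows essentially the same route as the paper. You restrict to tensors $\V*\W^\top$ with a fixed orthonormal $\V$ and project onto that subspace to obtain an $nrk$-dimensional Gaussian linear model, use t-RIP to get $\bm A^\top\bm A\preceq m(1+\delta)\bm I$, and lower bound the minimax risk by the Gaussian-prior Bayes risk with $\tau\to\infty$. The differences are minor: you prove the probability bound yourself (Gaussian posterior, Anderson's lemma, chi-square lower tail) where the paper simply cites Lemma 3.14 of Cand\`es--Plan, and your almost-sure-full-rank argument handles the full-rank restriction more carefully than the paper does; the PSD detour is unnecessary, since the paper takes the supremum over all tubal-rank-$r$ tensors.
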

\vspace{-5mm}
With the minimax error under Gaussian noise, we further show that when $s \sim \mathcal{N}(0,\sigma^2)$, FGD with small initialization converges to nearly optimal error.
\begin{corollary}(\textbf{Nearly minimax optimal error in Gaussian case})
Under the assumptions  of Theorem \ref{theorem:main}, further assume that the entries of the noise vector $\s$  are Gaussian with zero mean and variance $\sigma^2$, and that the number of measurements satisfies $m \gtrsim nk\kappa^4\frac{\sigma^2}{\sigma^4_{\min}(\X)}$. Then, with high probability, we have $\|\U_{\hat{t}}*\U_{\hat{t}}^\top - \X_\star\|_F^2 \lesssim \frac{nkr \kappa^4 \sigma^2}{m}$, where $\hat{\U}_t$ is the same as Theorem \ref{theorem:main}.
\label{corollary 1}
\end{corollary}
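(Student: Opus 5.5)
The plan is to combine the deterministic bound of Theorem \ref{theorem:main} with a high-probability bound on the spectral norm of the noise tensor $\E=\frac{1}{m}\M^*(\s)=\frac{1}{m}\sum_{i=1}^m s_i\A_i$. Theorem \ref{theorem:main} gives $\|\U_{\hat t}*\U_{\hat t}^\top-\X_\star\|_F^2\lesssim r\kappa^4\|\E\|^2$, so it suffices to show that with high probability
\[
\|\E\|\lesssim \sigma\sqrt{\frac{nk}{m}},
\]
and to check that the measurement condition $m\gtrsim nk\kappa^4\sigma^2/\sigma_{\min}^4(\X)$ forces assumption (3) of Theorem \ref{theorem:main}, namely $\|\E\|\le c\kappa^{-2}\sigma_{\min}^2(\X)$. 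Indeed, once the first bound holds, $\|\E\|\lesssim\sigma\sqrt{nk/m}\le c\kappa^{-2}\sigma^2_{\min}(\X)$ exactly when $m\gtrsim nk\kappa^4\sigma^2/\sigma_{\min}^4(\X)$. Squaring the Frobenius bound then yields the claimed $\frac{nkr\kappa^4\sigma^2}{m}$. Since all three cases of Theorem \ref{theorem:main} give the same error, no case split is needed.

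For the spectral bound I would use the Fourier-domain characterization $\|\E\|=\|\overline{\bm{E}}\|=\max_{l}\|\overline{\bm{E}}^{(l)}\|$. I would condition on the measurement tensors $\A_i$, which are sub-Gaussian with i.i.d.\ entries, as in the t-RIP setting. Given the $\A_i$, each $\E$ is a Gaussian tensor in $\s$. Equivalently, I would treat $\E$ as a sum of independent random tensors and use a covering argument: write $\|\overline{\bm{E}}^{(l)}\|=\sup_{u,v}\operatorname{Re}\, u^*\overline{\bm{E}}^{(l)}v$ over unit complex vectors in $\mathbb{C}^n$. For fixed $u,v,l$, the quantity $u^*\overline{\bm{E}}^{(l)}v=\frac1m\sum_i s_i\langle \A_i,\mathcal{T}_{l,u,v}\rangle$ is a linear functional of $\s$ against a rank-one tensor direction $\mathcal{T}_{l,u,v}$, which has tubal-rank one and Frobenius norm $O(1)$ (up to the $\sqrt{k}$ normalization of the DFT). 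By $(2r+1,\delta)$ t-RIP, $\sum_i\langle\A_i,\mathcal{T}\rangle^2\le m(1+\delta)\|\mathcal{T}\|_F^2$, so this Gaussian has variance at most $(1+\delta)\sigma^2\|\mathcal{T}\|_F^2/m$. A $1/4$-net over $u,v$ has size $9^{4n}$, and a union bound over the net and over the $k$ slices gives $\|\E\|\lesssim\sigma\sqrt{(nk+\log k)/m}$ with probability $1-e^{-cn}$, where the precise $k$ normalization comes from relating $\|\mathcal{T}\|_F$ to $\|u\|\|v\|$ after the inverse FFT. This gives the "$\tilde{\mathcal O}$" version; the log factor is absorbed or accounted for.

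The main obstacle is the bookkeeping of the Fourier normalization. The directions $\mathcal{T}_{l,u,v}$ are complex in the Fourier domain, so I need to take real and imaginary parts to obtain real tubal-rank-$\le2$ tensors, which t-RIP still covers since it holds at rank $2r+1\ge 2$. I also have to confirm that the net size yields exactly the $nk$ factor rather than $nk^2$. If this becomes delicate, the fallback is to bound $\|\E\|\le\|\E\|_F$ slice-wise via Gaussian matrix concentration conditional on $\A$, or simply to cite a standard tensor concentration lemma of the kind the paper alludes to for term (b). The final probability combines this event with the success event of Theorem \ref{theorem:main}.
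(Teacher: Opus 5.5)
The top-level reduction is the paper's. Theorem~\ref{theorem:main} gives $\|\U_{\hat t}*\U_{\hat t}^\top-\X_\star\|_F^2\lesssim r\kappa^4\|\E\|^2$. A bound $\|\E\|\lesssim\sigma\sqrt{nk/m}$ together with the sample-size condition gives assumption (3), and squaring finishes. Your condition $m\gtrsim nk\kappa^4\sigma^2/\sigma_{\min}^4(\X)$ is the correct one; the paper's proof writes $\sigma_{\min}^2(\X)$ at that step, which is a typo.

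The real difference is the noise bound. The paper simply cites \citet{liu2024low} for $\|\frac{1}{m}\M^*(\s)\|\lesssim\sigma\sqrt{nk/m}$. You instead derive it conditionally on the $\A_i$, using only t-RIP, and that derivation goes through:
\begin{itemize}
\item With $\omega=e^{-2\pi\mathrm{i}/k}$, the test tensor with slices $\omega^{(l-1)(j-1)}\bar u v^\top$ has squared Frobenius norm exactly $k$.
\item Its real and imaginary parts are real tensors of tubal rank at most $2$. In the Fourier domain they live on one conjugate pair of slices, each of rank one, or on a single self-conjugate slice of rank at most two.
\item Hence each net functional is Gaussian with variance at most $(1+\delta)\sigma^2 k/m$.
\item The net lives in $\mathbb{C}^n$, so it contributes only $\sqrt{n}$. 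You therefore get $nk$, not $nk^2$.
\end{itemize}
Your route makes the corollary self-contained. It also needs no distributional assumption on the $\A_i$ beyond the t-RIP already assumed in Theorem~\ref{theorem:main}.

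One correction concerns the union bound over the $k$ slices. It gives $\|\E\|\lesssim\sigma\sqrt{k(n+\log k)/m}$: the $\log k$ is multiplied by $k$, not added to $nk$ as you wrote. So you recover the corollary's bound exactly when $n\gtrsim\log k$, and otherwise only up to a logarithmic factor.
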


\begin{remark}(\textbf{Sample complexity})
Our assumption on the number of measurements $m$ mainly comes from the $t$-RIP condition, which requires $m \gtrsim nkr/\delta^2$. In this work, we rely only on the $(2r+1, \delta)$ t-RIP, without depending on the overestimated tubal-rank $R$, which is consistent with the setting in \citep{karnik2024implicit}.  In contrast, \citep{liu2024low} requires the $(4R, \delta)$ t-RIP, leading to higher sample complexity as the overestimated tubal-rank $R$ increases. Note that this sampling complexity is required only for theoretical guarantees; in practice, a much smaller sample size suffices, as shown in Figure \ref{fig:2} (d).
\end{remark}

\begin{remark}(\textbf{Comparison with \citep{liu2024low}})
{Both this work and \citep{liu2024low} employ factorized gradient descent algorithms to solve the low-tubal-rank tensor recovery problem. They are the first to apply FGD to this problem and provided convergence and recovery error analyses. However, our work differs significantly from them in several key aspects:  
\textbf{(1) Initialization:} They relies on spectral initialization to obtain a sufficiently good starting point for its theoretical analysis. In contrast, our method requires only a small random initialization to guarantee convergence. These two initialization strategies lead to entirely different analytical frameworks and theoretical results.
\textbf{(2) Convergence rate:} In \citep{liu2024low}, the convergence rate under over-parameterization is sublinear, whereas our analysis shows that the convergence rate remains linear even in the over-parameterized regime.  
\textbf{(3) Recovery error:} Their recovery error depends on the over-parameterized tubal rank $R$, while ours depends only on the true tubal rank $r$.  
\textbf{(4) Sampling complexity:} They require the measurement operator $\mathfrak{M}$ to satisfy the $(4R, \delta)$ t-RIP condition, whereas we only require $\mathfrak{M}$ to satisfy the $(2r + 1, \delta)$ t-RIP condition. As a result, the sampling complexity in \citep{liu2024low} grows with the degree of over-parameterization, while our requirement remains mild and independent of $R$.}
\end{remark}

\begin{remark}(\textbf{Comparison with \citet{karnik2024implicit}}) {Another related work is \citet{karnik2024implicit}, which studies tubal tensor recovery under small initialization. Our work differs from theirs in several key aspects. \textbf{(1) Problem setting:} While they focus on the implicit regularization effect of small initialization, our goal is to provide theoretical guarantees for low-tubal-rank tensor recovery with noise under small initialization. 
\textbf{(2) Technical tools:} Their analysis splits the FGD trajectory into only two stages-the spectral stage and the convergence stage, which does not allow a precise characterization of the noise evolution. In contrast, we introduce a four-phase decomposition that provides a much finer description of the trajectory, enabling us to track the effect of noise throughout all stages. Consequently, directly extending their results to the noisy tensor setting does not yield minimax-optimal recovery guarantees. \textbf{(3) Theoretical results: } 
Our analysis requires less restrictive bounds on parameters. For example, the upper bound on the initialization scale $\alpha$ in our Theorem 2 is significantly more relaxed than that in [\citet{karnik2024implicit}, Theorem 3.1].}
\end{remark}

{
\begin{remark}(\textbf{Discussion with tubal-rank estimation methods})
Over the past five years, many low-tubal-rank tensor recovery methods with rank estimation strategies have been proposed \citep{shi2021robust,zheng2023novel,zhu2025fast}. \textbf{(1) Problem setting:} Our goal is to achieve stable recovery even when the specified tubal-rank upper bound exceeds the true tubal-rank, ensuring that the error does not deteriorate as the upper bound increases. In contrast, tubal-rank estimation methods aim to identify or approximate the true tubal-rank. \textbf{(2) Noise models:} \citet{shi2021robust} and \citet{zhu2025fast} considered rank estimation in the presence of sparse noise, while \citet{zheng2023novel} focuses on fast and robust rank estimation in the noiseless setting.  Our results apply to the situation inthe presence of sub-Gaussian noise. \textbf{(3) Theoretical guarantees:} To the best of our knowledge, the above works do not provide rank-independent error bounds under the t-SVD and tubal-rank setting. Our main contribution is to establish such tubal-rank-independent guarantees and demonstrate near-minimax statistical accuracy.
\end{remark}
}

\vspace{-3mm}
\subsection{Proof sketch}
\vspace{-1mm}
\label{sec:3.3}
Define the tensor column subspace of $\X$ as $\V_{\X}\in\mathbb{R}^{n\times r \times k}$. Consider the tensor $\V_{\X}^\top*\U_t$ and the corresponding t-SVD $\V_{\X}^\top*\U_t = \V_t*\S_t*\W_t^\top$ with $\W_t\in\mathbb{R}^{R\times r \times k}.$ And we denote $\W_{t,\bot}\in\mathbb{R}^{R\times (n-r)\times k}$ as a tensor whose tensor column subspace is orthogonal to the column subspace of $\W_t.$ Then we can decompose $\U_t$ into ``signal term" and ``over-parameterization term": 
\begin{equation}
\U_t = \underbrace{\U_t*\W_t*\W_t^\top}_{\text{signal term}} + \underbrace{\U_t*\W_{t,\bot}*\W_{t,\bot}^\top}_{\text{over-parameterization term}}.
\label{equ:decomposition}
\end{equation}
Through this decomposition, we can separately analyze the signal term and the over-parameterization term. Specifically, we consider the following three quantities to study the convergence behavior of FGD:
\vspace{-1mm}
\begin{itemize}
    \item  $\sigma_{\min}(\U_t * \W_t)$: the magnitude of the signal term;
\item  $\| \U_t * \W_{t,\bot} \|$: the magnitude of the over-parameterization term;
\item  $\| \V_{\X^\bot}^\top * \V_{\U_t * \W_t} \|$: the alignment between the column space of the signal and that of the ground truth.
\end{itemize}
\vspace{-1mm}
Then we divide the trajectory of FGD into four phases:

\textbf{I. Alignment phase:} At this stage, the column space of the signal term $\U_t * \W_t$ gradually aligns with that of the ground truth $\X_\star$, as indicated by the decreasing value of $\| \V_{\X^\bot}^\top * \V_{\U_t * \W_t} \|$. Both $\sigma_{\min}(\U_t * \W_t)$ and $\| \U_t * \W_{t,\bot} \|$ remain small due to the small initialization.

\textbf{II. Signal amplification phase:} Here, $\sigma_{\min}(\U_t * \W_t)$ grows exponentially until it reaches at least $\frac{\sigma_{\min}(\X)}{\sqrt{10}}$, while $\| \U_t * \W_{t,\bot} \|$ remains nearly at the scale of the initialization.

\textbf{III. Local refinement phase:}In this stage, using the decomposition (\ref{equ:decomposition}), the error is decomposed as
   $$
   \| \U_t * \U_t^\top - \X_\star \| \le 4 \| \V_{\X}^\top * (\U_t* \U_t^\top - \X_\star) \| + \| \U_t * \W_{t,\bot} \|^2.
   $$
   The over-parameterization term $\| \U_t * \W_{t,\bot} \|^2$ remains small, while the in-subspace error $\| \V_{\X}^\top * (\U_t * \U_t^\top - \X_\star) \|$ decreases rapidly, leading to the lowest recovery error.

\textbf{IV. Overfitting phase:} Eventually, the over-parameterization term $\| \U_t * \W_{t,\bot} \|$ starts to grow, which causes the overall error $\| \U_t * \U_t^\top - \X_\star \|_F$ to increase and approach the error of spectral initialization.

\textbf{The power of small initialization}
Through the above four-phase analysis, we can see that small initialization plays a crucial role. Specifically, small initialization ensures that the signal term rapidly increases while keeping the over-parameterization term at a small magnitude, thereby mitigating the negative effects brought by over-parameterization. In particular, during Phase III, the over-parameterization term $\| \U_t * \W_{t,\bot} \|^2$ remains small, and $\| \V_{\X}^\top * (\U_t * \U_t^\top - \X_\star) \|$ converges quickly. Moreover, due to the introduction of $\V_{\X}$, we have
\[
\| \V_{\X}^\top * (\U_t * \U_t^\top - \X_\star) \|_F \le \sqrt{r} \| \V_{\X}^\top * (\U_t * \U_t^\top - \X_\star) \|,
\]
which ensures that the final recovery error is independent of the over tubal-rank $R$. 

% In addition, small initialization also relaxes the required t-RIP condition.

\begin{remark}
We assume that $\X_\star$ is symmetric and can be factorized as $\X_\star = \X * \X^\top$, which aligns with prior works \citep{liu2024low,karnik2024implicit}. Extending to the general asymmetric case where $\X_{\text{asym}} \in \mathbb{R}^{m \times n \times k}$ is factorized as $\L * \R^\top$ requires several modifications. We provide a brief discussion here, with more details deferred to the Appendix \ref{sec:general}. First, a symmetrization step is needed to construct a symmetric tensor $\X_{\text{sym}} \in \mathbb{R}^{(m+n) \times (m+n) \times k}$ and its corresponding symmetric model. Second, the trajectories of the two factor tensors are coupled, making it necessary to analyze additional imbalance terms, an issue that does not arise in the symmetric setting.
\end{remark}

{\begin{remark}(Comparison with \citep{ding2025validation})
Our framework reduces to the matrix setting when $n_3=1$: the t-product becomes matrix multiplication, tubal-rank becomes matrix rank, and $\mathcal X=\mathcal{U}*\mathcal{U}^{\top}$ reduces to $X=UU^\top$. In this special case, Theorem \ref{theorem:main} recovers the same qualitative phenomenon reported for matrix FGD: small initialization and early stopping yield error bounds that do not deteriorate with the over-specified rank, as shown in literature \citep{ding2025validation}. However, extending the matrix setting to the tensor setting is nontrivial, one must address several challenges unique to tensors, as discussed in Remark \ref{remak:7}.
\end{remark}}

{\begin{remark}(Tensor specific challenges)
\label{remak:7}
First, in the matrix case, the range and the kernel are complementary subspaces. This property no longer holds for third-order tubal tensors. If the true tensor contains non-invertible tubes in its t-SVD, equivalently, if some frequency slices vanish in the Fourier domain, then the range and kernel share common generators. As a result, the classical decomposition of gradient updates into a “signal term’’ and a “over-parameterization term’’ fails on these non-invertible tubes. This necessitates introducing a more refined notion of tensor condition number to track the identifiable and unidentifiable components separately.
Second, for the power method, each frequency slice of a tubal tensor behaves like an independent matrix power iteration, a known fact in the \citep{gleich2013power}. However, in gradient descent for tensor recovery, the measurement operator and its adjoint couple information across all frequency slices. Consequently, the update of any single slice depends on all other slices, making it impossible to analyze the slices independently, as in the power method.
Finally, in the matrix setting, \citet{candes2011tight} has already established the minimax error for noisy matrix sensing. To the best of our knowledge, however, no such minimax error analysis exists for the tensor setting. 
\end{remark}}

\subsection{Early stopping via validation}
\vspace{-1mm}
\label{sec:3.4}
Although Theorem \ref{theorem:main} provides the sharpest known error bound, it is clear that the choice of $\hat{t}$ depends on prior knowledge of $\X_\star$, which is often unavailable in practice. As shown in Figure 1, setting $\hat{t}$ too small or too large can lead to increased error. A practical solution is to use validation to determine when to stop the algorithm, a common technique in machine learning \citep{Prechelt1998,10.1111/j.2517-6161.1974.tb00994.x,ding2025validation}. Specifically, we randomly split the observed data $\{\A_i, y_i\}_{i=1}^m$ into a training set $(\y_{\text{train}}, \M_{\text{train}})$ of size $m_{\operatorname{train}}$ and a validation set $(\y_{\text{val}}, \M_{\text{val}})$ of size $m_{\text{val}}$. We then perform gradient descent using the training set. After each iteration, we compute the validation loss $e_t=\frac{1}{4}||\y_{\text{val}} - \M_{\text{val}}(\U_t*\U_t^\top)||^2$. The final estimate is selected as $\check{t} = \arg\min_t {e}_t$, and we output $\U_{\check{t}} * \U_{\check{t}}^\top$ as the recovered tensor. The full procedure is described in Algorithm \ref{algorithm:1}, Appendix \ref{sec:2}.

We then provide a theoretical guarantee showing that, when $\check{t} = \arg\min_{1 \le \check{t} \le T} e_t$, the recovery error $||\U_{\check{t}} * \U_{\check{t}}^\top - \X_\star||_F$ achieves the bound stated in Theorem \ref{theorem:main}.
\begin{theorem}

Assume the same conditions as in Theorem \ref{theorem:main}, except that $(\y, \M)$ is replaced by $(y_{\text{train}}, \M_{\text{train}})$. In addition, suppose that
$m_{\text{val}} \ge   C_1  \frac{m^2_{\text{train}}\log T}{(rnk\kappa^4)^2}$, and $T$ be the max $\hat{t}$ in Theorem \ref{theorem:main}.
Assume that each entry of the noise vector $\s$ is independently sampled from the Gaussian distribution $\mathcal{N}(0, \sigma^2)$. Define $\check{t} = \arg\min_{1 \le t \le T} e_t.$
Then, with probability at least $1 - 2T\exp\left({-\frac{C_2(nkr\kappa^4)^2m_{\operatorname{val}}}{m^2_{\operatorname{train}}}}\right)$, 
$
||\U_{\check{t}}*\U_{\check{t}}^\top-\X_\star||_F^2\le C\frac{nkr\sigma^2\kappa^4}{m_{\text{train}}}.
$
\label{theorem:validation}
\end{theorem}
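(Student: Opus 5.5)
The plan is to reduce the validation guarantee to Theorem \ref{theorem:main} together with a uniform concentration argument on the validation losses over the $T$ iterates. The validation set is independent of the training trajectory $\{\U_t\}_{t\le T}$, since FGD uses only $(\y_{\text{train}},\M_{\text{train}})$. So, conditional on the training data, each $\Delta_t:=\U_t*\U_t^\top-\X_\star$ is a fixed tensor. We write
\[
e_t=\tfrac14\|\M_{\text{val}}(\Delta_t)-\s_{\text{val}}\|^2=\tfrac14\Big(\|\M_{\text{val}}(\Delta_t)\|^2-2\langle \M_{\text{val}}(\Delta_t),\s_{\text{val}}\rangle+\|\s_{\text{val}}\|^2\Big).
\]
The last term is common to all $t$ and drops out of the $\arg\min$. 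We therefore compare $\tilde e_t:=\tfrac{1}{m_{\text{val}}}\big(\|\M_{\text{val}}(\Delta_t)\|^2-2\langle \M_{\text{val}}(\Delta_t),\s_{\text{val}}\rangle\big)$ with its conditional mean $\|\Delta_t\|_F^2$. Here we use that the measurement tensors $\A_i$ are sub-Gaussian with zero mean and unit variance, and that the noise is independent Gaussian.

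The first step is to apply Theorem \ref{theorem:main} to the training problem, with $\E=\frac{1}{m_{\text{train}}}\M_{\text{train}}^*(\s)$. Using the Gaussian concentration that yields Corollary \ref{corollary 1}, namely $\|\E\|\lesssim\sigma\sqrt{nk/m_{\text{train}}}$ with high probability, we obtain some $\hat t\le T$ with
\[
\|\Delta_{\hat t}\|_F^2\le C_0\,\varepsilon,\qquad \varepsilon:=\frac{nkr\kappa^4\sigma^2}{m_{\text{train}}}.
\]

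The second step is concentration for each fixed $t$. The quantity $\frac1{m_{\text{val}}}\|\M_{\text{val}}(\Delta_t)\|^2$ is an average of i.i.d.\ sub-exponential variables $\langle\A_i,\Delta_t\rangle^2$ with mean $\|\Delta_t\|_F^2$ and $\psi_1$-norm $\lesssim\|\Delta_t\|_F^2$. The cross term $\frac{1}{m_{\text{val}}}\sum_i s_i\langle\A_i,\Delta_t\rangle$ is an average of products of independent sub-Gaussians, so it is sub-exponential with scale $\sigma\|\Delta_t\|_F$. Bernstein's inequality then gives, with probability at least $1-2\exp(-c\,m_{\text{val}}\min\{u^2/K^2,u/K\})$, the bound
\[
|\tilde e_t-\|\Delta_t\|_F^2|\le \tfrac14\|\Delta_t\|_F^2+ u,\qquad u\asymp\varepsilon,\quad K\asymp \|\Delta_t\|_F^2+\sigma\|\Delta_t\|_F.
\]
The cross term is handled by the elementary inequality $\sigma\|\Delta_t\|_F\le \tfrac18\|\Delta_t\|_F^2+2\sigma^2$ combined with a tail at scale $u$. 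We then take a union bound over $t\le T$, which produces the factor $2T$. Requiring the exponent to exceed $\log T$ is the condition $m_{\text{val}}\gtrsim m_{\text{train}}^2\log T/(rnk\kappa^4)^2$. That condition arises exactly because the relevant ratio is $(\sigma^2/\varepsilon)^2=(m_{\text{train}}/(nkr\kappa^4))^2$.

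The third step is the comparison. Let $\check t$ minimize $e_t$. Then
\[
\tfrac34\|\Delta_{\check t}\|_F^2-u\le\tilde e_{\check t}\le\tilde e_{\hat t}\le\tfrac54\|\Delta_{\hat t}\|_F^2+u,
\]
which gives $\|\Delta_{\check t}\|_F^2\lesssim C_0\varepsilon+u\lesssim\varepsilon$.

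The main obstacle is the second step. The sub-exponential scale involves $\|\Delta_t\|_F$, and this need not be small along the whole trajectory: it is $O(\|\X\|^2\sqrt r)$ early on and during overfitting. A multiplicative error is therefore required, rather than a deviation at the absolute scale $\varepsilon$. I would handle this by splitting into two regimes. When $\|\Delta_t\|_F^2\gg\varepsilon$, a relative-deviation Bernstein bound (at deviation level $\tfrac14\|\Delta_t\|_F^2$) shows $\tilde e_t$ exceeds $\tilde e_{\hat t}$, and this only needs $m_{\text{val}}\gtrsim\log T$. When $\|\Delta_t\|_F^2\lesssim\varepsilon$, the conclusion already holds. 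The bottleneck is the noise cross term with $\sigma^2\gg\varepsilon$, which dictates the stated sample size for $m_{\text{val}}$.
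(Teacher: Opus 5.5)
Your proof is correct, but it is organized differently from the paper's.

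\textbf{The paper's route.} The paper cites a uniform multiplicative concentration (Lemma D.1 of \citep{ding2025validation}) for the full validation loss: $\|\M_{\text{val}}(\Delta_t)+\mathbf{e}\|^2$ concentrates around $m_{\text{val}}(\|\Delta_t\|_F^2+\sigma^2)$ with relative error $\delta_{\text{val}}$. From this it deduces
\[
\|\Delta_{\check t}\|_F^2\le\tfrac{1+\delta_{\text{val}}}{1-\delta_{\text{val}}}\|\Delta_{\hat t}\|_F^2+\tfrac{2\delta_{\text{val}}}{1-\delta_{\text{val}}}\sigma^2 .
\]
To make the additive $\sigma^2$ term harmless it must take $\delta_{\text{val}}\asymp nkr\kappa^4/m_{\text{train}}=\varepsilon/\sigma^2$. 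The requirement $m_{\text{val}}\gtrsim\log T/\delta_{\text{val}}^2$ is then where the square $(\sigma^2/\varepsilon)^2$ comes from.

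\textbf{Your route.} You cancel the $t$-independent term $\|\s_{\text{val}}\|^2$ exactly and apply Bernstein to the quadratic and cross terms separately. This is self-contained and shows where the sample cost actually comes from.

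The clean way to make your middle step uniform in $t$ is to apply Bernstein to each piece at the full threshold $\tau_t=\tfrac18\|\Delta_t\|_F^2+u$. Applying it at level $u$ with $K\asymp\|\Delta_t\|_F^2+\sigma\|\Delta_t\|_F$ does not work, because that exponent degenerates when $\|\Delta_t\|_F$ is large. With your inequality $\sigma\|\Delta_t\|_F\le\tfrac18\|\Delta_t\|_F^2+2\sigma^2$, and using $\varepsilon\lesssim\sigma^2$, the cross term satisfies $\tau_t/K_t\gtrsim\varepsilon/\sigma^2$. The exponent is therefore $\gtrsim m_{\text{val}}(\varepsilon/\sigma^2)^2$, which reproduces exactly the stated probability and condition on $m_{\text{val}}$.

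Once you do this, the two-regime split in your last paragraph is unnecessary. It also contains a slip. For $\varepsilon\ll\|\Delta_t\|_F^2\lesssim\sigma^2$, the cross-term exponent at level $\tfrac14\|\Delta_t\|_F^2$ is only $\asymp m_{\text{val}}\|\Delta_t\|_F^2/\sigma^2$, so $m_{\text{val}}\gtrsim\log T$ does not suffice there.

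\textbf{What your decomposition buys.} It actually proves more than the paper. The only term that genuinely needs $(\sigma^2/\varepsilon)^2$ samples is the fluctuation of $\|\s_{\text{val}}\|^2$, and you have cancelled it. By AM--GM, $\tau_t\ge\sqrt{u/2}\,\|\Delta_t\|_F$, so $\tau_t/K_t\gtrsim\sqrt{\varepsilon}/\sigma$ and the exponent is $\gtrsim m_{\text{val}}\,\varepsilon/\sigma^2$. Hence $m_{\text{val}}\gtrsim\frac{m_{\text{train}}}{nkr\kappa^4}\log T$ already suffices, a square-root improvement over the stated condition. Your ``bottleneck'' remark therefore overstates the cost of the cross term.
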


\begin{remark} 
In Theorem \ref{theorem:main}, we require $m_{\text{train}} \gtrsim nkr^2\kappa^8$. Substituting this into the condition $ m_{\text{val}} \ge   C_1  \frac{m^2_{\text{train}}\log T}{(rnk\kappa^4)^2}$, we obtain $m_{\text{val}} \gtrsim r^2\kappa^8 \log T$. This is relatively small compared to $m_{\text{train}}$, making it practically feasible. Experiments also show that a relatively small $m_{\text{val}}$ suffices to achieve an error close to that under the exact tubal-rank.
\end{remark}

\vspace{-3mm}
\section{Experiments}
\vspace{-2mm}
\label{sec:exp}
\begin{figure*}[h]

\centering
\subfigure[]{
\begin{minipage}[t]{0.23\linewidth}
\centering
\includegraphics[width=3.2cm,height=3.2cm]{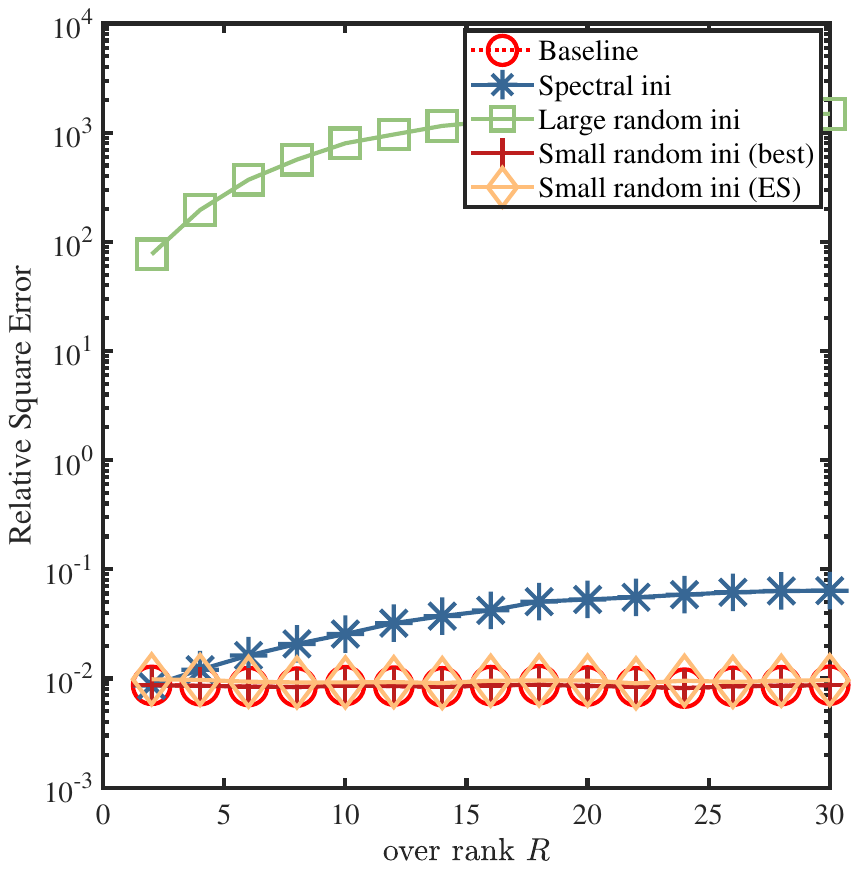}
\end{minipage}%
}
\subfigure[]{
\begin{minipage}[t]{0.23\linewidth}
\centering
\includegraphics[width=3.2cm,height=3.2cm]{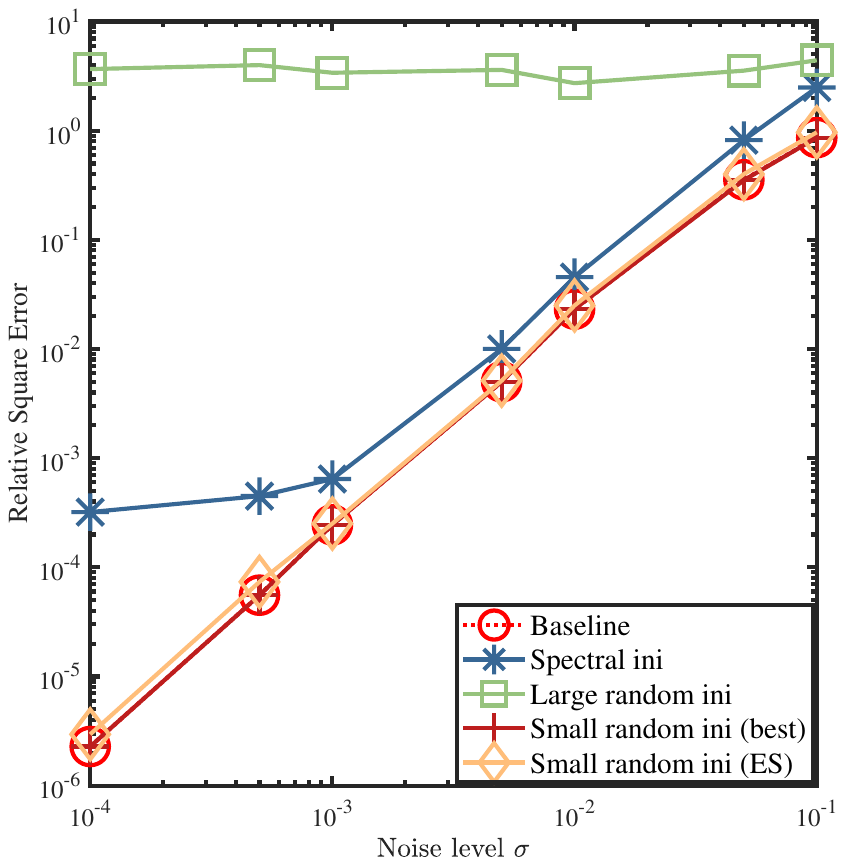}
\end{minipage}%
}
\subfigure[]{
\begin{minipage}[t]{0.23\linewidth}
\centering
\includegraphics[width=3.2cm,height=3.3cm]{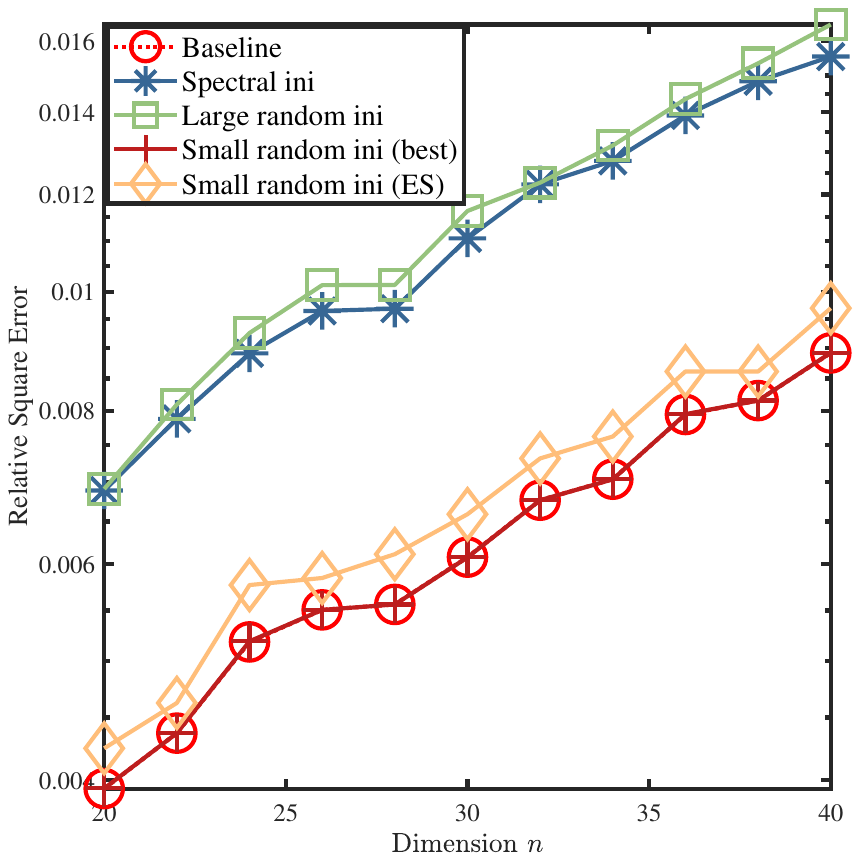}
\end{minipage}%
}
\subfigure[]{
\begin{minipage}[t]{0.23\linewidth}
\centering
\includegraphics[width=3.2cm,height=3.2cm]{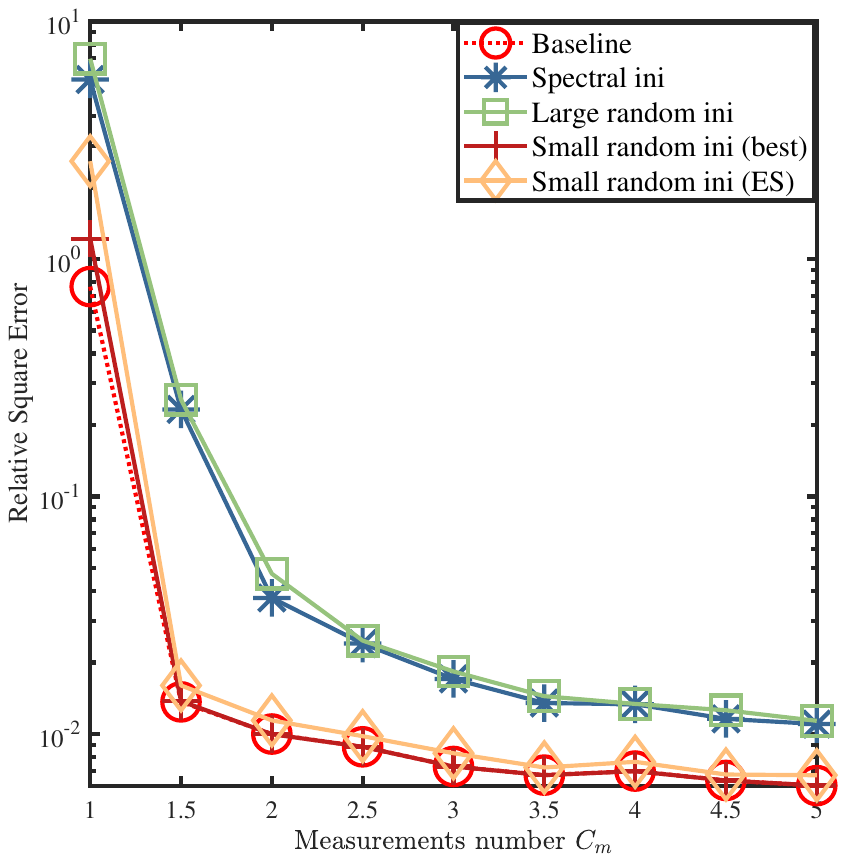}
\end{minipage}%
}
\centering
\vspace{-3mm}
\caption{Performance comparison under varying $r$, $\sigma$, $n$, and $m$. Subfigure (a) illustrates the recovery error of all methods under different over-rank values $R$, with parameters set as $m = 10nrk$, $n = 30$, $\sigma = 10^{-3}$, $\eta = 0.1$, and $T = 5000$. Subfigure (b) illustrates the error under varying noise levels $\sigma$, with $m = 10nrk$, $n = 30$, $R = 3r$, $\eta = 0.1$, and $T = 5000$. Subfigure (c) illustrates the error as the problem dimension $n$ changes, where $m = 10nrk$, $R = 3r$, $\eta = 0.1$, $T = 20000$, and $\sigma = 10^{-3}$. Subfigure (d) illustrates the performance under different numbers of measurements $C_m$, with $m = 2C_m  nrk$, $n = 30$, $R = 3r$, $\eta = 0.01$, $T = 20000$, and $\sigma = 10^{-3}$.}
\label{fig:2}
\vspace{-8mm}
\end{figure*}

We present a series of experiments demonstrating that, under over-rank settings, using small initialization combined with validation achieves recovery error comparable to that under exact parameterization. Compared to FGD with large random or spectral initialization \citep{liu2024low}, our method achieves the lowest recovery error, highlighting the unique effectiveness of small initialization. Additional simulation studies and real-data experiments are presented in Appendix \ref{sec:additional exp}.\\
\textbf{Experiments settings} We first generate a ground-truth tensor $\X_\star \in \mathbb{R}^{n \times n \times k}$ of tubal-rank $r$ by setting $\X_\star = \X * \X^\top$, where $\X \in \mathbb{R}^{n \times r \times k}$ has entries independently drawn from a Gaussian distribution $\mathcal{N}(0, 1)$. Next, we normalize the tensor by setting $\X_\star \leftarrow \X_\star / ||\X_\star||_F$. We sample the measurement operator $\M$ by selecting each entry independently from a Gaussian distribution $\mathcal{N}(0, 1)$. The noise vector $\bm{s}$ has entries independently drawn from $\mathcal{N}(0, \sigma^2)$. Finally, the observations are obtained via the measurement model $\bm{y} = \M(\X_\star) + \bm{s}$. In all experiments, we set $m=2C_m nrk$, and $n=30,\ k =3,\ r=3,\ m_{\text{val}}=0.05m$. For FGD with small initialization, we set the initialization scale to $\alpha = 10^{-10}$. For FGD with spectral initialization, we follow the same initialization procedure as in the original paper. For FGD with large initialization, we set the initialization scale to $\alpha = 10$, with its step size $\eta=0.001$ to prevent divergence. We use FGD with the exact rank as a baseline method, where ``Small random ini (best)" denotes the minimal error obtained by FGD with small random initialization and ``Small random ini (ES)" denotes the error obtained by FGD with small random initialization using validation and early stopping. We use the relative square error (RSE) $\frac{||\U_t*\U_t^\top - \X_\star||_F^2}{||\X_\star||_F^2}$ to evaluate the performance of different methods and all experiments are repeated 20 times.\\ 
\textbf{Comparison of different initialization methods} 
From Figure \ref{fig:2}, we make these observations:\\[0.2\baselineskip]
1. In all four settings, using small initialization yields the same minimum error as the baseline method, which demonstrates its effectiveness. Moreover, by combining small initialization with validation-based early stopping, we can achieve errors very close to the baseline without requiring any prior knowledge of the target tensor. This supports the conclusions of Theorems.\\
2. For spectral initialization and large random initialization, the recovery error increases as the overestimated rank grows, and remains higher than that of small initialization. The error from large random initialization is particularly high due to its slow convergence. However, in the experiment shown in Figure \ref{fig:2} (c) and (d), where the number of iterations is large enough, its error matches that of spectral initialization.\\
3. As shown in Figure \ref{fig:2} (d), small initialization also significantly reduces sample complexity. Even when $m = 3nrk$, it still achieves low error, clearly outperforming the other initialization methods.

\begin{wrapfigure}{r}{0.5\textwidth}
\vspace{-5  mm}
\centering
\subfigure{
\begin{minipage}{0.45\linewidth}
\centering
\includegraphics[width=3.2cm,height=3.2cm]{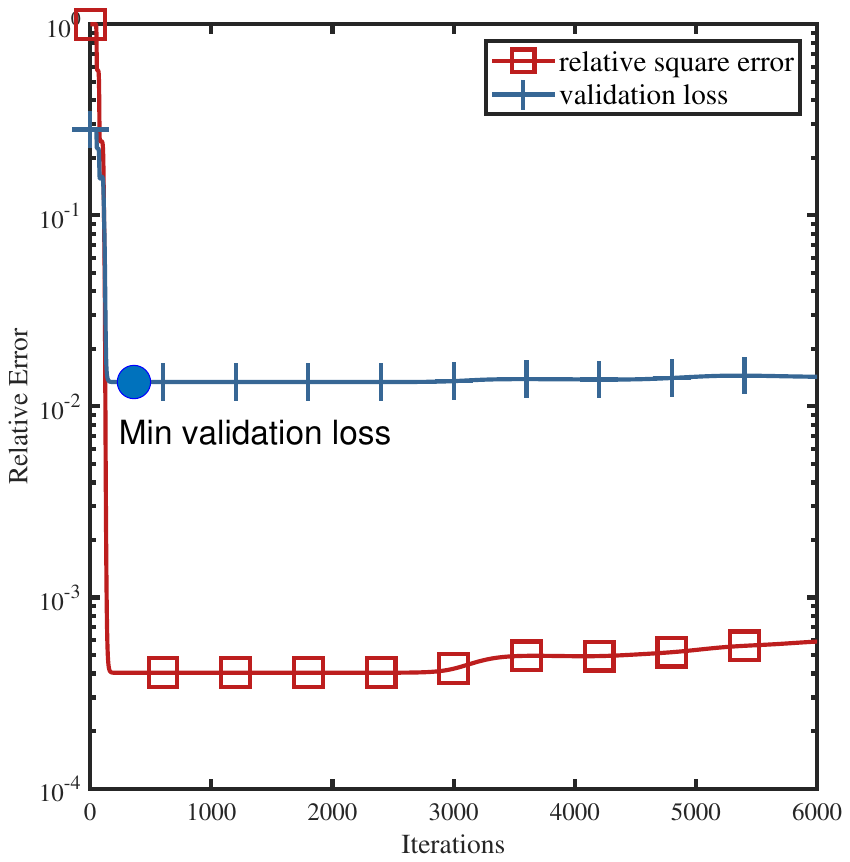}
\end{minipage}%
}
\subfigure{
\begin{minipage}{0.45\linewidth}
\centering
\includegraphics[width=3.2cm,height=3.2cm]{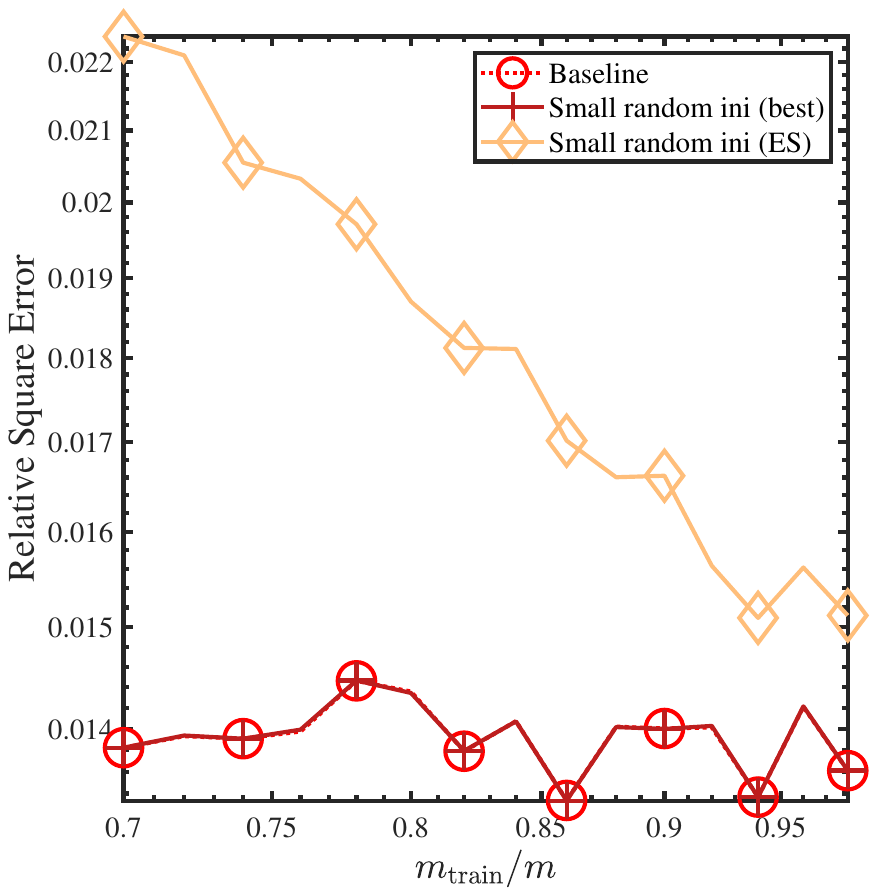}
\end{minipage}%
}
\caption{Validation of the algorithm with $m = 10nrk$, $R = 3r$, $n = 30$, $\sigma = 10^{-3}$, $\eta = 0.1$. (a) Validation loss vs. RSE, with the blue dot marking the minimum. (b) Error of the validation-based method compared with the minimum errors of baseline and small-initialization under varying $m_{\text{train}}$.}
\label{fig:4}
\end{wrapfigure}

\textbf{Verify the validation and early stopping approach}
We verify the effectiveness of the validation and early stopping strategies. As shown in Figure \ref{fig:4} (a), the relative recovery error is minimized when the validation loss reaches its lowest point, demonstrating the reliability of using validation loss as a stopping criterion. Figure \ref{fig:4} (b) shows that when too many samples are used for validation, the recovery error increases compared to the minimum achievable error due to insufficient training data. Conversely, when too few samples (less than 5\%) are used for validation, the validation-based method may become unreliable, resulting in increased recovery error. Therefore, allocating 5\%–10\% of the total samples for validation is a reasonable choice.
 
% We compare the performance of FGD with different initialization strategies under overestimated tubal-rank settings, where $\mathbf{r} = (r, r, r)$ and $\mathbf{R} = (R, R, R)$. As shown in Figure 2, when $R > r$, the minimal error achieved by small random initialization (red solid line) matches the error obtained when $R = r$ (red dashed line). Furthermore, the error from early stopping (orange solid line) closely approximates the minimal error. These results support the accuracy of our theoretical analysis and the effectiveness of our algorithm. In contrast, spectral initialization and large random initialization yield increasing recovery errors as the overestimated tubal-rank $R$ grows. Notably, due to slower convergence, the error with large random initialization is significantly higher than those of the other three methods.
% \\ \textbf{Comparison of the over multi-rank case} We compare the performance of FGD under different initialization strategies in the overestimated multi-rank setting, where $r=R$, $\mathbf{r} = (1,3,r)$, and $\mathbf{R} = (r,r,r)$. As shown in Figure 3, when the tubal-rank is correctly estimated but the multi-rank is overestimated, the behavior of different initialization methods is similar to that in the overestimated tubal-rank case. Small initialization and early stopping remain effective strategies. Notably, the error of spectral initialization increases with $r$, highlighting the significance of this setting, which has been underexplored in the literature.
\textbf{Real data experiments on tensor completion}\\
{
We conduct real-data experiments on the low-tubal-rank tensor completion problem. We consider the problem of low-tubal-rank tensor completion under the Bernoulli observation model. Let the target tensor be $\mathcal{X}_\star \in \mathbb{R}^{n_1 \times n_2 \times n_3}$ with unknown tubal-rank $r$, where each entry is independently observed with probability $p$. Denote the set of observed indices by $\bm{\Omega} \subseteq [n_1] \times [n_2] \times [n_3]$, and define the observation operator as $\mathfrak{P}_{\Omega}(\A) = \bm{\Omega} \odot \A,$
where $\odot$ denotes the Hadamard  product. The goal is to accurately recover the low-tubal-rank tensor $\X_\star$ from the partial and noisy observations $\mathfrak{P}_{\Omega}(\X_\star+\S_n)$, where $\S_n$ is assumed to be Gaussian noise with entries i.i.d sampled from Gaussian distribution $\mathcal{N}(0,\sigma^2)$ in this paper.} {
Under the t-product framework, we adopt the Burer–Monteiro factorization $ \L * \R^\top$, where $\L\in \mathbb{R}^{n_1 \times R \times n_3}, \R \in \mathbb{R}^{n_2 \times R \times n_3}$. The recovery is formulated by minimizing the following factorized loss function: $f(\L, \R) = \frac{1}{2p} || \bm{\mathfrak{P}}_{\bm{\Omega}}(\L * \R^\top - \X_\star-\S_n) ||_F^2,$ which can be optimized using gradient descent over $(\L, \R)$. }

{
Then we perform color image completion experiments on the Berkeley Segmentation Dataset \citep{MartinFTM01}. We randomly select 50 color images of size $481\times 321\times 3$. We compare three categories of methods: a convex approach: {tubal tensor nuclear norm} Minimization (TNN) \citep{lu2018exact}, non-convex methods: UTF \citep{du2021unifying} and GTNN-HOP \citep{wang2024low}, and rank estimation-based methods: TCTF \citep{zhou2017tensor} and TC-RE \citep{shi2021robust}. We use PSNR and RE as evaluation metrics, and for more detailed experiments settings, please refer to Appendix \ref{sec:real exp}. The results, shown in Table 2, demonstrate that FGD with small initialization significantly outperforms all other methods, while FGD with early stopping performs slightly worse but remains acceptable. Therefore, even though the tensor completion problem does not require the t-RIP assumption, FGD with small initialization still achieves the lowest reconstruction error.  In addition, we evaluate the sensitivity of the algorithm to different tubal ranks. As shown in Figure \ref{fig004}, choosing different values of $R$ has only a minor effect on the recovery performance. Therefore, when the true rank is unknown, selecting a slightly larger rank for recovery is a practical and effective strategy. Moreover, experiments on video completion are presented in Appendix \ref{sec:real exp}.}

\begin{wrapfigure}{r}{5cm}
\vspace{-8mm}
\centering
\includegraphics[width=\linewidth]{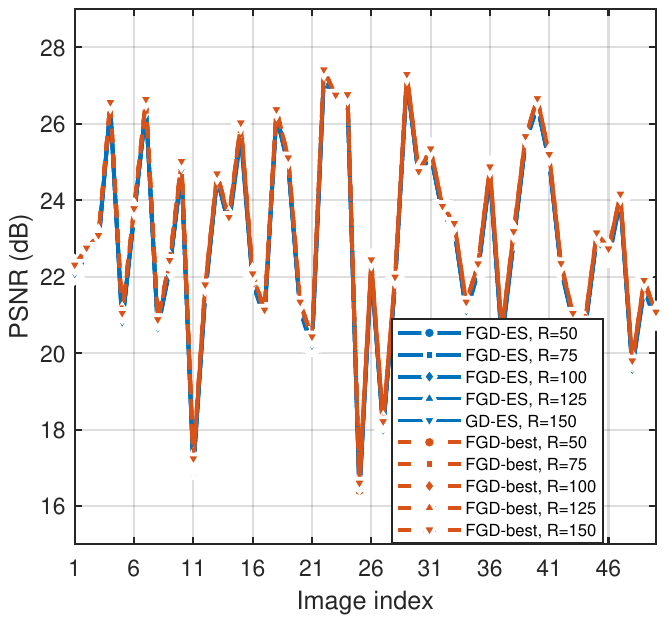} % Reduce the figure size so that it is slightly narrower than the column. Don't use precise values for figure width.This setup will avoid overfull boxes.
\vspace{-6mm}
\caption{{Validation of the sensitivity of FGD to different tubal-ranks. 
}}
\label{fig004}
\vspace{-6mm}
\end{wrapfigure}

\section{Conclusion}
\vspace{-3mm}
We propose a novel procedure, that is, factorized gradient descent with small initialization, to solve the noisy low-tubal-rank tensor recovery problem. We prove that, even when the tubal-rank is overestimated, the recovery error still depends only on the exact tubal-rank $r$, and is independent of the overestimated tubal-rank $R$. This significantly improves upon the error bound in \citep{liu2024low}, and to the best of our knowledge, is the first error bound for noisy low-tubal-rank tensor recovery that does not depend on the overestimated tubal-rank and is nearly minimax optimal. Moreover, we demonstrate that this error bound can be achieved though a  validation and early stopping procedure , without requiring any prior knowledge of the underlying tensor. Numerical experiments are further conducted to support our theoretical findings.

% Please add the following required packages to your document preamble:
% \usepackage{multirow}
% Please add the following required packages to your document preamble:
% \usepackage{multirow}
\begin{table}[]
\caption{{Comparison of different methods in terms of average Peak Signal-to-Noise Ratio (PSNR) and average Relative Error (RE) under various sampling rates and noise levels.  “FGD-ES” denotes FGD with early stopping, while “FGD-best” refers to the minimum error achieved by FGD over all iterations. We write GTNN-HOP$_{0.3}$ as GTNN for short.}
}
\begin{tabular}{ccllcllll}
\toprule
\multirow{3}{*}{Methods} & \multicolumn{4}{c}{$p=0.2$}                             & \multicolumn{4}{c}{$p=0.3$}                             \\ \cline{2-9} 
                         & \multicolumn{2}{c}{$\sigma=0.07$} & \multicolumn{2}{c}{$\sigma=0.1$} & \multicolumn{2}{c}{$\sigma=0.07$} & \multicolumn{2}{c}{$\sigma=0.1$} \\ \cline{2-9} 
                         & PSNR $\uparrow$     & RE   $\downarrow$     & PSNR   $\uparrow$       & RE $\downarrow$       & PSNR $\uparrow$    & RE  $\downarrow$  & PSNR $\uparrow$         & RE    $\downarrow$    \\ \toprule
TCTF                          &  16.5892           &   0.3175      &    16.5484         &  0.3191        &    20.6744             &    0.2008    &      20.6335       &   0.2024         \\ 
TNN                        & 21.2692           & 0.1851           &   19.7672          &   0.2188         &     22.0592             &  0.1681   & 20.1682            &    0.2082        \\ 
TC-RE                        &   20.9288           &     0.1921       &      19.5480        &     0.2242      &     21.5387           &   0.1782     &      19.8376        &   0.2161         \\ 
UTF                      &   16.3227          &     0.3243       &    14.8770         &    0.3802      &     19.2245            &   0.2355     &   17.8283         &       0.2734     \\ 
GTNN                     &      22.1092        &   0.1675       &   20.3132         &       0.2051     &     23.1542            &   0.1481     &      21.1111        &   0.1867       \\ 
FGD-ES                     &  \underline{22.5912 }          &   \underline{0.1616}          &   \underline{21.7977}          &    \underline{ 0.1765}       &          \underline{23.6579 }       &     \underline{0.1426}   &     \underline{22.7157}         &        \underline{0.1585}   \\
FGD-best                     &     \textbf{22.7438}         &  \textbf{0.1587}         &     \textbf{21.9268}          & \textbf{0.1739}           &     \textbf{23.8422}             &     \textbf{0.1395}    &       \textbf{22.8550}        &    \textbf{0.1559}       \\
\toprule
\end{tabular}

\end{table}

\newpage
\subsubsection*{Acknowledgments}
This work was supported in part by the National Natural Science Foundation of China under Grant T2596040, T2596045 and U23A20343, CAS Project for Young Scientists in Basic Research, Grant YSBR-041, Liaoning Provincial “Selecting the Best Candidates by Opening Competition Mechanism” Science and Technology Program under Grant 2023JH1/10400045, Fundamental Research Project of SIA under Grant 2024JC3K01, Natural Science Foundation of Liaoning Province under Grant 2025-BS-0193, Joint Innovation Fund of DICP \& SIA under Grant UN202401.

\bibliography{iclr2026_conference}
\bibliographystyle{iclr2026_conference}

\appendix
\onecolumn
\newpage

\localtableofcontents

\section{Organization of Appendix}
The Appendix is organized as follows:
\begin{itemize}
\item Section \ref{sec:B} provides the statement on the use of large language models.
\item Section \ref{sec:C} presents the reproducibility statement.
\item Section \ref{sec:2} introduces additional preliminaries supporting the main theoretical results.
\item Section \ref{sec:E} gives the detailed proof of Theorem \ref{theorem:main} and Corollary \ref{corollary 1}.
\item Section \ref{sec:F} gives the detailed proof of Theorem \ref{theorem:minimax error bound}.
\item Section \ref{sec:G} gives the detailed proof of Theorem \ref{theorem:validation}.
\item Section \ref{sec:H} presents several technical lemmas together with their proofs.
\item Section \ref{sec:general} discusses the extension to asymmetric case. 
\item Section \ref{sec:additional exp} reports additional simulation results under various noise distributions, along with real-data experiments.
\end{itemize}

\section{Use of Large Language Models}
\label{sec:B}
We used GPT-5 exclusively for language polishing and grammatical refinement of this manuscript. The model was not involved in conceiving research ideas, developing algorithms, conducting experiments, or analyzing results. The authors take full responsibility for the technical content, theoretical contributions, and experimental findings presented in this work.

\section{Reproducibility Statement}
All theoretical results in this paper are fully supported by detailed proofs provided in the appendix. In addition, the code used for the experiments is included in the supplementary material to ensure  that all results reported in the paper can be reproduced.
\label{sec:C}

\section{Additional Preliminaries} \label{sec:2}

{For two positive scalars $x,y$, $x\lesssim y\ (\operatorname{or}\ x\gtrsim y)$ denotes that there exists a universal constant $z>0$ such that $x\le z y$ (or $x\ge zy$), and $x\asymp y$ denotes that there exit two universal constants $z_1,z_2>0$ such that $z_1x\le y\le z_2x$.
}

{\begin{definition}
[Symmetry and positive semi-definite tensor] A three order tensor $\A\in\mathbb{R}^{n\times n\times k}$ is called symmetry and positive semi-definite if it satisfies the following condition:
$$
\A^\top =\A, \text{and}\  \overline{\A}^{(i)}\ \text{is positive semi-definite}.
$$
\label{def:psd}
\end{definition}}

{\begin{definition}[Block diagonal matrix]
For any tensor $\bm{\mathcal{Y}}\in\mathbb{R}^{m\times n\times k}$, we denote $\bar{\bm{Y}}\in\mathbb{C}^{mk\times nk}$ as a block diagonal matrix with it's $i$-th block on the diagonal as the $i$-th frontal slice $\bar{\bm{Y}}^{(i)}$ of $\bar{\bm{\mathcal{Y}}}$, i.e., 
$$
\bar{\bm{Y}}=\mathtt{bdiag}(\bar{\bm{\mathcal{Y}}}) = \begin{bmatrix}
\bar{\bm{Y}}^{(1)} &  & & \\
& \bar{\bm{Y}}^{(2)} &  &\\
 & & \ddots & \\
 &  & & \bar{\bm{Y}}^{(n_3)}\\
\end{bmatrix}.
$$
\end{definition}}

{\begin{definition}[Block circulant matrix \citep{kilmer2011factorization}]
For a three-order tensor $\A\in\mathbb{R}^{n_1\times n_2 \times n_3}$, we denote $\mathtt{bcirc}(\A)\in\mathbb{R}^{n_1n_3\times n_2n_3}$ as its block circulant matrix, i.e.,
$$
\mathtt{bcirc(\A)}=\begin{bmatrix}
\bm{A}^{(1)} & \bm{A}^{(n_3)} & \cdots& \bm{A}^{(2)}\\
\bm{A}^{(2)} & \bm{A}^{(1)} & \cdots &\bm{A}^{(3)}\\
\vdots & \vdots & \ddots & \vdots\\
\bm{A}^{(n_3)} & \bm{A}^{(n_3-1)} &\cdots & \bm{A}^{(1)}\\
\end{bmatrix}.
$$
\end{definition}}

{\begin{definition}[The fold and unfold operations \citep{kilmer2011factorization}]
For a three-order tensor $\A\in\mathbb{R}^{n_1\times n_2 \times n_3}$, we have
\begin{equation}
    \begin{aligned}
    & \mathtt{unfold}(\A)=[A^{(1)};A^{(2)};\cdots;A^{(n_3)}]\\
&    \mathtt{fold}(\mathtt{unfold}(\A))=\A.
\end{aligned}
\notag\end{equation}
\end{definition}}

{\begin{definition}[T-product\citep{kilmer2011factorization}]
For $\A\in\mathbb{R}^{n_1\times n_2 \times n_3}$, $\B\in\mathbb{R}^{n_2\times q \times n_3}$, the t-product of $\A$ and $\B$ is $\bm{\mathcal{C}}\in\mathbb{R}^{n_1\times q \times n_3}$, i.e., 
\begin{equation}
\C=\A*\B=\mathtt{fold}(\mathtt{bcirc}(\A)\cdot \mathtt{unfold}(\B)).
\notag
\end{equation}
The t-product can also be computed by Algorithm \ref{algorithm:2}.
\end{definition}}

{\begin{definition}[Identity tensor\citep{kilmer2011factorization}]
 The identity tensor, represented by $\bm{\mathcal{I}}\in\mathbb{R}^{n\times n\times n_3}$, is defined such that its first frontal slice corresponds to the $n\times n$ identity matrix, while all subsequent frontal slices are comprised entirely of zeros. This can be expressed mathematically as:
\begin{equation}
\bm{\mathcal{I}}^{(1)} = \bm{I}_{n\times n},\quad \bm{\mathcal{I}}^{(i)} = 0,i=2,3,\ldots,n_3.
\notag\end{equation}
\end{definition}}

{\begin{definition}[Orthogonal tensor \citep{kilmer2011factorization}]
A tensor $\bm{\mathcal{Q}}\in\mathbb R^{n\times n\times n_3}$ is considered orthogonal if it satisfies the following condition:
\begin{equation}
\bm{\mathcal{Q}} ^\top * \bm{\mathcal{Q}} = \bm{\mathcal{Q}} * \bm{\mathcal{Q}}^\top=\bm{\mathcal I}.
\notag\end{equation}
\end{definition}}

{\begin{definition}[F-diagonal tensor \citep{kilmer2011factorization}]
A tensor is called f-diagonal if each of its frontal slices is a diagonal matrix.
\end{definition}}

{\begin{theorem}[t-SVD \citep{kilmer2011factorization,lu2018exact}]
Let $\A\in\mathbb{R}^{n_1\times n_2 \times n_3}$, then it can be factored as 
\begin{equation}
    \A=\U * \S * \V^\top,
\notag\end{equation}
where $\U\in\mathbb{R}^{n_1 \times n_1 \times n_3}$, $\V\in\mathbb{R}^{n_2\times n_2 \times n_3}$ are orthogonal tensors, and $\S\in\mathbb{R}^{n_1\times n_2 \times n_3}$ is a f-diagonal tensor.
\end{theorem}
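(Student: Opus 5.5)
The plan is to move to the Fourier domain, where the t-product becomes slice-wise matrix multiplication, take an ordinary SVD of each frequency slice, and transform back. Three standard facts about the DFT along the third mode do most of the work, and I would verify each from the definitions of Section \ref{sec:preliminaries} and the block circulant definition above.
\begin{enumerate}
\item Block diagonalization: with $\bm F_{n_3}$ the normalized DFT matrix, $(\bm F_{n_3}\otimes \bm I_{n_1})\,\mathtt{bcirc}(\A)\,(\bm F_{n_3}^{*}\otimes \bm I_{n_2}) = \overline{\bm A}$. Consequently $\C=\A*\B$ holds if and only if $\overline{\bm C}^{(i)}=\overline{\bm A}^{(i)}\overline{\bm B}^{(i)}$ for every $i$.
\item Transposition: the tensor transpose, which conjugate-transposes each frontal slice and reverses slices $2,\dots,n_3$, corresponds to $\overline{(\A^\top)}^{(i)}=(\overline{\bm A}^{(i)})^{*}$.
\item Identity: $\overline{\I}^{(i)}=\bm I$ for all $i$.
\end{enumerate}
From these, a tensor $\Q$ is orthogonal exactly when every Fourier slice $\overline{\bm Q}^{(i)}$ is unitary. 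Likewise, $\S$ is f-diagonal exactly when every $\overline{\bm S}^{(i)}$ is diagonal, since the DFT acts tube-wise and so preserves the off-diagonal zero pattern in both directions.

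Given $\A$, for each $i$ I take a matrix SVD $\overline{\bm A}^{(i)}=\bm U_i\bm\Sigma_i\bm V_i^{*}$. I then set $\overline{\U}^{(i)}=\bm U_i$, $\overline{\S}^{(i)}=\bm\Sigma_i$, $\overline{\V}^{(i)}=\bm V_i$, and define $\U,\S,\V$ by the inverse FFT along the third mode. By facts 1--3, $\U$ and $\V$ are orthogonal, $\S$ is f-diagonal, and slice-wise $\overline{\bm A}^{(i)}=\overline{\bm U}^{(i)}\overline{\bm S}^{(i)}(\overline{\bm V}^{(i)})^{*}$. This is precisely the Fourier-domain form of $\A=\U*\S*\V^\top$.

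The main obstacle is making $\U,\S,\V$ real; the argument above only produces complex tensors. A complex tensor $\mathcal{Z}$ is real if and only if its DFT is conjugate symmetric, $\overline{\bm Z}^{(n_3+2-i)}=\mathrm{conj}(\overline{\bm Z}^{(i)})$. Because $\A$ is real, $\overline{\A}$ has this symmetry, so the SVD choices must respect it. I would do this as follows.
\begin{itemize}
\item For $i=1$, and for $i=n_3/2+1$ when $n_3$ is even, the slice $\overline{\bm A}^{(i)}$ is self-conjugate, hence real. Take a real SVD there.
\item For each remaining pair $\{i,\,n_3+2-i\}$, compute the SVD only for the smaller index $i$. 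For the partner index set $\bm U_{n_3+2-i}=\mathrm{conj}(\bm U_i)$, $\bm\Sigma_{n_3+2-i}=\bm\Sigma_i$ (real), and $\bm V_{n_3+2-i}=\mathrm{conj}(\bm V_i)$. This is a valid SVD of $\mathrm{conj}(\overline{\bm A}^{(i)})=\overline{\bm A}^{(n_3+2-i)}$.
\end{itemize}
With these choices, all three Fourier arrays are conjugate symmetric, so their inverse FFTs are real. This completes the construction.
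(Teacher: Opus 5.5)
Your proposal is correct and follows the standard argument of the references the paper cites for this theorem (the paper states it without proof). That argument block-diagonalizes via the DFT along the third mode, takes a full SVD of each Fourier slice, and chooses the factors conjugate-symmetrically across the pairs $\{i,\,n_3+2-i\}$ so that the inverse FFT returns real $\U,\S,\V$; this is the same symmetry the paper's t-product algorithm exploits. Your explicit choice of a real SVD on the self-conjugate slices ($i=1$, and $i=n_3/2+1$ when $n_3$ is even) is exactly what realness requires, a detail the usual write-up leaves implicit.
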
}

{\begin{definition}[Tubal-rank \citep{kilmer2011factorization}]
For $\A\in\mathbb{R}^{n_1\times n_2 \times n_3}$, its tubal-rank as rank$_t(\A)$ is defined as the nonzero {diagonal} tubes of $\S$, where $\S$ is the f-diagonal tensor from the t-SVD of $\A$. That is
\begin{equation}
\operatorname{rank}_t(\A):= \#\{i:S(i,i,:)\neq 0\}.
\notag\end{equation}
\end{definition}
}

\begin{algorithm}[]
\caption{Tensor-Tensor Product}
\label{alg:t-product}
\textbf{Input:} $\Y\in\mathbb{R}^{n_1\times n_2\times n_3}$, $\Z\in\mathbb{R}^{n_2\times n_4\times n_3}$. \\
\textbf{Output:} $\X=\Y*\Z\in\mathbb{R}^{n_1\times n_4\times n_3}$.
\begin{algorithmic}[1] 
\STATE Compute $\bar{\Y}=\mathtt{fft}(\Y,[],3)$ and $\bar{\Z}=\mathtt{fft}(\Z,[],3)$
\STATE Compute each frontal slice of $\bar{\C}$ by
$$
\bar{\mathbf{X}}^{(i)} =
\begin{cases}
\bar{\mathbf{Y}}^{(i)}\,\bar{\mathbf{Z}}^{(i)}, 
& i = 1,\dots,\left\lceil \dfrac{n_3+1}{2} \right\rceil, \\[4pt]
\operatorname{conj}\!\big(\bar{\mathbf{X}}^{(n_3 - i + 2)}\big), 
& i = \left\lceil \dfrac{n_3+1}{2} \right\rceil + 1,\dots,n_3.
\end{cases}
$$
\STATE Compute $\X=\mathtt{ifft}(\bar{(\X)},[],3).$
\end{algorithmic}
\label{algorithm:2}
\end{algorithm}

The t-SVD of a tensor $\Y\in\mathbb{R}^{n\times r\times k}$ as $\Y=\V_{\Y}*\S_{\Y}*\W_{\Y}^\top$. In addition, we define $\V_{\Y}$ as the tensor-column subspace of $\Y$, and $\V_{\Y^\bot}$ as its orthogonal complement, i.e., $\V_{\Y}^\top * \V_{\Y^\bot} =  0$.

Based on the t-RIP condition, we introduce the following two definitions to facilitate our analysis.
\begin{definition}(S2S-t-RIP)
A linear map $\M:\mathbb{R}^{n\times n \times k}\to \mathbb{R}^m$ is said to satisfy the spectral-to-spectral $(r,\delta)$ tensor Restricted Isometry Property (t-RIP) [$(r,\delta)$ S2S-t-RIP] if for all tensors $\Y\in\mathbb{R}^{n\times n \times k}$ with tubal-rank $\le r$,
$$
\left\|\left(\mathfrak{\bm{I}}-\frac{\M^*\M}{m}\right)(\Y)\right \| \le \delta ||\Y||.
$$
\end{definition}

\begin{definition}(S2N-t-RIP)
A linear map $\M:\mathbb{R}^{n\times n \times k}\to \mathbb{R}^m$ is said to satisfy the spectral-to-nuclear $\delta$ tensor Restricted Isometry Property (t-RIP) [$\delta$-S2N-t-RIP] if for all tensors $\Y\in\mathbb{R}^{n\times n \times k}$ with tubal-rank $\le r$,
$$
\left\|\left(\mathfrak{I}-\frac{\M^*\M}{m}\right)(\Y)\right\| \le \delta ||\Y||_*.
$$
\end{definition}

Then, we provide the detailed pseudocode of Algorithm \ref{algorithm:1} described in Section \ref{sec:3.4}.
\begin{algorithm}[]
\caption{Solving~(\ref{equ:3}) by FGD with early stopping}
\label{alg:trpca}
\textbf{Input:} Train data $(\y_{\text{train}}, \M_{\text{train}})$, validation data $(\y_{\text{val}}, \M_{\text{val}})$, initialization scale $\alpha$, step size $\eta$, estimated tubal-rank $R$, iteration number T \\
{\textbf{Initialization}: Initialize $\U_0$, where each entry of $\U_0$ is i.i.d. from $\mathcal{N}(0, \frac{\alpha^2}{R})$.}
\begin{algorithmic}[1] %[1] enables line numbers
\STATE \textbf{for} $t=0$ to $T-1$ \textbf{do}
\STATE \ \ \ \ \ $\U_{t+1}=\U_t-\frac{\eta}{m}\M_{\text{train}
}^*(\M_{\text{train}}(\U_t*\U_t^\top)-\textbf{y}_{\text{train}})*\U_t$
\STATE \ \ \ \  Validation loss: $e_t =\frac{1}{2m}||\y_{\text{val}} - \M_{\text{val}}(\U_t*\U_t^\top)||^2$
\STATE \textbf{end for}
\STATE \textbf{Output:} $\U_{\check{t}}$ where $\check{t}=\arg\min_{1\le t\le T} e_t$.
\end{algorithmic}
\label{algorithm:1}
\end{algorithm}

\section{Proof of Theorem \ref{theorem:main}}
\label{sec:E}
In this section, we absorb the additional $\frac{1}{\sqrt{m}}$ factor into $\M$ for the convenience of presentation, i.e., $\A_i \leftarrow \A_i/\sqrt{m}$. Thus, we have $(1-\delta)||\Y||_F^2 \le ||\M(\Y)||^2\le (1+\delta)||\Y||_F^2.$
\subsection{Analysis the four phases}
Define the tensor column subspace of $\X$ as $\V_{\X}\in\mathbb{R}^{n\times r \times k}$. Consider the tensor $\V_{\X}^\top*\U_t$ and the corresponding t-SVD $\V_{\X}^\top*\U_t = \V_t*\S_t*\W_t^\top$ with $\W_t\in\mathbb{R}^{r\times R \times k}.$ And we denote $\W_{t,\bot}$ as a tensor whose tensor column subspace is orthogonal to the column subspace of $\W_t.$ Then we can decompose $\U_t$ into ``signal term" and ``over-parameterization term": 
\begin{equation}
\U_t = \underbrace{\U_t*\W_t*\W_t^\top}_{\text{signal term}} + \underbrace{\U_t*\W_{t,\bot}*\W_{t,\bot}^\top}_{\text{over-parameterization term}}.
\label{equ:decomposition}
\end{equation}
Through this decomposition, we can separately analyze the signal term and the over-parameterization term. Specifically, we consider the following three quantities to study the convergence behavior of FGD:
\begin{itemize}
    \item  $\sigma_{\min}(\U_t * \W_t)$: the magnitude of the signal term;
\item  $\| \U_t * \W_{t,\bot} \|$: the magnitude of the over-parameterization term;
\item  $\| \V_{\X^\bot}^\top * \V_{\U_t * \W_t} \|$: the alignment between the column space of the signal and that of the ground truth.
\end{itemize}
Using these three indicators and the recovery error $||\U_t * \U_t^\top - \X_\star||_F$, we identify four phases in the FGD trajectory and analyze them one by one.
\subsubsection{Phase I: Alignment phase}
In the first phase, Lemma \ref{lemma:01} states that if the initialization scale is sufficiently small, and under appropriate t-RIP conditions, step size constraints, and an upper bound on the noise spectral norm, the signal term is nearly aligned with the column space of the ground truth tensor $\X_\star$. At this stage, both the magnitude of the signal term and that of the over-parameterization term remain small, but the former is significantly larger than the latter.
\begin{lemma} Fix a sufficiently small constant \( c > 0 \). Let \( \U \in \mathbb{R}^{n \times R \times k} \) be a random tubal tensor with i.i.d. \( \mathcal{N}(0, \frac{\alpha^2}{R}) \) entries, and let \( \epsilon \in (0, 1) \). Assume that \( \M: \mathbb{R}^{n \times n \times k} \to \mathbb{R}^m \) satisfies the $\delta_1$-S2R-t-RIP for some constant \( \delta_1 > 0 \). Also, assume that
$$
\bm{\mathcal{M}} := \M^* \M(\X * \X^\top)+ \E= \X * \X^\top + \E_{\X}
$$
with \( \|\E_{\X}^{(j)}\| \leq \delta \lambda_r(\overline{\X}^{(j)} (\overline{\X}^{(j)})^\mathrm{H}) \) for each \( 1 \leq j \leq k \), where \( \delta \leq c_1 \kappa^{-2} \) and $\|\E\|\le c_1\kappa^{-2}\sigma_{\min}^2(\X)$. Let \( \U_0 =  \U \) where
\[
\alpha^2 \lesssim 
\begin{cases} 
\dfrac{\epsilon (R\land n )\|\X\|^2}{k^2  n^{3/2} \kappa^2} \left( \dfrac{2 \kappa^2 k n^{3/2}}{c_3 (R\land n)^{3/2} \epsilon} \right)^{-15 \kappa^2} & \text{if } R \geq 3r \\
\dfrac{\epsilon \|\X\|^2}{k^2  n^{3/2} \kappa^2} \left( \dfrac{2 \kappa^2 k n^{3/2}}{c_3 r^{1/2} \epsilon} \right)^{-15 \kappa^2} & \text{if } R < 3r 
\end{cases}
\]

Assume the step size satisfies \( \eta \leq c_2 \kappa^{-2} \|\X\|^{-2} \). Then, with probability at least \( 1 - p \) where
\[
p = 
\begin{cases} 
k (\tilde{C} \epsilon)^{R -2 r + 1} + k e^{-\tilde{c} R} & \text{if } R \geq 2r \\
k \epsilon^2 + k e^{-\tilde{c} R} & \text{if } R < 2r 
\end{cases}
\]
the following statement holds. After
\[
t_* \lesssim 
\begin{cases} 
\dfrac{1}{\eta \min_{1 \leq j \leq k} \sigma_r(\bar{X}^{(j)})^2} \ln \left( \dfrac{2 \kappa^2 \sqrt{n}}{c_3 \epsilon \sqrt{(R\land n)}} \right) & \text{if } R \geq 3r \\
\dfrac{1}{\eta \min_{1 \leq j \leq k} \sigma_r(\bar{X}^{(j)})^2} \ln \left( \dfrac{2 \kappa^2 \sqrt{rn}}{c_3 \epsilon} \right) & \text{if } R < 3r 
\end{cases}
\]
iterations, it holds that
\[
\|\U_{t_*}\| \leq 3 \|\X\| 
\]

\[
\|\mathcal{V}_{\X^\perp}^\top * \U_{t_*}* \W_*\| \leq \epsilon \kappa^{-2} 
\]
and for each \( 1 \leq j \leq k \), we have
\[
\sigma_r \left( \overline{{\U}_{t_*} * {\W}_{t_*}}^{(j)} \right) \geq \dfrac{1}{4} \alpha \beta 
\]
\[
\sigma_1 \left( \overline{{\U}_{t_*} * {\W}_{t_*,\bot}}^{(j)}\right) \leq \dfrac{\kappa^{-2}}{8} \alpha \beta 
\]

where
\[
\beta \lesssim 
\begin{cases} 
{\epsilon \sqrt{k}} \left( \dfrac{2 \kappa^2 \sqrt{n}}{c_3 \epsilon \sqrt{R\land n}} \right)^{10 \kappa^2} & \text{if } R \geq 3r \\
\dfrac{\epsilon \sqrt{k}}{r} \left( \dfrac{2 \kappa^2 \sqrt{rn}}{c_3 \epsilon} \right)^{10 \kappa^2} & \text{if } R < 3r 
\end{cases}
\]
and
\[
\beta \gtrsim 
\begin{cases} 
{\epsilon \sqrt{k}} & \text{if } R \geq 3r \\
\dfrac{\epsilon \sqrt{k}}{r} & \text{if } R < 3r. 
\end{cases}
\]
\label{lemma:01}
Here, $c_1,c_2,c_3>0$ are absolute constants only depending on the choice of $c$. Moreover, $\tilde{C},\tilde{c}$ are absolute numerical constants.
\end{lemma}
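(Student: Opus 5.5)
We follow the spectral (power-method) analysis of small random initialization of \citet{stoger2021small}, as extended to tubal tensors by \citet{karnik2024implicit}, and carry the noise term $\E$ alongside the t-RIP perturbation. Since $\U_0$ has scale $\alpha$, during the early iterations the cubic term $\U_t*\U_t^\top*\U_t$ is negligible. Writing $\bm{\mathcal{M}}=\M^*\M(\X*\X^\top)+\E=\X*\X^\top+\E_{\X}$, the iteration is therefore close to the linear power iteration
\[
\widetilde{\U}_{t}=(\I+\eta\bm{\mathcal{M}})^{*t}*\U_0 .
\]
In the Fourier domain this becomes $k$ matrix power iterations $(\bm{I}+\eta\overline{\bm{\mathcal{M}}}^{(j)})^t\overline{\bm{U}}_0^{(j)}$, one for each slice $j$.

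\textbf{Step 1: spectral gap per slice.} By the hypothesis $\|\E_{\X}^{(j)}\|\le\delta\lambda_r(\overline{\X}^{(j)}(\overline{\X}^{(j)})^{\mathrm H})$ with $\delta\le c_1\kappa^{-2}$, Weyl's inequality and Davis--Kahan give, for each $j$:
\begin{itemize}
\item the top-$r$ eigenvalues of $\overline{\bm{\mathcal{M}}}^{(j)}$ are at least $(1-\delta)\sigma_r(\overline{\X}^{(j)})^2$;
\item the remaining eigenvalues are at most $\delta\sigma_r(\overline{\X}^{(j)})^2$;
\item the top-$r$ eigenspace $\bm{L}^{(j)}$ is $O(\delta)$-close to the column space of $\overline{\X}^{(j)}$.
\end{itemize}
Consequently, after $t$ steps the power iterate amplifies the $\bm{L}^{(j)}$ component by at least $(1+\eta(1-\delta)\sigma_r^2)^t$, while the complement grows by at most $(1+\eta\delta\sigma_r^2)^t$.

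\textbf{Step 2: random initialization.} Fix a slice. Let $\bm{G}^{(j)}=(\bm{L}^{(j)})^{\mathrm H}\overline{\bm{U}}_0^{(j)}$ be the $r\times R$ Gaussian projection of the initialization onto $\bm{L}^{(j)}$. We need two facts.
\begin{itemize}
\item $\sigma_r(\bm{G}^{(j)})\gtrsim \epsilon\alpha\sqrt{k}/\sqrt{R}\cdot(\sqrt R-\sqrt{r-1})$. This follows from smallest-singular-value bounds for rectangular Gaussian matrices (Rudelson--Vershynin). The failure probability is $(\tilde C\epsilon)^{R-r+1}$ when $R\ge 2r$, and $\epsilon^2$-type when $R$ is close to $r$.
\item $\|\overline{\bm{U}}_0^{(j)}\|\lesssim \alpha\sqrt{k}\,(\sqrt n+\sqrt R)/\sqrt R$, with probability $1-e^{-\tilde cR}$.
\end{itemize}
A union bound over the $k$ slices gives the stated probability $p$. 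The scaling of $\beta$ comes from this step. In the case $R\ge3r$ the lower bound on $\sigma_r$ is of order $\epsilon\sqrt k\,\alpha$; in the case $R<3r$ it loses a factor of order $1/r$, since $\sqrt R-\sqrt{r-1}\gtrsim 1/\sqrt r$.

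\textbf{Step 3: choosing $t_*$ and bounding the deviation.} Choose $t_*$ so that the amplification ratio between signal and complement, $\bigl((1+\eta(1-\delta)\sigma_r^2)/(1+\eta\delta\sigma_r^2)\bigr)^{t_*}$, exceeds the ratio $\|\overline{\bm U}_0^{(j)}\|/\sigma_r(\bm G^{(j)})$ times $\kappa^2/\epsilon$. This produces the logarithmic expressions for $t_*$ in the statement. At this time:
\begin{itemize}
\item the signal part satisfies $\sigma_r\ge\frac14\alpha\beta$;
\item the $\W_{t,\bot}$ part and the misaligned part $\V_{\X^\perp}^\top*\U_{t_*}*\W_*$ are smaller by the factor $\kappa^{-2}\epsilon$.
\end{itemize}
Since the per-slice growth factor over this period is at most $(1+\eta\|\X\|^2(1+\delta))^{t_*}\le(\text{ratio})^{C\kappa^2}$, we obtain the upper bound on $\beta$ carrying the exponent $10\kappa^2$.

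Next we control $\U_t-\widetilde{\U}_t$ by induction on $t\le t_*$. The difference obeys a recursion driven by the cubic term $\eta\,\U_t*\U_t^\top*\U_t$. We must also handle the difference between the true residual operator $(\mathfrak I-\M^*\M)(\U_t*\U_t^\top)$ and its value at zero. Its spectral norm is bounded by $\delta_1\|\U_t*\U_t^\top\|_*$ using S2N-t-RIP, and the nuclear norm in turn is bounded by $k(R\land n)\|\U_t\|^2$.

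\textbf{Main obstacle.} This last step is where the work lies. We must show that $\|\U_t-\widetilde{\U}_t\|\lesssim \|\widetilde{\U}_t\|^3\cdot t\eta$ stays well below the signal lower bound $\frac14\alpha\beta$. This holds only if $\alpha$ is small enough that $\alpha^2(\text{growth})^{2}\ll\sigma_{\min}^2$. Imposing this requirement on the exponent $15\kappa^2$ is exactly what yields the stated upper bound on $\alpha^2$. It also yields $\|\U_{t_*}\|\le3\|\X\|$ trivially, since all iterates stay at scale $o(\|\X\|)$.

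Finally, because the measurement operator mixes slices, the deviation has to be measured in the full block-diagonal spectral norm rather than slice by slice. The cross-slice coupling then enters only through the constants $k^2$ and $n^{3/2}$ that appear in the condition on $\alpha$.
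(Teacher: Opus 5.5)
This is essentially the paper's proof. The paper reruns the power-method comparison of Karnik et al.\ (their Lemmas D.1, D.2, D.7--D.9) with $\widetilde{\U}_t=(\I+\eta\bm{\mathcal{M}})^t*\U_0$ and $\bm{\mathcal{M}}=\M^*\M(\X*\X^\top)+\E$, bounds $\|\U_t-\widetilde{\U}_t\|$ by a term growing like $\alpha^3(1+\eta\|\bm{\mathcal{M}}\|)^{3t}$, picks $t_*$ from the uniform gap between $\min_j\sigma_r(\overline{M}^{(j)})$ and $\max_j\sigma_{r+1}(\overline{M}^{(j)})$, and gets the $10\kappa^2$ and $15\kappa^2$ exponents just as you describe. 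Note the gap must be uniform across slices, since a single $\beta$ serves all slices, whereas you set it up per slice.

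One correction: the iterates align with the top-$r$ eigenspace of $\bm{\mathcal{M}}$, which can sit at angle of order $\delta$ from $\V_{\X}$, and iterating does not remove that offset. So your argument certifies alignment $O(\delta)+O(\epsilon\kappa^{-2})$, i.e.\ $c\kappa^{-2}$ as in the paper's proof and as Lemma~\ref{lemma:02} requires, not $\epsilon\kappa^{-2}$ for arbitrary $\epsilon$.
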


%%%%%%%%%%%%%%%%%%%%%%%%%%%%%%%%%%%%%%%%%%%%%%%%%%%%%%%%%%%%%%%%%%%%%%%%%%%%%%%%%%%%%%%%%%%%%%%%%%%%%%%%%%%%%%%%%%%%%%%%%%%%%%%%%%%%%%%%%%%%%%%%%%%%%%%%%%%%%%%%%%%%%%%%%%%%%%%%%%%%%%%%
\subsubsection{Phase II: Signal amplification phase}
In the second phase, building upon the results from the first phase, the tensor-column subspace of the signal term remains well-aligned with that of the ground truth $\X_\star$, i.e., $||\V_{\X^\bot}^\top * \V_{\U_t * \W_t}||$ remains small. Meanwhile, the magnitude of the signal term, measured by $\sigma_{\min}(\V_{\X}^\top * \U_t)$, grows exponentially. In contrast, the over-parameterization term $||\U_t * \W_{t,\bot}||$ stays small due to the small initialization.

\begin{lemma}
Suppose that the step size satisfies \( \eta \leq c_1  \kappa^{-2} \|\X\|^{-2} \) for some small \( c_1 > 0 \), $||\E||\le c_1\kappa^{-2}\sigma^2_{\min}(\X)$, and \( \M : \mathbb{R}^{n \times n \times k} \to \mathbb{R}^m \) satisfies \((2r + 1, \delta)\ \text{t-RIP} \) for some constant \( 0 < \delta \leq \frac{c_1}{\kappa^4 \sqrt{kr}} \). Set \( \gamma \in (0, \frac{1}{2}) \), and choose a number of iterations \( t_* \) such that \( \sigma_{\min}(\U_{t_*} * \W_{t_*}) \geq \gamma \). Also, assume that \( \|\U_{t_*} * \W_{t_*, \perp}\| \leq 2\gamma \), \( \|\U_{t_*}\| \leq 3\|\X\| \), \( \gamma \leq \frac{c_2 \sigma_{\min}(\X)}{k^{8/7}\kappa^2 (R\land n)} \), and \( \|\V_{\X^\perp}^\top * \V_{\U_{t_*} * \W_{t_*}}\| \leq c_2 \kappa^{-2} \) for some small \( c_2 > 0 \). Set
\begin{equation}
    t_1 = \min\left\{ t \geq t_* : \sigma_{\min}(\V_{\X}^\top * \U_t) \geq \frac{1}{\sqrt{10}} \sigma_{\min}(\overline{\X}) \right\},
    \label{equ:def_t1}
\end{equation}
and , Then the following hold for all $t\in [t_*,t_1]$:

\begin{align}
&\sigma_{\min}(\V_{\X}^\top * \U_t) \geq \frac{1}{2} \gamma \left( 1 + \frac{1}{8} \eta \sigma_{\min}(\X)^2 \right)^{t - t_*} \label{lemma2_e1}\\
&\|\U_t * \W_{t, \perp}\| \leq 2\gamma \left( 1 + 80 \eta c_2   \sigma_{\min}(\X)^2 \right)^{t - t_*}\label{lemma2_e2}\\
&\|\U_t\| \leq 3\|\X\| \ \ \ \operatorname{and}\ \  \|\V_{\X^\perp}^\top * \V_{\U_t * \W_t}\| \leq c_2 \kappa^{-2},\label{lemma2_e3}
\end{align}
where $t_1-t_*\lesssim \frac{1}{\eta\sigma_{\min}^2}\ln(\frac{\sigma_{\min}}{\gamma})$.
\label{lemma:02}
\end{lemma}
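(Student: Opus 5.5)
We argue by induction on $t\in[t_*,t_1]$, carrying the four claims (\ref{lemma2_e1})--(\ref{lemma2_e3}) together as the inductive hypothesis. This is the tensor analogue of the matrix argument of \citet{stoger2021small}. The base case $t=t_*$ is exactly the assumption, since $\sigma_{\min}(\V_{\X}^\top*\U_{t_*})=\sigma_{\min}(\U_{t_*}*\W_{t_*})\ge\gamma$.

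For the inductive step we write the update as
\[
\U_{t+1}=\U_t-\eta(\U_t*\U_t^\top-\X_\star)*\U_t+\eta\,\bm{\Delta}_t*\U_t,
\qquad
\bm{\Delta}_t:=\Big(\mathfrak{I}-\M^*\M\Big)(\U_t*\U_t^\top-\X_\star)+\E .
\]
The first task is to bound $\|\bm{\Delta}_t\|$. We split $\U_t*\U_t^\top-\X_\star$ into a piece of tubal-rank at most $2r$ (the signal part $\U_t*\W_t*\W_t^\top*\U_t^\top-\X_\star$) and the overparameterized piece $\U_t*\W_{t,\perp}*\W_{t,\perp}^\top*\U_t^\top$. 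For the low-rank piece we apply the $(2r+1,\delta)$ t-RIP in its spectral-to-spectral form, paying a factor $\sqrt{kr}$ when passing from Frobenius to spectral norms. For the other piece we use the spectral-to-nuclear form, bounding its nuclear norm by $k(R\wedge n)\|\U_t*\W_{t,\perp}\|^2$. With $\delta\le c_1/(\kappa^4\sqrt{kr})$, $\|\U_t\|\le3\|\X\|$, $\|\U_t*\W_{t,\perp}\|\lesssim\gamma$ small and $\|\E\|\le c_1\kappa^{-2}\sigma_{\min}^2(\X)$, this should give
\[
\|\bm{\Delta}_t\|\le c\,\kappa^{-2}\sigma_{\min}^2(\X).
\]
The condition $\gamma\le c_2\sigma_{\min}(\X)/(k^{8/7}\kappa^2(R\wedge n))$ is exactly what absorbs the nuclear-norm factor $k(R\wedge n)$.

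With this bound we treat the three quantities in turn.
\begin{itemize}
\item \textbf{Signal growth.} Multiply the update by $\V_{\X}^\top$ to get $\V_{\X}^\top*\U_{t+1}=(\I+\eta\,\S_\star-\eta\,\V_{\X}^\top*\U_t*\U_t^\top*\V_{\X}+\text{perturbation})*\V_{\X}^\top*\U_t$. Working slice-wise in the Fourier domain, a standard lemma gives $\sigma_{\min}(\V_{\X}^\top*\U_{t+1})\ge\sigma_{\min}(\V_{\X}^\top*\U_t)\big(1+\tfrac14\eta\sigma^2_{\min}(\X)-\eta\,\sigma^2_{\min}(\V_{\X}^\top*\U_t)-\cdots\big)$. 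Since $t<t_1$, we have $\sigma_{\min}(\V_{\X}^\top*\U_t)^2<\sigma^2_{\min}(\X)/10$, and the contraction factor is at least $1+\tfrac18\eta\sigma^2_{\min}(\X)$.
\item \textbf{Overparameterization term.} Use $\W_{t+1,\perp}$ and write $\U_{t+1}*\W_{t+1,\perp}=\U_{t+1}*\W_{t,\perp}*\W_{t,\perp}^\top*\W_{t+1,\perp}+\U_{t+1}*\W_t*\W_t^\top*\W_{t+1,\perp}$. The first term grows by at most $1+\eta\|\bm{\Delta}_t\|+\eta\|\V_{\X^\perp}^\top*\V_{\U_t*\W_t}\|\cdot(\dots)$. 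The second term is controlled by $\|\W_t^\top*\W_{t+1,\perp}\|\lesssim\eta(\|\bm{\Delta}_t\|+\text{misalignment})$. Together these yield the rate $1+80\eta c_2\sigma_{\min}^2(\X)$.
\item \textbf{Alignment and norm.} We reuse the perturbation argument of \citet{stoger2021small}, adapted through $\mathtt{bdiag}$. The misalignment evolves as $\|\V_{\X^\perp}^\top*\V_{\U_{t+1}*\W_{t+1}}\|\le(1-\tfrac14\eta\sigma^2_{\min})\|\cdot\|+\eta\|\bm{\Delta}_t\|/\sigma_{\min}(\cdots)$-type terms, so it stays below $c_2\kappa^{-2}$. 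The bound $\|\U_{t+1}\|\le3\|\X\|$ follows from $\|\I-\eta\U_t*\U_t^\top\|\le1$ and $\eta\le c_1\kappa^{-2}\|\X\|^{-2}$.
\end{itemize}

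Finally, (\ref{lemma2_e1}) shows that $\sigma_{\min}$ reaches $\sigma_{\min}(\X)/\sqrt{10}$ within $\ln(\sigma_{\min}/\gamma)/\ln(1+\eta\sigma^2_{\min}/8)\lesssim\frac{1}{\eta\sigma^2_{\min}}\ln(\sigma_{\min}/\gamma)$ steps. Over this horizon the factor in (\ref{lemma2_e2}) stays at most $(\sigma_{\min}/\gamma)^{O(c_2)}$, which keeps $\|\U_t*\W_{t,\perp}\|$ small enough for the induction to close.

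The main obstacle is the $\bm{\Delta}_t$ bound. The slices are coupled by $\M^*\M$ and the overparameterized piece has rank up to $R$, so the S2N step and the $k$-dependent constants (the origin of $k^{8/7}$) must be tracked carefully.
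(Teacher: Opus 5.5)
Your architecture is the paper's: induction over $[t_*,t_1]$, and a uniform bound $\|\bm{\Delta}_t\|\lesssim c_1\kappa^{-2}\sigma_{\min}^2(\X)$. You obtain it from S2S-t-RIP on the tubal-rank-$2r$ piece and S2N-t-RIP on $\U_t*\W_{t,\perp}*\W_{t,\perp}^\top*\U_t^\top$ (nuclear norm at most $k(R\wedge n)\|\U_t*\W_{t,\perp}\|^2$). The horizon $t-t_*\lesssim\frac{1}{\eta\sigma_{\min}^2(\X)}\ln(\sigma_{\min}(\X)/\gamma)$ keeps the growth factor at $(\sigma_{\min}(\X)/\gamma)^{O(c_2)}$. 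That is exactly the estimate the paper proves. For the one-step bounds the paper then simply invokes Karnik et al., Lemmas E.1, E.3, E.5 and E.6. Where you replace those citations with your own justifications, two of them fail as written.

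First, $\|\I-\eta\U_t*\U_t^\top\|\le 1$ only gives $\|\U_{t+1}\|\le(1+\eta\|\X\|^2+\eta\|\bm{\Delta}_t\|)\|\U_t\|$, which does not preserve $\|\U_t\|\le 3\|\X\|$ even for one step. You need the cubic damping $\|(\I-\eta\U_t*\U_t^\top)*\U_t\|\le(1-\eta\|\U_t\|^2)\|\U_t\|$ (valid since $\eta\|\U_t\|^2\ll 1$), plus a case split on whether $\|\U_t\|^2\ge\|\X\|^2+\|\bm{\Delta}_t\|$.

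Second, in the over-parameterization step, bounding the rotation piece by $\|\U_{t+1}*\W_t\|\,\|\W_t^\top*\W_{t+1,\perp}\|$ cannot give a rate $1+O(c_2)\eta\sigma_{\min}^2(\X)$. Because $\V_{\X}^\top*\U_{t+1}*\W_{t+1,\perp}=0$, one has $\W_t^\top*\W_{t+1,\perp}=-(\V_{\X}^\top*\U_{t+1}*\W_t)^{-1}*\V_{\X}^\top*\U_{t+1}*\W_{t,\perp}*\W_{t,\perp}^\top*\W_{t+1,\perp}$. So that product carries the condition number of the signal block, which the hypotheses allow to be of order $\|\X\|/\gamma$. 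The missing step (this is Karnik's E.3, following St\"oger and Soltanolkotabi) has two parts. Keep $\U_{t+1}*\W_t*(\V_{\X}^\top*\U_{t+1}*\W_t)^{-1}=\V_{\U_{t+1}*\W_t}*(\V_{\X}^\top*\V_{\U_{t+1}*\W_t})^{-1}$ together, which is $O(1)$ by alignment. Then use $\V_{\X}^\top*\U_{t+1}*\W_{t,\perp}=\eta\,\V_{\X}^\top*(\bm{\Delta}_t-\U_t*\U_t^\top)*\U_t*\W_{t,\perp}$, whose norm is $O\big(\eta(\|\bm{\Delta}_t\|+\|\X\|^2\|\V_{\X^\perp}^\top*\V_{\U_t*\W_t}\|)\big)\,\|\U_t*\W_{t,\perp}\|$.

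Two smaller points. The alignment recursion the paper uses (E.5) has an additional term $500\eta^2\|\X_\star-\U_t*\U_t^\top\|^2$, which your sketch drops. Since $\|\X_\star-\U_t*\U_t^\top\|$ is of order $\|\X\|^2$ at the start of Phase II, absorbing this term into the $\frac{\eta}{4}\sigma_{\min}^2(\X)\,c_2\kappa^{-2}$ contraction needs $\eta\lesssim c_2\kappa^{-4}\|\X\|^{-2}$. The paper's proof indeed uses $\eta\le c_1\kappa^{-4}\|\X\|^{-2}$ (as in Theorem~\ref{theorem:main}), not the $\kappa^{-2}$ written in the statement. Also, the base case is not literally the assumption. $\sigma_{\min}(\V_{\X}^\top*\U_{t_*})=\sigma_{\min}(\V_{\X}^\top*\U_{t_*}*\W_{t_*})$ can be smaller than $\sigma_{\min}(\U_{t_*}*\W_{t_*})$. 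You therefore need $\sigma_{\min}(\V_{\X}^\top*\V_{\U_{t_*}*\W_{t_*}})\ge\frac12$ from the alignment hypothesis, which is where the factor $\frac12$ in the growth bound comes from.
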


\subsubsection{Phase III: Local refinement phase} 
Once the magnitude of the signal term $\sigma_{\min}(\V_{\X}^\top * \U_t)$ exceeds $\frac{\sigma_{\min}(\X)}{\sqrt{10}}$, the algorithm enters the third phase. In this phase, the recovery error can be decomposed as
$$
\| \U_t * \U_t^\top - \X_\star \| \le 4 \| \V_{\X}^\top * (\U_t * \U_t^\top - \X_\star) \| + \| \U_t * \W_{t,\bot} \|^2.
$$
Due to the small initialization, the over-parameterization term $|| \U_t * \W_{t,\bot} ||^2$ grows slowly, while the in-subspace error $|| \V_{\X}^\top * (\U_t * \U_t^\top - \X_\star) ||$ decreases rapidly. Moreover, since $\V_{\X} \in \mathbb{R}^{n \times r \times k}$, we have
$$
\| \V_{\X}^\top * (\U_t * \U_t^\top - \X_\star) \|_F \le \sqrt{r} \| \V_{\X}^\top * (\U_t * \U_t^\top - \X_\star) \|,
$$
which explains why the final recovery error depends only on the true tubal-rank $r$, despite the over-parameterization.

\begin{lemma}
Suppose that the assumptions in Lemma \ref{lemma:02} hold. If $R>r$, then for
$$
 \hat{t}  \asymp  \frac{1}{\eta \sigma_{\min}(\X)^2} \ln \left( \frac{\kappa ||\X||}{((R\land n)-r)\gamma k} \right) + t_1
$$ iterations it holds that
\begin{equation}
   {||\U_{\hat{t}}*\U_{\hat{t}}^\top-\X*\X^\top||_F} \lesssim \sqrt{r} \kappa^{-3/16} ((R\land n)-r)^{3/4} k^{3/4} \gamma^{21/16}||\X||^{11/16} + \sqrt{r} \kappa^2 ||\E||\cdot ;
\end{equation}
if $R=r$, then for any $t\ge t_1$,
\begin{equation}
||\U_{{t}}*\U_{{t}}^\top-\X*\X^\top||_F\lesssim \sqrt{r} (1-\frac{\eta}{400}\sigma_{\min}^2(\X))^{t-t_1} + \sqrt{r}\kappa^2 ||\E||.
\end{equation}
\label{lemma:03}
\end{lemma}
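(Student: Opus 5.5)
We work from time $t_1$ of Lemma \ref{lemma:02} and track two quantities jointly. The first is the in-subspace error $D_t:=\|\V_{\X}^\top*(\U_t*\U_t^\top-\X_\star)\|$. The second is the over-parameterization magnitude $\|\U_t*\W_{t,\perp}\|$. We also keep the invariants $\|\U_t\|\le 3\|\X\|$, $\|\V_{\X^\perp}^\top*\V_{\U_t*\W_t}\|\le c_2\kappa^{-2}$ and $\sigma_{\min}(\V_\X^\top*\U_t)\ge \sigma_{\min}(\X)/\sqrt{10}$. These are propagated by induction exactly as in Lemma \ref{lemma:02}: the signal term is already large, so the contraction on the alignment quantity only improves.

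\textbf{Step 1: recursion for $D_t$.} Expand the update $\U_{t+1}=\U_t-\eta(\U_t*\U_t^\top-\X_\star)*\U_t+\eta\,(a)*\U_t+\eta\,\E*\U_t$ and project onto $\V_\X$. We aim to show
\[
D_{t+1}\le \bigl(1-\tfrac{\eta}{400}\sigma_{\min}^2(\X)\bigr)D_t + C\eta\|\X\|^2\bigl(\|\E\|+\delta\sqrt{kr}\,\|\U_t*\U_t^\top-\X_\star\|+\|\U_t*\W_{t,\perp}\|^2\bigr).
\]
The contraction factor uses $\sigma_{\min}(\V_\X^\top*\U_t)^2\gtrsim\sigma_{\min}^2(\X)$. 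To control the t-RIP term (a), split $\U_t*\U_t^\top-\X_\star$ into a part of tubal-rank $\le 2r$ and the residual $\U_t*\W_{t,\perp}*\W_{t,\perp}^\top*\U_t^\top$. The first part is handled by the $(2r+1,\delta)$ t-RIP, passing to the spectral norm at a cost of $\sqrt{kr}$ via the S2S/S2N variants. The residual is handled by the S2N version together with the tubal nuclear norm bound $\lesssim k((R\wedge n)-r)\|\U_t*\W_{t,\perp}\|^2$. Then insert the decomposition $\|\U_t*\U_t^\top-\X_\star\|\le 4D_t+\|\U_t*\W_{t,\perp}\|^2$; since $\delta\sqrt{kr}\kappa^4\le c$, the $D_t$ contribution is absorbed into the contraction. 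Unrolling gives
\[
D_t\lesssim (1-\tfrac{\eta}{400}\sigma_{\min}^2)^{t-t_1}\|\X\|^2+\kappa^2\|\E\|+\kappa^2 k((R\wedge n)-r)\max_{s\le t}\|\U_s*\W_{s,\perp}\|^2 .
\]

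\textbf{Step 2: growth of the over-parameterization term.} Project the update onto $\W_{t,\perp}$. The $-\eta\,\U_t*\U_t^\top$ part damps the term, while the perturbation terms give at most
\[
\|\U_{t+1}*\W_{t+1,\perp}\|\le (1+C\eta(\delta\sqrt{kr}\|\X\|^2+\|\E\|))\|\U_t*\W_{t,\perp}\|.
\]
This requires the standard estimate comparing $\W_{t+1,\perp}$ with $\W_{t,\perp}$, which is controlled by the alignment quantity. The result is growth like $2\gamma(1+c\eta\sigma_{\min}^2)^{t-t_*}$, with $c$ much smaller than the contraction rate $1/400$. Choosing $\hat t-t_1\asymp\frac{1}{\eta\sigma_{\min}^2}\ln\frac{\kappa\|\X\|}{((R\wedge n)-r)\gamma k}$ makes the transient term in $D_t$ comparable to the accumulated over-parameterization term, and both are small powers of $\gamma$. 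Balancing the two exponents, with $t_1-t_*\lesssim\frac{1}{\eta\sigma_{\min}^2}\ln(\sigma_{\min}/\gamma)$ from Lemma \ref{lemma:02}, gives the specific exponents $\gamma^{21/16}$, $((R\wedge n)-r)^{3/4}$ and $\|\X\|^{11/16}$.

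\textbf{Step 3: Frobenius conversion.} Write
\[
\|\U*\U^\top-\X_\star\|_F\le 2\|\V_\X^\top*(\U*\U^\top-\X_\star)\|_F+\|\U*\W_{\perp}*\W_{\perp}^\top*\U^\top\|_F .
\]
The first term is at most $2\sqrt{r}D_t$, because $\V_\X$ has $r$ columns. The second term is bounded by the accumulated small over-parameterization term. If $R=r$, the $\W_{t,\perp}$ part vanishes and Step 1 directly yields the geometric bound for all $t\ge t_1$.

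The main obstacle is Step 2. We must make sure the noise $\E$ and the t-RIP error do not make $\|\U_t*\W_{t,\perp}\|$ grow at a rate comparable to the contraction in Step 1. This needs a careful treatment of how the subspace $\W_t$ rotates between iterations, with the frequency slices coupled through $\M^*\M$. We would first bound the rotation slice-wise in the Fourier domain, and then pay the factor $k$ when returning to the tensor spectral norm.
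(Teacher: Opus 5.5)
Your architecture matches the paper's proof. You run an induction on $[t_1,\hat t]$ with the invariants inherited from Lemma~\ref{lemma:02} and use a one-step contraction for $D_t=\|\V_{\X}^\top*(\U_t*\U_t^\top-\X_\star)\|$ (the paper's Lemma~\ref{lemma:4}). The growth of $\|\U_t*\W_{t,\perp}\|$ is slow and seeded by $\|\U_{t_1}*\W_{t_1,\perp}\|\lesssim\gamma^{7/8}\sigma_{\min}(\X)^{1/8}$ through the bound on $t_1-t_*$. You stop before the over-parameterization term overtakes the decaying transient, then apply the $\sqrt{r}$ Frobenius conversion.

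The gap is quantitative, but it changes the statement. In Step~1 you carry $\|\U_t*\W_{t,\perp}\|^2$ with coefficient $C\eta\|\X\|^2$. After dividing by the contraction rate $\eta\sigma_{\min}^2(\X)/400$, the over-parameterization enters $D_t$ with weight $\kappa^2k((R\wedge n)-r)$ instead of $O(k((R\wedge n)-r))$. The crossing time therefore moves earlier by about $\frac{400}{\eta\sigma_{\min}^2(\X)}\ln\kappa^2$. Use the paper's stopping rule $\hat t-t_1=\frac{300}{\eta\sigma_{\min}^2(\X)}\ln\bigl(\|\X\|^2/W\bigr)$, where $W\asymp\kappa^2k((R\wedge n)-r)\gamma^{7/4}\sigma_{\min}(\X)^{1/4}$ is your over-parameterization contribution. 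You then get $\kappa^{21/16}k^{3/4}((R\wedge n)-r)^{3/4}\gamma^{21/16}\|\X\|^{11/16}$. The powers of $\gamma$, $(R\wedge n)-r$, $k$ and $\|\X\|$ come out as stated. The power of $\kappa$, however, is worse by $\kappa^{3/2}$ than the claimed $\kappa^{-3/16}$, and this propagates into the $\alpha$-conditions of Theorem~\ref{theorem:main}.

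The missing ingredient is the alignment invariant inside the projected recursion. Since $\V_{\X}^\top*\U_t*\W_{t,\perp}=0$ and $\X_\star*\U_t*\W_{t,\perp}=0$, every population term coupling $\V_{\X}^\top*(\X_\star-\U_t*\U_t^\top)$ to $\U_t*\W_{t,\perp}$ factors through $\V_{\X^\perp}^\top*\U_t*\W_t$. For instance, $\W_t^\top*\U_t^\top*\U_t*\W_{t,\perp}=(\V_{\X^\perp}^\top*\U_t*\W_t)^\top*\V_{\X^\perp}^\top*\U_t*\W_{t,\perp}$, and $\|\V_{\X^\perp}^\top*\U_t*\W_t\|\le 3c_2\kappa^{-2}\|\X\|$. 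Together with $\|\U_t\|\le 3\|\X\|$, this turns $C\|\X\|^2$ into $O(c_2)\,\sigma_{\min}^2(\X)$. The $\eta^2$ terms are absorbed by $\eta\le c\kappa^{-2}\|\X\|^{-2}$, and the S2N residual by $\delta\sqrt{k}\,\|\X\|^2\le c\kappa^{-2}\sigma_{\min}^2(\X)$. This yields the order-$\eta\sigma_{\min}^2(\X)$ coefficient of the paper's Lemma~\ref{lemma:4}.

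Two smaller points on Step~2:
\begin{enumerate}
\item Your displayed growth inequality omits the alignment term $9\eta\|\V_{\X^\perp}^\top*\V_{\U_t*\W_t}\|\,\|\X\|^2\le 9c_2\eta\sigma_{\min}^2(\X)$ from the slice-wise estimate (Karnik et al., Lemma E.3). This term in fact dominates the t-RIP and noise contributions to the rate $1+80c_2\eta\sigma_{\min}^2(\X)$.
\item No factor $k$ is paid when passing from Fourier slices back to the tensor spectral norm, because $\|\Y\|=\|\overline{\Y}\|$ is the maximum over the diagonal blocks. Paying $k$ at every iteration would compound to $k^{\hat t-t_1}$. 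The powers of $k$ in the final bound come only from $\|\U_t*\W_{t,\perp}*\W_{t,\perp}^\top*\U_t^\top\|_*\le k((R\wedge n)-r)\|\U_t*\W_{t,\perp}\|^2$.
\end{enumerate}
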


\subsubsection{Phase IV: Overfitting phase} The fourth stage is a natural continuation of the third. Consider the decomposition from Phase III:
$$
\| \U_t * \U_t^\top - \X_\star \| \le 4 \| \V_{\X}^\top * (\U_t* \U_t^\top - \X_\star) \| + \| \U_t * \W_{t,\bot} \|^2.
$$
In the fourth stage, the over-parameterization term $|| \U_t * \W_{t,\bot} ||^2$ starts to grow, eventually dominating the recovery error until it matches that of spectral initialization.

\subsection{{Validate four phase in Section \ref{sec:3.3}}}
\begin{figure}[]
\centering
\subfigure[]{
\begin{minipage}[t]{0.45\linewidth}
\centering
\includegraphics[width=6cm,height=5cm]{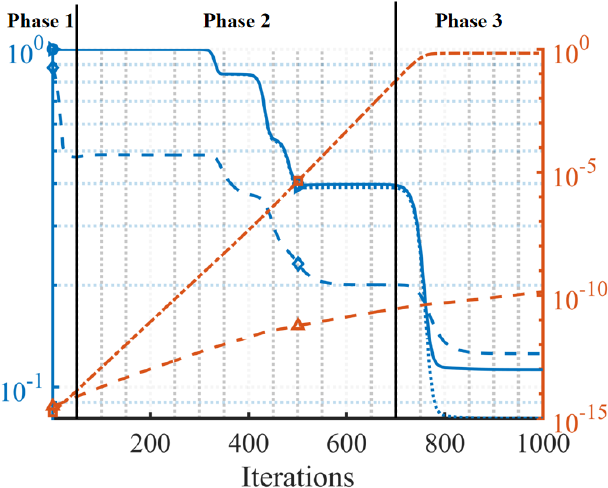}
\end{minipage}%
}
\subfigure[]{
\begin{minipage}[t]{0.45\linewidth}
\centering
\includegraphics[width=6cm,height=5cm]{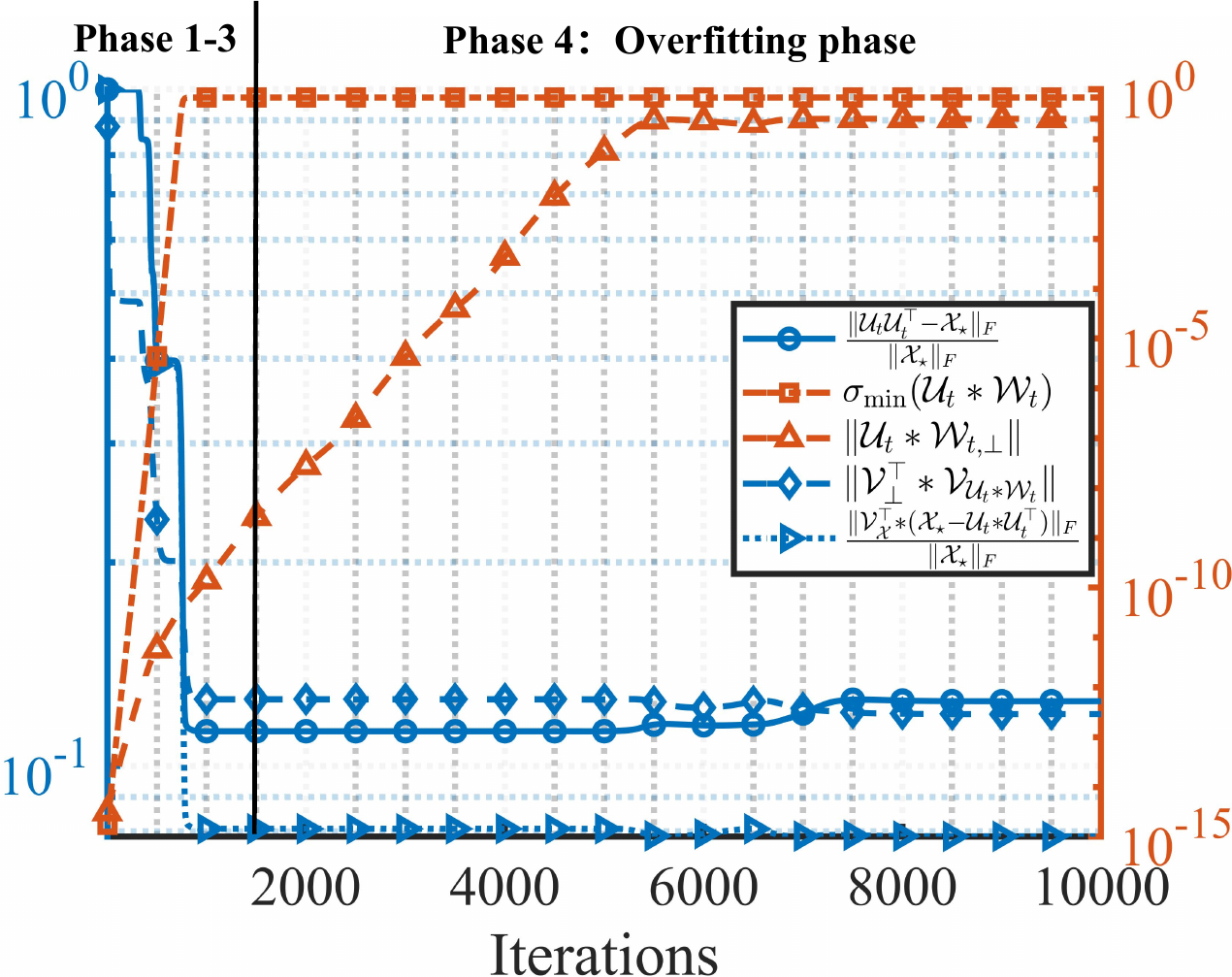}
\end{minipage}%
}
\centering
\caption{{Validation of the four-phase convergence analysis in Section 3.3. The left panel shows the first 1,000 iterations; the right panel shows the full 10,000 iterations. The orange curve corresponds to the orange axis on the right, and the blue curve corresponds to the blue axis on the left. Parameter settings: $n=10$, $k=3$, $r=2$, $R=10$, $m=5knR$, $\eta=0.1$, noise standard deviation $\sigma=0.01$, and initialization scale $\alpha=10^{-7}$.
}}
\label{fig:120}
\end{figure}

{We conducted experiments to validate the four-phase convergence described in Section 3.3. As shown in Figure \ref{fig:120}, we observed that:
\begin{itemize}
    \item In Phase 1, the column space of the signal term $\U_t * \W_t$ gradually aligns with that of the ground truth $\X_\star$, as indicated by the decreasing value of $\| \V_{\X^\bot}^\top * \V_{\U_t * \W_t} \|$. Both $\sigma_{\min}(\U_t * \W_t)$ and $\| \U_t * \W_{t,\bot} \|$ remain small due to the small initialization.
    \item In Phase 2, $\sigma_{\min}(\U_t * \W_t)$ grows exponentially until it reaches at least $\frac{\sigma_{\min}(\X)}{\sqrt{10}}$, while $\| \U_t * \W_{t,\bot} \|$ remains nearly at the scale of the initialization.
    \item In Phase 3, the over-parameterization term $\| \U_t * \W_{t,\bot} \|^2$ remains small, while the in-subspace error $\| \V_{\X}^\top * (\U_t * \U_t^\top - \X_\star) \|$ decreases rapidly, leading to the lowest recovery error.
    \item In Phase 4, the in-subspace error $\| \V_{\X}^\top * (\U_t * \U_t^\top - \X_\star) \|$ continues to decrease, but only very slightly, while the over-parameterization term $\| \U_t * \W_{t,\bot} \|$ grows rapidly and dominates the total recovery error, causing the overall error $\| \U_t * \U_t^\top - \X_\star \|_F$ to increase.
\end{itemize}}

\subsection{Proof of Theorem \ref{theorem:main}}
Since the linear map $\M$ satisfies ($2r+1,\delta$) t-RIP, then by Lemma \ref{s2s-t-rip}, $\M$ satisfies $(2r,\sqrt{2rk}\delta)$ S2S-t-RIP. Therefore,
\begin{equation}
\begin{aligned}
||\E_{\X}|| &= ||(\mathfrak{I}-\M^*\M)(\X*\X^\top) + \M^*(\s)||\\
&\le \sqrt{2rk}\delta||\X*\X^\top|| + ||\M^*(\s)||\\
&\le \sqrt{2rk}\cdot c\kappa^{-4} (rk)^{-1/2}\cdot ||\X||^2 +||\M^*(\s)||\\
&=\sqrt{2}c\kappa^{-2}\sigma_{\min}(\X)^2 +||\M^*(\s)||\\
&\overset{(a)}{\le} \sqrt{2}c\kappa^{-2}\sigma_{\min}(\X)^2+c_1\kappa^{-2}\sigma_{\min}(\X)^2\\
&\lesssim c\kappa^{-2}\sigma_{\min}(\X)^2
\end{aligned}
\end{equation}
where $(a)$ use the assumption $||\E||\le c_1\kappa^{-2}\sigma_{\min}^2(\X)$.

Then Lemma \ref{lemma:01} holds with probability at least $1-ke^{-\tilde{c}R}-\max\{k(\tilde{C}\epsilon)^{R-r+1},k\epsilon^2\}$. We then divide the proof of Theorem \ref{theorem:main} into three cases: $R = r$, $r < R < 3r$, and $R \ge 3r$.

\subsubsection{Case 1 :$R= r$}
In this case, by the results of Lemma \ref{lemma:01}, the following statement holds: choose 
$$
\alpha ^2\lesssim \dfrac{ \|\X\|^2}{k^ 2  n^{3/2} \kappa^2} \left( \dfrac{2 \kappa^2 k n^{3/2}}{\tilde{c}_3 r^{1/2} } \right)^{-15 \kappa^2} \text{and}\ \tilde{c_3}=c_3\epsilon,
$$
then after 
$$
t_*\lesssim  \dfrac{1}{\eta \sigma_{\min}(\X)^2} \ln \left( \dfrac{2 \kappa^2 \sqrt{rn}}{\tilde{c}_3} \right)
$$
iterations, it holds that
\begin{equation}
||\U_{t_*}||\le 3||\X||\ \text{and}\ ||\V_{\X^\bot}* \V_{\U_{t_*}*\W_{t_*}}||\le c\kappa^{-2}
\end{equation}
for each \( 1 \leq j \leq k \), we have
\[
\sigma_r \left( \overline{{\U}_{t_*} * {\W}_{t_*}}^{(j)} \right) \geq \dfrac{1}{4} \alpha \beta 
\]
\[
\sigma_1 \left( \overline{{\U}_{t_*} * {\W}_{t_*,\bot}}^{(j)}\right) \leq \dfrac{\kappa^{-2}}{8} \alpha \beta ,
\] where 
$$
\frac{\sqrt{k}}{r}\lesssim\beta\lesssim\dfrac{ \sqrt{k}}{r} \left( \dfrac{2 \kappa^2 \sqrt{nr}}{\tilde{c}_3 } \right)^{10 \kappa^2}
$$
and $\tilde{c_3}=\epsilon c_3=e^{-{\tilde{c}}/2} c_3$.
 By taking 
 $$
 \alpha \lesssim \frac{ \sqrt{r}\sigma_{\min}(\X)}{k^{23/14}(R\land n)\kappa^2} \left( \dfrac{2 \kappa^2 \sqrt{rn}}{\tilde{c}_3 } \right)^{-10 \kappa^2},
 $$
we have $\gamma = \frac{1}{4}\alpha \beta\lesssim \frac{c_2\sigma_{\min}(\X)}{k^{8/7}\kappa^2(R\land n)}$. Also, we have
$$
||\U_{t_*}*\W_{t_*,\bot}||\le \frac{\kappa^{-2}}{8}\alpha \beta \le \frac{\gamma}{2\kappa^2}\le 2\gamma.
$$
Therefore, the assumptions of Lemmas \ref{lemma:02} and \ref{lemma:03} hold, then we can use the results of Lemma 3 to obtain:
$$
||\U_{{t}}*\U_{{t}}^\top-\X*\X^\top||_F\lesssim \sqrt{r} (1-\frac{\eta}{400}\sigma_{\min}^2(\X))^{t-t_1} + \sqrt{r}\kappa^2 ||\E||,
$$
for all $t\ge t_1$, where
\begin{align}
    t_1 \lesssim t_* + (t_1-t_*) & \lesssim \dfrac{1}{\eta \sigma_{\min}(\X)^2} \ln \left( \dfrac{2 \kappa^2 \sqrt{rn}}{\tilde{c}_3} \right) +  \frac{1}{\eta\sigma_{\min}^2(\X)}\ln \left(\frac{\sigma_{\min}(\X)}{\gamma}\right) \\
    &\overset{(a)}{\lesssim}\dfrac{1}{\eta \sigma_{\min}(\X)^2 } \ln \left( \dfrac{ \kappa^2 \sqrt{rn}\sigma_{\min}(\X)}{\tilde{c}_3 \alpha \beta} \right)  \\
    &\overset{(b)}{\lesssim}\dfrac{1}{\eta \sigma_{\min}(\X)^2 } \ln \left( \dfrac{ \kappa^2 r^{3/2} \sqrt{n} \sigma_{\min}(\X)}{ \sqrt{k} \alpha } \right), 
\end{align}
where $(a)$ uses the fact that $\gamma = \frac{\alpha \beta}{4}$; (b) uses the fact that $\beta\gtrsim \frac{\sqrt{k}}{r}$.

{Then define $\mu:=\frac{\eta}{400}\sigma^2_{\min}(\X_\star)\in (0,1)$. Using the fact that $(1-\mu)^s\le e^{-\mu s}$, we have
$$
||\U_{{t}}*\U_{{t}}^\top-\X*\X^\top||_F\lesssim \sqrt{r} e^{-\mu(t-t_1)} + \sqrt{r}\kappa^2 ||\E||.
$$
}

\subsubsection{Case 2 :$r<R< 3r$}
The analysis for this case is almost the same way as that of the previous case, except that it relies on a different result from Lemma 3, namely that when $R > r$, we have 
$$
   {||\U_{\hat{t}}*\U_{\hat{t}}^\top-\X*\X^\top||_F} \lesssim \kappa^{-3/16} r^{1/2}((R\land n)-r)^{3/4}k^{3/4} \gamma^{21/16}||\X||^{11/16} + \sqrt{r} \kappa^2 ||\E||,
$$
where 
$$
\hat{t} \asymp  t_1+\frac{1}{\eta \sigma_{\min}(\X)^2} \ln \left( \frac{\kappa ||\X||}{((R\land n)-r)\gamma k } \right).
$$
 Taking the bound  in Case 1 for $t_1$, we have
\begin{align}
\hat{t}&\asymp (t_1-t_*)+ t_* +\frac{1}{\eta \sigma_{\min}(\X)^2} \ln \left( \frac{\kappa ||\X||}{((R\land n)-r)\gamma k} \right) \\
&\asymp \frac{1}{\eta \sigma_{\min}^2(\X)}\ln\left( \frac{n^{1/2} r^{5/2} \kappa^2 ||\X||^2}{k^2 [(R\land n)-r] \alpha^2 } \right).
\end{align}

To obtain the result $||\U_{\hat{t}}*\U_{\hat{t}}^\top-\X*\X^\top|| \lesssim \kappa^2 ||\E||$, we need to ensure $\kappa^{-3/16} r((R\land n)-r)^{3/4} k^{3/4} \gamma^{21/16}||\X||^{11/16}\le \kappa^2 ||\E||$, which leads to
\begin{equation}
\alpha \lesssim \kappa ^{35/21}[(R\land n)-r]^{-4/7} rk^{-15/14}||\X||^{-11/21} ||\E||^{16/21} \left( \dfrac{2 \kappa^2 \sqrt{rn}}{\tilde{c}_3 } \right)^{-10 \kappa^2}.
\label{equ:22}
\end{equation}
Using the
facts that $\gamma = \frac{\alpha \beta}{4}$ and $\beta\lesssim\dfrac{ \sqrt{k}}{r} \left( \dfrac{2 \kappa^2 \sqrt{rn}}{\tilde{c}_3 } \right)^{10 \kappa^2}$, in order to satisfy the assumption $\gamma \lesssim \frac{c_2\sigma_{\min}(\X)}{k^{8/7}\kappa^2 (R\land n)}$, we also need
\begin{equation}
\begin{aligned}
\alpha \lesssim \frac{c_2r\sigma_{\min}(\X)}{\kappa^2(R\land n)k^{23/14}} \left( \dfrac{2 \kappa^2 \sqrt{rn}}{\tilde{c}_3 } \right)^{-10 \kappa^2}.
\label{equ:23}
\end{aligned}
\end{equation}
Combining the bounds (\ref{equ:22}) and (\ref{equ:23}), we obtain the bounds for $\alpha:$
\begin{equation}
\alpha \lesssim \min\left\{  \frac{r\sigma_{\min}(\X)}{k^{23/14} (R\land n)\kappa^2},\frac{r \kappa^{35/21}||\E||^{16/21}}{k^{15/14}[(R\land n)-r]^{4/7}||\X||^{11/21}} \right\}\left( \dfrac{2 \kappa^2 \sqrt{rn}}{\tilde{c}_3 } \right)^{-10 \kappa^2}
\end{equation}

\subsubsection{Case 3: $R\ge 3r$} In this case, we also use the result from Lemma \ref{lemma:03}. However, according to Lemma \ref{lemma:01}, the bounds for $t_*$ and $\beta$ are different. 

Specifically, we have 
\begin{equation}
\begin{aligned}
&t_* \lesssim \frac{1}{\eta \sigma_{\min}(\X)^2} \ln \left( \dfrac{2 \kappa^2 \sqrt{n}}{c_3 \epsilon \sqrt{(R\land n)}} \right)\\
& \epsilon\sqrt{k}\lesssim \beta \lesssim \epsilon\sqrt{k} \left( \dfrac{2 \kappa^2 \sqrt{n}}{c_3 \epsilon(R\land n)} \right)^{10 \kappa^2},
\end{aligned}
\end{equation}

which implies 
\begin{equation}
\begin{aligned}
\hat{t} &\asymp t_* + t_1-t_* + \hat{t}-t_1\\
&\asymp \frac{1}{\eta \sigma_{\min}(\X)^2} \ln \left( \dfrac{2 \kappa^2 \sqrt{n}}{c_3 \epsilon \sqrt{(R\land n)}} \right) + \frac{1}{\eta \sigma_{\min}(\X)^2} \ln \left( \frac{\sigma_{\min}(\X)}{\gamma} \right)\\
&\quad +   \frac{1}{\eta \sigma_{\min}(\X)^2} \ln \left( \frac{\kappa ||\X||}{((R\land n)-r)\gamma k}  \right)  \\
&\asymp \frac{1}{\eta \sigma_{\min}(\X)^2} \ln \left( \dfrac{ \sqrt{n} \kappa^2||\X||^2 }{ k^2((R\land n)-r)(R\land r)\alpha^2 } \right).
\end{aligned}
\end{equation}
Using the relation $\gamma = \frac{1}{4} \alpha \beta \lesssim \frac{c_2\sigma_{\min}(\X)}{\kappa^2 (R\land n)}$, we obtain 
\begin{equation}
\alpha \lesssim \frac{\sigma_{\min}(\X)} {k^{23/14}(R\land n)\kappa^2} \left( \dfrac{2 \kappa^2 \sqrt{n}}{\tilde{c}_3 \sqrt{(R\land n)} } \right)^{-10 \kappa^2}
\label{equ:26}
\end{equation}
Moreover, according to the result of Lemma \ref{lemma:03}, in order to obtain $||\U_{\hat{t}}*\U_{\hat{t}}^\top-\X*\X^\top|| \lesssim \kappa^2 ||\E||$, we need to bound $\alpha$ as:
\begin{equation}
\begin{aligned}
    \alpha &\lesssim \kappa^{35/21} [(R\land n)-r]^{-4/7}k^{-4/7} \beta^{-1} ||\X||^{-11/21} ||\E||^{16/21}\\
 \overset{(a)}{\rightarrow }  \alpha &\lesssim \kappa^{35/21} [(R\land n)-r]^{-4/7} k^{-4/7}  ||\X||^{-11/21} \frac{1}{\epsilon \sqrt{k}} \left( \dfrac{2 \kappa^2 \sqrt{n}}{\tilde{c}_3 \sqrt{(R\land n)} } \right)^{-10 \kappa^2},
\end{aligned}
\label{equ:27}
\end{equation}
where $(a)$ uses the upper bound for $\beta$.
Combining these two bounds (\ref{equ:26}) (\ref{equ:27}), we obtain the bound for $\alpha:$
\begin{equation}
\alpha \lesssim \min\left\{ \frac{\sigma_{\min}(\X) }{k^{23/14}(R\land n)\kappa^2}, \frac{\kappa^{35/21}||\E||^{16/21}}{k^{15/14}||\X||^{11/21}[(R\land n)-r]^{4/7} } \right\} \left( \dfrac{2 \kappa^2 \sqrt{n}}{\tilde{c}_3 \sqrt{(R\land n)} } \right)^{-10 \kappa^2}.
\end{equation}
Therefore, we complete the proof of Theorem \ref{theorem:main}.

% Based on the analysis of the three cases, we take the smallest value of $\alpha$ among them, which gives 
% \begin{equation}
% \alpha \lesssim \min \left\{ \frac{\sigma_{\min}(\X) }{\sqrt{k}(R\land n)\kappa^2}, \frac{\kappa^{35/21}||\E||^{16/21}}{\sqrt{k}||\X||^{11/21}[(R\land n)-r]^{4/7} } \right\}\left( \dfrac{2 \kappa^2 \sqrt{rn}}{\tilde{c}_3 } \right)^{-10 \kappa^2}
% \end{equation}

\subsection{Proof of Corollary \ref{corollary 1}}
The proof of Corollary 1 follows directly from Theorem 2 combined with the spectral norm bound of $||\E||$. Note that
\begin{equation}
||\M^*(s)||\overset{(a)}{\lesssim} \sqrt{\frac{nk}{m}}\sigma \overset{(b)}{\le}c\kappa^{-2}\sigma_{\min}^2(\X),
\end{equation}
where $(a)$ use the result in \citep{liu2024low}; (b) use the assumption that $m\gtrsim nk\kappa^4\sigma^2/\sigma_{\min}^2(\X).$
Thus the assumption (3) in Theorem \ref{theorem:main} is  satisfied. Then we can directly use the results in Theorem \ref{theorem:main} to get
\begin{equation}
   {|| \U_{\hat{t}}*\U_{\hat{t}}^\top -\X_\star||_F^2} \lesssim r\kappa^4  {||\E ||^2} \lesssim  \frac{nkr\sigma^2\kappa^4}{m}.
\end{equation}
\subsection{Proof of Lemma \ref{lemma:01}}
Lemma 1 is proved based on [\citep{karnik2024implicit}, Lemma D.8 and Lemma D.9], with the substitution of $\mathcal{M} := \M^*\M(\X)$ by $\M^* \M(\X) + \E$, where $\E = \M^*(\s)$.

\begin{lemma}
Suppose that the linear map $\M:\mathbb{R}^{n\times n \times k}\to\mathbb{R}^m$ satisfies $(2,\delta_1)$ t-RIP and define $t^*$ as 
\begin{equation}
    t^* = \min \left\{  j\in\mathbb{N}: ||\tilde{\U}_{j-1}-\U_{j-1}||\ge ||\tilde{\U}_{j-1}||  \right\}.
    \notag
\end{equation}
Then for all $1\le t\le t^*$, we have
\begin{equation}
||\E_t^{\U}|| = || \U_t-\tilde{\U}_t || \le 8 (1+\delta_1\sqrt{k})\sqrt{(R\land n)}\frac{\alpha^3}{||\mathcal{\bm{M}}||}||\U||^3(1+\eta||\mathcal{\bm{M}}||)^{3t}.
\notag
\end{equation}
\label{lemma:1.1}
\end{lemma}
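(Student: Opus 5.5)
We need to prove Lemma \ref{lemma:1.1}: the deviation between the true FGD iterate $\U_t$ and the linearized (power-method) iterate $\tilde{\U}_t$ stays cubically small in $\alpha$ up to the time $t^*$. Following \citet{karnik2024implicit}, we take $\tilde{\U}_t := (\I+\eta\bm{\mathcal{M}})^{*t} * \U_0$, where $\bm{\mathcal{M}} = \M^*\M(\X*\X^\top)+\E$ and powers are t-powers. This is the iteration obtained by dropping the term cubic in $\U_t$ from the update. Indeed, with the scaling convention of this section, the exact update reads
\[
\U_{t+1} = (\I+\eta\bm{\mathcal{M}})*\U_t - \eta\,\M^*\M(\U_t*\U_t^\top)*\U_t .
\]
Subtracting the two recursions gives
\[
\E_{t+1}^{\U} = (\I+\eta\bm{\mathcal{M}})*\E_t^{\U} - \eta\,\M^*\M(\U_t*\U_t^\top)*\U_t, \qquad \E_0^{\U}=0,
\]
and unrolling,
\[
\E_t^{\U} = -\eta\sum_{\tau=0}^{t-1}(\I+\eta\bm{\mathcal{M}})^{*(t-1-\tau)}*\M^*\M(\U_\tau*\U_\tau^\top)*\U_\tau .
\]

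The key steps, in order, are as follows.

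\textbf{Step 1.} Bound $\|(\I+\eta\bm{\mathcal{M}})^{*s}\|\le(1+\eta\|\bm{\mathcal{M}}\|)^s$. This follows from the block-diagonal Fourier representation, where the spectral norm is submultiplicative.

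\textbf{Step 2.} Bound the nonlinear term. We have $\|\M^*\M(\U_\tau*\U_\tau^\top)\|\le(1+\delta_1\sqrt{k})\|\U_\tau*\U_\tau^\top\|_*$. Writing $\U_\tau*\U_\tau^\top$ as a sum of at most $(R\land n)$ tubal-rank-one terms, we apply the rank-one (S2N-type) consequence of $(2,\delta_1)$ t-RIP to each, which turns $\|\cdot\|_*$ into at most $k(R\land n)$ spectral pieces. We expect the factor $\sqrt{(R\land n)}$ in the statement to come from this step combined with $\|\cdot\|_*\le\sqrt{\mathrm{rank}}\,\|\cdot\|_F$. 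Hence the $\tau$-th summand has norm at most $(1+\delta_1\sqrt k)\sqrt{(R\land n)}\,\|\U_\tau\|^3$, up to the constant absorbed into the 8.

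\textbf{Step 3.} For $\tau<t^*$, by definition of $t^*$, $\|\U_\tau\|\le\|\tilde\U_\tau\|+\|\E_\tau^{\U}\|\le 2\|\tilde\U_\tau\|\le 2(1+\eta\|\bm{\mathcal{M}}\|)^\tau\alpha\|\U\|$. Here $\U_0=\alpha\U$ is read with $\U$ the normalized Gaussian, which is how $\alpha^3\|\U\|^3$ arises. So $\|\U_\tau\|^3\le 8\alpha^3\|\U\|^3(1+\eta\|\bm{\mathcal{M}}\|)^{3\tau}$.

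\textbf{Step 4.} Sum the series:
\[
\eta\sum_{\tau=0}^{t-1}(1+\eta\|\bm{\mathcal{M}}\|)^{t-1-\tau+3\tau}\le \eta(1+\eta\|\bm{\mathcal{M}}\|)^{t-1}\frac{(1+\eta\|\bm{\mathcal{M}}\|)^{2t}}{(1+\eta\|\bm{\mathcal{M}}\|)^2-1}\le \frac{(1+\eta\|\bm{\mathcal{M}}\|)^{3t}}{\|\bm{\mathcal{M}}\|},
\]
using $(1+x)^2-1\ge 2x\ge x$. Collecting the constants yields the claimed bound $8(1+\delta_1\sqrt k)\sqrt{(R\land n)}\frac{\alpha^3}{\|\bm{\mathcal{M}}\|}\|\U\|^3(1+\eta\|\bm{\mathcal{M}}\|)^{3t}$. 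The range $t\le t^*$ is handled because the summation uses only indices $\tau\le t-1<t^*$.

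The main obstacle is Step 2, namely controlling $\|\M^*\M(\U_\tau*\U_\tau^\top)\|$ by the spectral norm using only a $(2,\delta_1)$-type restricted isometry, with the right dependence $\sqrt k$ and $\sqrt{R\land n}$. This is also the only place where the noise modification could matter. It does not, since $\E$ enters only through $\bm{\mathcal{M}}$, which appears solely via $\|\bm{\mathcal{M}}\|$. We would import the S2N-t-RIP argument of \citet{karnik2024implicit} verbatim for this step.
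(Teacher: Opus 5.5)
Your skeleton matches the Karnik et al. Lemma D.1 argument that the paper builds on: the recursion $\E^{\U}_{t+1}=(\I+\eta\bm{\mathcal M})*\E^{\U}_t-\eta\,\M^*\M(\U_t*\U_t^\top)*\U_t$, its unrolling, Fourier-domain submultiplicativity, $\|\U_\tau\|\le 2\|\tilde{\U}_\tau\|$ before $t^*$, and the geometric sum. Steps 1, 3 and 4 are fine apart from an off-by-one noted at the end.

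The gap is Step 2. You never derive the factor $\sqrt{R\land n}$; you only say you ``expect'' it, and your own ingredients give a factor linear in $R\land n$. The S2N bound yields $\|\M^*\M(\U_\tau*\U_\tau^\top)\|\le\|\U_\tau\|^2+\sqrt{k}\,\delta_1\|\U_\tau*\U_\tau^\top\|_*$, where $\|\U_\tau*\U_\tau^\top\|_*=\|\overline{\U_\tau}\|_F^2$ can be as large as $k(R\land n)\|\U_\tau\|^2$. Going through $\|\cdot\|_*\le\sqrt{\mathrm{rank}}\,\|\cdot\|_F$ does not help, because $\|\U_\tau*\U_\tau^\top\|_F$ can itself equal $\sqrt{R\land n}\,\|\U_\tau\|^2$ (take all Fourier blocks with $R\land n$ equal singular values). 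This is a dimensional loss, not a constant that can be ``absorbed into the 8''. The 8 is already the $2^3$ from Step 3, and Step 4 leaves at most a factor 2 to spare.

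Nor can Step 2 be repaired by a sharper use of $(2,\delta_1)$ t-RIP. Take $k=1$ and a map with $\M^*\M(\bm Z)=\bm Z+\delta\operatorname{tr}(\bm Z)\,\bm I$. It satisfies $(2,2\delta)$ t-RIP, since $|\operatorname{tr}\bm Z|\le\|\bm Z\|_*\le\sqrt{2}\|\bm Z\|_F$ at rank $\le 2$. Yet $\E^{\U}_1=-\eta\,\U_0(\U_0^\top\U_0+\delta\|\U_0\|_F^2\bm I)$ has norm at least $\eta\delta\alpha^3\|\U\|_F^2\|\U\|$. This equals $\eta\delta(R\land n)\alpha^3\|\U\|^3$ when $\U$ has equal singular values, and a Gaussian $\U$ behaves the same up to constants. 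With $\eta\|\bm{\mathcal M}\|\le 1$, this exceeds the claimed bound at $t=1\le t^*$ as soon as $\eta\|\bm{\mathcal M}\|\,\delta\sqrt{R\land n}$ exceeds an absolute constant.

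The factor $(1+\delta_1\sqrt k)\sqrt{R\land n}$ is exactly what a spectral-to-Frobenius estimate $\|(\mathfrak{I}-\M^*\M)(\bm{\mathcal Z})\|\le\sqrt{k}\,\delta_1\|\bm{\mathcal Z}\|_F$ would give, via $\|\U_\tau*\U_\tau^\top\|_F\le\|\U_\tau\|\,\|\U_\tau\|_F\le\sqrt{R\land n}\,\|\U_\tau\|^2$. That is a single use of the inequality $\|\U_{j-1}\|_F\le\sqrt{R\land n}\,\|\U_{j-1}\|$, which is what the paper's short proof cites. But for $\bm{\mathcal Z}$ of tubal rank $R\land n$, such an estimate comes from t-RIP of tubal rank about $(R\land n)+1$, and the example shows $(2,\delta_1)$ t-RIP is not enough. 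So you must either strengthen the RIP hypothesis or settle for a factor linear in $R\land n$.

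Separately, Step 3 is off by one. Minimality of $t^*$ gives $\|\E^{\U}_\tau\|<\|\tilde{\U}_\tau\|$ only for $\tau\le t^*-2$; at $\tau=t^*-1$ the reverse inequality holds by definition. So your summation argument covers only $t\le t^*-1$, and the remark ``$\tau\le t-1<t^*$'' does not handle $t=t^*$.
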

\begin{proof}
The proof of this lemma builds upon [\citep{karnik2024implicit}, Lemma D.1]. By incorporating the results from Lemma \ref{s2s-t-rip}, Lemma \ref{lemma:S2N-t-RIP} and Lemma \ref{lemma:16}, we can derive the theorem. Compared to [\citep{karnik2024implicit}, Lemma D.1], this lemma leverages the inequality $||\U_{j-1}||_F\le\sqrt{(R\land n)} ||\U_{j-1}|| $ to reduce the dependence on the third dimension $k$, leading to a tighter upper bound on $\|\E_t^{\U}\|$. 
\end{proof}

\begin{lemma}
Suppose that the linear map $\M:\mathbb{R}^{n\times n \times k}\to\mathbb{R}^m$ satisfies $(2,\delta_1)$ t-RIP. Consider tensor $\bm{\mathcal{M}}:=\M^*\M(\X*\X^\top)+\E\in\mathbb{R}^{n\times n\times k}$ and $\tilde{\U}_t := (\I + \eta \bm{\mathcal{M}})^t*\U_0 .$ Let $\overline{\bm{\mathcal{M}}}\in\mathbb{C}^{nk\times nk}$ be the corresponding block diagonal matrix of the tensor $\mathcal{M}$ with the leading eigenvector $v_1\in\mathbb{C}^{nk}$, then we have
$$
t^*\ge \left\lfloor  \frac{\ln \left(  \frac{||\bm{\mathcal{M}}||\cdot ||\overline{\U_0}^H v_1||_{l_2} }{8(1+\delta_1\sqrt{k})\sqrt{(R\land n)}\alpha^3||\U||^3} \right)}{2\ln(1+\eta ||\bm{\mathcal{M}}||)}  \right\rfloor .
$$
\label{lemma:1.2}
\end{lemma}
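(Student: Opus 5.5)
The argument lower-bounds the first time $t^*$ at which the perturbation $\E^{\U}_{j-1}=\U_{j-1}-\tilde{\U}_{j-1}$ catches up with the power-method iterate $\tilde{\U}_{j-1}$. We compare the explicit upper bound from Lemma \ref{lemma:1.1} on $\|\E_t^{\U}\|$ with an explicit lower bound on $\|\tilde{\U}_t\|$ coming from the leading eigen-direction of $\overline{\bm{\mathcal{M}}}$, and then solve the resulting inequality for $t$.

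\textbf{Step 1 (growth of the power iterate).} Since $\bm{\mathcal{M}}$ is symmetric, the block diagonal matrix $\overline{\bm{\mathcal{M}}}$ is Hermitian. Moreover $\tilde{\U}_t=(\I+\eta\bm{\mathcal{M}})^t*\U_0$ becomes $\overline{\tilde{\U}_t}=(I+\eta\overline{\bm{\mathcal{M}}})^t\,\overline{\U_0}$ in the Fourier domain, because the t-product is blockwise multiplication there and the spectral norm is $\|\overline{\,\cdot\,}\|$. We take the top eigenpair $(\lambda_1,v_1)$ with $\lambda_1=\|\bm{\mathcal{M}}\|$; if the largest-modulus eigenvalue were negative we would instead use $\lambda_{\max}\ge \|\X\|^2-\|\E_{\X}\|>0$, but for the stated bound we only need $\lambda_1\ge0$ and the right normalization. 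This gives
\[
\|\tilde{\U}_t\| \ge \|\overline{\tilde{\U}_t}^{H}v_1\|_{l_2}=(1+\eta\|\bm{\mathcal{M}}\|)^t\,\|\overline{\U_0}^{H}v_1\|_{l_2}.
\]

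\textbf{Step 2 (comparison).} For $t<t^*$, Lemma \ref{lemma:1.1} applies up to time $t$. So, whenever
\[
8(1+\delta_1\sqrt{k})\sqrt{(R\land n)}\,\frac{\alpha^3\|\U\|^3}{\|\bm{\mathcal{M}}\|}(1+\eta\|\bm{\mathcal{M}}\|)^{3t}<(1+\eta\|\bm{\mathcal{M}}\|)^t\|\overline{\U_0}^{H}v_1\|_{l_2},
\]
we get $\|\E_t^{\U}\|<\|\tilde{\U}_t\|$, and hence the stopping condition has not yet fired. Here I follow the paper's convention that $\U_0=\alpha\U$ with $\U$ normalized, so the $\alpha^3\|\U\|^3$ factor matches Lemma \ref{lemma:1.1}. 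Dividing through, the condition becomes
\[
(1+\eta\|\bm{\mathcal{M}}\|)^{2t}<\frac{\|\bm{\mathcal{M}}\|\,\|\overline{\U_0}^{H}v_1\|_{l_2}}{8(1+\delta_1\sqrt{k})\sqrt{(R\land n)}\alpha^3\|\U\|^3}.
\]
Taking logarithms, this holds for every $t$ up to the floor expression in the statement.

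\textbf{Step 3 (induction closing the argument).} We argue by contradiction. Suppose $t^*\le T_0$, where $T_0$ is the floor quantity. Then $j=t^*$ satisfies $\|\tilde{\U}_{j-1}-\U_{j-1}\|\ge\|\tilde{\U}_{j-1}\|$ with $j-1<T_0$. But $j-1\le t^*$, so Lemma \ref{lemma:1.1} bounds the left side at time $j-1$, and Step 2 gives strict inequality the other way, which is a contradiction. The edge case $j-1=0$ is trivial since $\E_0^{\U}=0$.

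The main obstacle is Step 1's careful bookkeeping. We must make sure the bound uses $\|\bm{\mathcal{M}}\|$ both as the growth rate and as the top eigenvalue; this requires $\lambda_1=\|\overline{\bm{\mathcal{M}}}\|$, which follows since $\X*\X^\top$ dominates and $\|\E_{\X}\|$ is small. We must also check that the inverse-FFT normalization between $\|\overline{\U_0}^{H}v_1\|$ and the tensor spectral norm introduces no stray $\sqrt{k}$; I expect the normalization of the block-diagonal representation to make the two norms agree exactly.
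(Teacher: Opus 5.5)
Your argument is correct and follows essentially the paper's route: the paper inserts the error bound of Lemma~\ref{lemma:1.1} into the proof of Lemma D.2 of \citet{karnik2024implicit}, which is exactly your comparison of $\|\E_t^{\U}\|$ with $\|\tilde{\U}_t\|\ge\|\overline{\tilde{\U}_t}^H v_1\|_{l_2}=(1+\eta\|\bm{\mathcal{M}}\|)^t\|\overline{\U_0}^H v_1\|_{l_2}$, followed by the contradiction at $j=t^*$. One small correction: your aside that ``we only need $\lambda_1\ge 0$'' is not enough---the argument requires the top eigenvalue of the Hermitian matrix $\overline{\bm{\mathcal{M}}}$ to equal $\|\bm{\mathcal{M}}\|$, as your last paragraph says, and this comes from the ambient assumption $\|\overline{E}_{\X}^{(j)}\|\le\delta\lambda_r(\overline{X}^{(j)}(\overline{X}^{(j)})^H)$ of Lemmas~\ref{lemma:01} and~\ref{lemma:06}, not from the $(2,\delta_1)$ t-RIP alone.
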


\begin{proof}
  The proof of this lemma can be obtained by incorporating the result of Lemma \ref{lemma:1.1} into the proof of [\citep{karnik2024implicit}, Lemma D.2]. 
\end{proof}

\begin{lemma}
Assume that $\M:\mathbb{R}^{n\times n \times k}\to\mathbb{R}^m$ satisfies the $\delta_1\sqrt{k}$-S2N-t-RIP for some $\delta_1>0$. Also, assume that
$$
\mathcal{M}:=\M^*\M(\X*\X^\top)+ \E = \X*\X^\top +\underbrace{\M^*\M(\X*\X^\top)+ \E -\X*\X^\top}_{\E_{\X}}
$$
with $||\overline{E}^{(j)}_{X}||\le \delta \lambda_r(\overline{X}^{(j)}{\overline{X}^{(j)}}^H)$ for each $1\le j\le k$ and $\delta\le c_1\kappa^2.$ Denote the t-SVD of $\M$ as $\V_{\mathcal{M}}*\S_{\mathcal{M}}*\W_{\mathcal{M}}^\top$, then define $\L:=\V_{\mathcal{M}}(:,1:r,:)\in\mathbb{R}^{n\times r\times k}$, and define the initialization $\U_0=\alpha \U$ with the scale parameter such that:
$$
\alpha^2\le \frac{c||\X||^2}{12\sqrt{(R\land n)k}\kappa^2||\U||^3} \left( \frac{2\kappa^2||\U||^3}{c_3\sigma_{\min}(\overline{\V^\top_{\L}*\U})} \right)^{-48\kappa^2} \min \left\{  \sigma_{\min}(\overline{\V^\top_{\L}*\U}),||\overline{\U_0}^Hv_1||_{l_2} \right\},
$$ where $v_1\in\mathbb{C}^{nk}$ is the leading eigenvector of matrix $\overline{\mathcal{M}}\in\mathbb{R}^{nk\times nk}.$

Assume that the learning rate $\eta$ satisfies $\eta\le c_3\kappa^{-2}||\X||^{-2}$, then after $t_*$ iterations with 
$$
t_* \asymp \frac{1}{\eta \max_{1\le j\le k} \sigma_r(\overline{X}^{(j)})^2} \ln\left(  \frac{2\kappa^2||\U||}{c_3\sigma_{\min}(\overline{\V^\top_{\L}*\U})} \right)
$$
the following statements hold:
$$
||\U_{t_*}||\le 3||\X||, \quad ||\V_{\X^\bot}*\V_{\U_{t_*}*\W_{t_*}}|| \le c\kappa^{-2}
$$
and for each $1\le j\le k$, we have
\begin{equation}
\begin{aligned}
& \sigma_r\left( \overline{\U_{t_*}*\W_{t_*}}^{(j)} \right)\ge \frac{1}{4}\alpha \beta \\
& \sigma_r\left( \overline{\U_{t_*}*\W_{t_*,\bot}}^{(j)} \right)\le \frac{\kappa^{-2}}{8}\alpha \beta
\end{aligned}
\end{equation}
where $\beta$ satisfies $\sigma_{\min}(\overline{\V_{\L}^\top} * \U)\le \beta \le \sigma_{\min}(\overline{\V_{\L}^\top} * \U)\left(  \frac{2\kappa^2||\U||^3}{c_3\sigma_{\min}(\overline{\V_{\L}^\top} * \U)} \right)^{10\kappa^2} .$
\label{lemma:06}

\end{lemma}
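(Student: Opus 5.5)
The plan is to follow the spectral-initialization argument of \citep{karnik2024implicit} (their Lemma D.8--D.9). The one change is that the population operator $\M^*\M(\X*\X^\top)$ is replaced everywhere by $\mathcal{M}=\X*\X^\top+\E_{\X}$. We work slice-wise in the Fourier domain. Lemma \ref{lemma:1.1} and Lemma \ref{lemma:1.2} already control the deviation $\U_t-\tilde{\U}_t$ of the true iterate from the power-method iterate $\tilde{\U}_t=(\I+\eta\mathcal{M})^t*\U_0$, up to time $t^*$. The proof therefore splits into three pieces: an analysis of $\tilde{\U}_t$, a perturbation transfer to $\U_t$, and a verification that the chosen $t_*$ lies below the lower bound for $t^*$.

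\textbf{Step 1 (spectral gap).} First I locate the spectrum of each slice $\overline{\mathcal{M}}^{(j)}$. By Weyl's inequality and $\|\overline{E}_{\X}^{(j)}\|\le\delta\lambda_r(\overline{X}^{(j)}\overline{X}^{(j)H})$ with $\delta\le c_1\kappa^{-2}$:
\begin{itemize}
\item its top $r$ eigenvalues are at least $(1-\delta)\sigma_r(\overline{X}^{(j)})^2$;
\item the remaining eigenvalues are at most $\delta\sigma_r(\overline{X}^{(j)})^2$.
\end{itemize}
Davis--Kahan then gives $\|\V_{\X^\perp}^\top*\L\|\lesssim\delta$ for the top-$r$ eigenspace $\L$.

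Next I write $\tilde{\U}_t=\L*\S_\L^t*\L^\top*\alpha\U+\L_\perp*\S_{\L_\perp}^t*\L_\perp^\top*\alpha\U$, where $\S_\L$ collects the top-$r$ eigenvalues of $\I+\eta\mathcal{M}$ and $\S_{\L_\perp}$ the rest. On the $\L$ block the growth rate is at least $(1+\eta(1-\delta)\sigma_r^2)^t$; on the complement it is at most $(1+\eta\delta\sigma_r^2)^t$.

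Choose $t_*\asymp\frac{1}{\eta\sigma_r^2}\ln\big(\frac{2\kappa^2\|\U\|}{c_3\sigma_{\min}(\overline{\V_\L^\top*\U})}\big)$. At this time the ratio of complement growth to signal growth falls below $c_3\sigma_{\min}(\overline{\V_\L^\top*\U})/(\kappa^2\|\U\|)$. This yields, for $\tilde{\U}_{t_*}$, the four claimed bounds: a lower bound on $\sigma_r$ of the signal part, an upper bound on the $\W_\perp$ part, an upper bound on the misalignment, and an upper bound on the norm. The quantity $\beta$ is defined as $\sigma_{\min}(\overline{\V_\L^\top*\U})$ times the signal growth factor, rescaled so that $\alpha\beta$ is the signal scale. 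Since $(1+\eta\|\mathcal{M}\|)^{t_*}\le(\cdot)^{c\kappa^2}$, $\beta$ obeys the stated two-sided bound with exponent $10\kappa^2$.

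\textbf{Step 2 (transfer to $\U_{t_*}$).} I write $\U_{t_*}=\tilde{\U}_{t_*}+\E^{\U}_{t_*}$ and apply Lemma \ref{lemma:1.1}:
$$\|\E^{\U}_{t_*}\|\lesssim\sqrt{R\land n}\,\alpha^3\|\U\|^3(1+\eta\|\mathcal{M}\|)^{3t_*}/\|\mathcal{M}\|.$$
The assumed upper bound on $\alpha^2$, with its exponent $-48\kappa^2$ absorbing $(1+\eta\|\mathcal{M}\|)^{2t_*}$, makes this error at most a small constant times $\kappa^{-2}\alpha\beta$. The same condition, compared against the floor expression in Lemma \ref{lemma:1.2} and using $\|\overline{\U_0}^Hv_1\|$ in the min, guarantees $t_*\le t^*$.

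Weyl and Wedin perturbation bounds then carry each conclusion of Step 1 over to $\U_{t_*}$. The signal lower bound drops from $\alpha\beta/2$ to $\alpha\beta/4$, the orthogonal part stays below $\kappa^{-2}\alpha\beta/8$, and the alignment becomes $c\kappa^{-2}$ after adding the Davis--Kahan term $\delta$. The bound $\|\U_{t_*}\|\le3\|\X\|$ follows because $\alpha\beta\ll\|\X\|$.

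\textbf{Main obstacle.} The delicate step is converting subspace information about $\tilde{\U}_t$ into statements about $\W_{t_*}$, the right singular space of $\V_\X^\top*\U_{t_*}$, rather than about $\L$. This is compounded by the fact that all these quantities are defined tensor-wise while the estimates are slice-wise. I would handle it as in \citep{karnik2024implicit}: bound $\sigma_1(\V_\X^\top*\U*\W_{\perp})$ by a projection argument on each slice, with the noise $\E$ entering only through the single constant $\delta$. I expect the bookkeeping of exponents that match the $-48\kappa^2$ and $10\kappa^2$ powers to be the tedious part.
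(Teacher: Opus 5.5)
Your route is the paper's. The paper also compares $\U_t$ with the power iterate $\Z_t*\U_0$, where $\Z_t=(\I+\eta\mathcal{M})^t$, and verifies the two hypotheses of \citet{karnik2024implicit}, Lemma D.7: a growth-ratio condition that fixes $t_*$, and $\|\E^{\U}_{t_*}\|\le\frac{c_3}{2\kappa^2}\alpha\beta$. The latter comes from Lemma \ref{lemma:1.1}, once $t_*\le t^*$ is obtained from Lemma \ref{lemma:1.2} together with $\ln(1+\eta\|\mathcal{M}\|)\le 5\kappa^2\ln\frac{1+\eta\min_j\sigma_r(\overline{M}^{(j)})}{1+\eta\max_j\sigma_{r+1}(\overline{M}^{(j)})}$. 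The alignment and singular-value bounds are then read off from that lemma with $\beta=\min_j\sigma_r(\overline{Z}^{(j)}_{t_*})\,\sigma_{\min}(\overline{\V_{\L}^\top*\U})$.

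Because this $\beta$ is common to all slices, the growth ratio must be taken across slices, $\max_j\sigma_{r+1}(\overline{Z}^{(j)}_{t_*})/\min_j\sigma_r(\overline{Z}^{(j)}_{t_*})$. That comparison uses $\delta\|\X\|^2\le c_1\sigma_{\min}^2(\X)$, not only the per-slice gap you wrote down.

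One correction: $\|\U_{t_*}\|\le 3\|\X\|$ does not follow from $\alpha\beta\ll\|\X\|$. The quantity $\beta$ tracks only the $r$-th growth factor, whereas $\|\Z_{t_*}*\U_0\|$ is governed by the top factor, $\alpha\|\U\|(1+\eta\|\mathcal{M}\|)^{t_*}$. Bound that quantity instead, using the same $(\cdot)^{10\kappa^2}$ estimate and the condition on $\alpha$, as the paper does.
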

\begin{proof}
    The proof of this lemma relies on the result of [\citep{karnik2024implicit}, Lemma D.7]. The first condition in [\citep{karnik2024implicit}, Lemma D.7] is:
    $$
    \gamma:= \frac{\alpha \underset{1\le j\le k}{\max}\sigma_{r+1}(\overline{Z}_t^{(j)})||\U||+||\E_t^{\U}||}{\underset{1\le j \le k}{\min}\sigma_r(\overline{Z_t}^{(j)}) } \cdot \frac{1}{\sigma_{\min}\left(  \V_{\L}^\top*\U \right)} \le c_2\kappa^2. 
    $$ By the definition of $\gamma$, it is sufficient to show that

\begin{equation}
\underset{1\le j\le k}{\max}\sigma_{r+1}(\overline{Z}_t^{(j)})||\U|| \le \frac{c_3}{2\kappa^2} \underset{1\le j \le k}{\min}\sigma_r(\overline{Z_t}^{(j)}) \sigma_{\min}\left(  \overline{\V_{\L}^\top*\U }\right)
\label{equ:1.28}
\end{equation}
and
\begin{equation}
||\E_t^{\U}|| \le \frac{c_3}{2\kappa^2} \alpha \underset{1\le j \le k}{\min}\sigma_r(\overline{Z_t}^{(j)}) \sigma_{\min}\left(  \overline{\V_{\L}^\top*\U }\right).
\label{equ:1.29}
\end{equation}
Since for $\Z_t = (\I+\eta\mathcal{\bm{M}})^t$ the transformation in the Fourier domain leads to the blocks
$$
\overline{Z}_t^{(j)} = (Id+\eta \overline{M}^{(j)})^t,
$$
combining the result of inequality $(\ref{equ:1.28})$ leads to
\begin{equation}
\frac{2\kappa^2||\U||}{c_3\sigma_{\min}(\overline{\V_{\L}^\top*\U})} \le \frac{\underset{1\le j \le k}{\min}\sigma_r(\overline{Z_t}^{(j)})}{\underset{1\le j\le k}{\max}\sigma_{r+1}(\overline{Z}_t^{(j)})}=\left(  \frac{1+\eta \underset{1\le j \le k}{\min}\sigma_r(\overline{M}^{(j)})  }{1+\eta \underset{1\le j \le k}{\max}\sigma_{r+1}(\overline{M}^{(j)}) } \right)^t.
\end{equation}
Taking the logarithm on both sides of the inequality yields
\begin{equation}
\ln\left(  \frac{2\kappa^2||\U||}{c_3\sigma_{\min}(\overline{\V_{\L}^\top*\U})} \right) \le t\ln \left(  \frac{1+\eta \underset{1\le j \le k}{\min}\sigma_r(\overline{M}^{(j)})  }{1+\eta \underset{1\le j \le k}{\max}\sigma_{r+1}(\overline{M}^{(j)}) } \right).
\end{equation}
Therefore, if we take $t_*$ as
\begin{equation}
t_* := \left\lceil \ln\left(  \frac{2\kappa^2||\U||}{c_3\sigma_{\min}(\overline{\V_{\L}^\top*\U})} \right) \middle /  \ln \left(  \frac{1+\eta \underset{1\le j \le k}{\min}\sigma_r(\overline{M}^{(j)})  }{1+\eta \underset{1\le j \le k}{\max}\sigma_{r+1}(\overline{M}^{(j)}) } \right) \right \rceil,
\end{equation}
then condition (\ref{equ:1.28}) will be satisfied in each block in the Fourier domain. For notational simplicity, we define
\begin{equation}
    \phi:=\ln \left(  \frac{2\kappa^2||\U||}{c_3\sigma_{\min}(\overline{\V_{\L}^\top*\U})} \right).
\end{equation}
Then we use Lemma \ref{lemma:1.1} to show that the second condition, i.e., inequality (\ref{equ:1.29}) is satisfied. To use Lemma \ref{lemma:1.1}, we need to guarantee that $t_*\le t^*.$ As proved in Lemma \ref{lemma:1.2}, we have
\begin{equation}
 t^*\ge \left\lfloor  \frac{\ln \left(  \frac{||\bm{\mathcal{M}}||\cdot ||\overline{\U_0}^H v_1||_{l_2} }{8(1+\delta_1\sqrt{k})\sqrt{(R\land n)}\alpha^3||\U||^3} \right)}{2\ln(1+\eta ||\bm{\mathcal{M}}||)}  \right\rfloor.
\end{equation}
In order to guarantee $t_*\le t^*$, we need to prove
\begin{equation}
\frac{\phi}{\ln \left(  \frac{1+\eta {\min}_{1\le j \le k}\sigma_r(\overline{M}^{(j)})  }{1+\eta {\max}{1\le j \le k}\sigma_{r+1}(\overline{M}^{(j)}) } \right) }\le \frac{1}{2} \cdot \frac{\ln \left(  \frac{||\bm{\mathcal{M}}||\cdot ||\overline{\U_0}^H v_1||_{l_2} }{8(1+\delta_1 \sqrt{k})\sqrt{(R\land n)}\alpha^3||\U||^3} \right)}{2\ln(1+\eta ||\bm{\mathcal{M}}||)} .
\label{equ:1.35}
\end{equation}
To prove this inequality, we first bound $\ln(1+\eta ||\bm{\mathcal{M}}||) /\ln \left(  \frac{1+\eta {\min}_{1\le j \le k}\sigma_r(\overline{M}^{(j)})  }{1+\eta {\max}{1\le j \le k}\sigma_{r+1}(\overline{M}^{(j)}) } \right)$. Using the fact $\frac{x}{1+x}\le \ln(1+x)\le x$, we have
\begin{equation}
\frac{\ln(1+\eta ||\bm{\mathcal{M}}||)}{\ln \left(  \frac{1+\eta {\min}_{1\le j \le k}\sigma_r(\overline{M}^{(j)})  }{1+\eta {\max}{1\le j \le k}\sigma_{r+1}(\overline{M}^{(j)}) } \right)}\le \frac{||\mathcal{M}|| (1+\eta {\min}_{1\le j \le k}\sigma_r(\overline{M}^{(j)})}{ {\min}_{1\le j \le k}\sigma_r(\overline{M}^{(j)})-{\max}_{1\le j \le k}\sigma_{r+1}(\overline{M}^{(j)})}.
\end{equation}
Using the assumptions $\delta\le\frac{1}{3}$ and $\eta\le c_3\kappa^{-2}||\X||^{-2}$ and the result of [\citep{karnik2024implicit}, Lemma D.6], we have
\begin{equation}
\begin{aligned}
\frac{||\mathcal{M}|| (1+\eta {\min}_{1\le j \le k}\sigma_r(\overline{M}^{(j)})}{ {\min}_{1\le j \le k}\sigma_r(\overline{M}^{(j)})-{\max}_{1\le j \le k}\sigma_{r+1}(\overline{M}^{(j)})} &\le \frac{(1+\delta)||\T||}{(1-\delta)\lambda_r(\overline{T}^{(j)})} \left(  1+c_3(1+\delta)\left( \frac{\lambda_1(\overline{X}^{(j)})}{\kappa ||\X||} \right)^2 \right)\\
&\le \kappa^2 \frac{1+\delta}{1-2\delta}(1+c_3(1+\delta)\frac{1}{\kappa^2})\overset{(a)}{\le} 5\kappa^2,
\end{aligned}
\end{equation}
where (a) uses the fact that $\delta\le 1/3$ and $c_3$ is sufficiently small.
Therefore, we have
\begin{equation}
\frac{\ln(1+\eta ||\bm{\mathcal{M}}||)}{\ln \left(  \frac{1+\eta {\min}_{1\le j \le k}\sigma_r(\overline{M}^{(j)})  }{1+\eta {\max}{1\le j \le k}\sigma_{r+1}(\overline{M}^{(j)}) } \right)} \le 5\kappa^2.
\label{equ:1.38}
\end{equation}
With this upper bound, we recall inequality (\ref{equ:1.35})
\begin{equation}
20\kappa^2 \cdot \ln \left(  \frac{2\kappa^2||\U||}{c_3\sigma_{\min}(\overline{\V_{\L}^\top*\U})} \right) \le \ln \left(  \frac{||\bm{\mathcal{M}}||\cdot ||\overline{\U_0}^H v_1||_{l_2} }{8(1+\delta_1 \sqrt{k})\sqrt{(R\land n)}\alpha^3||\U||^3} \right),
\end{equation} which is equal to
\begin{equation}
\left(  \frac{2\kappa^2||\U||}{c_3\sigma_{\min}(\overline{\V_{\L}^\top*\U})} \right)^{20\kappa^2}\le   \frac{||\bm{\mathcal{M}}||\cdot ||\overline{\U_0}^H v_1||_{l_2} }{8(1+\delta_1 \sqrt{k})\sqrt{(R\land n)}\alpha^3||\U||^3}\overset{(a)}{=} \frac{||\bm{\mathcal{M}}||\cdot ||\overline{\U}^H v_1||_{l_2} }{8(1+\delta_1 \sqrt{k})\sqrt{(R\land n)}\alpha^2||\U||^3},
\label{equ:1.40}
\end{equation}
where (a) uses the fact that $||\overline{\U_0}^H v_1||_{l_2}/\alpha= || \overline{\U}^H v_1||_{l_2}.$
To prove inequality (\ref{equ:1.40}), we choose $\alpha$ as
\begin{equation}
\alpha^2\le \left(  \frac{2\kappa^2||\U||}{c_3\sigma_{\min}(\overline{\V_{\L}^\top*\U})} \right)^{-20\kappa^2} \cdot\frac{||\bm{\mathcal{M}}||\cdot ||\overline{\U}^H v_1||_{l_2} }{8(1+\delta_1 \sqrt{k})\sqrt{(R\land n)}||\U||^3}
\end{equation}
With the fact that $\delta\le \frac{1}{3}$ and $||\mathcal{\bm{M}}||\ge \frac{2}{3}||\X||^2$, we set $\alpha$ smaller as 
\begin{equation}
\alpha^2\le \left(  \frac{2\kappa^2||\U||}{c_3\sigma_{\min}(\overline{\V_{\L}^\top*\U})} \right)^{-20\kappa^2} \cdot\frac{||\X||^2\cdot ||\overline{\U}^H v_1||_{l_2} }{16\sqrt{(R\land n)k}||\U||^3}.
\end{equation}
Thus $t_*\le t^*$ is satisfied, then the conditions in [\citep{karnik2024implicit}, Lemma D.7] hold. Therefore, using the results of [\citep{karnik2024implicit}, Lemma D.7], we have
\begin{equation}
\begin{aligned}
||\E_{t_*}^{\U}|| &\le 8 (1+\delta_1 \sqrt{k})\sqrt{(R\land n)}\frac{\alpha^3}{||\mathcal{\bm{M}}||}||\U||^3(1+\eta||\mathcal{\bm{M}}||)^{3t_*}\\
&\overset{(a)}{\le} 12\sqrt{(R\land n)k }\frac{\alpha^3}{||\mathcal{\bm{M}}||}||\U||^3(1+\eta||\mathcal{\bm{M}}||)^{3t_*},
\end{aligned}
\end{equation}
where (a) uses the fact that $\delta\le \frac{1}{3}$ and $||\mathcal{\bm{M}}||\ge \frac{2}{3}||\X||^2$ from [\citep{karnik2024implicit}, Lemma D.6]. Thus, using that $\overline{Z}_t^{(j)}= (Id+\eta \overline{M}^{(j)})^t$, inequality (\ref{equ:1.29}) holds if
\begin{equation}
12\sqrt{(R\land n)}\frac{\alpha^3}{||\mathcal{\bm{M}}||}||\U||^3(1+\eta||\mathcal{\bm{M}}||)^{3t_*}\le   \frac{c_3}{2\kappa^2} \underset{1\le j \le k}{\min}\sigma_r\left((Id+\eta \overline{M}^{(j)})^{t_*}\right) \sigma_{\min}\left(  \overline{\V_{\L}^\top*\U }\right),
\end{equation}
which is equal to
\begin{equation}
\alpha^2\le c_3 \frac{\sigma_{\min}\left(  \overline{\V_{\L}^\top*\U }\right)||\X||^2 }{12\sqrt{(R\land n)k}\kappa^2 ||\U||^3 } \cdot \frac{(Id+\eta \sigma_r(\overline{M}^{(j)}))^{t_*}}{(1+\eta||\mathcal{\bm{M}}||)^{3t_*}}.
\label{equ:1.45}
\end{equation}
Note that 
\begin{equation}
\frac{Id+\eta \sigma_r(\overline{M}^{(j)})}{(1+\eta||\mathcal{\bm{M}}||)^{3t_*}} = \exp{\left(  t_*\ln \left( \frac{(Id+\eta \sigma_r(\overline{M}^{(j)}))}{(1+\eta||\mathcal{\bm{M}}||)^{3}} \right) \right)} \ge  \exp{(-3t_*\ln(1+\eta||\mathcal{\bm{M}}||^3))}.
\end{equation}

Using the definition of $t_*$, i.e., $t_*= \left\lceil \phi\middle /  \ln \left(  \frac{1+\eta \underset{1\le j \le k}{\min}\sigma_r(\overline{M}^{(j)})  }{1+\eta \underset{1\le j \le k}{\max}\sigma_{r+1}(\overline{M}^{(j)}) } \right) \right \rceil$ and inequality (\ref{equ:1.38}), we have 
\begin{equation}
\exp{(-3t_*\ln(1+\eta||\mathcal{\bm{M}}||^3))}\ge \exp{(-15\phi \kappa^2)} =\left(  \frac{2\kappa^2||\U||}{c_3\sigma_{\min}(\overline{\V_{\L}^\top*\U})} \right)^{-15\kappa^2}.
\label{equ:1.47}
\end{equation}
Combining inequalities (\ref{equ:1.45}) and (\ref{equ:1.47}), we choose
\begin{equation}
\alpha^2\le c_3 \frac{\sigma_{\min}\left(  \overline{\V_{\L}^\top*\U }\right)||\X||^2 }{12\sqrt{(R\land n)k}\kappa^2 ||\U||^3 } \cdot \left(  \frac{2\kappa^2||\U||}{c_3\sigma_{\min}(\overline{\V_{\L}^\top*\U})} \right)^{-15\kappa^2}.
\label{equ:1.48}
\end{equation}
With this $\alpha$, inequality (\ref{equ:1.29}) holds, and the condition of [\citep{karnik2024implicit}, Lemma D.7] is satisfied, leading to
\begin{equation}
||\V_{\V^\bot}^\top*\V_{\U_t*\W_t}|| \overset{(a)}{\le} 14(\delta+\gamma)\le c\kappa^{-2},
\end{equation}
where (a) uses the assumptions that $\delta\le c_1\kappa^{-2}$ and $\eta\le c_3\kappa^{-2}||\X||^{-2}$ and then sets the constants $c_1$ and $c_3$ small enough. Moreover, for each $1\le j\le k$, using the results from [\citep{karnik2024implicit}, Lemma D.7], we have
\begin{equation}
\begin{aligned}
&\sigma_{\min}(\overline{\U_t*\W_t}^{(j)})\ge \frac{1}{4}\alpha \beta \\
&\sigma_{1}(\overline{\U_t*\W_{t,\bot}}^{(j)})\ge \frac{\kappa^{-2}}{8}\alpha \beta,
\end{aligned}
\end{equation}
where $\beta:=\min_{1\le j \le k}\sigma_{r}(\overline{Z}_t^{(j)})\sigma_{\min}(\overline{\V_{\L}^\bot*\U)}.$

Then we prove the bounds for $\beta$ and $||\U_{t_*}||$. 

% Using the results of [\citep{karnik2024implicit}, Lemma D.6], we have
% \begin{equation}
%     \ln \left(  \frac{1+\eta \underset{1\le j \le k}{\min}\sigma_r(\overline{M}^{(j)}) }{1+\eta \underset{1\le j \le k}{\max}\sigma_{r+1}(\overline{M}^{(j)}) } \right) \ge \frac{\eta \underset{1\le j \le k}{\min}\sigma_r(\overline{M}^{(j)}) }{1+\eta \underset{1\le j \le k}{\min}\sigma_{r}(\overline{M}^{(j)})} - \eta \underset{1\le j \le k}{\max}\sigma_{r+1}(\overline{M}^{(j)}) \ge \frac{2}{3}\eta \underset{1\le j \le k}{\min}\sigma_{r}(\overline{X}^{(j)})^2, 
% \end{equation}
% while at the same time, we have
% \begin{equation}
% \begin{aligned}
%     \ln \left(  \frac{1+\eta \underset{1\le j \le k}{\min}\sigma_r(\overline{M}^{(j)}) }{1+\eta \underset{1\le j \le k}{\max}\sigma_{r+1}(\overline{M}^{(j)}) } \right) &\le \ln\left( 1+\eta \underset{1\le j \le k}{\min}\sigma_r(\overline{M}^{(j)}) \right) \le \eta \underset{1\le j \le k}{\min}\sigma_r(\overline{M}^{(j)})\\
%     &\le \eta(1+\delta) \underset{1\le j \le k}{\min}\sigma_r(\overline{X}^{(j)})^2\le \frac{4}{3}\eta \underset{1\le j \le k}{\min}\sigma_r(\overline{X}^{(j)})^2.
% \end{aligned}
% \end{equation}

% Combining theses two inequalities, we have
% \begin{equation}
% \frac{3}{4\eta \underset{1\le j \le k}{\min}\sigma_r(\overline{X}^{(j)})^2} \le \frac{1}{    \ln \left(  \frac{1+\eta \underset{1\le j \le k}{\min}\sigma_r(\overline{M}^{(j)}) }{1+\eta \underset{1\le j \le k}{\max}\sigma_{r+1}(\overline{M}^{(j)}) } \right)} \le 
% \end{equation}

Consider $\beta:=\min_{1\le j \le k}\sigma_{r}(\overline{Z}_t^{(j)})\sigma_{\min}(\overline{\V_{\L}^\bot*\U)}.$ By the definition of $\overline{Z}_t^{(j)}$ and inequality (\ref{equ:1.38}), we have
\begin{equation}
\begin{aligned}
(1+\eta\sigma_r(\overline{M}^{(j)}))^{t_*}&\le \exp{(t_*\ln(1+\eta\sigma_r(\overline{M}^{(j)})))}\le \exp{(t_*\ln(1+\eta||\mathcal{\bm{M}}||))}\\
&\le \exp{\left( 2\phi \underset{1\le j\le k}{\max} \frac{\ln(1+\eta||\mathcal{\bm{M}}||)}{  \frac{1+\eta \sigma_r(\overline{M}^{(j)})  }{1+\eta \sigma_{r+1}(\overline{M}^{(j)}) }} \right)} \le \exp{(10\phi\kappa^2)}\\
&= \left(  \frac{2\kappa^2||\U||}{c_3\sigma_{\min}(\overline{\V_{\L}^\top*\U})} \right)^{10\kappa^2}
\end{aligned}
\label{equ:1.51}
\end{equation}
holds for all $1\le j\le k.$

Then we have 
\begin{equation}
\beta \le \sigma_{\min}(\overline{\V_{\L}^\top*\U}) \left(  \frac{2\kappa^2||\U||}{c_3\sigma_{\min}(\overline{\V_{\L}^\top*\U})} \right)^{10\kappa^2}.
\end{equation}

Finally, we prove that $||\U_{t_*}||\le 3||\U||.$ By the definition of $\U_{t_*}=\Z_{t_*}*\U_0+\E_{t_*}^{\U}$, we have
\begin{equation}
    ||\U_{t_*}||=\alpha ||\Z_{t_*}||\cdot ||\U||+ ||\E_{t_*}^{\U}||.
\end{equation}
By inequality (\ref{equ:1.29}), we have
\begin{equation}
    ||\E_t^{\U}|| \le \frac{c_3}{2\kappa^2}\alpha ||\Z_t|| \sigma_{\min}\left(  \overline{\V_{\L}^\top*\U }\right) \le  \frac{c_3}{2\kappa^2}\alpha ||\Z_t|| \sigma_{\min}(\overline{\V_{\L}^H})\sigma_{\max}(\overline{\U}) \le \alpha ||\Z_t||||\U||,
\end{equation}
which leads to
\begin{equation}
\begin{aligned}
||\U_{t_*}|| &\le 2\alpha ||\Z_t||||\U|| {\le} 2\alpha (1+\eta||\mathcal{\bm{M}}||)^{t_*}||\U||\\
&=2\alpha \ln (t_*(1+\eta||\mathcal{\bm{M}}||))||\U|| \overset{(a)}{\le} 2\alpha||\U|| \left(  \frac{2\kappa^2||\U||}{c_3\sigma_{\min}(\overline{\V_{\L}^\top*\U})} \right)^{10\kappa^2}\\
&\overset{(b)}{\le} 2 ||\U|| c_3 \sqrt{\frac{\sigma_{\min}\left(  \overline{\V_{\L}^\top*\U }\right)||\X||^2 }{12\sqrt{(R\land n)k}\kappa^2 ||\U||^3 }} \cdot \left(  \frac{2\kappa^2||\U||}{c_3\sigma_{\min}(\overline{\V_{\L}^\top*\U})} \right)^{-15\kappa^2/2}\\
&=2c_3 ||\X|| \sqrt{\frac{\sigma_{\min}\left(  \overline{\V_{\L}^\top*\U }\right) }{12\sqrt{(R\land n)k}\kappa^2 ||\U|| }} \cdot \left(  \frac{2\kappa^2||\U||}{c_3\sigma_{\min}(\overline{\V_{\L}^\top*\U})} \right)^{-15\kappa^2/2}\le 3||\U||,
\end{aligned}
\end{equation}
where (a) uses inequality (\ref{equ:1.51}); (b) uses the inequality (\ref{equ:1.48}).
Lemma \ref{lemma:01} can be obtained as a direct consequence of Lemma \ref{lemma:06} and the proof strategy used in [\citep{karnik2024implicit}, Lemma D.9].
\end{proof}

\subsection{Proof of Lemma \ref{lemma:02}}
Note that for $t=t_*$, these four inequalities trivially hold using the assumptions.  
Before prove the $t+1$ case, we bound $\left \| \left( \M^* \M - \mathfrak{I} \right) \left( \X *\X^\top - \U_t*\U_t^\top \right) + \E \right\| $ as:
\begin{equation}
\begin{aligned}
&|| \left( \M^* \M - \mathfrak{I} \right) \left( \X *\X^\top - \U_t*\U_t^\top \right) + \E || \\
&{\le } || \left( \M^* \M - \mathfrak{I} \right) \left( \X *\X^\top - \U_t*\W_t*\W_t^\top*\U_t^\top \right) ||\\
&\quad + ||\left( \M^* \M - \mathfrak{I} \right)(\U_t*\W_{t,\bot}*\W_{t,\bot}^\top*\U_t^\top)|| + ||\E||\\
(a)\ \ &{\le} \delta \sqrt{kr} ||\X *\X^\top - \U_t*\W_t*\W_t^\top*\U_t^\top || +\delta \sqrt{k} ||\U_t*\W_{t,\bot}*\W_{t,\bot}^\top*\U_t^\top||_* + ||\E||\ \\
&\le \delta \sqrt{kr} \left( ||\X *\X^\top||+||\U_t*\W_t*\W_t^\top*\U_t^\top|| \right) +\delta \sqrt{k} ||\U_t*\W_{t,\bot}*\W_{t,\bot}^\top*\U_t^\top||_* + ||\E|| \\
&= \delta \sqrt{kr} \left( ||\X ||^2+||\U_t*\W_t||^2 \right) +\delta \sqrt{k} ||\U_t*\W_{t,\bot}*\W_{t,\bot}^\top*\U_t^\top||_* + ||\E|| \\
&\le \delta\sqrt{kr} \left( ||\X ||^2+||\U_t||^2 \right) +\delta \sqrt{k} ||\U_t*\W_{t,\bot}*\W_{t,\bot}^\top*\U_t^\top||_* + ||\E|| \\
(b)\ \ &{\le}\delta\sqrt{kr} \left( ||\X ||^2+9||\X||^2 \right) +\delta\sqrt{k^3} ((R\land n)-r) ||\U_t*\W_{t,\bot}*\W_{t,\bot}^\top*\U_t^\top|| + ||\E||\\
&\le10\delta \sqrt{kr}\kappa^2\sigma^2_{\min}(\X) + \delta\sqrt{k^3} ((R\land n)-r) ||\U_t*\W_{t,\bot}||^2 + ||\E||\\
(c)\ \ &{\le} 10c_1\kappa^{-2}\sigma^2_{\min}(\X) + 4\delta\sqrt{k^3} ((R\land n)-r)\gamma^2(1+80\eta c_2\sqrt{k}\sigma^2_{\min}(\X))^{2(t-t_*)} + c_1\kappa^{-2}\sigma^2_{\min}(\X) \\
(d)\ \ &\le 10c_1\kappa^{-2}\sigma^2_{\min}(\X) + 8\delta\sqrt{k^3} ((R\land n)-r)\gamma^{7/4}\sigma_{\min}(\X)^{1/4} + c_1\kappa^{-2}\sigma^2_{\min}(\X) \\
(e)\ \ &\le 40c_1  \kappa^{-2}\sigma_{\min}^2(\X),
\end{aligned}
\label{lemma2_e10}
\end{equation}

where (a) uses the assumptions that $\M$ satisfies ($r,\delta\sqrt{kr}$) S2S-t-RIP and $\sqrt{k}\delta$-S2N-t-RIP; (b) follows from the assumption $||\U_t||\le 3||\X||$ and $||\U_t*\W_{t,\bot}*\W_{t,\bot}^\top*\U_t^\top||_*\le k((R\land n)-r) ||\U_t*\W_{t,\bot}*\W_{t,\bot}^\top*\U_t^\top||$; (c) uses the assumptions $\delta \le \frac{c_1}{\kappa^4 \sqrt{kr}}$ and the induction hypothesis; (d) uses the definition of $t_1$ and $t_*$; (e) uses the assumption $\gamma\le \frac{c_2\sigma_{\min}(\X)}{k^{8/7}\kappa^2 (R\land n)}$ and chooses a sufficiently small $c_2$.  With this inequality, one can replace $||\left( \A^* \A - \I \right) \left( \X *\X^\top - \U_t*\U_t^\top \right) ||$ in [\citep{karnik2024implicit}, Lemma E.1-Lemma E.7] with $|| \left( \M^* \M - \mathfrak{I} \right) \left( \X *\X^\top - \U_t*\U_t^\top \right) + \E || $ since they have the same upper bound.

By choosing a sufficiently small $c_2$, together with other assumptions in Lemma \ref{lemma:02}, we have the assumptions in [\citep{karnik2024implicit}, Lemma E.6] satisfied, then we can directly use the result in [\citep{karnik2024implicit}, Lemma E.6] to prove $||\U_{t+1}||\le 3||\X||$. 

Also, the assumptions  in [\citep{karnik2024implicit}, Lemma E.1] are satisfied, then we use the result of [\citep{karnik2024implicit}, Lemma E.1] to prove the induction hypothesis (\ref{lemma2_e1}):
\begin{equation}
\begin{aligned}
\sigma_{\min}(\V_{\X}^\top*\U_{t+1}) &\ge\sigma_{\min}(\V_{\X}^\top*\U_{t+1}*\W_{t+1})\\
&\ge\sigma_{\min}(\V_{\X}^\top*\U_{t+1})\left( 1+\frac{1}{4}\eta\sigma_{\min}(\X)^2-\eta \sigma_{\min}(\V_{\X}^\top*\U_t)^2 \right)\\
&\ge \sigma_{\min}(\V_{\X}^\top*\U_{t+1})\left( 1+\frac{1}{4}\eta\sigma_{\min}(\X)^2-0.1\eta \sigma_{\min}(\X)^2 \right)\\
&\ge \sigma_{\min}(\V_{\X}^\top*\U_{t+1})\left( 1+\frac{1}{8}\eta\sigma_{\min}(\X)^2 \right)\\
&\ge \left( 1 + \frac{1}{8} \eta \sigma_{\min}(\X)^2 \right) \cdot \frac{1}{2} \gamma \left( 1 + \frac{1}{8} \eta \sigma_{\min}(\X)^2 \right)^{t - t_*} \\
&=\frac{1}{2} \gamma \left( 1 + \frac{1}{8} \eta \sigma_{\min}(\X)^2 \right)^{(t+1) - t_*}.
\end{aligned}
\end{equation}

This inequality implies that all singular values of $\V_{\X}^\top*\U_{t+1}$ are positive, and then together with the assumptions of Lemma \ref{lemma:02} and equation (\ref{lemma2_e10}), the assumptions of [\citep{karnik2024implicit}, Lemma E.3] are satisfied. Then we can use the result of   [\citep{karnik2024implicit}, Lemma E.3] to prove the induction hypothesis (\ref{lemma2_e2}):
\begin{equation}
\begin{aligned}
&|| \overline{\U_{t+1}  *\W_{t+1,\bot}}^{(j)} || \\
&\le \left(1-\frac{\eta}{2}||\overline{\U_t*\W_{t,\bot}}^{(j)}||^2 + 9\eta ||\overline{\V_{\X^\bot}^\top *\V_{\U_t*\W_t}}^{(j)}||\cdot ||\X||^2\right) ||\overline{\U_t*\W_{t,\bot}}^{(j)}|| \\
&\qquad + 2\eta || (\M^*\M-\mathfrak{I})(\X*\X^\top-\U_t*\U_t^\top) + \E||  \cdot ||\overline{\U_t*\W_{t,\bot}}^{(j)}|| \\
&\le \left( 1-\frac{\eta}{2}\cdot 4\gamma^2 (1+80\eta c_2  \sigma_{\min}^2(\X)) + 9\eta c_2 \kappa^{-2}||\X||^2 \right) ||\overline{\U_t*\W_{t,\bot}}^{(j)}|| \\
&\qquad + 2\eta\cdot 40c_1\kappa^{-2}\sigma_{\min}(\X)^2 ||\overline{\U_t*\W_{t,\bot}}^{(j)}||\\
&\le \left( 1- 2\eta \gamma^2 (1+80\eta c_2  \sigma_{\min}^2(\X)) + 9\eta c_2 \sigma_{\min}(\X)^2 \right) ||\overline{\U_t*\W_{t,\bot}}^{(j)}|| \\
&\qquad + 80\eta\cdot c_1\kappa^{-2}\sigma_{\min}(\X)^2 ||\overline{\U_t*\W_{t,\bot}}^{(j)}||\\
&\le (1+80c_2\eta\sigma_{\min}(\X)^2)||\overline{\U_t*\W_{t,\bot}}^{(j)}||\\
&\le 2\gamma (1+80c_2\eta\sigma_{\min}(\X)^2)^{t+1-t_*}.
\end{aligned}
\end{equation}
Note that for any block diagonal matrix $A=\begin{bmatrix}
    A_1& & & &\\
    &  A_2& & &\\
    & & \ddots &\\
    & & & A_n
\end{bmatrix}$, we have $||A||\le \max_i ||A_i||$.
Then we have $||\overline{\U_{t+1}*\W_{t+1,\bot}}||\le \max_{j}|| \overline{\U_{t+1}*\W_{t+1,\bot}}^{(j)} ||$ since $\overline{\U_{t+1}*\W_{t+1,\bot}}$ is a block diagonal matrix. Therefore we complete the proof of induction hypothesis (\ref{lemma2_e2}).

Then we proceed to prove $||\V_{\X}^\top * \V_{\U_{t+1}*\W_{t+1}}||\le c_2\kappa^{-2}$ via [\citep{karnik2024implicit}, Lemma E.5]. Note that the assumptions in [\citep{karnik2024implicit}, Lemma E.5] are satisfied using the assumptions of Lemma \ref{lemma:02} and the induction hypothesis (\ref{lemma2_e1})-(\ref{lemma2_e3}).
\begin{equation}
\begin{aligned}
&||\V_{\X}^\top * \V_{\U_{t+1}*\W_{t+1}}|| \\
&\le (1-\frac{\eta}{4}\sigma_{\min}(\X)^2)||\V_{\X}^\top * \V_{\U_{t}*\W_{t}}||+ 150\eta||\left( \M^* \M - \mathfrak{I} \right) \left( \X *\X^\top - \U_t*\U_t^\top \right) + \E||\\
&\quad+ 500\eta^2||\X *\X^\top - \U_t*\U_t^\top||^2\\
&\le(1-\frac{\eta}{4}\sigma_{\min}(\X)^2)c_2\kappa^{-2} +  150\eta\cdot 40c_1\kappa^{-2}\sigma_{\min}^2(\X) + 500\eta^2(||\X||^2+||\U_t||^2)^2\\
&\le (1-\frac{\eta}{4}\sigma_{\min}(\X)^2)c_2\kappa^{-2} +  6000c_1\eta \kappa^{-2} \sigma_{\min}^2(\X) + 500\eta^2(||\X||^2+9||\X||^2)^2\\
&= (1-\frac{\eta}{4}\sigma_{\min}(\X)^2)c_2\kappa^{-2} +  6000c_1\eta\kappa^{-2} \sigma_{\min}^2(\X) + 50000\eta^2||\X||^4\\
&\le (1-\frac{\eta}{4}\sigma_{\min}(\X)^2)c_2\kappa^{-2} +  6000c_1\eta\kappa^{-2} \sigma_{\min}^2(\X) + 50000   \eta \cdot c_1\kappa^{-4} ||\X||^{-2}\cdot||\X||^4\\
&\le  (1-\frac{\eta}{4}\sigma_{\min}(\X)^2)c_2\kappa^{-2} +  6000c_1\eta\kappa^{-2} \sigma_{\min}^2(\X) + 50000   \eta \cdot c_1\kappa^{-4} ||\X||^{-2}\cdot||\X||^4\\
&\le  (1-\frac{\eta}{4}\sigma_{\min}(\X)^2)c_2\kappa^{-2} +  6000c_1\eta\kappa^{-2} \sigma_{\min}^2(\X) + 50000   \eta  c_1\kappa^{-2} \sigma_{\min}(\X)^{2}\\
&\le  (1-\frac{\eta}{4}\sigma_{\min}(\X)^2)c_2\kappa^{-2}  + 56000   \eta  c_1\kappa^{-2} \sigma_{\min}(\X)^{2}.
\end{aligned}
\end{equation}
By taking a sufficiently small $c_1$ (i.e., $c_1\lesssim \frac{4}{56000}$), we have $||\V_{\X}^\top * \V_{\U_{t+1}*\W_{t+1}}|| \le c_2\kappa^{-2}$.
Therefore, we complete the induction proof.

\subsection{Proof of Lemma \ref{lemma:03}}
Using the definition of $t_1$ (equation (\ref{equ:def_t1})) and $$\sigma_{\min}(\V_{\X}^\top * \U_{t_1}) \geq \frac{1}{2} \gamma \left( 1 + \frac{1}{8} \eta \sigma_{\min}(\X)^2 \right)^{t_1 - t_*}$$ from Lemma \ref{lemma:02}, we have
$$
\frac{1}{\sqrt{10}}\sigma_{\min}(\X) \ge \sigma_{\min}(\V_{\X}^\top*\U_{t_1}) \ge \frac{1}{2} \gamma \left( 1 + \frac{1}{8} \eta \sigma_{\min}(\X)^2 \right)^{t_1 - t_*}, 
$$
which leads to
\begin{equation}
t_1-t_* \le \frac{\log\left(\frac{2}{\gamma \sqrt{10}}\sigma_{\min}(\X)\right)}{\log\left(  1+\frac{1}{8}\eta\sigma_{\min}(\X)^2\right)} \overset{(a)}{\le} \frac{16}{\eta\sigma_{\min}(\X)^2}\log \left( \frac{2}{\gamma \sqrt{10}} \sigma_{\min}(\X)\right),
\label{equ:t1_t_*}
\end{equation}
where in (a), we use the fact that $\frac{1}{\log(1+x)}\le \frac{2}{x}$ for $0<x<1$.

Therefore, we bound $|| \U_{t_1}*\W_{t_1,\bot} ||$ as
\begin{equation}
\begin{aligned}
|| \U_{t_1}*\W_{t_1,\bot} &|| \le 2\gamma \left(1+80\eta c_2\sigma_{\min}(\X)^2\right)^{t_1-t_*} \\
&\overset{(a)}{\le} 2\gamma \left(\frac{2}{\sqrt{10}}\cdot \frac{\sigma_{\min}(\X)}{\gamma}\right)^{1280c_2}\\
&\overset{(b)}{\le} 2\gamma \left(\frac{2}{\sqrt{10}}\cdot \frac{\sigma_{\min}(\X)}{\gamma}\right)^{1/64} \\
&\overset{(c)}{\le} 3\gamma^{63/64} \sigma_{\min}(\X)^{1/64}\le 3\gamma^{7/8}\sigma_{\min}(\X)^{1/8},
\end{aligned}
\label{equ:17_UW}
\end{equation}
where (a) follows from Equation (\ref{equ:t1_t_*}); (b) uses the assumption that $c_2$ is chosen sufficiently small; (c) uses the fact that $\sigma_{\min}(\X)\ge \gamma$.

Then we divide the proof of Lemma \ref{lemma:03} into two cases: the exact-rank case and the over-parameterized (over-rank) case.\\
\textbf{Over-rank case:}
Set $\hat{t}:=t_1 + \left \lceil \frac{300}{\eta \sigma^2_{\min}(\X)} \ln \left( \frac{\kappa^{1/4}}{16k((R\land n)-r)} \cdot\frac{||\X||^{7/4}}{\gamma^{7/4}} \right) \right \rceil.$ We first state our induction hypothesis for $t_1\le t\le \hat{t}:$
\begin{align}
& \sigma_{\min}(\U_t*\W_t) \ge \sigma_{\min}(\V_{\X}^\top*\U_t) \ge \frac{\sigma_{\min}(\X)}{\sqrt{10}}, \label{lemma3_equ:5}\\
&||\U_t*\W_{t,\bot}|| \le (1+80\eta c_2 \sigma^2_{\min}(\X))^{t-t_1}  ||\U_{t_1}*\W_{t_1,\bot}||,\label{lemma3_equ:6}\\
&||\U_t|| \le 3||\X||,\label{lemma3_equ:7}\\
&||\V_{\X^\bot}^\top*\V_{\U_t*\W_t}|| \le c_2\kappa^{-2}, \label{lemma3_equ:8}\\
&||\V_{\X}^\top*(\X*\X^\top-\U_t*\U_t^\top)|| \le 10(1-\frac{\eta}{400}\sigma^2_{\min}(\X))^{t-t_1} ||\X||^2 \\
&\quad + 18\eta ||\X||^2||\E|| \sum_{\tau =t_1+1}^t(1-\frac{\eta}{200}\sigma^2_{\min}(\X))^{\tau-t_1-1} \label{lemma3_equ:9}.
\end{align}
When $t=t_1$, the inequalities (\ref{lemma3_equ:5}), (\ref{lemma3_equ:7}), and (\ref{lemma3_equ:8}) follow from Lemma \ref{lemma:02}. As for inequality (\ref{lemma3_equ:6}), it holds when $t=t_1$ obviously. When $t=t_1$, we have
\begin{align}
||\V_{\X}^\top*(\X*\X^\top-\U_{t_1}*\U_{t_1}^\top)||& = ||\V_{\X}^\top*(\X*\X^\top-\U_{t_1}*\W_{t_1}*\W_{t_1}^\top*\U_{t_1}^\top)|| \nonumber\\
&\le ||\X*\X^\top||+ ||\U_{t_1}*\W_{t_1}*\W_{t_1}^\top*\U_{t_1}^\top||\nonumber \\
&\le ||\X||^2 + ||\U_{t_1}||^2 ||\W_{t_1}||^2 \overset{(a)}{\le} 10 ||\X||^2,\nonumber
\end{align}
where (a) follows inequality (\ref{lemma3_equ:7}). Next, we aim to prove that these inequalities also hold at step $t+1$. To do so, we need to bound the term
$\left\| \left( \M^* \M - \mathfrak{I} \right) \left( \X *\X^\top - \U_t*\U_t^\top \right) + \E\right\|
$ as
\begin{equation}
\begin{aligned}
&|| \left( \M^* \M - \mathfrak{I} \right) \left( \X *\X^\top - \U_t*\U_t^\top \right) + \E || \\
(a)\ \ & \le10\delta \sqrt{rk}\kappa^2\sigma^2_{\min}(\X) + \delta\sqrt{k^3} ((R\land n)-r) ||\U_t*\W_{t,\bot}||^2 + ||\E||\\
(b)\ \ &{\le} 10c_1\kappa^{-2}\sigma^2_{\min}(\X) + \delta \sqrt{k^3}((R\land n)-r)(1+80\eta c_2\sigma^2_{\min}(\X))^{2(t-t_1)}||\U_{t_1}*\W_{t_1,\bot}||^2 \\
&\quad + c_1\kappa^{-2}\sigma^2_{\min}(\X) \\
(c)\ \ &\le 10c_1\kappa^{-2}\sigma^2_{\min}(\X) + 9\delta\sqrt{k^3} ((R\land n)-r) (1+80\eta c_2\sigma^2_{\min}(\X))^{2(\hat{t}-t_1)} \gamma^{7/4}\sigma_{\min}(\X)^{1/4}\\
&\quad + c_1\kappa^{-2}\sigma^2_{\min}(\X) \\
(d)\ \ &\le 10c_1\kappa^{-2}\sigma^2_{\min}(\X) + 9\delta \sqrt{k^3}((R\land n)-r) \left( \frac{\kappa^{1/4}}{16k((R\land n)-r)} \cdot\frac{||\X||^{7/4}}{\gamma^{7/4}} \right)^{\mathcal{O}(c_2)}\\
&\quad + c_1\kappa^{-2}\sigma^2_{\min}(\X) \\
(e)\ \ &\le 40c_1\kappa^{-2}\sigma^2_{\min}(\X), 
\end{aligned}
\label{equ:070}
\end{equation}
where (a) uses the result of equation (\ref{lemma2_e10}); (b) uses the induction hypothesis (\ref{lemma3_equ:6}) and the assumption of $||\E||$; (c) uses the results of (\ref{equ:17_UW}) and induction hypothesis (\ref{lemma3_equ:6}); (d) uses the definition of $\hat{t}$; (e) uses the assumption that $c_2$ are sufficiently small.

Therefore, the condition required for bound (\ref{lemma3_equ:5}), (\ref{lemma3_equ:7}), and (\ref{lemma3_equ:8}) in Theorem E.1 \citep{karnik2024implicit} is satisfied. We can thus invoke the corresponding result to conclude that inequalities (\ref{lemma3_equ:5}), (\ref{lemma3_equ:7}), and (\ref{lemma3_equ:8}) also hold at iteration $t+1$. 

Note that we have all singular values of $\V_{\X}^\top*\U_{t+1}*\W_t$ are positive using the induction hypothesis (\ref{lemma3_equ:5}), then we have the assumptions of [\citep{karnik2024implicit}, Lemma E.3] are satisfied. Therefore, we use the result of [\citep{karnik2024implicit}, Lemma E.3] to prove the induction hypothesis (\ref{lemma3_equ:6}), which is exactly the way as proving inequality (\ref{lemma2_e2}). We directly present the result without detailed proof:
\begin{equation}
||\U_{t+1}*\W_{t+1}||\le (1+80c_2\eta \sigma_{\min}^2(\X))^{t+1-t_1} ||\U_{t}*\W_{t}||.
\end{equation}

Then we proceed to prove inequality (\ref{lemma3_equ:9}).
Note that the condition (\ref{equ:6}) in Lemma \ref{lemma:4} is satisfied since inequality (\ref{equ:070}). Moreover, the other conditions of Lemma \ref{lemma:4} are satisfied using the induction hypothesis (\ref{lemma3_equ:5}), (\ref{lemma3_equ:7}), and (\ref{lemma3_equ:8}). Therefore, we have
\begin{equation}
\begin{aligned}
&|| \V_{\X}^\top*(\X*\X^\top - \U_{t+1}*\U_{t+1}^\top) || \\
&\overset{(a)}{\le} \left( 1-\frac{\eta}{200} \sigma_{\min}(\X)^2\right) ||\V_{\X}^\top*(\X*\X^\top - \U_{t}*\U_{t}^\top)|| \\
&+ \frac{\eta}{100} \sigma_{\min}(\X)^2 ||\U_t*\W_{t,\bot}*\W_{t,\bot}^\top*\U_t^\top || + 18\eta ||\X||^2 || \E||\\
&\overset{(b)}{\le } 10 \left( 1-\frac{\eta}{200} \sigma_{\min}(\X)^2\right)(1-\frac{\eta}{400}\sigma^2_{\min}(\X))^{t-t_1} ||\X||^2 \\
&\quad + \frac{\eta}{100} \sigma_{\min}(\X)^2||\U_t*\W_{t,\bot}*\W_{t,\bot}^\top*\U_t^\top ||\\
&\quad + 18\eta ||\X||^2||\E|| \sum_{\tau =t_1+1}^{t+1}(1-\frac{\eta}{200}\sigma^2_{\min}(\X))^{\tau-t_1-1},
\end{aligned}
\end{equation}
where step (a) follows the result of Lemma \ref{lemma:4}; step (b) uses the induction hypothesis (\ref{lemma3_equ:9}). Note that inequality (\ref{lemma3_equ:9}) holds for $t+1$ if
\begin{equation}
||\U_t*\W_{t,\bot}*\W_{t,\bot}^\top*\U_t||_* \le \frac{1}{4} (1-\frac{\eta}{400}\sigma^2_{\min}(\X))^{t-t_1} ||\X||^2.\label{equ:14UW}
\end{equation}

Using the relationship between operator norm and {tubal tensor nuclear norm}, we have
\begin{equation}
\begin{aligned}
||\U_t*\W_{t,\bot}*\W_{t,\bot}^\top*\U_t||_* &\le ((R\land n)-r)|| \U_t*\W_{t,\bot} ||^2 \\
& \overset{(a)}{\le}  k((R\land n)-r) (1+80\eta c_2 \sigma^2_{\min}(\X))^{2(t-t_1)} || \U_{t_1}*\W_{t_1,\bot} ||^2 \\
& \overset{(b)}{\le}9k((R\land n)-r) (1+80\eta c_2  \sigma^2_{\min}(\X))^{2(t-t_1)} \sigma_{\min}(\X)^{1/4}\gamma^{7/4}
\end{aligned}
\label{equ:15_UW}
\end{equation}
where (a) uses the induction hypothesis (\ref{lemma3_equ:6}); (b) uses inequality (\ref{equ:17_UW}).

Then we need to bound term $|| \U_{t_1}*\W_{t_1,\bot} ||$.

Combining Equations (\ref{equ:15_UW}) and (\ref{equ:17_UW}), we note that the inequality (\ref{equ:14UW}) holds if $c_2$ is sufficiently small and
$$
9k((R\land n)-r)  \gamma^{7/4}\sigma_{\min}(\X)^{1/4} \le \left(1-\frac{\eta}{350}\sigma_{\min}(\X)^2\right)^{t-t_1} ||\X||^2
$$
This  inequality holds so long as $t\le \hat{t} = t_1+ \left \lceil \frac{300}{\eta \sigma^2_{\min}(\X)} \ln \left( \frac{\kappa^{1/4}}{9k((R\land n)-r)} \frac{||\X||^{7/4}}{\gamma^{7/4}} \right) \right \rceil$ by using the fact that $\ln (1+x)\ge \frac{x}{1-x}$. Therefore, we complete the induction  of over-rank case.

Then we proceed to prove the upper bound for $||\X*\X^\top - \U_t*\U_t^\top||_F$:
\begin{equation}
\begin{aligned}
||\X*\X^\top - \U_t*\U_t^\top||_F& \overset{(a)}{\le} 4|| \V_{\X}^\top*(\X*\X^\top-\U_{\hat{t}}*\U_{\hat{t}}^\top) ||_F + || \U_{\hat{t}} * \W_{\hat{t},\bot}*\W_{\hat{t},\bot}^\top * \U_{\hat{t}}^\top   ||_* \\
&\overset{(b)}{\lesssim}\sqrt{r}(1-\frac{\eta}{400}\sigma_{\min}^2(\X))^{\hat{t}-t_1}||\X||^2\\
&\quad + \sqrt{r}\eta ||\X||^2||\E|| \sum_{\tau =t_1+1}^{\hat{t}}(1-\frac{\eta}{200}\sigma^2_{\min}(\X))^{\tau-t_1-1}\\
&\overset{(c)}{\lesssim} \sqrt{r} \left( \frac{\kappa^{1/4}}{9k((R\land n)-r)} \frac{||\X||^{7/4}}{\gamma^{7/4}} \right) ^{-3/4}||\X||^2 + \sqrt{r} \kappa^2 ||\E||\\
&\lesssim \kappa^{-3/16} r^{1/2}k^{3/4}((R\land n)-r)^{3/4} \gamma^{21/16}||\X||^{11/16} + + \sqrt{r} \kappa^2 ||\E||,
\end{aligned}
\end{equation}
where (a) uses the result of Lemma \ref{lemma:5}; (b) follows from inequalities (\ref{lemma3_equ:9}) and (\ref{equ:14UW}); (c) uses the definition of $\hat{t}$.

\textbf{Exact rank case:} As $R=r$, we have $\U_t=\U_t*\W_t*\W_t^\top$ and $\W_{t,\bot}=0$. 
Using a similar approach as in the over-parameterized case, we can show that the induction hypotheses (\ref{lemma3_equ:5})-(\ref{lemma3_equ:8}) hold for all $t \geq t_1$. For induction hypothesis (\ref{lemma3_equ:9}), note that
$$
\U_t \W_{t,\bot} \W_{t,\bot}^\top \W_t^\top = 0,
$$
which implies that (\ref{lemma3_equ:9}) also holds for all $t \geq t_1$. Therefore, we conclude that:
\begin{equation}
\begin{aligned}
||\U_t*\U_t^\top - \X*\X^\top||_F &\lesssim \sqrt{r} (1-\frac{\eta}{400}\sigma_{\min}^2(\X))^{t-t_1} \\
& \quad + \sqrt{r}\eta ||\X||^2||\E|| \sum_{\tau =t_1+1}^{t+1}(1-\frac{\eta}{200}\sigma^2_{\min}(\X))^{\tau-t_1-1}\\
&\lesssim \sqrt{r} (1-\frac{\eta}{400}\sigma_{\min}^2(\X))^{t-t_1} + \sqrt{r}\kappa^2 ||\E||.
\end{aligned}
\end{equation} \hfill

\subsection{Proof of Lemma 4}
\begin{lemma}
Assume that the following assumptions hold:
\begin{equation}
\begin{aligned}
 ||\U_t|| &\le 3||\X|| \\  
 \eta&\le {c} \kappa^{-2} ||\X||^{-2}\\
 \sigma_{\min}(\U_t*\W_t) &\ge \frac{1}{\sqrt{10}} \sigma_{\min} (\X) \\
 || \V_{\X^\bot}^\top * \V_{\U*\W_t} || & \le c\kappa^{-2}\\
\end{aligned}
\end{equation}
and
\begin{equation}
\begin{aligned}
|| (\mathfrak{I}-\M*\M)&(\X*\X^\top-\U_t*\U_t) || \\
&\le c\kappa^{-2} \left(||\X*\X-\U_t*\W_t*\W_t^\top*\U_t^\top||+ ||\U_t*\W_{t,\bot}*\W_{t,\bot}^\top*\U_t||_*  \right),
\end{aligned}
\label{equ:6}
\end{equation}
where the constant $c>0$ is chosen small enough. Then it holds that
\begin{equation}
\begin{aligned}
||\V_{\X}^\top*(\X*\X^\top - \U_{t+1}*\U_{t+1}^\top)|| &\le \left( 1-\frac{\eta}{200} \sigma_{\min}(\X)^2\right) ||\V_{\X}^\top*(\X*\X^\top - \U_{t}*\U_{t}^\top)|| \\
&+ \frac{\eta}{100} \sigma_{\min}(\X)^2 ||\U_t*\W_{t,\bot}*\W_{t,\bot}^\top*\U_t^\top ||+ 18\eta ||\X||^2 || \E||.  
\end{aligned}
\end{equation}
\label{lemma:4}
\end{lemma}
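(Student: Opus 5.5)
We plan to follow the matrix-case one-step contraction argument for the in-subspace error (as in Stöger–Soltanolkotabi and its tubal version [\citep{karnik2024implicit}, Lemma E.7]), treating the noise as an additive perturbation.

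\textbf{Step 1: expand one step of the error.} Write $\Delta_t:=\X*\X^\top-\U_t*\U_t^\top$ and the perturbation $\bm{\mathcal{P}}_t:=(\M^*\M-\mathfrak{I})(\Delta_t)+\E$, so that
\[
\U_{t+1}=\U_t+\eta(\Delta_t+\bm{\mathcal{P}}_t)*\U_t .
\]
Expanding gives
\[
\U_{t+1}*\U_{t+1}^\top=\U_t*\U_t^\top+\eta\big[(\Delta_t+\bm{\mathcal{P}}_t)*\U_t*\U_t^\top+\U_t*\U_t^\top*(\Delta_t+\bm{\mathcal{P}}_t)^\top\big]+\eta^2(\cdot),
\]
and we left-multiply by $\V_{\X}^\top$. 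We then split $\U_t*\U_t^\top$ into the signal part $\U_t*\W_t*\W_t^\top*\U_t^\top$ and the over-parameterization part $\U_t*\W_{t,\bot}*\W_{t,\bot}^\top*\U_t^\top$.

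\textbf{Step 2: contraction from the main linear term.} In the noiseless, RIP-free part we isolate the term
\[
\V_{\X}^\top*\Delta_t-\eta\,\V_{\X}^\top*\Delta_t*\U_t*\W_t*\W_t^\top*\U_t^\top-\eta\,\V_{\X}^\top*\U_t*\U_t^\top*\Delta_t .
\]
We decompose $\Delta_t$ as $\V_{\X}*\V_{\X}^\top*\Delta_t+\V_{\X^\bot}*\V_{\X^\bot}^\top*\Delta_t$. Using $\sigma_{\min}(\U_t*\W_t)\ge\sigma_{\min}(\X)/\sqrt{10}$ and the alignment bound $\|\V_{\X^\bot}^\top*\V_{\U_t*\W_t}\|\le c\kappa^{-2}$, the product $\V_{\X}^\top*\U_t*\W_t$ has smallest singular value $\gtrsim\sigma_{\min}(\X)$. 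This yields a factor $(I-\eta\,\cdot)$ with eigenvalues at most $1-\eta\sigma_{\min}^2(\X)/100$, because $\eta\|\U_t\|^2\le 9c\kappa^{-2}$ keeps the matrices positive.

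We bound the leakage $\V_{\X}^\top*\Delta_t*\V_{\X^\bot}$ by $\|\V_{\X}^\top*\Delta_t\|$ times the alignment angle, plus the over-parameterization contribution. Since $\X*\X^\top*\V_{\X^\bot}=0$, the off-block of $\Delta_t$ equals $-\U_t*\U_t^\top*\V_{\X^\bot}$. We control it via the angle and $\|\U_t*\W_{t,\bot}\|^2$, and these terms give the $\frac{\eta}{100}\sigma_{\min}^2\|\U_t*\W_{t,\bot}*\W_{t,\bot}^\top*\U_t^\top\|$ contribution.

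Every computation is carried out slice-wise in the Fourier domain on the block-diagonal $\overline{\cdot}$ representation. This is legitimate because all products here are t-products and the spectral norm is the maximum over slices.

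\textbf{Step 3: perturbation, second-order and noise terms.} The RIP piece of $\bm{\mathcal{P}}_t$ is controlled by assumption (\ref{equ:6}). Its first summand $\|\X*\X^\top-\U_t*\W_t*\W_t^\top*\U_t^\top\|$ is bounded by a constant times $\|\V_{\X}^\top*\Delta_t\|$ plus the $\W_{t,\bot}$ part, again using the alignment. Multiplying by $\eta\|\U_t\|^2\le 9\eta\|\X\|^2$ gives a term of size
\[
9c\,\eta\kappa^{-2}\|\X\|^2\big(\cdots\big)=9c\,\eta\,\sigma_{\min}^2(\X)\big(\cdots\big),
\]
which is absorbed into the contraction when $c$ is small.

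The $\eta^2$ terms are bounded by $\eta^2\|\X\|^4\|\Delta_t\|$-type quantities and are absorbed using $\eta\le c\kappa^{-2}\|\X\|^{-2}$.

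The noise $\E$ enters linearly, twice (once from each side), multiplied by $\|\U_t\|^2\le9\|\X\|^2$. This gives $18\eta\|\X\|^2\|\E\|$, with the $\eta^2$ noise terms absorbed.

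\textbf{Main obstacle.} The hardest step is Step 2, the cross term $\V_{\X}^\top*\Delta_t*\V_{\X^\bot}$. We must show it costs only an $O(c\kappa^{-2})$ fraction of $\|\V_{\X}^\top*\Delta_t\|$ plus the $\W_{t,\bot}$ term, and does not destroy the contraction. We would first try to mirror [\citep{karnik2024implicit}, Lemma E.7] verbatim with $\bm{\mathcal{P}}_t$ in place of the pure RIP error, and verify only that the noise adds the additive $18\eta\|\X\|^2\|\E\|$ without feeding back into the contraction factor.
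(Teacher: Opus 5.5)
\paragraph{Overall.} Your plan follows the paper's proof essentially step for step. The paper expands $\X*\X^\top-\U_{t+1}*\U_{t+1}^\top$ into the five terms $\K_1,\dots,\K_5$. It gets the contraction from $\V_{\X}^\top*\K_1$ by inserting $\V_{\X}*\V_{\X}^\top+\V_{\X^\bot}*\V_{\X^\bot}^\top$ and using $\sigma_{\min}(\U_t*\W_t)$ together with the alignment. It controls the RIP/noise terms $\K_2,\K_3$ through (\ref{equ:6}), Lemma~\ref{lemma:5} and $\|\U_t\|\le 3\|\X\|$, which gives the $2\cdot 9\,\eta\|\X\|^2\|\E\|$. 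Finally it absorbs $\K_4,\K_5$ using the step-size bound.

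\paragraph{The gap in Step 3.} There is a genuine gap here, and the paper's write-up has it too: its $\K_2,\K_3,\K_5$ bounds carry $\|\cdot\|_*$, but the final combination silently writes $\|\cdot\|$.
\begin{itemize}
\item Hypothesis (\ref{equ:6}) controls the RIP error only through the tubal nuclear norm $\|\U_t*\W_{t,\bot}*\W_{t,\bot}^\top*\U_t^\top\|_*$.
\item So your term $9c\,\eta\sigma_{\min}^2(\X)(\cdots)$ contains $9c\,\eta\sigma_{\min}^2(\X)\|\U_t*\W_{t,\bot}*\W_{t,\bot}^\top*\U_t^\top\|_*$, and the $\eta^2$ term $\K_5$ contributes another such piece.
\item Only the $\|\V_{\X}^\top*\Delta_t\|$ part can go into the contraction.
\item The nuclear-norm part cannot be absorbed into the spectral-norm term $\frac{\eta}{100}\sigma_{\min}^2(\X)\|\U_t*\W_{t,\bot}*\W_{t,\bot}^\top*\U_t^\top\|$ claimed in the conclusion. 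The two norms can differ by a factor up to $k((R\land n)-r)$, and no choice of the absolute constant $c$ makes up for that.
\end{itemize}
With only (\ref{equ:6}) available, the spectral-norm version does not follow. What your argument (and the paper's) actually proves is the inequality with $\|\U_t*\W_{t,\bot}*\W_{t,\bot}^\top*\U_t^\top\|_*$ in the second term; this also dominates the spectral-norm contribution coming from $\K_1$. That nuclear-norm form is exactly what is used downstream in the proof of Lemma~\ref{lemma:03}, cf.\ (\ref{equ:14UW}). You should state and prove that version, or carry the explicit factor $k((R\land n)-r)$.

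\paragraph{Two smaller points.}

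\emph{The leakage term in Step 2.} The off-diagonal block $\V_{\X}^\top*\Delta_t*\V_{\X^\bot}=-\V_{\X}^\top*\U_t*\W_t*\W_t^\top*\U_t^\top*\V_{\X^\bot}$ is in general only $O(\|\V_{\X}^\top*\Delta_t\|)$, not the angle times it. The smallness of the leakage $\eta\,\V_{\X}^\top*\U_t*\U_t^\top*\V_{\X^\bot}*\V_{\X^\bot}^\top*\U_t*\U_t^\top$ has to come from the second factor $\V_{\X^\bot}^\top*\U_t*\U_t^\top$.
\begin{itemize}
\item Its signal part is at most $9c\,\sigma_{\min}^2(\X)$.
\item For its $\W_{t,\bot}$ part you must instead bound the first factor by $\|\V_{\X}^\top*\U_t*\W_t\|\,\|\V_{\X^\bot}^\top*\U_t*\W_t\|\le 9c\,\sigma_{\min}^2(\X)$, not by $\|\V_{\X}^\top*\Delta_t\|$.
\item Otherwise you get the product $\eta\|\V_{\X}^\top*\Delta_t\|\,\|\U_t*\W_{t,\bot}\|^2$. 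That is not of the allowed form, because no smallness of $\|\U_t*\W_{t,\bot}\|$ is assumed.
\end{itemize}

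\emph{The $\eta^2$ noise terms.} Absorbing terms such as $\eta^2\,\E*\U_t*\U_t^\top*\E$ requires a bound like $\|\E\|\lesssim\|\X\|^2$. This is not among the lemma's hypotheses; the paper imports it from assumption (3) of Theorem~\ref{theorem:main}. Strictly, these terms also push the constant $18$ slightly upward.
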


\begin{proof}[Proof of Lemma \ref{lemma:4}]
In order to establish Lemma 4, we begin by introducing a key auxiliary lemma and providing its proof.
\begin{lemma}
Under the assumptions of Lemma 4, the following inequalities hold:
\begin{equation}
\begin{aligned}
&\left\vert\!\left\vert\!\left\vert \V_{\X^\bot}^\top * \U_t*\U_t^\top\right\vert\!\right\vert\!\right\vert  \le 3\left\vert\!\left\vert\!\left\vert  \V_{\X^\bot}^\top *(\X*\X^\top- \U_t*\U_t^\top) \right\vert\!\right\vert\!\right\vert + \left\vert\!\left\vert\!\left\vert\U_t*\W_{t,\bot}*\W_{t,\bot}^\top*\U_t^\top\right\vert\!\right\vert\!\right\vert\\
&\left\vert\!\left\vert\!\left\vert\X*\X^\top-\U_t*\U_t^\top\right\vert\!\right\vert\!\right\vert\le 4\left\vert\!\left\vert\!\left\vert  \V_{\X^\bot}^\top *(\X*\X^\top- \U_t*\U_t^\top) \right\vert\!\right\vert\!\right\vert + \left\vert\!\left\vert\!\left\vert\U_t*\W_{t,\bot}*\W_{t,\bot}^\top*\U_t^\top\right\vert\!\right\vert\!\right\vert\\
&\left\vert\!\left\vert\!\left\vert\X*\X^\top-\U_t*\W_t*\W_t^\top*\U_t^\top\right\vert\!\right\vert\!\right\vert \le 4\left\vert\!\left\vert\!\left\vert  \V_{\X^\bot}^\top *(\X*\X^\top- \U_t*\U_t^\top) \right\vert\!\right\vert\!\right\vert,
\end{aligned}
\end{equation}
where $\left\vert\!\left\vert\!\left\vert  \cdot \right\vert\!\right\vert\!\right\vert$ denotes tensor norms, such as spectral norm.
\label{lemma:5}
\end{lemma}
\begin{proof}
The first two inequalities are derived based on Lemma E.7 in \citep{karnik2024implicit}. By leveraging the equivalence between matrix norms, we obtain the desired results by replacing the Frobenius norm in Lemma E.7 with the spectral norm. 

Next, we present the proof of the third inequality. We decompose $\X*\X^\top-\U_t*\W_t*\W_t^\top*\U_t^\top$ as
$$
\underbrace{\V_{\X}*\V_{\X}^\top *(\X*\X^\top-\U_t*\W_t*\W_t^\top*\U_t^\top)}_{\Z_1} + \underbrace{\V_{\X^\bot}*\V_{\X^\bot}^\top *(\X*\X^\top-\U_t*\W_t*\W_t^\top*\U_t^\top)}_{\Z_2} .
$$
For $\Z_1$, we have
\begin{equation}
\begin{aligned}
\Z_1 & = \V_{\X}*\V_{\X}^\top*\X*\X^\top - \V_{\X}*\V_{\X}^\top*  \U_t*\W_t*\W_t^\top*\U_t^\top\\
&\overset{(1)}{=} \V_{\X}*\V_{\X}^\top*\X*\X^\top - \V_{\X}*\V_{\X}^\top*  \U_t*\U_t^\top,
\end{aligned}
\end{equation}
where $(1)$ uses the fact that 
\begin{equation}
\begin{aligned}
\V_{\X} & *\V_{\X}^\top  *  \U_t*\U_t^\top \\
&=  \V_{\X}*\V_{\X}^\top*[\U_t*\W_t*\W_t^\top+\W_{t,\bot}*\W_{t,\bot}^\top] * [\U_t*\W_t*\W_t^\top+\U_t*\W_{t,\bot}*\W_{t,\bot}^\top]^\top\\
&=\V_{\X}*\V_{\X}^\top*\U_t*\W_t*\W_t^\top*\U_t^\top + \V_{\X}*\V_{\X}^\top*\U_t*\W_{t,\bot}*\W_{t,\bot}^\top*\U_t^\top\\
&=\V_{\X}*\V_{\X}^\top*\U_t*\W_t*\W_t^\top*\U_t^\top + \V_{\X}*\V_t*\S_t*{\W_t^\top*\W_{t,\bot}}*\W_{t,\bot}^\top*\U_t^\top\\
& = \V_{\X}*\V_{\X}^\top*\U_t*\W_t*\W_t^\top*\U_t^\top.
\end{aligned}
\end{equation}
Therefore, we have
$$
\left\vert\!\left\vert\!\left\vert \Z_1\right\vert\!\right\vert\!\right\vert=\left\vert\!\left\vert\!\left\vert \V_{\X}*\V_{\X}^\top*\X*\X^\top - \V_{\X}*\V_{\X}^\top*  \U_t*\U_t^\top\right\vert\!\right\vert\!\right\vert \le \left\vert\!\left\vert\!\left\vert \V_{\X}^\top * (\X*\X^\top-\U_t*\U_t^\top) \right\vert\!\right\vert\!\right\vert.
$$
Then we proceed to bound the term $\Z_2$,
\begin{equation}
\begin{aligned}
\left\vert\!\left\vert\!\left\vert \Z_2\right\vert\!\right\vert\!\right\vert &= \left\vert\!\left\vert\!\left\vert \V_{\X^\bot}*\V_{\X^\bot}^\top *(\X*\X^\top-\U_t*\W_t*\W_t^\top*\U_t^\top)*(\V_{\X}*\V_{\X}^\top+ \V_{\X^\bot}*\V_{\X^\bot}^\top)\right\vert\!\right\vert\!\right\vert \\
&{\le} \left\vert\!\left\vert\!\left\vert\V_{\X^\bot}*\V_{\X^\bot}^\top *(\X*\X^\top-\U_t*\W_t*\W_t^\top*\U_t^\top)*\V_{\X} \right\vert\!\right\vert\!\right\vert \\
&\quad+  \left\vert\!\left\vert\!\left\vert \V_{\X^\bot}*\V_{\X^\bot}^\top *(\X*\X^\top-\U_t*\W_t*\W_t^\top*\U_t^\top)* \V_{\X^\bot}*\V_{\X^\bot}^\top\right\vert\!\right\vert\!\right\vert\\
&\overset{(a)}{\le} \left\vert\!\left\vert\!\left\vert (\X*\X^\top-\U_t*\U_t^\top)*\V_{\X}  \right\vert\!\right\vert\!\right\vert+\underbrace{\left\vert\!\left\vert\!\left\vert\V_{\X^\bot}^\top*\U_t*\W_t*\W_t^\top*\U_t^\top * \V_{\X^\bot}\right\vert\!\right\vert\!\right\vert}_{\Z_3},
\end{aligned}
\end{equation}
where (a) using the facts that $\V_{\X}^\top * \U_t* \W_{t,\bot}=0$ and $\V_{\X}^\top*\U_t*\U_t^\top= \V_{\X}^\top*\U_t*\W_t*\W_t^\top*\U_t^\top$.
For term $\Z_3$, we have
\begin{equation}
\begin{aligned}
&\left\vert\!\left\vert\!\left\vert\V_{\X^\bot}^\top*\U_t*\W_t*\W_t^\top*\U_t^\top * \V_{\X^\bot}\right\vert\!\right\vert\!\right\vert\\
&\qquad  =\left\vert\!\left\vert\!\left\vert  \V_{\X^\bot}^\top * \V_{\U_t*\W_t}*\V_{\U_t*\W_t}^\top*\U_t*\W_t*\W_t^\top*\U_t^\top*\V_{\X^\bot}  \right\vert\!\right\vert\!\right\vert \\
&\qquad =\left\vert\!\left\vert\!\left\vert  \V_{\X^\bot}^\top * \V_{\U_t*\W_t}*\left(\V_{\X}^\top*\V_{\U_t*\W_t}^\top\right)^{-1}*\V_{\X}^\top*\V_{\U_t*\W_t}*\V_{\U_t*\W_t}^\top*\U_t*\W_t*\W_t^\top*\U_t^\top*\V_{\X^\bot}  \right\vert\!\right\vert\!\right\vert \\
&\qquad \le ||\V_{\X^\bot}^\top*\V_{\U_t*\W_t}||\cdot  ||(\V_{\X}^\top*\V_{\U_t*\W_t})^{-1}||  \left\vert\!\left\vert\!\left\vert  \V_{\X}^\top*\V_{\U_t*\W_t}*\V_{\U_t*\W_t}^\top*\U_t*\W_t*\W_t^\top*\U_t^\top*\V_{\X^\bot}  \right\vert\!\right\vert\!\right\vert\\
&\qquad=\frac{||\V_{\X^\bot}^\top*\V_{\U_t*\W_t}||}{\sigma_{\min}(\V_{\X}^\top*\V_{\U_t*\W_t})}  \left\vert\!\left\vert\!\left\vert  \V_{\X}^\top*\U_t*\U_t^\top*\V_{\X^\bot}  \right\vert\!\right\vert\!\right\vert\\
&\qquad =\frac{||\V_{\X^\bot}^\top*\V_{\U_t*\W_t}||}{\sigma_{\min}(\V_{\X}^\top*\V_{\U_t*\W_t})} \left\vert\!\left\vert\!\left\vert  \V_{\X}^\top*\left(\X*\X^\top-\U_t*\U_t^\top\right)*\V_{\X^\bot}  \right\vert\!\right\vert\!\right\vert\\
&\qquad \le 2 \left\vert\!\left\vert\!\left\vert  \V_{\X}^\top*\left(\X*\X^\top-\U_t*\U_t^\top\right)  \right\vert\!\right\vert\!\right\vert.
\end{aligned}
\end{equation}

Therefore, we have the third inequality holds.
\hfill 
\end{proof}
%%%%%%%%%%%%%%%%%%%%%%%%%%%%%%%%%%%%%%%%%%%%%%%%%%%%%%%%%%%%%%%%%%%%%%%%%%%%%%%%%%%%%%%%%%%%%%%%%%%%%%%%%%%%%%%%%%%%%%%%%%%%%%%%%%%%%%%%%%%%%%%%%%%%%%%%%%%%%%%%%%%%%%%%%%%%%%%%%%%%%%%%

Based on the results of Lemma \ref{lemma:5}, we proceed to prove Lemma \ref{lemma:4}.
We decompose $\X*\X^\top-\U_{t+1}*\U_{t+1}^\top$ into five terms by using the update formulation $$\U_{t+1} = \U_t +  \eta [(\M^*\M)(\X*\X^\top-\U_t*\U_t^\top)+ \E]*\U_t:$$
\begin{equation}
\begin{aligned}
\X*&\X^\top - \U_{t+1}*\U_{t+1}^\top \\
&= \underbrace{(\I-\eta\U_t*\U_t^\top)*(\X*\X^\top-\U_t*\U_t^\top)*(\I-\eta\U_t*\U_t^\top)}_{\K_1} \\
&+\eta \underbrace{[(\mathfrak{I}-\M^*\M)(\X*\X^\top-\U_t*\U_t^\top)+\E]*\U_t*\U_t^\top}_{\K_2} \\
&+\eta \underbrace{\U_t*\U_t*[(\mathfrak{I}-\M^*\M)(\X*\X^\top-\U_t*\U_t^\top)+\E]}_{\K_3}\\
&-\eta^2 \underbrace{\U_t*\U_t^\top*(\X*\X^\top-\U_t*\U_t^\top)*\U_t*\U_t^\top}_{\K_4}\\
&-\eta^2 \underbrace{[(\M^*\M)(\X*\X^\top-\U_t*\U_t^\top)+\E]*\U_t*\U_t^\top*[(\M^*\M)(\X*\X^\top-\U_t*\U_t^\top)+\E]}_{\K_5}.
\end{aligned}
\end{equation}

We now bound each of these terms separately.\\
\textbf{Bounding $\K_1$:}
We note that
\begin{equation}
\begin{aligned}
&\V_{\X}^\top * (\I-\eta\U_t*\U_t^\top)*(\X*\X^\top-\U_t*\U_t^\top)*(\I-\eta\U_t*\U_t^\top)\\
&=\V_{\X}^\top * (\I-\eta\U_t*\U_t^\top)*\V_{\X}*\V_{\X}^\top*(\X*\X^\top-\U_t*\U_t^\top)*(\I-\eta\U_t*\U_t^\top)\\
&\quad +\V_{\X}^\top * (\I-\eta\U_t*\U_t^\top)*\V_{\X^\bot}*\V_{\X^\bot}^\top*(\X*\X^\top-\U_t*\U_t^\top)*(\I-\eta\U_t*\U_t^\top)\\
&=\V_{\X}^\top * (\I-\eta\U_t*\U_t^\top)*\V_{\X}*\V_{\X}^\top*(\X*\X^\top-\U_t*\U_t^\top)*(\I-\eta\U_t*\U_t^\top)\\
&\quad + \eta *\V_{\X}^\top *\U_t*\U_t^\top*\V_{\X^\bot}*\V_{\X^\bot}^\top*\U_t*\U_t^\top*(\I-\U_t*\U_t^\top) \\
&=(\I-\eta\V_{\X}^\top*\U_t*\U_t^\top*\V_{\X})*\V_{\X}^\top*(\X*\X^\top-\U_t*\U_t^\top)*(\I-\eta\U_t*\U_t^\top)\\
&\quad + \eta *\V_{\X}^\top *\U_t*\U_t^\top*\V_{\X^\bot}*\V_{\X^\bot}^\top*\U_t*\U_t^\top*(\I-\U_t*\U_t^\top).
\end{aligned}
\end{equation}

detail

Therefore, we obtain 
\begin{equation}
\begin{aligned}
||\V_{\X}^\top*\K_1|| &= || \V_{\X}^\top * (\I-\eta\U_t*\U_t^\top)*(\X*\X^\top-\U_t*\U_t^\top)*(\I-\eta\U_t*\U_t^\top) ||\\
&\le \left(1-\frac{\eta}{40} \sigma^2_{\min}(\X)\right) || \V_{\X}^\top*(\X*\X^\top-\U_t*\U_t^\top) || \\
&+ \eta \frac{\sigma^2_{\min}(\X)}{400} || \U_t*\W_{t,\bot}*\W_{t,\bot}^\top * \U_t ||.
\end{aligned}
\end{equation}

\textbf{Bounding $\K_2$:}
Note that
\begin{equation}
\begin{aligned}
||\V_{\X}^\top * \K_2|| &= ||\V_{\X}^\top * [(\mathfrak{I}-\M^*\M)(\X*\X^\top-\U_t*\U_t^\top)+\E]*\U_t*\U_t^\top  ||\\
&\le \left( ||(\mathfrak{I}-\M^*\M)(\X*\X^\top-\U_t*\U_t^\top) || + ||\V_{\X}^\top *\E|| \right)||\U_t||^2 \\
&\overset{(1)}{\le} 9 \left( ||(\mathfrak{I}-\M^*\M)(\X*\X^\top-\U_t*\U_t^\top) || + ||\V_{\X}^\top *\E|| \right)||\X||^2 \\
&\overset{(2)}{\le}9c\sigma^2_{\min}(\U) \left(||\X*\X-\U_t*\W_t*\W_t^\top*\U_t^\top||+ ||\U_t*\W_{t,\bot}*\W_{t,\bot}^\top*\U_t||_*  \right) \\
&\quad + 9||\V_{\X}^\top * \E|| ||\X||^2\\
&\overset{(3)}{\le}9c\sigma^2_{\min}(\U) \left(||\V_{\X}^\top* (\X*\X-\U_t*\U_t^\top)||+ ||\U_t*\W_{t,\bot}*\W_{t,\bot}^\top*\U_t||_*  \right) \\
&\quad + 9||\V_{\X}^\top * \E|| ||\X||^2\\
\end{aligned}
\end{equation}
where $(1)$ use the assumption $||\U_t||\le 3||\X||$; $(2)$ use the assumption (\ref{equ:6}); (3) use the the result of Lemma \ref{lemma:5}. Taking a small constant $c>0$, we obtain
\begin{equation}
\begin{aligned}
&||\V_{\X}^\top * [(\mathfrak{I}-\M^*\M)(\X*\X^\top-\U_t*\U_t^\top)+\E]*\U_t*\U_t^\top  ||\\
&\le \frac{1}{1000} \sigma^2_{\min} (\X) \left(||\V_{\X}^\top* (\X*\X-\U_t*\U_t^\top)||+ ||\U_t*\W_{t,\bot}*\W_{t,\bot}^\top*\U_t||_*  \right) \\
&\quad+ 9||\V_{\X}^\top * \E|| ||\X||^2.
\end{aligned}
\end{equation}

\textbf{Bounding $\K_3$:} Similar to $\K_2$, we have
\begin{equation}
\begin{aligned}
&||\V_{\X}^\top* \U_t*\U_t*[(\mathfrak{I}-\M^*\M)(\X*\X^\top-\U_t*\U_t^\top)+\E]||\\
&\le \frac{1}{1000} \sigma^2_{\min} (\X) \left(||\V_{\X}^\top* (\X*\X-\U_t*\U_t^\top)||+ ||\U_t*\W_{t,\bot}*\W_{t,\bot}^\top*\U_t||_*  \right)\\
&\quad+ 9||\V_{\X}^\top * \E|| ||\X||^2.
\end{aligned}
\end{equation}

\textbf{Bounding $\K_4$:} Note that
\begin{equation}
\begin{aligned}
||\V_{\X}^\top *\U_t*\U_t^\top*&(\X*\X^\top-\U_t*\U_t^\top)*\U_t*\U_t^\top ||\\
&{\le} ||\U_t||^4 ||\X*\X^\top-\U_t*\U_t^\top|| \\
&\overset{(1)}{\lesssim} ||\X||^4  ||\X*\X^\top-\U_t*\U_t^\top||\\
&\overset{(2)}{\lesssim} ||\X||^4 \left(||\V_{\X}^\top*(\X*\X^\top-\U_t*\U_t^\top)||+ ||\U_t*\W_{t,\bot}*\W_{t,\bot}^\top*\U_t^\top||\right),
\end{aligned}
\end{equation}
where (1) uses the assumption $||\U_t||\le 3||\X||$; (2) uses the result of Lemma \ref{lemma:5}.
Then combining the assumption $\eta\le c\kappa^{-2}||\X||^{-2}$, then we obtain
\begin{equation}
\begin{aligned}
&\eta^2|| \V_{\X}^\top *\U_t*\U_t^\top*(\X*\X^\top-\U_t*\U_t^\top)*\U_t*\U_t^\top ||\\
&\le \frac{\eta}{200} \sigma^2_{\min}(\X) ||\V_{\X}^\top*(\X*\X^\top-\U_t*\U_t^\top) || + \eta \frac{\sigma^2_{\min}(\X)}{1000} ||\U_t*\W_{t,\bot}*\W_{t,\bot}^\top*\U_t^\top||.
\end{aligned}
\end{equation}

\textbf{Bounding $\K_5$:} Note that
\begin{equation}
\begin{aligned}
|| (\M^*\M)&(\X*\X^\top-\U_t*\U_t^\top) ||\\
&\le  ||\X*\X^\top-\U_t*(\W_t*\W_t^\top+ \W_{t,\bot}*\W_{t,\bot}^\top)*\U_t^\top || \\
&\quad + || (\M^*\M-\mathfrak{I})(\X*\X^\top-\U_t*\U_t^\top) ||\\ 
&\overset{(a)}{\le} \left( || \X*\X^\top-\U_t*\W_t*\W_t^\top*\U_t^\top || + ||\U_t* \W_{t,\bot}*\W_{t,\bot}^\top\U_t^\top || \right)\\
&\quad + c\kappa^{-2} \left( || \X*\X^\top-\U_t*\W_t*\W_t^\top*\U_t^\top || + ||\U_t* \W_{t,\bot}*\W_{t,\bot}^\top\U_t^\top ||_* \right)\\
&\le 2\left( || \X*\X^\top-\U_t*\W_t*\W_t^\top*\U_t^\top || + ||\U_t* \W_{t,\bot}*\W_{t,\bot}^\top\U_t^\top ||_* \right)\\
&\le 2\left( || \X||^2+ ||\U_t*\W_t ||^2 + ||\U_t* \W_{t,\bot}*\W_{t,\bot}^\top\U_t^\top ||_* \right)\\
&\overset{(b)}{\le} 2\left( || \X||^2+ 2||\U_t||^2  \right),
\end{aligned}
\label{equ:20}
\end{equation}
where (a) uses the assumption (\ref{equ:6}); (b) uses the assumption $||\U_t* \W_{t,\bot}*\W_{t,\bot}^\top\U_t^\top||\le ||\U_t||^2$.
Then we have
\begin{equation}
\begin{aligned}
&||\V_{\X}^\top*\K_5||=||\V_{\X}^\top * [(\M^*\M)(\X*\X^\top-\U_t*\U_t^\top)+\E]*\U_t*\U_t^\top*[(\M^*\M)(\X*\X^\top-\U_t*\U_t^\top)+\E] ||\\
&\le \left( || (\M^*\M)(\X*\X^\top-\U_t*\U_t^\top) || + ||\E||\right)\cdot ||\U_t ||^2 \cdot  \left( || (\M^*\M)(\X*\X^\top-\U_t*\U_t^\top) || + ||\E||\right)\\
& \overset{(a)}{\le} 4\left( ||\X*\X^\top-\U_t*\W_t*\W_t^\top*\U_t^\top||+ ||\U_t*\W_{t,\bot}*\W_{t,\bot}^\top*\U_t^\top||_*+ ||\E|| \right)||\U_t||^2\left( || \X||^2+ 2||\U_t||^2 + ||\E|| \right)\\
&\overset{(b)}{\le} 432 \left( ||\X*\X^\top-\U_t*\W_t*\W_t^\top*\U_t^\top||+ ||\U_t*\W_{t,\bot}*\W_{t,\bot}^\top*\U_t^\top||_*+ ||\E|| \right)||\U_t||^4\\
&\overset{(c)}{\le} 1728 \left( ||\V_{\X}^\top*(\X*\X^\top-\U_t*\U_t^\top)||+ ||\U_t*\W_{t,\bot}*\W_{t,\bot}^\top*\U_t^\top||_*+ ||\E|| \right)||\U_t||^4,
\end{aligned}
\end{equation}
where (a) uses the result of Equation (\ref{equ:20}); (b) uses the assumptions $||\U_t||\le 3||\X||$ and $|| \E||\le ||\X||^2$; (c) uses the result of Lemma \ref{lemma:5}. Based on these results and the assumption $\eta\le c\kappa^{-2} ||\X||^{-2}$, we have
\begin{equation}
\begin{aligned}
& \eta^2 ||\V_{\X}^\top * [(\M^*\M)(\X*\X^\top-\U_t*\U_t^\top)+\E]*\U_t*\U_t^\top*[(\M^*\M)(\X*\X^\top-\U_t*\U_t^\top)+\E] || \\
&\le \frac{\eta}{1000}\sigma_{\min}^2(\X) ||\V_{\X}^\top*(\X*\X^\top-\U_t*\U_t^\top)|| + \frac{\eta}{400}\sigma^2_{\min}(\X) ||\U_t*\W_{t,\bot}*\W_{t,\bot}^\top*\U_t^\top||_* \\
&\quad+ \frac{\eta}{400}\sigma^2_{\min} (\X) ||\E||.
\end{aligned}
\end{equation}
Combining the bounds of these five terms, we obtain
\begin{equation}
\begin{aligned}
||\V_{\X}^\top*(\X*\X^\top - \U_{t+1}*\U_{t+1}^\top)|| &\le \left( 1-\frac{\eta}{200} \sigma_{\min}(\X)^2\right) ||\V_{\X}^\top*(\X*\X^\top - \U_{t}*\U_{t}^\top)|| \\
&+ \frac{\eta}{200} \sigma_{\min}(\X)^2 ||\U_t*\W_{t,\bot}*\W_{t,\bot}^\top*\U_t^\top ||+ 18\eta ||\X||^2 || \E||.  
\end{aligned}
\end{equation}
\hfill 
\end{proof}

\section{Proof of Theorem \ref{theorem:minimax error bound}}\label{sec:F}
The proof of the minimax error bound of the low-tubal-rank tensor recovery follows from the proof of the matrix case in \citep{candes2011tight}. We begin with a standard lemma that characterizes the minimax risk for estimating a vector $x\in\mathbb{R}^{n}$ in the linear model 
\begin{equation}
    \bm{y} = \bm{Ax} + \s
\label{equ:06}
\end{equation}
where $\bm{A}\in\mathbb{R}^{m\times n}$ and the entries of $\s$ are independently and identically distributed according to a Gaussian distribution $\mathcal{N}(0,\sigma^2)$. For such a model, we have the following lemma that provides its minimax error bound.
\begin{lemma}
Let $\lambda_i(\bm{A^\top A})$ be the eigenvalues of the matrix $\bm{A^{\top}A}$. Then
\begin{equation}
    \underset{\hat{\x}}{\inf} \underset{\x\in\mathbb{R}^n}{\sup} \mathbb{E}|| \hat{\x} - \x ||_{l_2}^2 = \sigma^2 \operatorname{trace}\left((\bm{A^{\top} A})^{-1}\right)=\sum_i \frac{\sigma^2}{\lambda_i(\bm{A^{\top} A})}.
\end{equation}
In particular, if one of the eigenvalues vanishes, then the minimax risk is unbounded.
\label{lemma:3.11}
\end{lemma}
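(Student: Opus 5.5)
This is the classical minimax risk for the Gaussian linear model $\bm y=\bm{A}\x+\s$. I will prove the two inequalities separately. For the upper bound I will exhibit an estimator; for the lower bound I will use a Bayesian argument with Gaussian priors of growing variance. Assume first that $\bm{A}^\top\bm{A}$ is invertible.

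\emph{Upper bound.} Take the least-squares estimator $\hat{\x}_{LS}=(\bm{A}^\top\bm{A})^{-1}\bm{A}^\top\bm y$. Then $\hat{\x}_{LS}-\x=(\bm{A}^\top\bm{A})^{-1}\bm{A}^\top\s$, which is Gaussian with covariance $\sigma^2(\bm{A}^\top\bm{A})^{-1}$. Its risk is therefore $\sigma^2\operatorname{trace}((\bm{A}^\top\bm{A})^{-1})$ for every $\x$. Hence $\inf_{\hat\x}\sup_{\x}\mathbb{E}\|\hat\x-\x\|^2$ is at most this value.

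\emph{Lower bound.} For any estimator and any prior $\pi$, the sup risk is at least the Bayes risk under $\pi$, which is at least the Bayes risk of the posterior mean. Take $\pi_\tau=\mathcal{N}(0,\tau^2\bm{I}_n)$. Standard Gaussian conjugacy gives posterior covariance $(\bm{A}^\top\bm{A}/\sigma^2+\bm{I}/\tau^2)^{-1}$, independent of $\bm y$. So the Bayes risk equals
$$\operatorname{trace}\big((\bm{A}^\top\bm{A}/\sigma^2+\bm{I}/\tau^2)^{-1}\big)=\sum_i \frac{\sigma^2}{\lambda_i(\bm{A}^\top\bm{A})+\sigma^2/\tau^2}.$$
Letting $\tau\to\infty$, monotone convergence gives $\sum_i\sigma^2/\lambda_i(\bm{A}^\top\bm{A})$. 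Combining the two bounds yields equality with $\sigma^2\operatorname{trace}((\bm{A}^\top\bm{A})^{-1})$, and the eigenvalue form follows from the spectral decomposition.

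\emph{Degenerate case.} If some $\lambda_i=0$, the same Bayes-risk expression has a term $\sigma^2/(\sigma^2/\tau^2)=\tau^2$. This tends to infinity as $\tau\to\infty$, so the minimax risk is unbounded. Alternatively, $\x$ and $\x+t\bm v$ with $\bm v\in\ker\bm{A}$ give identical observation distributions, so any estimator incurs error at least about $t^2$ on one of them.

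\emph{Main subtlety.} The only delicate step is justifying that the sup risk dominates the Bayes risk even when the prior has unbounded support. This holds because the risk is measurable in $\x$ and the prior is a probability measure, so the supremum dominates the average. The limit in $\tau$ is then handled by monotonicity of each summand.
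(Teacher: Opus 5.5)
Your proposal is correct and follows essentially the same route as the paper: a Bayes lower bound under the prior $\mathcal{N}(0,\tau^2\bm{I}_n)$ with $\tau\to\infty$ (including the $\tau^2$ blow-up when an eigenvalue vanishes), matched by the least-squares estimator for the upper bound. Your alternative kernel-direction argument for the degenerate case is a valid extra but is not needed.
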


\begin{proof}

We separate the argument into three parts: (A) a lower bound via Bayes risk, (B) an upper bound attained by the ordinary least squares estimator in the nonsingular case.\\

\textbf{(A) Lower bound (Bayes argument)}\\
Fix $\tau>0$ and consider the Gaussian prior $\x \sim\mathcal N(0,\tau^2 I_n)$. Under this prior the posterior covariance matrix for $\x$ given $\y$ is

$$
\bm{\Sigma}_{\mathrm{post}}(\tau)=\Big(\frac{1}{\sigma^2}\bm{A^\top A}+\frac{1}{\tau^2}I_n\Big)^{-1}.
$$

The Bayes risk (for the posterior-mean estimator) equals the trace of the posterior covariance:

\begin{equation}
\begin{aligned}
R_{\mathrm{Bayes}}(\tau)
&:=\mathbb E_{\x,\y}\big\lVert \x-\mathbb E[\x\mid \y]\big\rVert_2^2
=\operatorname{tr}\big(\Sigma_{\mathrm{post}}(\tau)\big)\\
&=\sum_{i=1}^n\frac{1}{\lambda_i/\sigma^2+1/\tau^2}
= \sigma^2\sum_{i=1}^n\frac{1}{\lambda_i+\sigma^2/\tau^2},
\end{aligned}
\end{equation}
where we denote $\lambda_i(\bm{A^{\top}A})$ as $\lambda_i$ for convenience.

For any estimator $\hat{\x}$ and any prior $\pi$ we have the standard minimax/Bayes inequality

$$
\inf_{\hat {\x}}\sup_{\x}\mathbb E\|\hat{ \x}-\x\|^2
\ge \inf_{\hat {\x}}\mathbb E_{\x\sim\pi}\mathbb E\big[\|\hat {\x}-\x\|^2\big]
=R_{\mathrm{Bayes}}(\tau),
$$

because the supremum over $\x$ is at least the average under any prior $\pi$. Hence for every $\tau>0$,

$$
\inf_{\hat {\x}}\sup_{\x}\mathbb E\|\hat{\x}-\x\|^2 \ge \sigma^2\sum_{i=1}^n\frac{1}{\lambda_i+\sigma^2/\tau^2}.
$$

If some $\lambda_i=0$ then the right-hand side equals $+\infty$ as $\tau\to\infty$ (indeed the corresponding summand is $\sigma^2/(\,0+\sigma^2/\tau^2)=\tau^2\to\infty$), so the minimax risk is infinite in that case. Otherwise, if all $\lambda_i>0$, send $\tau\to\infty$. For each fixed $i$ the function
$\tau\mapsto \sigma^2/(\lambda_i+\sigma^2/\tau^2)$
is monotone increasing in $\tau$ and converges to $\sigma^2/\lambda_i$ as $\tau\to\infty$. By monotone convergence (or by continuity of finite sums) we obtain

$$
\inf_{\hat{\x}}\sup_{\x}\mathbb E\|\hat {\x}-\x\|^2
\ge \lim_{\tau\to\infty}R_{\mathrm{Bayes}}(\tau)
= \sigma^2\sum_{i=1}^n\frac{1}{\lambda_i}.
$$

\textbf{(B) Upper bound (least squares achieves the bound).}\\
Assume $\lambda_i>0$ for all $i$, i.e. $\operatorname{rank}(A)=n$. Consider the ordinary least squares estimator

$$
\hat{\x}_{\mathrm{LS}}=\bm{(A^\top A)^{-1}A^}\top \y.
$$

Substituting $\y=\bm{A}\x+\s$ gives

$$
\hat {\x}_{\mathrm{LS}}-\x=\bm{(A^\top A)^{-1}A^\top} \s.
$$

Since $\s\sim\mathcal N(0,\sigma^2 I_m)$, the error $\hat {\x}_{\mathrm{LS}}-\x$ is zero-mean Gaussian with covariance

$$
\mathbb E\big[(\hat{\x}_{\mathrm{LS}}-\x)(\hat{\x}_{\mathrm{LS}}-\x)^\top\big]
= \bm{(A^\top A)^{-1}A^\top}(\sigma^2 \bm{I_m})\bm{A (A^\top A)^{-1}}
= \sigma^2 \bm{(A^\top A)^{-1}}.
$$

Therefore the mean-square risk of $\hat {\x}_{\mathrm{LS}}$ (for any fixed $\x$) equals

$$
\mathbb E\|\hat{\x}_{\mathrm{LS}}-\x\|^2
= \operatorname{tr}\big(\sigma^2 \bm{(A^\top A)^{-1}}\big)
= \sigma^2\sum_{i=1}^n\frac{1}{\lambda_i}.
$$

This shows

$$
\inf_{\hat {\x}}\sup_{\x}\mathbb E\|\hat {\x}-\x\|^2
\le \sup_{\x}\mathbb E\|\hat {\x}_{\mathrm{LS}}-\x\|^2
=\sigma^2\sum_{i=1}^n\frac{1}{\lambda_i}.
$$
Combining (A) and (B) yields the asserted identity.
\end{proof}

Then we proceed to prove the minimax error bound. Define the set of rank$_t$ r tensors as
$$
\D_r=\{ \X: \text{rank}_t(\X)=r,\ \X\in\mathbb{R}^{n\times n \times k} \},
$$
and the set of tensors of the form $\X=\Y*\R$ as
$$
\D_{\Y}=\{\X:\X=\Y*\R,\ \X\in\mathbb{R}^{n\times r\times k},\ \Y\in\mathbb{R}^{n\times r\times k},\ \Y^\top*\Y=\I \}.
$$
Note that set $\D_r$ is much larger than set $\D_{\Y}$. Therefore,
\begin{equation}
\underset{\X_{est}}{\inf}\ \ \underset{\X:\text{rank}_t(\X)=r}{\sup}\mathbb{E} ||{\X}_{est}-\X||_F^2 \ge \underset{{\X}_{est}}{\inf}\ \ \underset{{\X}:\X=\Y*\R}{\sup}\mathbb{E} ||{{\X}}_{est}-\X||_F^2.
\end{equation}
 
 For fixed orthogonal tensor $\Y$, define the orthogonal projection tensor $\P_{\Y}=\Y*\Y^\top$, which satisfies $\P_{\Y}^2=\P_{\Y}, \P_{\Y}^\top=\P_{\Y}.$ Then fix the estimator $\X_{est}$, for any $\X\in\D_{\Y}$, we have:
 \begin{equation}
\begin{aligned}
||\X_{est}-\X||_F^2 &=||\P_{\Y}*\X_{est}-\X + (\I-\P_{\Y})*\X_{est} ||_F^2 \\
&=|| \P_{\Y}*\X_{est}-\X ||_F + || (\I-\P_{\Y})*\X_{est} ||_F^2 \\
&\quad + 2\langle \P_{\Y}*\X_{est}-\X, (\I-\P_{\Y})*\X_{est} \rangle \\
&\overset{(a)}{=} || \P_{\Y}*\X_{est}-\X ||_F + || (\I-\P_{\Y})*\X_{est} ||_F^2,
\end{aligned}
\end{equation}
where $(a)$ use the fact that  the tensor column subspaces of $\P_{\Y}*\X_{est}-\X$ and $(\I-\P_{\Y})*\X_{est}$ are orthogonal, which implies that their inner product vanishes. Therefore, we can directly obtain
\begin{equation}
\begin{aligned}
||\X_{est}-\X||_F^2 &\ge ||\P_{\Y}*\X_{est}-\X||_F^2=||\Y*\Y^\top*\X_{est}-\Y*\R||_F^2\\
&=||\Y^\top*\X_{est}-\R||_F^2.
\end{aligned}
\end{equation}

Let $\R_{est}=\Y^\top*\X_{est}$, then we have
\begin{equation}
\underset{{\X}_{est}}{\inf}\ \ \underset{{\X}:\X=\U*\R}{\sup}\mathbb{E} ||{{\X}}_{est}-\X||_F^2 \ge  \underset{{\R}_{est}}{\inf}\ \ \underset{{\R}}{\sup}\ \mathbb{E} ||{{\R}}_{est}-\R||_F^2.
\end{equation}
Therefore, the minimax risk is lower bounded by that of estimating $\R$ from the data
\begin{equation}
    \bm{y}=\M_{\Y}(\text{vec}(\R))+\s,
\end{equation}
where $\M_{\Y}:\mathbb{R}^{rnk}\to \mathbb{R}^m$ and $\text{vec}$ denotes the vectorization operator. Then we can apply the result of Lemma \ref{lemma:3.11} to show that the minimax risk is lower bounded by 
\begin{equation}
\underset{\X_{est}}{\inf}\ \ \underset{\X:\text{rank}_t(\X)=r}{\sup}\mathbb{E} ||{\X}_{est}-\X||_F^2\ge\sum_i^{rnk}\frac{\sigma^2}{\lambda_i(\M_{\Y}^*\M_{\Y})}.
\label{equ:013}
\end{equation}
Then we can bound the term (\ref{equ:013}) by the following Lemma with t-RIP assumption.
\begin{lemma}
Let $\Y$ be an $n\times r \times k$ orthonormal tensor, suppose that the linear map $\M(\cdot)$ satisfies the $(r,\delta)$ t-RIP, then all eigenvalues of $\M_{\Y}^*\M_{\Y}$ belong to the interval $[m(1-\delta),m(1+\delta)]$. 
\label{lemma:rip_bound}
\end{lemma}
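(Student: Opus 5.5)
The plan is to identify $\M_{\Y}^*\M_{\Y}$ as the Gram operator of $\M$ restricted to the linear space $\D_{\Y}=\{\Y*\R:\R\in\mathbb{R}^{r\times n\times k}\}$, and then read off its spectrum from the t-RIP via the Rayleigh quotient. Concretely, define $\M_{\Y}(\text{vec}(\R)):=\M(\Y*\R)$. This is linear in $\R$, so $\M_{\Y}$ is an $m\times rnk$ matrix (here $\R$ has $rnk$ entries, consistent with the index range in (\ref{equ:013})). The matrix $\M_{\Y}^*\M_{\Y}$ is symmetric positive semidefinite, so its eigenvalues are exactly the values of the quotient
$$\frac{\|\M_{\Y}(\text{vec}(\R))\|^2}{\|\text{vec}(\R)\|^2}=\frac{\|\M(\Y*\R)\|^2}{\|\R\|_F^2},\qquad \R\neq 0,$$
attained at eigenvectors, with the extreme eigenvalues equal to the min and max of this quotient (Courant–Fischer).

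Two facts are then needed.

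\textbf{(i) Rank control.} The tensor $\Y*\R$ has tubal-rank at most $r$. In the Fourier domain each slice $\overline{\Y}^{(j)}\overline{\R}^{(j)}$ has rank at most $r$, because $\overline{\Y}^{(j)}$ has $r$ columns, and tubal-rank equals the maximum slice rank. Hence the $(r,\delta)$ t-RIP applies to $\Y*\R$.

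\textbf{(ii) Isometry of $\Y$.} We need $\|\Y*\R\|_F=\|\R\|_F$. By Parseval along the third mode, $\|\cdot\|_F^2=\frac1k\sum_j\|\overline{\cdot}^{(j)}\|_F^2$. The condition $\Y^\top*\Y=\I$ means each $\overline{\Y}^{(j)}$ has orthonormal columns, so $\|\overline{\Y}^{(j)}\overline{\R}^{(j)}\|_F=\|\overline{\R}^{(j)}\|_F$ slice by slice. Alternatively, $\|\Y*\R\|_F^2=\langle \R,\Y^\top*\Y*\R\rangle=\|\R\|_F^2$ directly, using the adjoint property $\langle \A*\B,\C\rangle=\langle \B,\A^\top*\C\rangle$ of the t-product.

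Combining (i) and (ii) with the t-RIP, in the normalization where the definition reads $(1-\delta)\|\cdot\|_F^2\le\|\M(\cdot)\|^2/m\le(1+\delta)\|\cdot\|_F^2$, gives
$$m(1-\delta)\|\R\|_F^2\le\|\M_{\Y}(\text{vec}(\R))\|^2\le m(1+\delta)\|\R\|_F^2.$$
The Rayleigh quotient characterization then places every eigenvalue in $[m(1-\delta),m(1+\delta)]$.

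The main obstacle is bookkeeping rather than substance. In Section \ref{sec:E} the $1/\sqrt m$ factor was absorbed into $\M$, so I will state explicitly that here we use the unnormalized t-RIP, which yields the factor $m$ and, through (\ref{equ:013}), the bound $\frac{rnk\sigma^2}{m(1+\delta)}$. A second point needing care is justifying the adjoint/Parseval identity for the t-product with real tensors. I would do this via the $\mathtt{bcirc}$ representation: $\mathtt{bcirc}(\Y^\top)=\mathtt{bcirc}(\Y)^\top$, and $\|\A\|_F^2=\frac1k\|\mathtt{bcirc}(\A)\|_F^2$.
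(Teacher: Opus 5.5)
Your proposal is correct and takes essentially the same route as the paper. Both arguments characterize the extreme eigenvalues of $\M_{\Y}^*\M_{\Y}$ by the Rayleigh quotient, rewrite it as $\|\M(\Y*\R)\|^2$, apply the $(r,\delta)$ t-RIP to $\Y*\R$, and use $\|\Y*\R\|_F=\|\R\|_F$; the only difference is that you explicitly justify $\operatorname{rank}_t(\Y*\R)\le r$ (via slice-wise ranks in the Fourier domain) and the isometry (via $\Y^\top*\Y=\I$ and Parseval), both of which the paper uses without comment.
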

\begin{proof}
By definition, we have
\begin{equation}
\begin{aligned}
\lambda_{\min}(\M^*_{\Y}\M_{\Y}) &= \underset{||\operatorname{vec}(\R)||_F= 1}{\inf} \left\langle \operatorname{vec}(\R), \M_{\Y}^*\M_{\Y}(\operatorname{vec}(\R))\right \rangle\\
\lambda_{\max}(\M^*_{\Y}\M_{\Y}) &= \underset{||\operatorname{vec}(\R)||_F= 1}{\sup} \left\langle \operatorname{vec}(\R), \M_{\Y}^*\M_{\Y}(\operatorname{vec}(\R))\right\rangle. 
\end{aligned}
\end{equation}
Note that 
$$
\langle \R, \M_{\Y}^*\M_{\Y}(\operatorname{vec}(\R)) \rangle = || \M_{\Y}(\operatorname{vec}(\R)) ||^2 = || \M(\Y*\R) ||^2,
$$
then we can bound $|| \M(\Y*\R) ||^2$ by the $(r,\delta)$ t-RIP
$$
m(1-\delta)||\Y*\R||_F^2 \le ||\M(\Y*\R)||^2 \le m(1+\delta)||\Y*\R||_F^2.
$$
Since $||\Y*\R||_F=||\R||_F=1$, then the eigenvalues of $\M_{\Y}^*\M_{\Y}$ is bounded by $[m(1-\delta),m(1+\delta)]$.
\end{proof}
Combining the result of Lemma \ref{lemma:rip_bound} and  Equation (\ref{equ:013}), we have
\begin{equation}
\underset{\X_{est}}{\inf}\ \ \underset{\X:\text{rank}_t(\X)=r}{\sup}\mathbb{E} ||{\X}_{est}-\X||_F^2\ge \sum_i^{rnk}\frac{\sigma^2}{\lambda_i(\M_{\Y}^*\M_{\Y})}\ge \frac{1}{1+\delta} \frac{nrk\sigma^2}{m},
\end{equation}
which finishes the proof of the first inequality in Theorem \ref{theorem:minimax error bound}.

Then we proceed to prove the second inequality in Theorem \ref{theorem:minimax error bound}. We introduce a  technical Lemma firstly.

\begin{lemma}[Lemma 3.14 in \citep{candes2011tight}]
Suppose that $\bm{x},\bm{y},\bm{A},\bm{s}$ follow the linear model (\ref{equ:06}), with $\s\sim\mathcal{N}(0,\sigma^2\bm{I})$, then
\begin{equation}
\underset{\hat{\x}}{\inf} \underset{\x\in\mathbb{R}^n}{\sup} \mathbb{P}\left( || \hat{\x} -\x ||^2 \ge \frac{1}{2||\bm{A}||^2}n\sigma^2 \right) \ge 1-e^{-n/16}.
\end{equation}
\label{lemma:pro}
\end{lemma}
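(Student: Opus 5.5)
The plan is to follow the Bayesian route of \citep{candes2011tight}. We lower bound the worst-case probability by an average over a Gaussian prior on $\x$. Under that prior the error of any estimator, conditioned on $\y$, is a Gaussian vector shifted by a deterministic vector, and a Gaussian-shift inequality together with a $\chi^2$ lower-tail bound finishes the argument.

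\textbf{Step 1 (Bayes reduction).} Fix $\tau>0$ and draw $\x\sim\mathcal{N}(0,\tau^2\bm{I}_n)$ independently of $\s$. For any estimator $\hat{\x}$ and threshold $c>0$,
\[
\sup_{\x}\mathbb{P}(\|\hat{\x}-\x\|^2\ge c)\ \ge\ \mathbb{E}_{\y}\,\mathbb{P}(\|\hat{\x}(\y)-\x\|^2\ge c\mid \y).
\]
Given $\y$, the posterior of $\x$ is Gaussian, with some mean $\mu(\y)$ and covariance
\[
\bm{\Sigma}_\tau=\Big(\tfrac{1}{\sigma^2}\bm{A}^\top\bm{A}+\tfrac{1}{\tau^2}\bm{I}_n\Big)^{-1}.
\]
Since $\hat{\x}(\y)$ is fixed once $\y$ is fixed, we get $\x-\hat{\x}(\y)\mid\y\sim\mathcal{N}(\mu(\y)-\hat{\x}(\y),\bm{\Sigma}_\tau)$.

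\textbf{Step 2 (remove the shift and the anisotropy).} This is the step I expect to be the main obstacle. The complement event $\{\|\bm z\|^2<c\}$ is a centered ball, which is convex and symmetric. By Anderson's lemma, for $\bm z\sim\mathcal{N}(0,\bm{\Sigma}_\tau)$ and any shift $\bm b$,
\[
\mathbb{P}(\|\bm z+\bm b\|^2<c)\le \mathbb{P}(\|\bm z\|^2<c).
\]
So the conditional probability above is at least $\mathbb{P}(\|\bm z\|^2\ge c)$, uniformly in $\y$. Next, $\bm{\Sigma}_\tau\succeq s_\tau^2\bm{I}_n$ with
\[
s_\tau^2=\Big(\tfrac{\|\bm{A}\|^2}{\sigma^2}+\tfrac{1}{\tau^2}\Big)^{-1}.
\]
We therefore write $\bm z \overset{d}{=} s_\tau \bm g+\bm w$ with $\bm g\sim\mathcal{N}(0,\bm I_n)$ and $\bm w$ an independent centered Gaussian. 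A second application of Anderson's lemma, conditioning on $\bm w$, gives $\mathbb{P}(\|\bm z\|^2\ge c)\ge \mathbb{P}(s_\tau^2\|\bm g\|^2\ge c)$.

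\textbf{Step 3 ($\chi^2$ tail and the limit $\tau\to\infty$).} Take $c=\frac{n\sigma^2}{2\|\bm{A}\|^2}$. This gives
\[
\mathbb{P}(\|\bm g\|^2\ge c/s_\tau^2),\qquad \frac{c}{s_\tau^2}=\frac n2\Big(1+\frac{\sigma^2}{\tau^2\|\bm{A}\|^2}\Big)\downarrow \frac n2 \ \text{ as } \tau\to\infty .
\]
The left-hand side of Step 1 does not depend on $\tau$, so we may let $\tau\to\infty$. By continuity of the $\chi^2_n$ distribution function, the bound tends to $\mathbb{P}(\|\bm g\|^2\ge n/2)$. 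The Laurent--Massart inequality $\mathbb{P}(\chi^2_n\le n-2\sqrt{nx})\le e^{-x}$ with $x=n/16$ gives $n-2\sqrt{n\cdot n/16}=n/2$, hence $\mathbb{P}(\|\bm g\|^2< n/2)\le e^{-n/16}$. Since $\hat{\x}$ was arbitrary, taking the infimum over estimators yields the claim.

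For the theorem, this lemma would then be applied to the reduced model $\y=\M_{\Y}(\mathrm{vec}(\R))+\s$ with $n\to nrk$. There $\|\M_{\Y}\|^2\le m(1+\delta)$ by Lemma \ref{lemma:rip_bound}, and the Frobenius error is bounded below by the $\R$-error as shown earlier. This gives the second inequality of Theorem \ref{theorem:minimax error bound}.
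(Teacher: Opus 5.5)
Your proof is correct and complete: the Gaussian-prior Bayes reduction, Anderson's lemma to remove the offset $\mu(\y)-\hat{\x}(\y)$, the comparison with $s_\tau^2=\lambda_{\min}(\bm{\Sigma}_\tau)$, and the Laurent--Massart bound with $x=n/16$ followed by $\tau\to\infty$ give exactly the stated threshold $n\sigma^2/(2\|\bm{A}\|^2)$ and probability $1-e^{-n/16}$. The comparison step does not need a second use of Anderson's lemma, since $\|\bm z\|^2\overset{d}{=}\sum_i\lambda_i(\bm{\Sigma}_\tau)g_i^2\ge s_\tau^2\|\bm g\|^2$ pointwise; and the paper gives no proof of its own here, only importing the statement from \citep{candes2011tight}, so your argument supplies a self-contained proof along the standard Bayesian route used for that result.
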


With the result of Lemmas \ref{lemma:pro}, \ref{lemma:rip_bound} and the linear model
$$
\y = \M_{\Y}(\operatorname{vec}(\R)) + \s,\ \R\in\mathbb{R}^{n\times r\times k},\ \y\in\mathbb{R}^m,
$$
we can obtain 
\begin{equation}
\underset{\X_\star:\operatorname{rank}_t(\X_\star)\le r}{\sup} \mathbb{P}\left( ||\X_{est}-\X_\star||_F^2\ge \frac{nrk\sigma^2}{2m(1+\delta)} \right) \ge 1-e^{-nrk/16},
\end{equation}
which completes the proof of the second inequality.

\section{Proof of Theorem \ref{theorem:validation}}
\label{sec:G}
\begin{lemma}
Suppose that each entry in the validation measurements $\A_i,\ i\in \I_{\operatorname{val}}$ is sampled from independent identically sub-Gaussian distribution with zero mean and variance 1, and each $e_i$ is  a zero-mean Gaussian distribution with variance $\sigma^2$, where $c_1,\ c_2\ge1$ are some absolute constants. And we also assume that tensors $\D_1,\ \D_2,..., \D_T$ are independent of $\M_{\operatorname{val}}$ and $\textbf{e}_{\operatorname{val}}$. Then for any $\delta_{\operatorname{val}}>0$,given $m_{\operatorname{val}}\ge \frac{C_1\log T}{\delta_{\operatorname{val}}^2}$, with probability at least $1-2T\exp{-C_2m_{\operatorname{val}}\delta^2}$,
\begin{equation}
\begin{aligned}
\left|  ||\M_{\operatorname{val}}(\D_t)+ \textbf{e}||_F^2 - m_{\operatorname{val}}(||\D_t||_F^2 + \sigma^2) \right| \le \delta_{\operatorname{val}} m_{\operatorname{val}}(||\D_t||_F^2 + \sigma^2),\forall t= 1,...,T,
\end{aligned}
\end{equation}
where $C_1,\ C_2\ge 0$ are constants that may depend on $c_1$ and $c_2$.
\label{lemma:1}
\end{lemma}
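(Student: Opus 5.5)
The plan is to prove the bound for a single fixed index $t$ by a sub-exponential (Bernstein) concentration argument, and then take a union bound over $t=1,\dots,T$. The independence assumption is essential here. Because $\D_1,\dots,\D_T$ are independent of $(\M_{\operatorname{val}},\textbf{e}_{\operatorname{val}})$, we may condition on the $\D_t$ and treat each one as a deterministic tensor. For fixed $t$, write the squared norm as a sum of i.i.d. terms:
\[
\|\M_{\operatorname{val}}(\D_t)+\textbf{e}\|^2=\sum_{i\in\I_{\operatorname{val}}} z_i^2,\qquad z_i:=\langle \A_i,\D_t\rangle + e_i .
\]
Since the entries of $\A_i$ are i.i.d. zero-mean, unit-variance sub-Gaussian, $\langle \A_i,\D_t\rangle$ is a weighted sum of independent sub-Gaussians. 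Hence it is sub-Gaussian with variance $\|\D_t\|_F^2$ and $\psi_2$-norm $\lesssim \|\D_t\|_F$, by the Hoeffding-type bound for sub-Gaussian sums. Adding the independent Gaussian $e_i$ shows that $z_i$ has mean zero, variance $v_t:=\|\D_t\|_F^2+\sigma^2$, and $\|z_i\|_{\psi_2}\lesssim \sqrt{v_t}$, where the constant depends only on the sub-Gaussian constants $c_1,c_2$. Moreover the $z_i$ are independent across $i$.

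Next, set $w_i:=z_i^2-v_t$. These are centered, and since the square of a sub-Gaussian variable is sub-exponential, they satisfy $\|w_i\|_{\psi_1}\lesssim \|z_i\|_{\psi_2}^2\lesssim v_t$. Bernstein's inequality for sums of independent sub-exponential variables then gives
\[
\mathbb{P}\Big(\Big|\sum_i w_i\Big|\ge \delta_{\operatorname{val}} m_{\operatorname{val}} v_t\Big)\le 2\exp\!\big(-C_2 m_{\operatorname{val}}\min\{\delta_{\operatorname{val}}^2,\delta_{\operatorname{val}}\}\big).
\]
For $\delta_{\operatorname{val}}\le 1$, which is the relevant regime, the minimum equals $\delta_{\operatorname{val}}^2$. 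For $\delta_{\operatorname{val}}>1$, only the upper tail matters, because the lower deviation $\sum_i z_i^2\ge 0\ge (1-\delta_{\operatorname{val}})m_{\operatorname{val}}v_t$ holds trivially. Adjusting constants handles this case. The normalization by $v_t$ is what makes the bound relative and uniform in the scale of $\D_t$; this is why the statement is phrased with the factor $\delta_{\operatorname{val}} m_{\operatorname{val}}(\|\D_t\|_F^2+\sigma^2)$.

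Finally, a union bound over $t=1,\dots,T$ yields failure probability at most $2T\exp(-C_2 m_{\operatorname{val}}\delta_{\operatorname{val}}^2)$. The condition $m_{\operatorname{val}}\ge C_1\log T/\delta_{\operatorname{val}}^2$ makes this quantity small, since it absorbs the factor $T$ with $C_1$ large relative to $1/C_2$.

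The main obstacle is the bookkeeping of the $\psi_2$-norm of $\langle\A_i,\D_t\rangle$. One must verify that it is controlled by $\|\D_t\|_F$, which is its standard deviation, and not by a larger quantity, so that the Bernstein tail is expressed relative to $v_t$. This requires only the rotation-invariance-type bound $\|\sum_j a_j g_j\|_{\psi_2}^2\lesssim \sum_j a_j^2\max_j\|g_j\|_{\psi_2}^2$, which is standard. With the unit-variance normalization used in Lemma~\ref{lemma:1}, that is, without the $1/\sqrt{m}$ rescaling applied in Section~\ref{sec:E}, the constants come out as stated.
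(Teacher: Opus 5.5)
Your argument is the paper's argument written out in full. The paper's proof only observes that $\langle \A_i,\D_t\rangle+e_i$ is a centered sub-Gaussian variable with variance $\|\D_t\|_F^2+\sigma^2$, whatever the tensor structure, and then defers to Lemma D.1 of \citet{ding2025validation}. That lemma consists of exactly your steps: conditioning on the $\D_t$, passing from $\psi_2$ to $\psi_1$, applying Bernstein's inequality, and taking a union bound. For $\delta_{\operatorname{val}}\in(0,1]$ your proof is complete.

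The one claim that is wrong is that ``adjusting constants handles'' the case $\delta_{\operatorname{val}}>1$. In that range Bernstein bounds the upper tail only by $\exp(-c\,m_{\operatorname{val}}\delta_{\operatorname{val}})$. No fixed $C_2>0$ gives $e^{-c\,m_{\operatorname{val}}\delta_{\operatorname{val}}}\le e^{-C_2 m_{\operatorname{val}}\delta_{\operatorname{val}}^2}$ for all large $\delta_{\operatorname{val}}$.

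This is a defect of the statement, not of your method. Take Gaussian $\A_i$ and $T=1$. Then $\sum_i z_i^2/(\|\D_1\|_F^2+\sigma^2)$ is exactly $\chi^2_{m_{\operatorname{val}}}$, whose tail beyond $(1+\delta_{\operatorname{val}})m_{\operatorname{val}}$ decays only exponentially in $m_{\operatorname{val}}\delta_{\operatorname{val}}$. Already for $m_{\operatorname{val}}=1$ one has $\mathbb{P}(g^2\ge 1+\delta_{\operatorname{val}})\gtrsim e^{-(1+\delta_{\operatorname{val}})/2}/\sqrt{1+\delta_{\operatorname{val}}}$, which exceeds $2e^{-C_2\delta_{\operatorname{val}}^2}$ once $\delta_{\operatorname{val}}$ is large. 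So the lemma with ``any $\delta_{\operatorname{val}}>0$'' is false as written.

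You should state and prove it for $\delta_{\operatorname{val}}\le 1$. Alternatively, for general $\delta_{\operatorname{val}}>0$, replace $\delta_{\operatorname{val}}^2$ in the exponent by $\min\{\delta_{\operatorname{val}}^2,\delta_{\operatorname{val}}\}$. Nothing downstream is lost: the proof of Theorem~\ref{theorem:validation} uses only $\delta_{\operatorname{val}}=nkr\kappa^4/(3m_{\operatorname{train}})$, which is below $1$ under $m_{\operatorname{train}}\gtrsim nkr^2\kappa^8$.
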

\begin{proof}
The proof of this lemma follows directly from Lemma D.1 in \citep{ding2025validation} , since $\langle \A_i, \D \rangle + e_i$ is a sub-Gaussian random variable with zero mean and variance $\|\D\|_F^2 + \sigma^2$, regardless of whether $\A_i$ and $\D$ are matrices or tensors. Therefore, the conclusion of Lemma D.1 applies directly to this lemma. \hfill 
\end{proof}

\begin{lemma}
Let $\check{t} = \arg \min _{1\le t\le T} || \M_{\operatorname{val}}(\D_t) + \textbf{e}||_2$ and $\hat{t} = \arg \min _{1\le t\le T} ||\D_t||_F$, under the assumptions in Lemma \ref{lemma:1}, we have
\begin{equation}
|| \D_{\check{t}} ||_F^2 \le \frac{1+\delta_{\operatorname{val}}}{1-\delta_{\operatorname{val}}} || \D_{\hat{t}} ||_F^2 + \frac{2\delta_{\operatorname{val}}}{1-\delta_{\operatorname{val}}} \sigma^2.
\end{equation}
\label{lemma:2}
\end{lemma}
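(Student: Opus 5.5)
This is a direct consequence of Lemma \ref{lemma:1}, applied on the good event where its two-sided concentration holds simultaneously for all $t=1,\dots,T$. That event has probability at least $1-2T\exp(-C_2 m_{\operatorname{val}}\delta_{\operatorname{val}}^2)$. On it, the squared validation residual $\|\M_{\operatorname{val}}(\D_t)+\textbf{e}\|_2^2$ is sandwiched for every $t$:
\begin{equation*}
(1-\delta_{\operatorname{val}})\, m_{\operatorname{val}}\bigl(\|\D_t\|_F^2+\sigma^2\bigr) \le \|\M_{\operatorname{val}}(\D_t)+\textbf{e}\|_2^2 \le (1+\delta_{\operatorname{val}})\, m_{\operatorname{val}}\bigl(\|\D_t\|_F^2+\sigma^2\bigr).
\end{equation*}
Minimizing the residual and minimizing its square select the same index, so $\check{t}$ is also the minimizer of the squared residual.

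Next I compare $\check{t}$ against $\hat{t}$. I use the lower bound at $t=\check{t}$, then the minimality of $\check{t}$, then the upper bound at $t=\hat{t}$:
\begin{equation*}
(1-\delta_{\operatorname{val}})\, m_{\operatorname{val}}\bigl(\|\D_{\check{t}}\|_F^2+\sigma^2\bigr) \le \|\M_{\operatorname{val}}(\D_{\check{t}})+\textbf{e}\|_2^2 \le \|\M_{\operatorname{val}}(\D_{\hat{t}})+\textbf{e}\|_2^2 \le (1+\delta_{\operatorname{val}})\, m_{\operatorname{val}}\bigl(\|\D_{\hat{t}}\|_F^2+\sigma^2\bigr).
\end{equation*}
I divide by $(1-\delta_{\operatorname{val}})m_{\operatorname{val}}$, which needs $\delta_{\operatorname{val}}<1$; this is implicit in the statement, since the bound is meaningless otherwise. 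Moving $\sigma^2$ to the right-hand side gives
\begin{equation*}
\|\D_{\check{t}}\|_F^2 \le \frac{1+\delta_{\operatorname{val}}}{1-\delta_{\operatorname{val}}}\|\D_{\hat{t}}\|_F^2 + \Bigl(\frac{1+\delta_{\operatorname{val}}}{1-\delta_{\operatorname{val}}}-1\Bigr)\sigma^2.
\end{equation*}
The coefficient of $\sigma^2$ equals $\frac{2\delta_{\operatorname{val}}}{1-\delta_{\operatorname{val}}}$, which is exactly the claimed bound.

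Two points need care, though neither is a real obstacle. First, the $\D_t$ must be independent of the validation data, which is assumed in Lemma \ref{lemma:1}. This is what allows applying the concentration to each fixed $\D_t$ and then taking a union bound over $T$ indices; that union bound is already built into Lemma \ref{lemma:1}'s probability. Second, the noise vector in the statement is written $\textbf{e}$, the validation noise of Lemma \ref{lemma:1}, so the hypotheses match exactly.
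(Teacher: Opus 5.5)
Your proof is correct and follows the paper's route: you sandwich the squared validation residual using Lemma \ref{lemma:1}, then chain the lower bound at $\check{t}$, the minimality of $\check{t}$, and the upper bound at $\hat{t}$, and rearrange. The paper's displayed chain writes $\hat{t}$ on the left-hand side where $\check{t}$ is intended; your version states that step correctly.
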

\begin{proof}
Under the assumptions of Lemma \ref{lemma:1}, we have
\begin{equation}
(1-\delta_{\operatorname{val}})(||\D_t||_F^2+\sigma^2) \le \frac{1}{m_{\operatorname{val}}}||\M_{\operatorname{val}}(\D_t)+\textbf{e} ||_F^2 \le (1+\delta_{\operatorname{val}})(||\D_t||_F^2+\sigma^2),\forall t= 1,...,T,
\end{equation}
from the result of Lemma \ref{lemma:1}. Then we have
\begin{equation}
    \begin{aligned}
        ||\D_{\hat{t}}||_F^2 +\sigma^2 &\le \frac{1}{m_{\operatorname{val}}(1-\delta_{\operatorname{val}})} ||\M_{\operatorname{val}}(\D_{\hat{t}})+\textbf{e} ||_F^2\\
        &\le \frac{1}{m_{\operatorname{val}}(1-\delta_{\operatorname{val}})} ||\M_{\operatorname{val}}(\D_{\hat{t}})+\textbf{e} ||_F^2 \le \frac{1+\delta_{\operatorname{val}}}{1-\delta_{\operatorname{val}}} (|| \D_{\hat{t}} ||_F^2 + \sigma^2),
    \end{aligned}
\end{equation}
which indicates
\begin{equation}
|| \D_{\hat{t}} ||_F^2 \le \frac{1+\delta_{\operatorname{val}}}{1-\delta_{\operatorname{val}}} || \D_{\hat{t}} ||_F^2 + \frac{2\delta_{\operatorname{val}}}{1-\delta_{\operatorname{val}}} \sigma^2.
\end{equation}
Therefore, we complete the proof of Lemma \ref{lemma:2}. \hfill 
\end{proof}

With these two Lemmas, together with Theorem \ref{theorem:main}, we proceed to prove Theorem \ref{theorem:validation}.
Replacing the result of Lemma \ref{lemma:2} with $\D_{\check{t}}=\U_{\check{t}}*\U_{\check{t}}^\top - \X_\star$ and $\D_{\hat{t}}=\U_{\hat{t}}*\U_{\hat{t}}^\top - \X_\star$, we have
\begin{equation}
||\U_{\check{t}}*\U_{\check{t}}^\top - \X_\star||_F^2 \le  \frac{1+\delta_{\operatorname{val}}}{1-\delta_{\operatorname{val}}} || \U_{\hat{t}}*\U_{\hat{t}}^\top - \X_\star ||_F^2 +  \frac{2\delta_{\operatorname{val}}}{1-\delta{\operatorname{val}}} \sigma^2. 
\end{equation}
To achieve the error $C \frac{nkr\sigma^2\kappa^4}{m}$, we need the bound $\frac{2\delta}{1-\delta}\sigma^2$, which requires $\delta \le \frac{nkr\kappa^4}{3m_{\operatorname{train}}}.$
Taking $\delta = \frac{nkr\kappa^4}{3m_{\operatorname{train}}}$, then we can verify that 
$\frac{1+\delta_{\operatorname{val}}}{1-\delta_{\operatorname{val}}} \le 2:$
\begin{equation}
\begin{aligned}
\frac{1+\delta_{\operatorname{val}}}{1-\delta_{\operatorname{val}}} =  \frac{1+ \frac{nkr\kappa^4}{3m_{\operatorname{train}}} }{1-\frac{nkr\kappa^4}{3m_{\operatorname{train}}}} &= \frac{3m_{\operatorname{train}}+nkr\kappa^4}{3m_{\operatorname{train}}-nkr\kappa^4} = 1 + \frac{2nkr\kappa^4}{3m_{\operatorname{train}}-nkr\kappa^4} \\
&\overset{(a)}{\le} 1+ \frac{2nkr\kappa^4}{nkr\kappa^4(3r\kappa^4-1)} \le 2,
\end{aligned}
\end{equation}
where $(a)$ uses the assumptions that $m\gtrsim nkr^2\kappa^8$. Therefore, combining the results of Theorem \ref{theorem:main},we have $$|| \U_{\hat{t}}*\U_{\hat{t}}^\top - \X_\star ||_F^2\le C \frac{nkr\sigma^2\kappa^4}{m_{\operatorname{train}}}.$$ Moreover, combining $\frac{nkr\kappa^4}{3m_{\operatorname{train}}}$ with the assumption $m_{\operatorname{val}}\ge C_1\frac{\log T}{\delta^2_{\operatorname{val}}}$, we have $m_{\operatorname{val}}\ge C_1  \frac{m^2_{\text{train}}\log T}{(rnk\kappa^4)^2}.$

Therefore, the proof of Theorem \ref{theorem:validation} is completed.\hfill

\section{Technique Lemmas}
\label{sec:H}
\begin{lemma}
    Suppose the linear map $\M:\mathbb{R}^{n\times n \times k}\to \mathbb{R}^m$ satisfies ($r+1,\delta_1$) t-RIP with $\delta_1\in(0,1)$, then $\M$ also satisfies ($r,\sqrt{kr}\delta_1$) S2S-t-RIP.
    \label{s2s-t-rip}
\end{lemma}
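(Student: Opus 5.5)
Our approach follows the matrix argument: S2S-t-RIP follows from t-RIP through a duality/polarization step, and the $\sqrt{kr}$ factor comes from passing between the spectral norm and the Frobenius norm. Fix $\Y$ with $\mathrm{rank}_t(\Y)\le r$ and write $\mathcal{H}:=(\mathfrak{I}-\M^*\M)(\Y)$ (with the $1/\sqrt m$ normalization absorbed, as in this section). The spectral norm satisfies $\|\mathcal{H}\|=\|\overline{\bm H}\|$. It is attained by a top singular pair $(u,v)$ of one Fourier block $\overline{\bm H}^{(j)}$, and that pair lifts to a test tensor of tubal-rank one. Concretely, let $\Z=\bm{a}*\bm{b}^\top$ with $\bm a,\bm b\in\mathbb{R}^{n\times1\times k}$ chosen so that $\overline{\Z}$ equals $uv^H$ in slice $j$, its conjugate in the mirror slice, and $0$ elsewhere. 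Such a $\Z$ is real, has tubal-rank $\le 1$, and satisfies $\langle \mathcal{H},\Z\rangle \gtrsim \|\mathcal H\|\,\|\Z\|_F$, after normalizing $\|\Z\|_F$ appropriately; this uses that $\langle\mathcal H,\Z\rangle=\frac1k\mathrm{Re}\sum_j\langle\overline{\bm H}^{(j)},\overline{\bm Z}^{(j)}\rangle$. Hence
\[
\|\mathcal H\|\lesssim \sup_{\mathrm{rank}_t(\Z)\le1,\ \|\Z\|_F\le c_k}\langle(\mathfrak I-\M^*\M)(\Y),\Z\rangle ,
\]
where $c_k$ is the normalization constant that tracks the $\sqrt k$ factor coming from the DFT scaling.

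Next comes the standard RIP polarization. For $\Y,\Z$ of tubal-rank $\le r$ and $\le 1$, the sum $\Y\pm\Z$ has tubal-rank $\le r+1$, so the $(r+1,\delta_1)$ t-RIP together with the parallelogram identity gives
\[
|\langle \M(\Y),\M(\Z)\rangle-\langle\Y,\Z\rangle|\le\delta_1\|\Y\|_F\|\Z\|_F
\]
after rescaling $\Y,\Z$ to unit Frobenius norm. Since $\langle(\mathfrak I-\M^*\M)\Y,\Z\rangle=\langle\Y,\Z\rangle-\langle\M\Y,\M\Z\rangle$, this yields $\|\mathcal H\|\le \delta_1\,c_k\|\Y\|_F$.

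The final step converts $\|\Y\|_F$ to $\|\Y\|$. Because $\mathrm{rank}_t(\Y)\le r$, each block $\overline{\bm Y}^{(j)}$ has rank $\le r$, so
\[
\|\Y\|_F^2=\frac1k\|\overline{\bm Y}\|_F^2\le \frac1k\cdot kr\|\overline{\bm Y}\|^2=r\|\Y\|^2 .
\]
Combining this with the factor $c_k$ from the lifting should produce exactly $\sqrt{kr}\,\delta_1\|\Y\|$.

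The main obstacle is the lifting step: we must construct a \emph{real} tubal-rank-one $\Z$ that isolates a single Fourier slice, including its conjugate partner. We also need to track the DFT normalization carefully so that the constants combine to $\sqrt{k}\cdot\sqrt r$ rather than something worse. If isolating one slice proves awkward, the fallback is cruder but keeps the same factor: use $\|\mathcal H\|\le\|\mathcal H\|_F$ restricted to the relevant span, noting that $\mathcal H$ is tested only against the range and co-range of $\Y$ up to rank $r+1$.
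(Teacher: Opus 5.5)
Your proposal is correct and is essentially the paper's argument: the paper's proof just adapts Karnik et al.'s Lemma G.2, the tubal analogue of the matrix duality-plus-polarization argument, using $\|\Z\|_F\le\sqrt{r}\|\Z\|$ and $\|\Z\|_F=\frac{1}{\sqrt{k}}\|\overline{\Z}\|_F$, which are exactly the two conversions you use after polarizing under the $(r+1,\delta_1)$ t-RIP. Your lifting step also works with exact constants: it gives $c_k=\sqrt{k}$, attained at the self-conjugate slices $j=1$ (and $j=k/2+1$ for even $k$), while the other slices give $\sqrt{k/2}$, so your $\lesssim$'s can be made exact and the fallback (which would not work as stated) is unnecessary.
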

\begin{proof}
The proof of this lemma can be adapted from that of [\citep{karnik2024implicit}, Lemma G.2] by introducing the inequalities $||\Z||_F \le \sqrt{r} ||\Z||$ and $||\Z||_F=\sqrt{1/k}||\overline{\Z}||_F$. 
\end{proof}

\begin{lemma}
    Suppose the linear map $\M:\mathbb{R}^{n\times n \times k}\to \mathbb{R}^m$ satisfies ($2,\delta_2$) t-RIP with $\delta_2\in(0,1)$, then $\M$ also satisfies $\sqrt{k}\delta_2$-S2N-t-RIP.
\label{lemma:S2N-t-RIP}
\end{lemma}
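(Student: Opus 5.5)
The plan is to reduce the spectral-to-nuclear bound to a statement about rank-one tubal tensors and then extend it by linearity.

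\emph{Reduction to unit-norm tubal rank-one inputs.} By duality, $\|\Z\| = \sup\{\langle \Z, \x * \y^\top\rangle\}$, where the supremum runs over $\x,\y\in\mathbb{R}^{n\times 1\times k}$ with $\|\x\|,\|\y\|\le 1$. In the Fourier domain, the spectral norm is the largest singular value over all frequency slices. It is attained at a single slice $j$ by a pair of complex unit vectors, and one passes to real tubal vectors by taking real and imaginary parts; this is where a constant (at most $2$) may enter. Next, write the input $\Y$ through its t-SVD as $\Y=\sum_i \V(:,i,:)*\S(i,i,:)*\W(:,i,:)^\top$. Splitting each tube $\S(i,i,:)$ into its Fourier components expresses $\Y$ as a sum of terms $\bm{a}*\bm{b}^\top$ whose norms sum to $\|\overline{\bm{Y}}\|_*$, up to the normalization between $\|\cdot\|_*$ and the slice-wise nuclear norms. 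By the triangle inequality, it then suffices to show, for every pair of rank-one tubal tensors $\bm{a}*\bm{b}^\top$ and $\x*\y^\top$ of unit spectral norm,
\[
\left|\left\langle (\mathfrak{I}-\M^*\M)(\bm{a}*\bm{b}^\top),\ \x*\y^\top\right\rangle\right| \le \sqrt{k}\,\delta_2 .
\]

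\emph{Bilinear bound via polarization.} The left side equals $\langle \bm{a}*\bm{b}^\top,\x*\y^\top\rangle - \langle \M(\bm{a}*\bm{b}^\top),\M(\x*\y^\top)\rangle$. For real tensors $P,Q$ of tubal rank $\le 1$ and unit Frobenius norm, $P\pm Q$ has tubal rank $\le 2$. The polarization identity $4\langle\M P,\M Q\rangle=\|\M(P+Q)\|^2-\|\M(P-Q)\|^2$, combined with the $(2,\delta_2)$ t-RIP, gives $|\langle P,Q\rangle-\langle \M P,\M Q\rangle|\le \delta_2\|P\|_F\|Q\|_F$. Finally, convert norms: a tubal rank-one tensor $\bm{a}*\bm{b}^\top$ has at most one nonzero singular value per slice. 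Therefore $\|\cdot\|_F=k^{-1/2}\|\overline{\cdot}\|_F\le k^{-1/2}\sqrt{k}\,\|\cdot\| $, giving $\|\cdot\|_F\le\|\cdot\|$ for each factor. The remaining $\sqrt{k}$ comes from the mismatch between $\|\Y\|_*=\|\overline{\bm{Y}}\|_*$ (summed over $k$ slices, unnormalized) and the Frobenius normalization $\|\cdot\|_F=k^{-1/2}\|\overline{\cdot}\|_F$.

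\emph{Main obstacle.} The delicate step is the bookkeeping of the factors of $k$. The Fourier-domain decomposition produces components supported on single frequencies, and these are complex. To keep real tensors of tubal rank $\le 1$, I would pair each frequency with its conjugate. Each such piece has $\|\cdot\|_F=k^{-1/2}\|\overline{\cdot}\|_F$, while its contribution to $\|\overline{\bm{Y}}\|_*$ is its full slice singular value. Tracking this normalization carefully should yield exactly $\sqrt{k}\,\delta_2$. I would first verify the case $k=1$, where the result is the matrix S2N-RIP of Lemma G.2-type arguments in karnik2024implicit, and then follow that proof slice by slice.
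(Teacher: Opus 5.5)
Your plan has the same structure as the paper's proof, which adapts Karnik et al.'s Lemma G.3: split $\Y$ into tubal-rank-one pieces, use the triangle inequality, test against tubal-rank-one tensors, and apply polarization with the $(2,\delta_2)$ t-RIP. The gap is your first step, and it sits inside the $k$-bookkeeping you postpone.

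With the paper's unnormalized inner product, Parseval gives $\langle \Z,\x*\y^\top\rangle=\frac1k\sum_{j}\mathrm{Re}\,\big((\overline{\bm x}^{(j)})^{H}\,\overline{\bm Z}^{(j)}\,\overline{\bm y}^{(j)}\big)$. On the other hand, $\|\x\|\le1$ only says $\max_j\|\overline{\bm x}^{(j)}\|_2\le1$. So the supremum over $\|\x\|,\|\y\|\le 1$ is the slice average $\frac1k\sum_j\sigma_1(\overline{\bm Z}^{(j)})$, not $\|\Z\|=\max_j\sigma_1(\overline{\bm Z}^{(j)})$.

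Unit vectors concentrated on one frequency and its conjugate give only $\frac2k\sigma_1$, or $\frac1k\sigma_1$ on a self-conjugate slice. Pairing with the conjugate frequency already makes them real, so no real/imaginary splitting is needed. The loss is a factor of order $k$, not at most $2$. Consequently your accounting does not close. With that identity and $\|\cdot\|_F\le\|\cdot\|$, no $\sqrt{k}$ appears at all, and the chain rests on a false equality. The ``mismatch'' you invoke is not an inequality that produces a $\sqrt{k}$.

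To repair it, use the true inequality $\|\Z\|\le\sum_j\sigma_1(\overline{\bm Z}^{(j)})=k\sup_{\|\x\|,\|\y\|\le1}\langle\Z,\x*\y^\top\rangle$. Apply it directly to the real t-SVD pieces $\Y_i=\V(:,i,:)*\S(i,i,:)*\W(:,i,:)^\top$; no further Fourier splitting of the tubes is needed. Since $\Y_i\pm\x*\y^\top$ has tubal rank at most $2$ and $\|\x*\y^\top\|_F\le\|\x*\y^\top\|\le 1$, polarization yields
\[
\|(\mathfrak{I}-\M^*\M)(\Y_i)\|\le k\,\delta_2\,\|\Y_i\|_F=\sqrt{k}\,\delta_2\,\|\overline{\bm{Y}_i}\|_F\le\sqrt{k}\,\delta_2\,\|\Y_i\|_* .
\]
Then $\sum_i\|\Y_i\|_*=\|\Y\|_*$ gives the lemma. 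These are exactly the two norm relations the paper's proof cites: $\|\Z\|_F\le\sqrt r\|\Z\|$ with $r=1$, and $\|\Z\|_F=k^{-1/2}\|\overline{\Z}\|_F$. So the Frobenius normalization does not create the $\sqrt{k}$; it recovers $\sqrt{k}$ of the factor $k$ lost in the duality.

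Alternatively, test against the correct dual ball $\{\B:\|\B\|_*\le k\}$. Its extreme points are single-frequency tubal-rank-one tensors with $\|\B\|_F\le\sqrt{k}$. Combined with $\|\Y_i\|_F\le k^{-1/2}\|\Y_i\|_*$, this even gives $\delta_2\|\Y\|_*$. So the $\sqrt{k}$ in the statement is slack, not the exact constant you expected to recover.
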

\begin{proof}
The proof of this lemma can be adapted from that of [\citep{karnik2024implicit}, Lemma G.3] by introducing the inequalities $||\Z||_F \le \sqrt{r} ||\Z||$ and $||\Z||_F=\sqrt{1/k}||\overline{\Z}||_F$. 
\end{proof}

\begin{lemma}
For a tensor $\Y\in\mathbb{R}^{n\times n\times k}$ with tubal-rank $r$, then we have
\begin{equation}
    ||\X|| \le \sqrt{n_3}||\X||_F,\  ||\X||_F\le \sqrt{r}||\X||,\ \ ||\X||_*\le r||\X||.
\end{equation}
\label{lemma:16}
\end{lemma}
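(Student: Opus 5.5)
The plan is to reduce all three inequalities to elementary matrix facts about the frontal slices in the Fourier domain. I will use three tools. First, the block-diagonal representation: $\|\Y\| = \|\overline{\bm{Y}}\| = \max_{1\le j\le k}\|\overline{\bm{Y}}^{(j)}\|$. Second, Parseval's identity for the DFT along the third mode, which gives $\|\Y\|_F^2 = \frac1k\|\overline{\bm{Y}}\|_F^2 = \frac1k\sum_{j=1}^k\|\overline{\bm{Y}}^{(j)}\|_F^2$; this is the fact used in Lemma \ref{s2s-t-rip}. Third, the rank fact: if $\mathtt{rank}_t(\Y)= r$, then the t-SVD $\Y=\V_{\Y}*\S_{\Y}*\W_{\Y}^\top$ has only $r$ nonzero diagonal tubes. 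Transforming it slicewise shows that every Fourier slice $\overline{\bm{Y}}^{(j)}$ has matrix rank at most $r$. I will establish this third fact first, since everything else rests on it.

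\emph{First inequality.} For any index $j$,
\[
\|\overline{\bm{Y}}^{(j)}\|^2\le\|\overline{\bm{Y}}^{(j)}\|_F^2\le\sum_{i}\|\overline{\bm{Y}}^{(i)}\|_F^2 = k\|\Y\|_F^2 .
\]
Taking the maximum over $j$ gives $\|\Y\|\le\sqrt{k}\,\|\Y\|_F$, where $k=n_3$. This step does not use the rank assumption.

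\emph{Second inequality.} I will apply the matrix inequality $\|A\|_F^2\le \mathrm{rank}(A)\|A\|^2$ to each slice, which gives
\[
\|\Y\|_F^2=\frac1k\sum_j\|\overline{\bm{Y}}^{(j)}\|_F^2\le\frac1k\sum_j r\|\overline{\bm{Y}}^{(j)}\|^2\le r\max_j\|\overline{\bm{Y}}^{(j)}\|^2=r\|\Y\|^2 .
\]
Here the $1/k$ from Parseval exactly cancels the $k$ terms in the sum, so no dimension factor appears.

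\emph{Third inequality.} I will use the slicewise bound $\|\overline{\bm{Y}}^{(j)}\|_*\le r\|\overline{\bm{Y}}^{(j)}\|\le r\|\Y\|$. The main obstacle is the normalization of the nuclear norm. With the definition $\|\Y\|_*=\|\overline{\bm{Y}}\|_*=\sum_j\|\overline{\bm{Y}}^{(j)}\|_*$ from the preliminaries, summing over $j$ gives only $\|\Y\|_*\le rk\|\Y\|$. This $rk$ is also the factor actually invoked in the proofs of Lemma \ref{lemma:02} and Lemma \ref{lemma:03}, which write $k((R\land n)-r)$. To obtain exactly $r\|\Y\|$, one must use the normalized convention $\|\Y\|_*=\frac1k\|\overline{\bm{Y}}\|_*$; under it the average of the slice bounds yields $r\|\Y\|$ directly.

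So I will prove the bound as $\|\Y\|_*\le \frac1k\sum_j r\|\overline{\bm{Y}}^{(j)}\|\le r\|\Y\|$ under the normalized definition. I will also record the equivalent unnormalized form $\|\overline{\bm{Y}}\|_*\le rk\|\Y\|$, so that the later applications stay consistent.
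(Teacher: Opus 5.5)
The paper states this lemma without any proof, so there is nothing to compare against. Your Fourier-slice argument (block-diagonal spectral norm, Parseval $\|\overline{\bm{Y}}\|_F^2=k\|\Y\|_F^2$, and $\mathrm{rank}(\overline{\bm{Y}}^{(j)})\le r$ from the t-SVD) is the standard route. It correctly proves the first two inequalities, reading $n_3$ as $k$ and $\X$ as $\Y$.

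On the third inequality your diagnosis is right but understated. With the paper's definition $\|\Y\|_*=\|\overline{\bm{Y}}\|_*$, the bound $\|\Y\|_*\le r\|\Y\|$ is false, not merely out of reach of your argument. Take $\Y$ with first frontal slice $\bm{e}_1\bm{e}_1^\top$ and all other slices zero. Then $\mathtt{rank}_t(\Y)=1$ and every Fourier slice equals $\bm{e}_1\bm{e}_1^\top$, so $\|\Y\|=1$ while $\|\overline{\bm{Y}}\|_*=k$. Thus $\|\overline{\bm{Y}}\|_*\le rk\|\Y\|$ is the correct statement, and this example shows it is sharp. It is also the form the paper actually invokes in step (b) of the proof of Lemma~\ref{lemma:02}.

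In your write-up, present this as a correction of the lemma rather than as a proof ``under the normalized convention.'' If you do switch to $\frac1k\|\overline{\bm{Y}}\|_*$, the change must be propagated to every place the nuclear norm appears. In particular, the S2N-t-RIP definition and Lemma~\ref{lemma:S2N-t-RIP} have constants that depend on the normalization.
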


\section{Extension to the general tensor}
\label{sec:general}
In this section, we provide a brief analysis for the extension to the asymmetric case by formulating the asymmetric model into a symmetric model. 
We first present the asymmetric tensor sensing model:
\begin{equation}
    \textbf{y} = \M_a(\X_\star)+\textbf{s},
\end{equation}
where $\X_\star\in\mathbb{R}^{n_1\times n_2 \times k}$, $\M_a(\X) = [\langle \B_1,\X_\star \rangle,\langle \B_2,\X_\star \rangle,...,\langle \B_i,\X_\star \rangle]$. Under this asymmetric model, we take an asymmetric factorization  $\X = \L*\R^\top,\ \L\in\mathbb{R}^{n_1\times r \times k},\ \R\in\mathbb{R}^{n_2\times r \times k}.$ 
Then we define the symmetric measurement tensors $\C_i\in\mathbb{R}^{(n_1+n_2)\times (n_1+n_2) \times k}$ by:
\begin{equation}
\C_i:=\frac{1}{\sqrt{2}} \left( \begin{matrix}
    0 &\B_i\\
    \B_i^\top &0 \\
\end{matrix} \right)
\end{equation}
and the corresponding linear map $\bm{\mathfrak{C}}:\mathbb{R}^{(n_1+n_2)\times (n_1+n_2) \times k}\to \mathbb{R}^m$ via
$$
(\mathfrak{C}(\X))_i = \langle \C_i,\X \rangle.
$$

Define $$\operatorname{sym}(\X):= \begin{bmatrix}
    0&\X \\
    \X^\top &0
\end{bmatrix}$$
and 
$$
\Z_t:=\frac{1}{\sqrt{2}} \begin{bmatrix}
    \L_t \\
    \R_t
\end{bmatrix}\ \operatorname{and}\ \tilde{\Z}_t:=\frac{1}{\sqrt{2}} \begin{bmatrix}
    \L_t \\
    -\R_t
\end{bmatrix}.
$$ 

With these definitions, we then transfer the asymmetric sensing model into a symmetric model:
$$
{\frac{1}{\sqrt{2}}} \mathfrak{C}(\operatorname{sym(\X)}) = \M_a(\X)\ \ \operatorname{and}\ \ \frac{1}{\sqrt{2}}\mathfrak{C(\Z_t*\Z_t^\top-\tilde{\Z}_t*\tilde{\Z}_t^\top)} = \M_a(\L_t*\R_t^\top).
$$
{With this model, we have the following objective function:
\begin{equation}
h(\L_t,\R_t) = \frac{1}{2}||\M(\X_\star-\L_t*\R_t^\top)+ \s ||^2
\label{equ:121}
\end{equation}
Then we define the corresponding symmetric loss function:
$$
h_{\text{sym}}(\Z_t,\tilde{\Z}_t)= \frac{1}{4}|| \mathfrak{\bm{C}}(\text{sym}(\X_\star)-\Z_t*\Z_t^\top+ \tilde{\Z_t}*\tilde{\Z_t}) + \sqrt{2}\s ||^2.
$$
The gradient update of $h_{\text{sym}}(\Z_t,\tilde{\Z}_t)$ is
\begin{equation}
\begin{aligned}
\Z_{t+1} &= \Z_t + \eta [(\bm{\mathfrak{C}}^*\bm{\mathfrak{C}})(\text{sym}(\X_\star)-\Z_t*\Z_t^\top+ \tilde{\Z_t}*\tilde{\Z_t})+ \mathfrak{\bm{C}}^*(\sqrt{2}\s)]*\Z_t \\
\tilde{\Z}_{t+1} &= \tilde{\Z_t} - \eta [(\bm{\mathfrak{C}}^*\bm{\mathfrak{C}})(\text{sym}(\X_\star)-\Z_t*\Z_t^\top+ \tilde{\Z_t}*\tilde{\Z_t})+ \mathfrak{\bm{C}}^*(\sqrt{2}\s)]*\tilde{\Z_t}.\\
\end{aligned}
\end{equation}}
{This formulation allows us to leverage some proof techniques from the symmetric case. However, handling the imbalance introduced by the two factor tensors poses a significant challenge, and we are actively investigating this issue. Encouragingly, our experimental result in Figure \ref{fig:15} shows that the phenomenon described in this paper also persists in the asymmetric setting.
}

\begin{figure}[]
\centering
\includegraphics[width=0.5\linewidth]{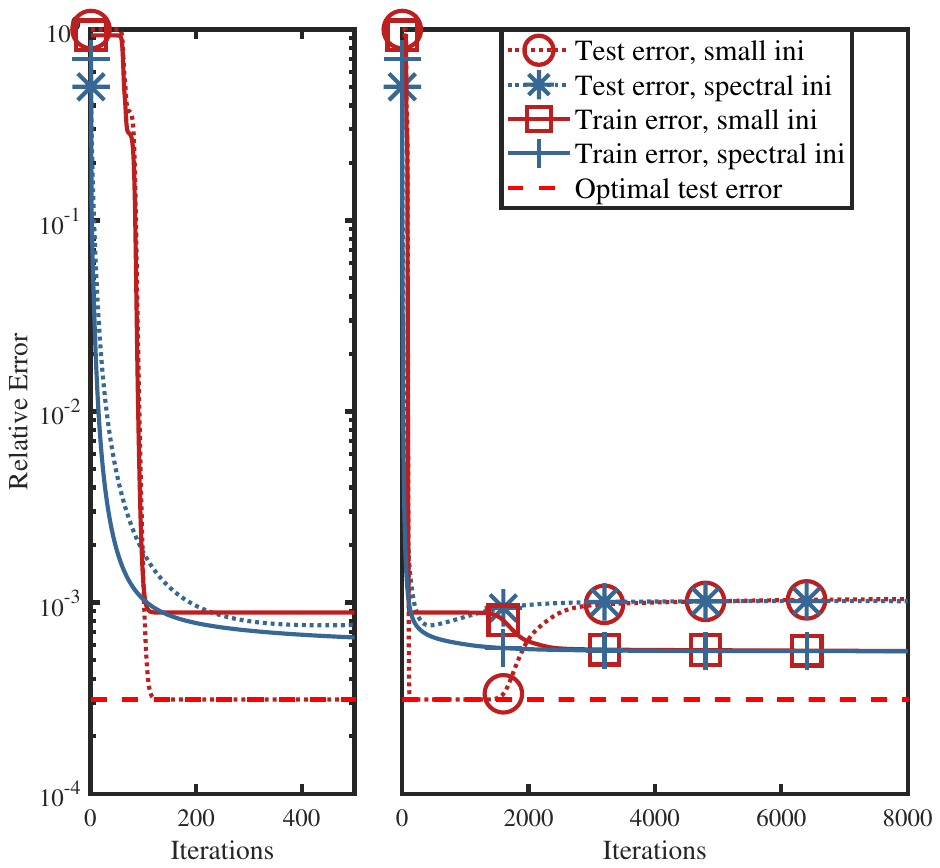}
\centering
\caption{{Comparison of training and testing errors for Problem (\ref{equ:121}) using FGD with spectral vs. small initialization. The ground-truth tensor has tubal-rank $r=2$, overestimated rank $R=4$, size $n_1=n_2=20$, $k=3$, $m=5kr(2n_1-r)$ measurements, and noise $\sigma=10^{-3}$. Spectral initialization follows \citet{liu2024low}, while small initialization uses a near-zero starting point. Training error is $\frac{1}{2}||\bm{y}-\M(\L*\R^\top)||^2$, and testing error is $||\L*\R^\top - \X_\star||_F^2/||\X_\star||_F^2$. “Baseline” denotes recovery under exact rank $R=r$. Insets show early (first 500 iterations) vs. full error curves.}}
\label{fig:15}
\end{figure}

\section{Additional experiments}
\label{sec:additional exp}
\subsection{Simulations on different noise distribution}
We conduct simulation experiments to verify that our theoretical results remain valid under various noise distributions, not limited to Gaussian noise. The experimental setup is identical to that in Section 6, except that we replace the Gaussian noise with two types of sub-exponential noise: Laplace noise and exponential noise. We briefly introduce the two noise models considered in our experiments:
\begin{itemize}
\item Laplace noise: The noise vector follows a Laplace distribution,
$$
\s \sim \text{Laplace}(\mu, b), \quad f(s_i) = \frac{1}{2b} \exp\left(-\frac{|s_i - \mu|}{b}\right),
$$
which is a symmetric sub-exponential distribution with mean $\mu$ and variance $2b^2$.
\item  Exponential noise: The noise vector follows an exponential distribution, $$
\s \sim \text{Exp}(\lambda), \quad f(s_i) = \lambda \exp(-\lambda s_i), \quad x\s_i \ge 0,
$$ which is an asymmetric sub-exponential distribution with mean $1/\lambda$ and variance $1/\lambda^2$.
\end{itemize}

The results, shown in Figures \ref{fig:8} and \ref{fig:90}, demonstrate that under both noise types, FGD with small initialization achieves the same recovery error as in the exact tubal-rank case, even in the over-parameterized regime. This confirms that the guarantee provided by Theorem 2 extends beyond Gaussian distributions. Moreover, FGD with validation and early stopping yields errors that are very close to those in the exact tubal-rank setting, further validating the effectiveness of this approach and suggesting that the result in Theorem 3 can also be extended to sub-exponential noise.

\begin{figure}[h]
\centering
\subfigure[]{
\begin{minipage}[t]{0.45\linewidth}
\centering
\includegraphics[width=6cm,height=6cm]{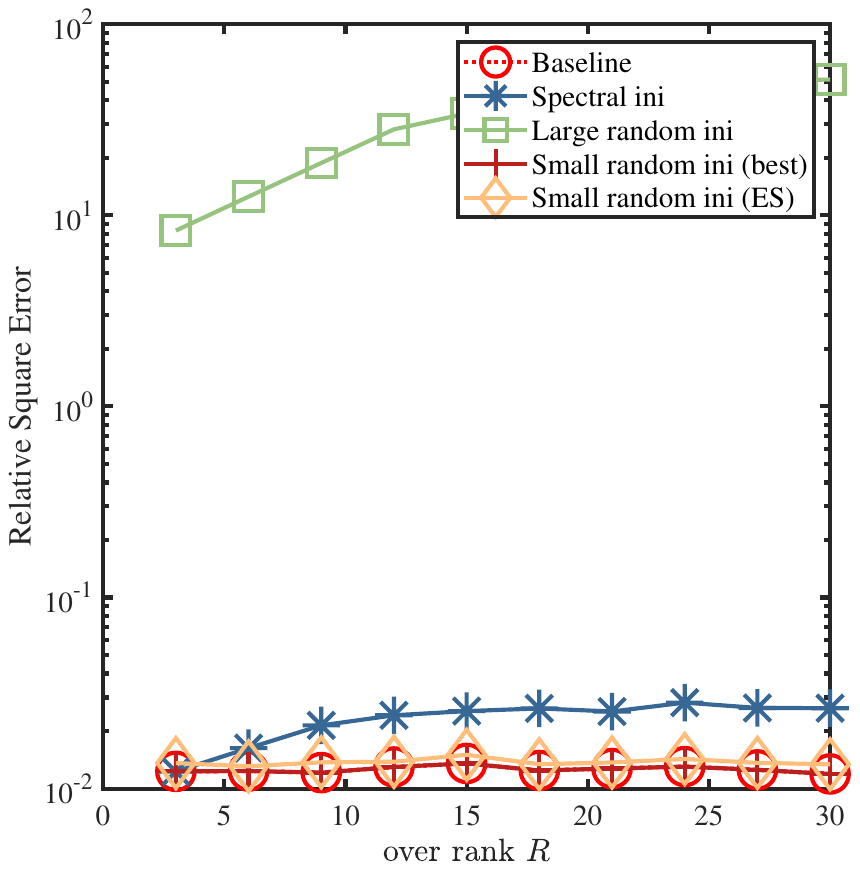}
\end{minipage}%
}
\subfigure[]{
\begin{minipage}[t]{0.45\linewidth}
\centering
\includegraphics[width=6cm,height=6cm]{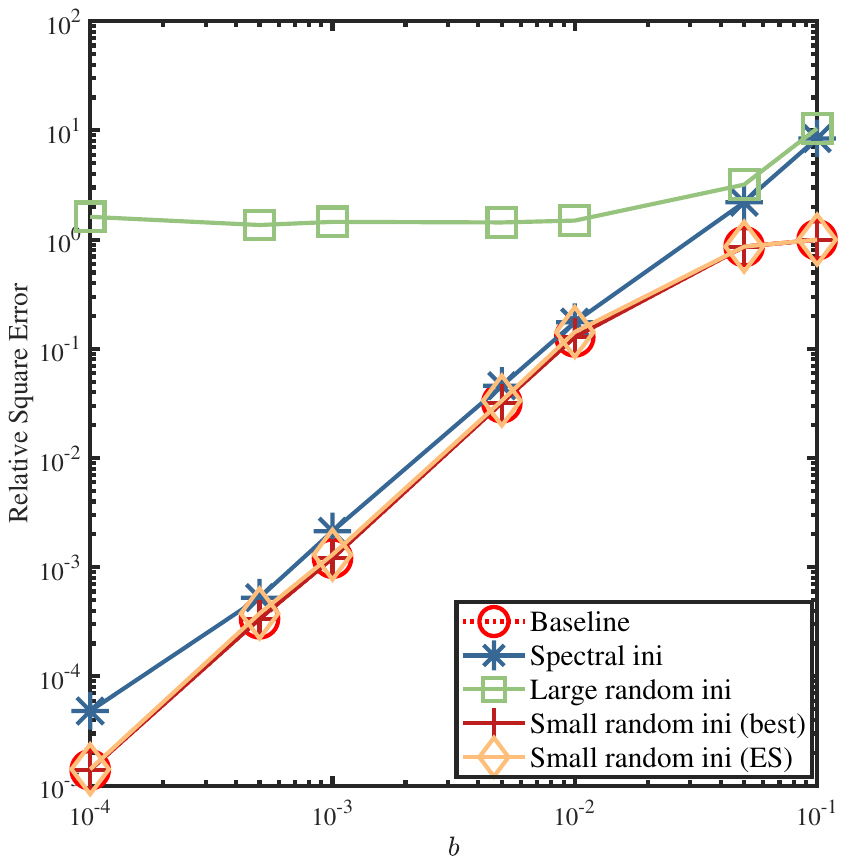}
\end{minipage}%
}
\centering
\caption{Performance comparison under varying $R$, $b$ with Laplace noise with $\mu=0$. Subfigure (a) illustrates the recovery error of all methods under different over-rank values $R$, with parameters set as $m = 5kr(2n - r)$, $n = 30$, $b = 10^{-3}$, $\eta = 0.1$, and $T = 5000$. Subfigure (b) illustrates the error under varying noise levels $b$, with $m = 5kr(2n - r)$, $n = 30$, $R = 3r$, $\eta = 0.1$, and $T = 5000$.}
\label{fig:8}
\end{figure}

\begin{figure}[]
\centering
\subfigure[]{
\begin{minipage}[t]{0.45\linewidth}
\centering
\includegraphics[width=6cm,height=6cm]{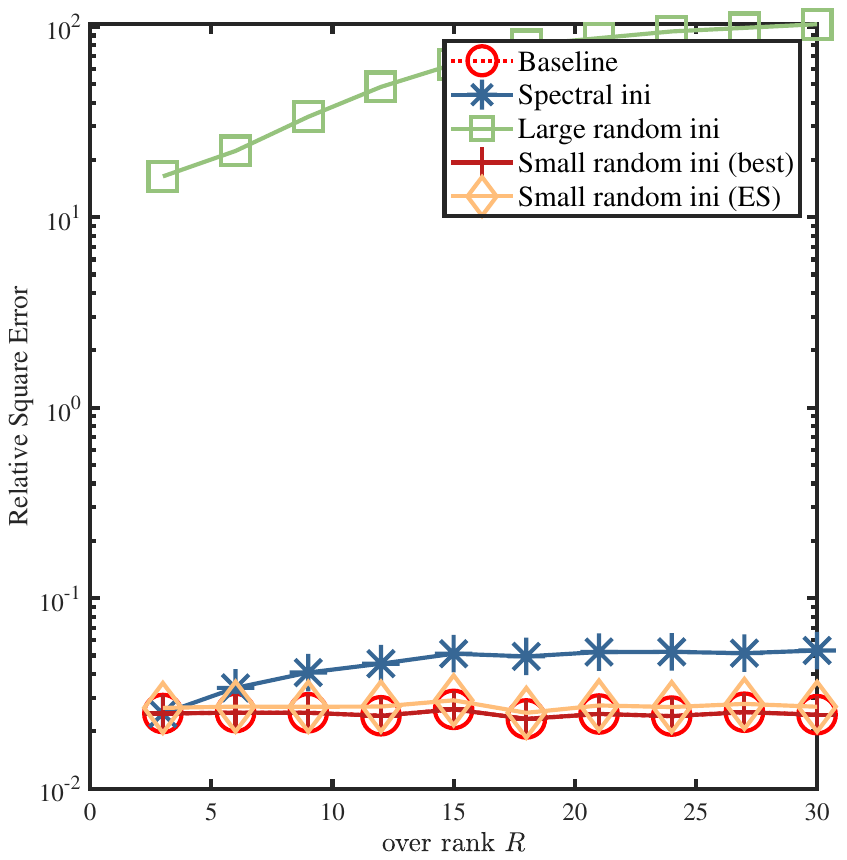}
\end{minipage}%
}
\subfigure[]{
\begin{minipage}[t]{0.45\linewidth}
\centering
\includegraphics[width=6cm,height=6cm]{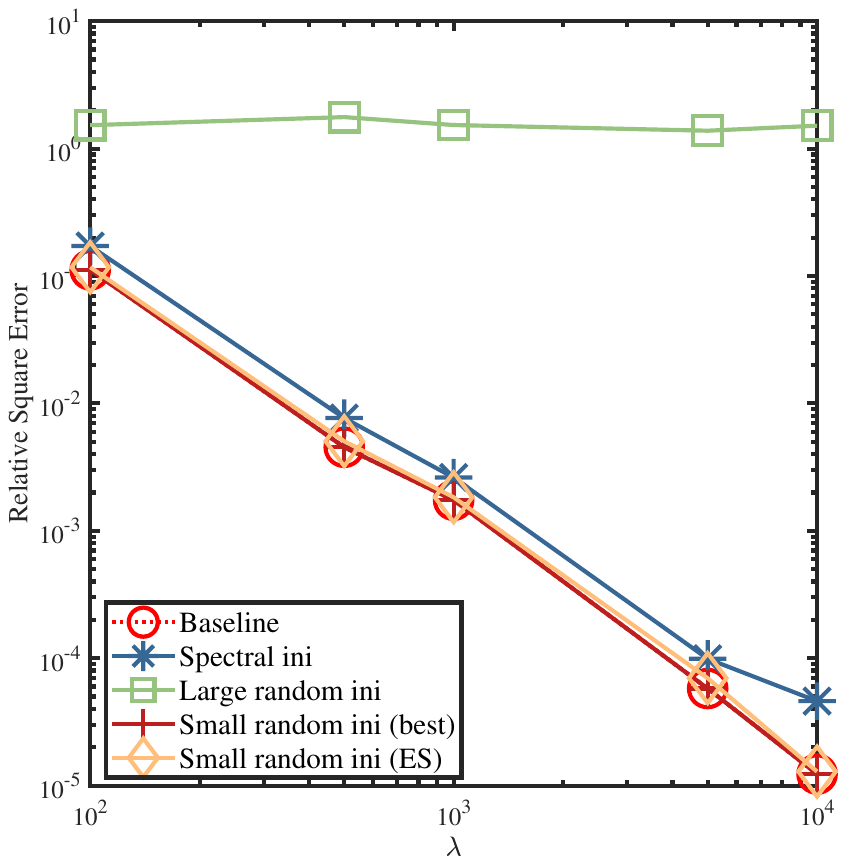}
\end{minipage}%
}
\centering
\caption{Performance comparison under varying $R$, $\lambda$ with exponential noise. Subfigure (a) illustrates the recovery error of all methods under different over-rank values $R$, with parameters set as $m = 5kr(2n - r)$, $n = 30$, $b = 10^{-3}$, $\eta = 0.1$, $\lambda=1000$ and $T = 5000$. Subfigure (b) illustrates the error under varying noise levels $\lambda$, with $m = 5kr(2n - r)$, $n = 30$, $R = 3r$, $\eta = 0.1$, and $T = 5000$.}
\label{fig:90}
\end{figure}

\subsection{Real-data experiments}
\label{sec:real exp}
{
In this section, we provide additional experimental details and results. We first present the algorithm used for the tensor completion task, as shown in Algorithm \ref{algorithm:3}. We then give the definitions of the evaluation metrics, PSNR and relative error:
$$
\operatorname{PSNR} = 10\log_{10}\left( \frac{||\X_\star||_{\infty}^2}{ \frac{1}{n_1n_2n_3} ||\hat{\X}-\X_\star||_F^2} \right),\ \operatorname{RE}= \frac{||\hat{\X}-\X_\star||_F}{|\X_\star|||_F},
$$ where $\X_\star$ is the ground truth and $\hat{\X}$ is the estimated tensor.
Next, we briefly introduce the baseline methods used for comparison:
\begin{itemize}
    \item TNN \citep{lu2018exact}: a classical convex method based on tubal tensor nuclear norm minimization proposed by \citep{}, widely used in tensor completion.
\item TCTF \citep{zhou2017tensor}: a tensor factorization–based method with tubal-rank estimation, designed to reduce computational cost.
\item UTF \citep{du2021unifying}: another tensor factorization method that replaces the tubal tensor nuclear norm constraint with Frobenius-norm constraints on two factor tensors.
\item TC-RE \citep{shi2021robust}: a rank-estimation–based method that first estimates the tubal rank and then performs tensor completion using truncated t-SVD.
\item GTNN-HOP$_{p}$ \citep{wang2024low}: a method that replaces the traditional TNN soft thresholding with a hybrid ordinary-$l_p$ penalty for improved performance.
\end{itemize}
We conduct experiments on both color image completion and video completion tasks, and compare our method with the above approaches.}

\begin{algorithm}[]
\caption{Solving tensor completion by FGD with early stopping}
\label{alg:trpca}
\textbf{Input:} Train data $ \bm{\mathfrak{P}}_{\bm{\Omega}}^{\text{train}}(\X_\star+\S_n)$, validation data $\bm{\mathfrak{P}}_{\bm{\Omega}}^{\text{val}}(\X_\star+\S_n)$, initialization scale $\alpha$, step size $\eta$, estimated tubal-rank $R$, iteration number T \\
{\textbf{Initialization}: Initialize $\L_0,\R_0$, where each entry of $\L_0,\R_0$ are i.i.d. from $\mathcal{N}(0, \frac{\alpha^2}{R})$.}
\begin{algorithmic}[1] %[1] enables line numbers
\STATE \textbf{for} $t=0$ to $T-1$ \textbf{do}
\STATE \ \ \ \ \ $\L_{t+1}=\L_t-\frac{\eta}{p}\bm{\mathfrak{P}}_{\bm{\Omega}}^{\text{train}}(\L_t * \R^\top_t - \X_\star-\S_n)*\R_t$
\STATE \ \ \ \ \ $\R_{t+1}=\R_t-\frac{\eta}{p}\bm{\mathfrak{P}}_{\bm{\Omega}}^{\text{train}}(\L_t * \R_t^\top - \X_\star-\S_n)^{\top}*\L_t$

\STATE \ \ \ \  Validation loss: $e_t =\frac{1}{2p} \left\| \bm{\mathfrak{P}}_{\bm{\Omega}}^{\text{val}}(\L_t * \R_t^\top - \X_\star-\S_n) \right\|_F^2$
\STATE \textbf{end for}
\STATE \textbf{Output:} $\L_{\check{t}}*\R_{\check{t}}^\top$ where $\check{t}=\arg\min_{1\le t\le T} e_t$.
\end{algorithmic}
\label{algorithm:3}
\end{algorithm}

\subsubsection{Color image completion experiments}
{We perform color image completion experiments on the Berkeley Segmentation Dataset \citep{MartinFTM01}. We randomly select 50 color images of size $481\times 321\times 3$ and set the sampling rate as $p$ and add Gaussian noise $\mathcal{N}(0,\sigma^2)$. For TNN, UTF, TCTF, and GTNN-HOP, we adopt the initialization schemes and hyperparameter settings as described in their original papers. The entry "FGD-best" refers to the highest PSNR obtained by FGD with small initialization, while "FGD-ES" corresponds to the PSNR achieved using early stopping based on validation. For both settings, the initialization scale is set to $\alpha = 10^{-5}$ and the step size is set to $\eta=1e-3$. The tubal-ranks of FGD-ES, FGD-best and UTF are set to 100 for all images. The max iteration number is 2000. We present in Figures \ref{fig:9}-\ref{fig:12} the PSNR and RE values of different methods on each image under various model parameters ($(p,\sigma)$). In addition, Figure \ref{fig:013} shows the visual reconstruction results.
}

{
We observe that FGD-best and FGD-ES achieve the best recovery performance in most cases. Moreover, when the noise level increases, the performance of other algorithms degrades significantly, whereas FGD with small initialization is much less affected, highlighting the benefit of small initialization.
}

\subsubsection{Video completion experiments}
{Beyond image completion, we also performed video completion experiments with Gaussian noise. We randomly selected four videos from the YUV Video Sequences dataset \footnote{https://www.cnets.io/traces.cnets.io/trace.eas.asu.edu/yuv/index.html}, extracted the first 30 frames of each to form tensors of size $176 \times 144 \times 30$, added Gaussian noise drawn from $\mathcal{N}(0,\sigma^2)$, and again applied sampling rate $p$. Since TCTF performs bad in the low sampling rate case of video completion, we replace it with GTNN-HOP$_{0.6}$, a non-convex method with a sparsity-inducing regularizer. For FGD-ES and FGD-best, the initialization scale is set to $\alpha = 10^{-5}$ and the step size is set to $\eta=2e-4$. The tubal-ranks of FGD-ES, FGD-best and UTF are set to 50 for all images. The max iteration number is 4000. Tables \ref{table:3}-\ref{table:7} report the PSNR and RE values of all methods on the four videos, and Figure \ref{fig:014} shows the reconstruction results of the first frame of the akiyo video for each method. As can be seen, our method achieves the smallest relative recovery error and the highest PSNR values. In addition, we evaluated the robustness of FGD-best and FGD-ES with respect to the choice of the tubal rank $R$. The results, shown in the Figure \ref{fig:015}, demonstrate that both methods are highly robust to the selection of $R$ across all four videos.}

{ One potential issue is that gradient-based methods are sensitive to the condition number of the underlying matrix or tensor, leading to slower convergence when the condition number is large. Thus, developing methods that accelerate FGD while controlling the amplification of noise remains an interesting direction for future research.}

\begin{figure}[]
\centering
\subfigure[]{
\begin{minipage}[t]{1\linewidth}
\centering
\includegraphics[width=14cm,height=5cm]{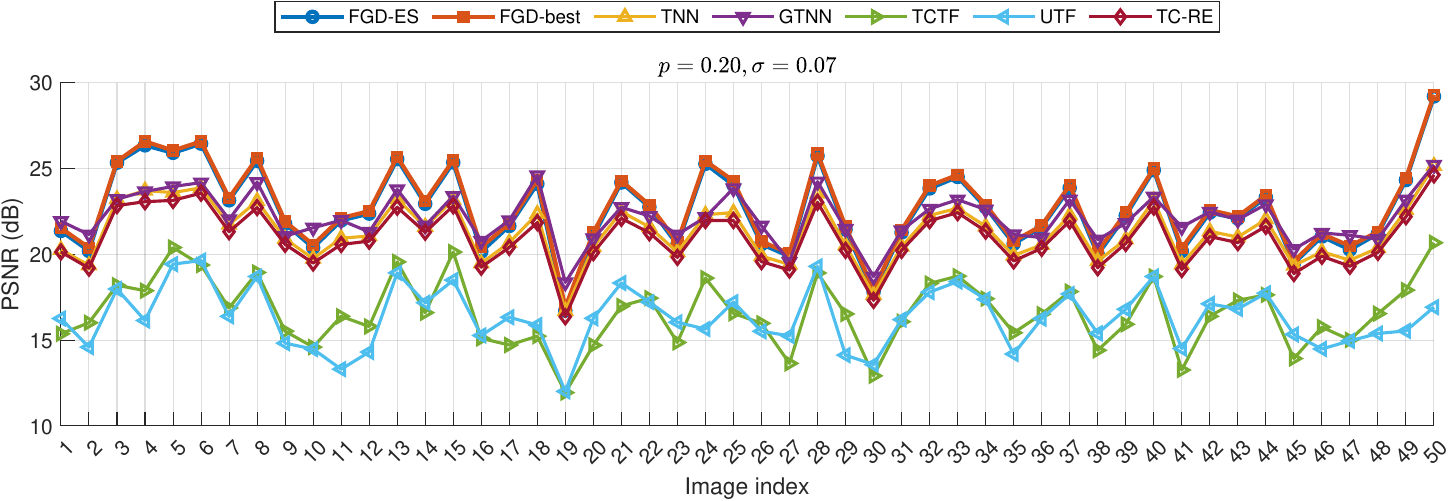}
\end{minipage}%
}

\subfigure[]{
\begin{minipage}[t]{1\linewidth}
\centering
\includegraphics[width=14cm,height=5cm]{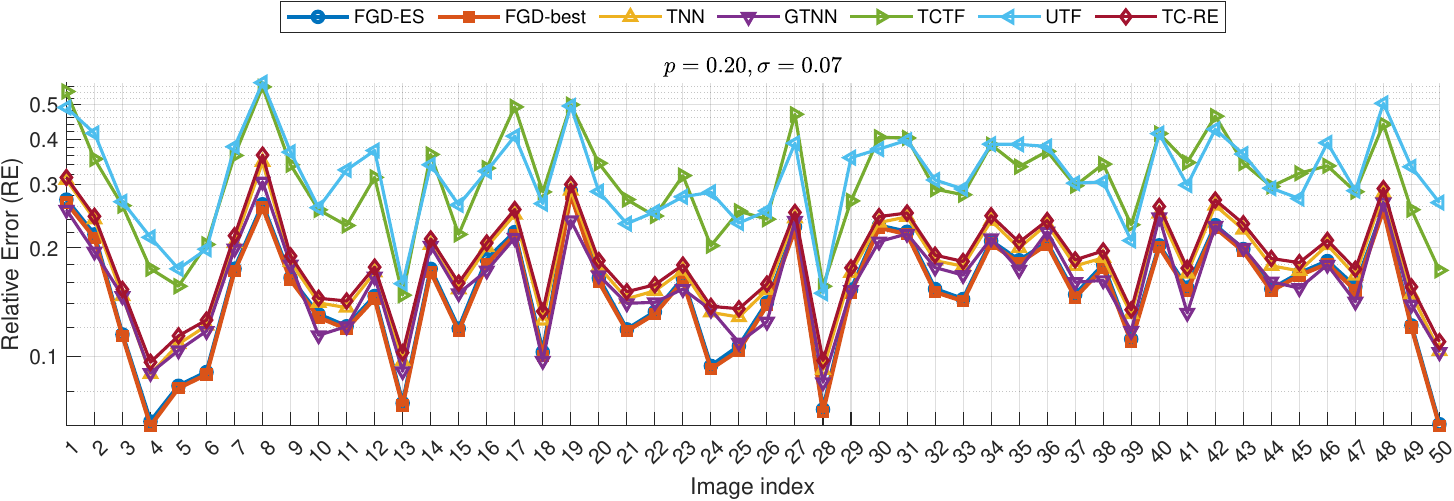}
\end{minipage}%
}
\centering
\caption{{Comparison of PSNR values and Relative Error across 50 images for different methods, with sampling rate $p = 0.2$ and noise standard deviation $\sigma = 0.07$.
}}
\label{fig:9}
\end{figure}

\begin{figure}[]
\centering
\subfigure[]{
\begin{minipage}[t]{1\linewidth}
\centering
\includegraphics[width=14cm,height=5cm]{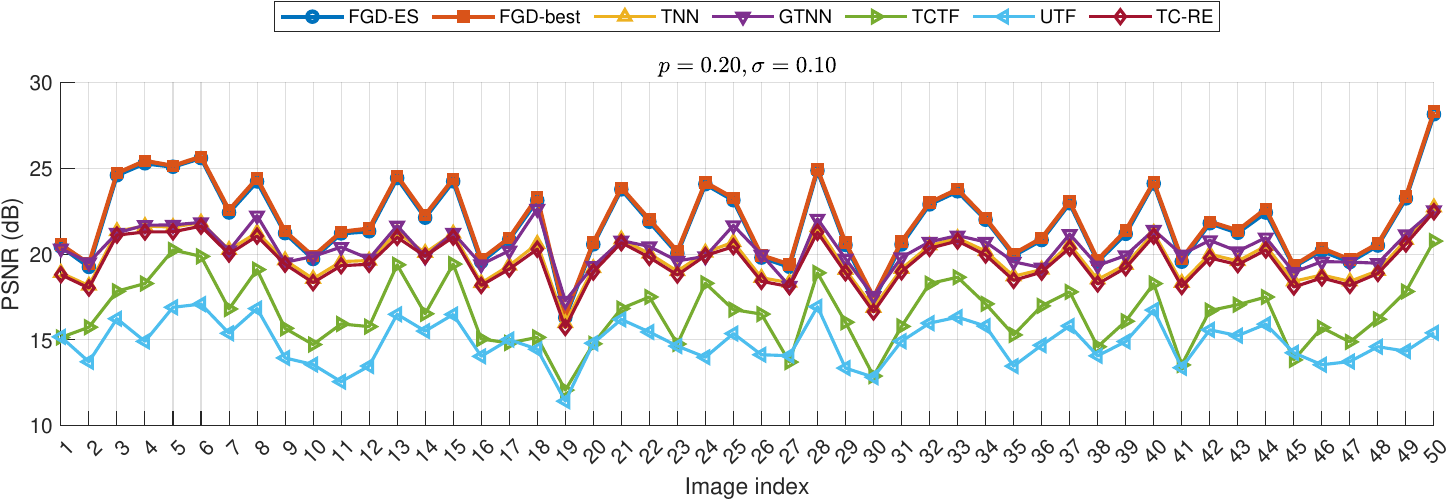}
\end{minipage}%
}

\subfigure[]{
\begin{minipage}[t]{1\linewidth}
\centering
\includegraphics[width=14cm,height=5cm]{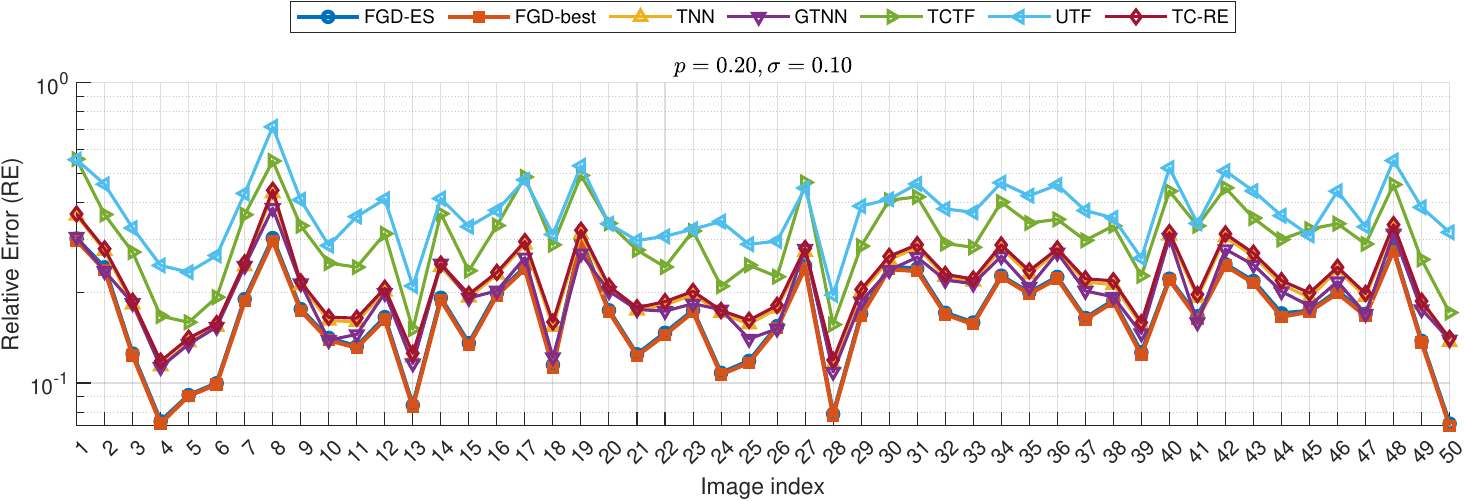}
\end{minipage}%
}
\centering
\caption{{Comparison of PSNR values and Relative Error across 50 images for different methods, with sampling rate $p = 0.2$ and noise standard deviation $\sigma = 0.1$.}}
\label{fig:10}
\end{figure}

\begin{figure}[]
\centering
\subfigure[]{
\begin{minipage}[t]{1\linewidth}
\centering
\includegraphics[width=14cm,height=5cm]{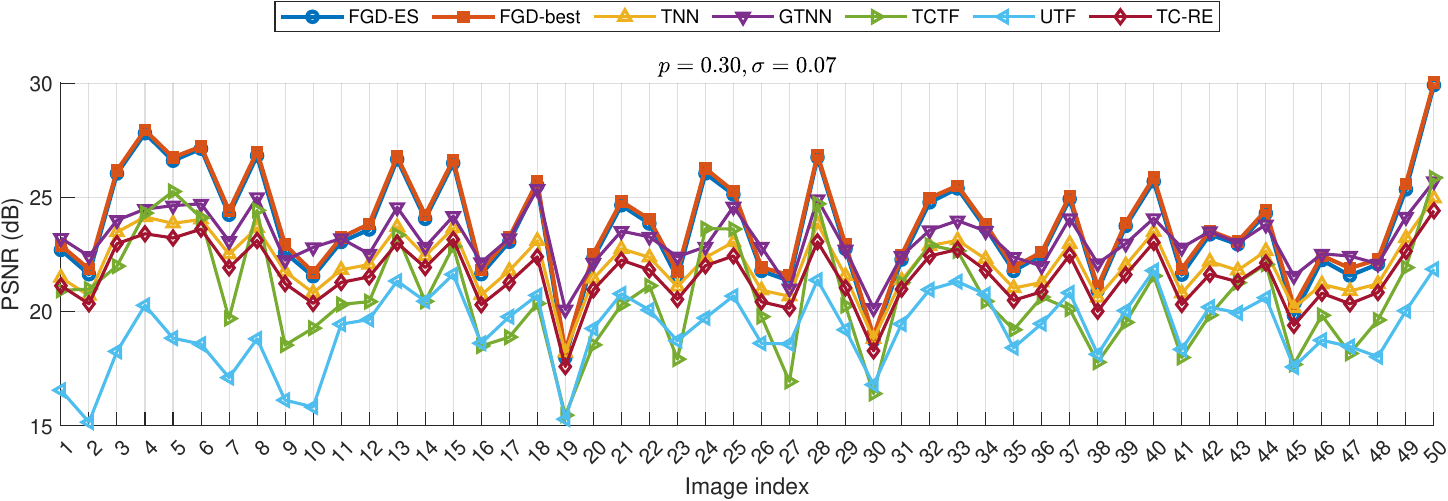}
\end{minipage}%
}

\subfigure[]{
\begin{minipage}[t]{1\linewidth}
\centering
\includegraphics[width=14cm,height=5cm]{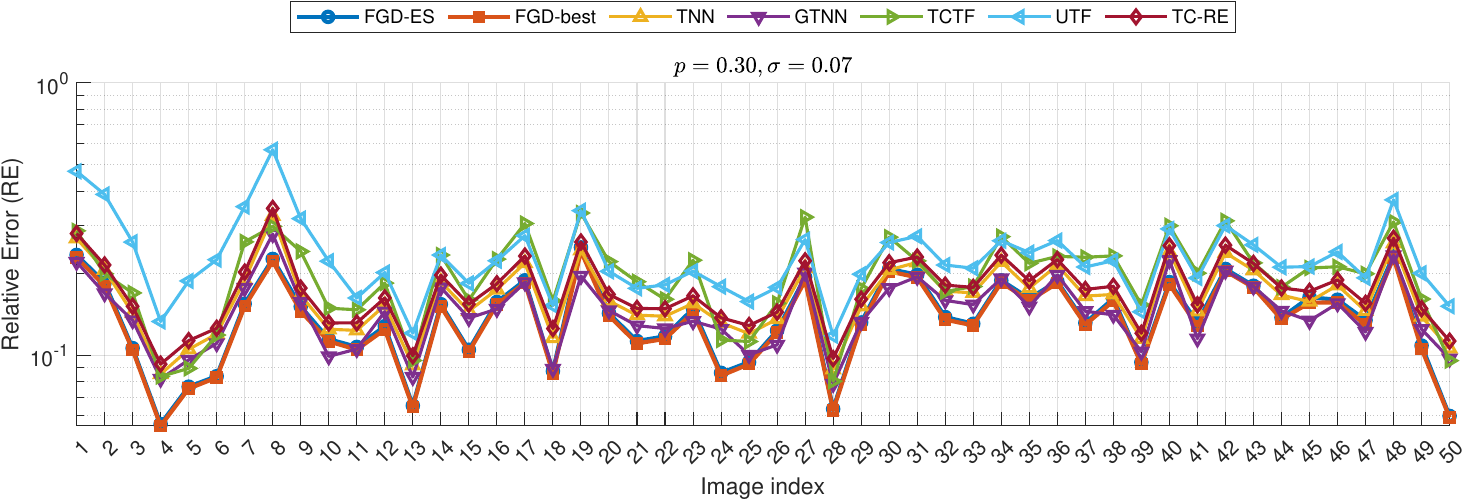}
\end{minipage}%
}
\centering
\caption{{Comparison of PSNR values and Relative Error across 50 images for different methods, with sampling rate $p = 0.3$ and noise standard deviation $\sigma = 0.07$.}}
\label{fig:11}
\end{figure}

\begin{figure}[]
\centering
\subfigure[]{
\begin{minipage}[t]{1\linewidth}
\centering
\includegraphics[width=14cm,height=5cm]{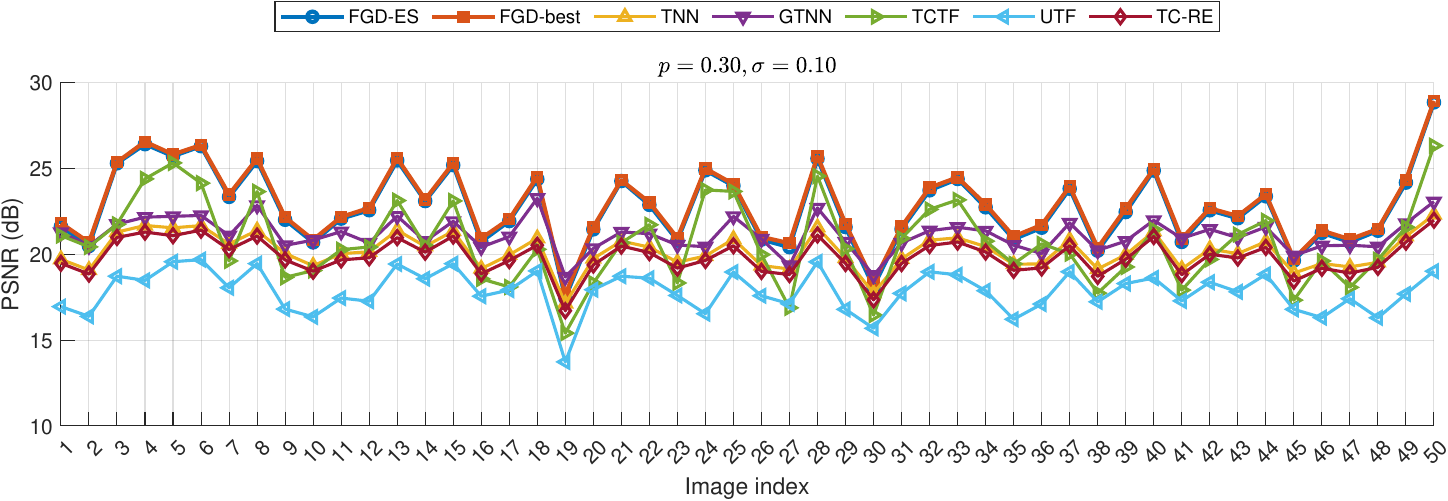}
\end{minipage}%
}

\subfigure[]{
\begin{minipage}[t]{1\linewidth}
\centering
\includegraphics[width=14cm,height=5cm]{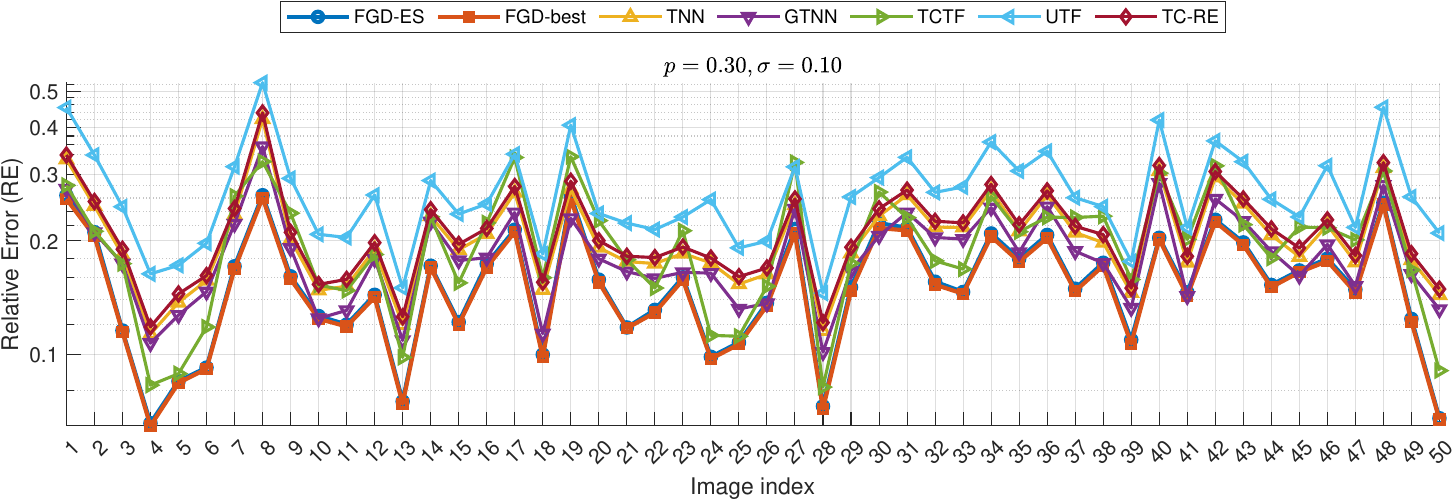}
\end{minipage}%
}
\centering
\caption{{Comparison of PSNR values and Relative Error across 50 images for different methods, with sampling rate $p = 0.3$ and noise standard deviation $\sigma = 0.1$.}}
\label{fig:12}
\end{figure}

\begin{figure}[]
\centering
\includegraphics[width=1\columnwidth]{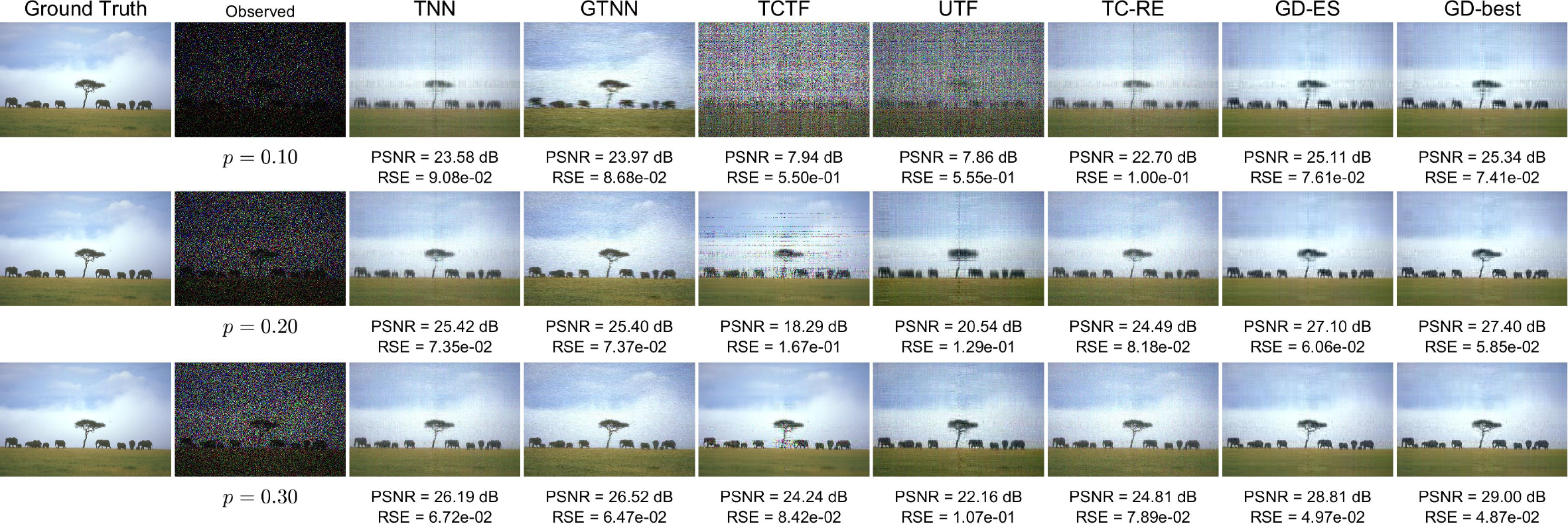} % Reduce the figure size so that it is slightly narrower than the column. Don't use precise values for figure width.This setup will avoid overfull boxes.
\caption{{Comparison of the image recovery performance of different methods under varying sampling rate $p$. The noise standard deviation $\sigma=0.05$. And 5\% of the observed
 entries are used for validation.}}
\label{fig:013}
\end{figure}

\begin{table}[]
\caption{{Comparison of different methods in terms of average Peak Signal-to-Noise Ratio (PSNR) and average Relative Error (RE) under various sampling rates and noise levels. A higher PSNR and smaller RE indicates better reconstruction quality.“FGD-ES” denotes FGD with early stopping, while “FGD-best” refers to the minimum error achieved by FGD over all iterations.}
}
\begin{tabular}{ccllcllll}
\toprule
\multirow{3}{*}{Methods} & \multicolumn{4}{c}{$p=0.3$}                             & \multicolumn{4}{c}{$p=0.4$}                             \\ \cline{2-9} 
                         & \multicolumn{2}{c}{$\sigma=0.03$} & \multicolumn{2}{c}{$\sigma=0.05$} & \multicolumn{2}{c}{$\sigma=0.03$} & \multicolumn{2}{c}{$\sigma=0.05$} \\ \cline{2-9} 
                         & PSNR $\uparrow$     & RSE   $\downarrow$     & PSNR   $\uparrow$       & RSE $\downarrow$       & PSNR $\uparrow$    & RSE  $\downarrow$  & PSNR $\uparrow$         & RSE    $\downarrow$    \\ \toprule
UTF                      &   7.8242         &     0.276      &   7.3535        &   0.2884    &    6.8286        &   0.3112     &  5.5376      &     0.3559    \\ 
TNN                        & 20.211           & 0.0659          &   17.2288         &   0.0934         &     20.4965           &  0.0639  & 17.1281          &    0.0947      \\ 
TC-RE                        &   19.7102          &    0.0698      &      16.9971      &    0.096     &   19.7435        &   0.0698   &     16.4039      &  0.1029      \\ 
GTNN-HOP$_{0.3}$                         &      19.6553        &  0.0706     &   16.1069        &       0.107   &     20.0091         &  0.068    &      16.268       &  0.1051     \\ 
GTNN-HOP$_{0.6}$                         &      20.2583       &   0.0659       &   16.8056          &      0.0987    &    20.5764           &   0.0636   &     16.8933       &  0.0978      \\ 
FGD-ES                     &  \underline{22.083}          &   \underline{0.0529}          &   \underline{21.02}          &    \underline{ 0.0597}       &          \underline{22.2876}       &     \underline{0.0517}   &     \underline{21.5831}         &        \underline{0.0559}   \\
FGD-best                     &     \textbf{22.1411}         &  \textbf{0.0525}         &     \textbf{21.1517}          & \textbf{0.0588}           &     \textbf{22.3001}             &     \textbf{0.0516}    &       \textbf{21.7213}        &    \textbf{0.0551}       \\
\toprule
\end{tabular}
\label{table:3}
\end{table}

\begin{table}[]
\caption{{Comparison of different methods in terms of average Peak Signal-to-Noise Ratio (PSNR) and average Relative Error (RE) under various sampling rates and noise levels. A higher PSNR and smaller RE indicates better reconstruction quality.“FGD-ES” denotes FGD with early stopping, while “FGD-best” refers to the minimum error achieved by FGD over all iterations.}
}
\begin{tabular}{ccllcllll}
\toprule
\multirow{3}{*}{Methods} & \multicolumn{4}{c}{$p=0.3$}                             & \multicolumn{4}{c}{$p=0.4$}                             \\ \cline{2-9} 
                         & \multicolumn{2}{c}{$\sigma=0.03$} & \multicolumn{2}{c}{$\sigma=0.05$} & \multicolumn{2}{c}{$\sigma=0.03$} & \multicolumn{2}{c}{$\sigma=0.05$} \\ \cline{2-9} 
                         & PSNR $\uparrow$     & RSE   $\downarrow$     & PSNR   $\uparrow$       & RSE $\downarrow$       & PSNR $\uparrow$    & RSE  $\downarrow$  & PSNR $\uparrow$         & RSE    $\downarrow$    \\ \toprule
TNN                        & 20.7258           & 0.0613           &   17.4371          &   0.0895         &     21.0157            &  0.0592  & 17.3267           &    0.0906       \\ 
TC-RE                        &   19.928           &     0.0671       &      17.1895       &     0.0920      &    20.2059          &   0.0650     &      16.5464        &   0.0991        \\ 
UTF                      &   6.3467          &     0.3207       &   6.4038         &    0.3186      &     5.9414           &   0.3360     &   4.7523        &       0.3853    \\ 
GTNN-HOP$_{0.3}$                         &      19.9438        &   0.0670       &   16.1328         &       0.1039    &     20.2975            &  0.0643    &      16.2845        &   0.1021     \\ 
GTNN-HOP$_{0.6}$                         &      20.5461        &   0.0625       &   16.8126          &      0.0961     &     20.8648           &   0.0603    &      16.9083       &   0.0951      \\ 
FGD-ES                     &  \underline{23.2899}          &   \underline{0.0456}          &   \underline{21.6321}          &    \underline{ 0.0552}       &          \underline{23.8244}       &     \underline{0.0429}   &     \underline{22.3928}         &        \underline{0.0501}   \\
FGD-best                     &     \textbf{23.4511}         &  \textbf{0.0448}         &     \textbf{21.8002}          & \textbf{0.0541}           &     \textbf{23.8539}             &     \textbf{0.0427}    &       \textbf{22.5611}        &    \textbf{0.0496}       \\
\toprule
\end{tabular}
\label{table:4}
\end{table}

\begin{table}[]
\caption{{Comparison of different methods in terms of average Peak Signal-to-Noise Ratio (PSNR) and average Relative Error (RE) under various sampling rates and noise levels for the ``highway'' video. A higher PSNR and smaller RE indicates better reconstruction quality. ``FGD-ES'' denotes FGD with early stopping, while ``FGD-best'' refers to the minimum error achieved by FGD over all iterations.}
}
\begin{tabular}{ccllcllll}
\toprule
\multirow{3}{*}{Methods} & \multicolumn{4}{c}{$p=0.3$}                             & \multicolumn{4}{c}{$p=0.4$}                             \\ \cline{2-9} 
                         & \multicolumn{2}{c}{$\sigma=0.03$} & \multicolumn{2}{c}{$\sigma=0.05$} & \multicolumn{2}{c}{$\sigma=0.03$} & \multicolumn{2}{c}{$\sigma=0.05$} \\ \cline{2-9} 
                         & PSNR $\uparrow$     & RSE   $\downarrow$     & PSNR   $\uparrow$       & RSE $\downarrow$       & PSNR $\uparrow$    & RSE  $\downarrow$  & PSNR $\uparrow$         & RSE    $\downarrow$    \\ \toprule
TNN                        & 19.4802           & 0.0474           &   16.9433          &   0.0635         &     19.8369            &  0.0455  & 16.8423           &    0.0642       \\ 
TC-RE                        &   19.0227           &     0.0499       &      16.6549       &     0.0656      &    19.1568          &   0.0492     &      16.1336        &   0.0697        \\ 
UTF                      &   4.0060          &     0.2814       &   3.7117         &    0.2911      &     1.1904           &   0.3891     &   0.7714        &       0.4084    \\ 
GTNN-HOP$_{0.3}$                         &      19.3115        &   0.0483       &   16.1687         &       0.0694    &     19.7269            &  0.0461    &      16.3390        &   0.0680     \\ 
GTNN-HOP$_{0.6}$                         &      19.8894        &   0.0452       &   16.8541          &      0.0641     &     20.2802           &   0.0432    &      16.9568       &   0.0634      \\ 
FGD-ES                     &  \underline{20.6667}          &   \underline{0.0413}          &   \underline{20.0041}          &    \underline{0.0446}       &          \underline{20.7364}       &     \underline{0.0410}   &     \underline{20.3462}         &        \underline{0.0429}   \\
FGD-best                     &     \textbf{20.7000}         &  \textbf{0.0412}         &     \textbf{20.1063}          & \textbf{0.0441}           &     \textbf{20.7278}             &     \textbf{0.0410}    &       \textbf{20.4994}        &    \textbf{0.0421}       \\
\toprule
\end{tabular}
\label{table:5}
\end{table}

\begin{table}[]
\caption{{Comparison of different methods in terms of average Peak Signal-to-Noise Ratio (PSNR) and average Relative Error (RE) under various sampling rates and noise levels for the ``suzie'' video. A higher PSNR and smaller RE indicates better reconstruction quality. ``FGD-ES'' denotes FGD with early stopping, while ``FGD-best'' refers to the minimum error achieved by FGD over all iterations.}
}
\begin{tabular}{ccllcllll}
\toprule
\multirow{3}{*}{Methods} & \multicolumn{4}{c}{$p=0.3$}                             & \multicolumn{4}{c}{$p=0.4$}                             \\ \cline{2-9} 
                         & \multicolumn{2}{c}{$\sigma=0.03$} & \multicolumn{2}{c}{$\sigma=0.05$} & \multicolumn{2}{c}{$\sigma=0.03$} & \multicolumn{2}{c}{$\sigma=0.05$} \\ \cline{2-9} 
                         & PSNR $\uparrow$     & RSE   $\downarrow$     & PSNR   $\uparrow$       & RSE $\downarrow$       & PSNR $\uparrow$    & RSE  $\downarrow$  & PSNR $\uparrow$         & RSE    $\downarrow$    \\ \toprule
TNN                        & 19.3458           & 0.0717           &   16.6844          &   0.0974         &     19.8125            &  0.0679  & 16.7441           &    0.0967       \\ 
TC-RE                        &   19.2177           &     0.0728       &      16.5983       &     0.0984      &    19.2186          &   0.0728     &      16.0643        &   0.1046        \\ 
UTF                      &   11.4879          &     0.1772       &   10.2100         &    0.2053      &     10.8456           &   0.1908     &   8.6211        &       0.2465    \\ 
GTNN-HOP$_{0.3}$                         &      19.0898        &   0.0738       &   15.7370         &       0.1086    &     19.5613            &  0.0699    &      15.9544        &   0.1059     \\ 
GTNN-HOP$_{0.6}$                         &      19.6531        &   0.0692       &   16.4167          &      0.1005     &     20.0997           &   0.0657    &      16.5620       &   0.0988      \\ 
FGD-ES                     &  \underline{20.5670}          &   \underline{0.0623}          &   \underline{19.4874}          &    \underline{0.0705}       &          \underline{20.7175}       &     \underline{0.0612}   &     \underline{20.1540}         &        \underline{0.0653}   \\
FGD-best                     &     \textbf{20.5947}         &  \textbf{0.0621}         &     \textbf{19.6263}          & \textbf{0.0694}           &     \textbf{20.7445}             &     \textbf{0.0610}    &       \textbf{20.2629}        &    \textbf{0.0645}       \\
\toprule
\end{tabular}
\label{table:6}
\end{table}

\begin{table}[]
\caption{{Comparison of different methods in terms of average Peak Signal-to-Noise Ratio (PSNR) and average Relative Error (RE) under various sampling rates and noise levels for the ``miss-america'' video. A higher PSNR and smaller RE indicates better reconstruction quality. ``FGD-ES'' denotes FGD with early stopping, while ``FGD-best'' refers to the minimum error achieved by FGD over all iterations.}
}
\begin{tabular}{ccllcllll}
\toprule
\multirow{3}{*}{Methods} & \multicolumn{4}{c}{$p=0.3$}                             & \multicolumn{4}{c}{$p=0.4$}                             \\ \cline{2-9} 
                         & \multicolumn{2}{c}{$\sigma=0.03$} & \multicolumn{2}{c}{$\sigma=0.05$} & \multicolumn{2}{c}{$\sigma=0.03$} & \multicolumn{2}{c}{$\sigma=0.05$} \\ \cline{2-9} 
                         & PSNR $\uparrow$     & RSE   $\downarrow$     & PSNR   $\uparrow$       & RSE $\downarrow$       & PSNR $\uparrow$    & RSE  $\downarrow$  & PSNR $\uparrow$         & RSE    $\downarrow$    \\ \toprule
TNN                        & 21.2922           & 0.0831           &   17.8503          &   0.1235         &     21.3210            &  0.0828  & 17.5991           &    0.1271       \\ 
TC-RE                        &   20.6724           &     0.0892       &      17.5455       &     0.1279      &    20.3926          &   0.0921     &      16.8711        &   0.1382        \\ 
UTF                      &   9.4560          &     0.3245       &   9.0886         &    0.3386      &     9.3371           &   0.3290     &   8.0055        &       0.3835    \\ 
GTNN-HOP$_{0.3}$                        &      20.2760        &   0.0934       &   16.3891         &       0.1461    &     20.4505            &  0.0915    &      16.4938        &   0.1443     \\ 
GTNN-HOP$_{0.3}$                        &      20.9446        &   0.0865       &   17.1388          &      0.1340     &     21.0611           &   0.0853    &      17.1462       &   0.1339      \\ 
FGD-ES                     &  \underline{23.8085}          &   \underline{0.0622}          &   \underline{22.9563}          &    \underline{0.0686}       &          \underline{23.8721}       &     \underline{0.0617}   &     \underline{23.4395}         &        \underline{0.0649}   \\
FGD-best                     &     \textbf{23.8186}         &  \textbf{0.0621}         &     \textbf{23.0738}          & \textbf{0.0677}           &     \textbf{23.8743}             &     \textbf{0.0617}    &       \textbf{23.5618}        &    \textbf{0.0640}       \\
\toprule
\end{tabular}
\label{table:7}
\end{table}

\begin{figure}[]
\centering
\includegraphics[width=1\columnwidth]{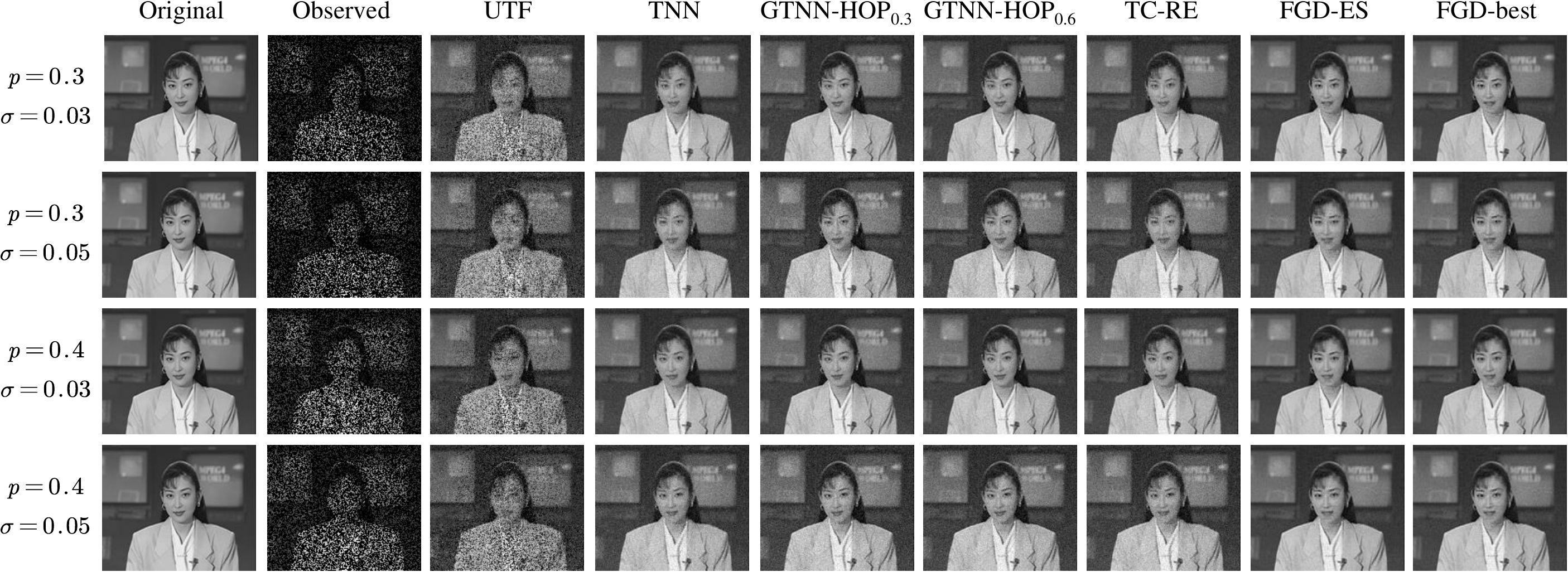} % Reduce the figure size so that it is slightly narrower than the column. Don't use precise values for figure width.This setup will avoid overfull boxes.
\caption{{Comparison of the video recovery performance of different methods under varying sampling rate $p$ and noise standard deviation $\sigma$ for video "akiyo". For FGD-ES, 5\% of the observed entries are used for validation.}}
\label{fig:014}
\end{figure}

\begin{figure}[]
\centering
\subfigure[]{
\begin{minipage}[t]{0.49\linewidth}
\centering
\includegraphics[width=6cm,height=5cm]{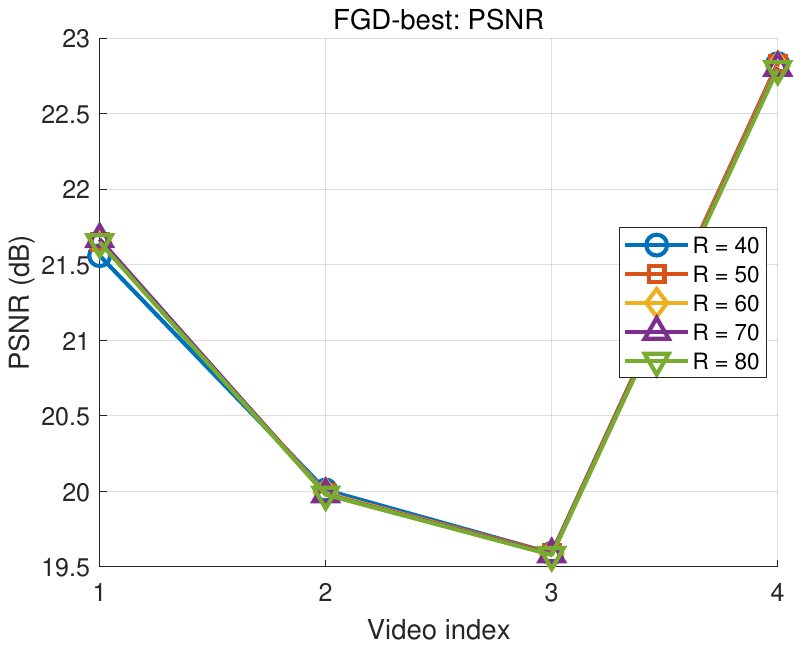}
\end{minipage}%
}
\subfigure[]{
\begin{minipage}[t]{0.49\linewidth}
\centering
\includegraphics[width=6cm,height=5cm]{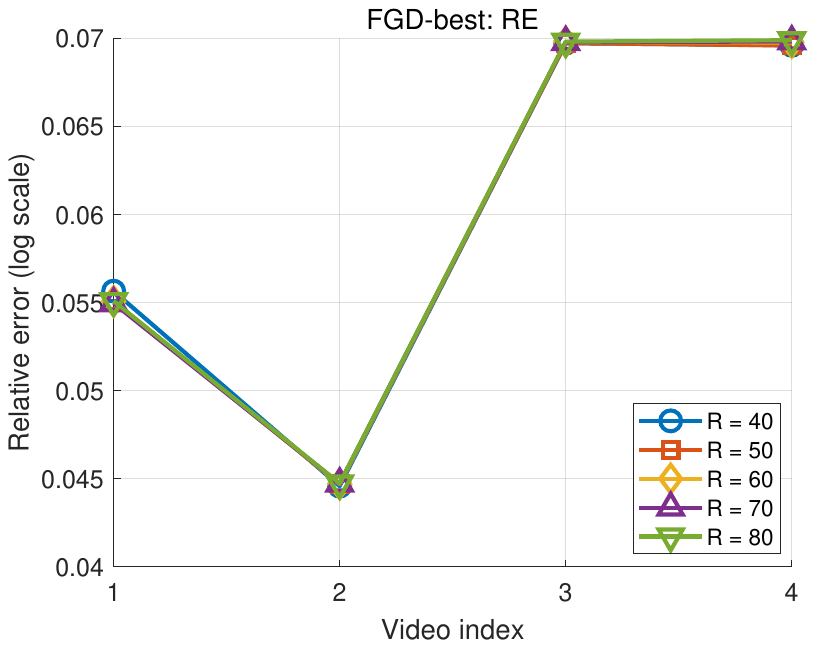}
\end{minipage}%
}

\subfigure[]{
\begin{minipage}[t]{0.49\linewidth}
\centering
\includegraphics[width=6cm,height=5cm]{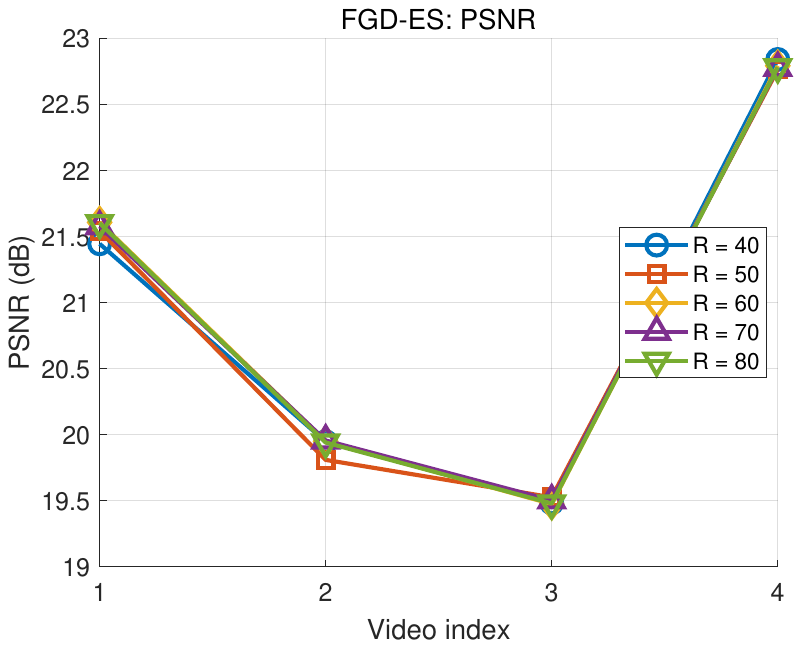}
\end{minipage}%
}
\subfigure[]{
\begin{minipage}[t]{0.49\linewidth}
\centering
\includegraphics[width=6cm,height=5cm]{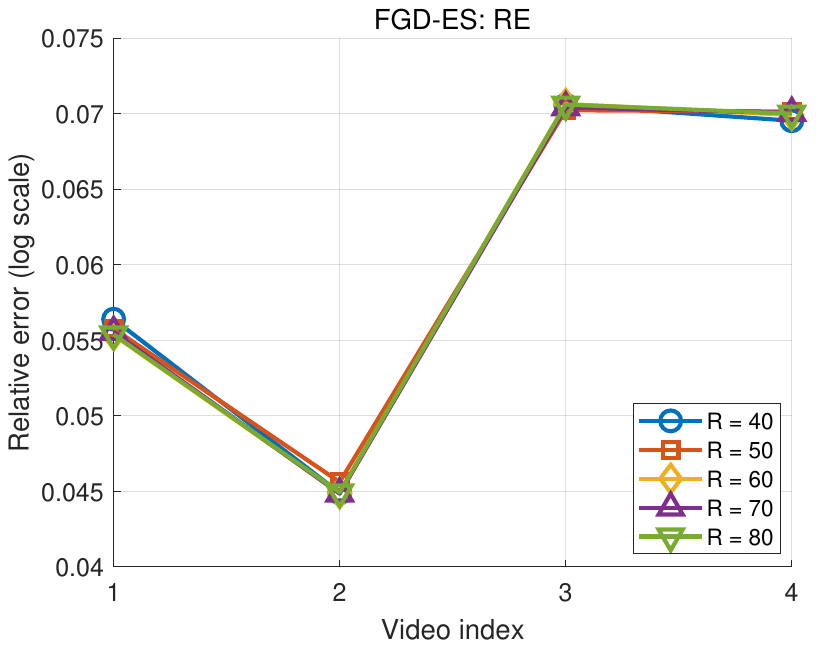}
\end{minipage}%
}

\centering
\caption{{Evaluate the effect of different tubal-rank $R$ on the performance of video completion. Subfigure (a) shows the PSNR values of FGD-best on the four videos, and subfigure (b) shows the corresponding RE values. Subfigure (c) reports the PSNR values of FGD-ES on the four videos, while subfigure (d) presents the associated RE values.}}
\label{fig:015}
\end{figure}

\end{document}

%% file: math_commands.tex
\usepackage{amsmath,amsfonts,bm}

\def\eqref#1{equation~\ref{#1}}
\def\1{\bm{1}}

\DeclareMathAlphabet{\mathsfit}{\encodingdefault}{\sfdefault}{m}{sl}
\SetMathAlphabet{\mathsfit}{bold}{\encodingdefault}{\sfdefault}{bx}{n}

\newcommand{\E}{\mathbb{E}}

\newcommand{\R}{\mathbb{R}}